\bibnamedelimd\mkbibnamesuffix{#4}\isdot}%
\bibnamedelimd\mkbibnamegiven{#2}\isdot}}}
\bibnamedelimd\mkbibnamesuffix{#4}\isdot}%
\bibnamedelimd\mkbibnamegiven{#2}\isdot}%
\bibnamedelimd\mkbibnameprefix{#3}\isdot}}}}
\newcommand*{\gamblechili}{\raisebox{1pt}{\textcolor{ForestGreen}{\scalebox{0.7}{\faPepperHot}}}}
\newcommand*{\quantumchili}{\raisebox{1pt}{\textcolor{red}{\scalebox{0.7}{\faPepperHot}}}}
\newcommand*{\blackchili}{\raisebox{1pt}{\textcolor{black}{\scalebox{0.7}{\faPepperHot}}}}
\renewcommand{\thmcontinues}[1]{continued}
\newtheorem{theorem}{Theorem}
\newtheorem{proposition}[theorem]{Proposition}
\newtheorem{definition}[theorem]{Definition}
\newtheorem{corollary}[theorem]{Corollary}
\newtheorem{lemma}[theorem]{Lemma}
\newenvironment{counterexample}[1]{\par\medskip\noindent\textit{Counterexample for #1}.\enskip}%
{\hfill\blackchili\par\medskip}
\newenvironment{gamblesexample}[1][]{\par\medskip\noindent\textit{Classical probability#1}.\enskip}%
{\hfill\gamblechili\par\medskip}
\newenvironment{quantumexample}[1][]{\par\medskip\noindent\textit{Quantum probability#1}.\enskip}%
{\hfill\quantumchili\par\medskip}
\newenvironment{gamblesexamplecont}{\par\medskip\noindent\textit{Classical probability} (continued).\enskip}%
{\hfill\gamblechili\par\medskip}
\newenvironment{quantumexamplecont}{\par\medskip\noindent\textit{Quantum probability} (continued).\enskip}%
{\hfill\quantumchili\par\medskip}
\definecolor{definecolour}{RGB}{45,125,162} 
\newcommand*{\DummyArgumentSymbol}{\hbox{\scalebox{.9}{\(\bullet\)}}}
\newcommand*{\SmallDummyArgumentSymbol}{\hbox{\scalebox{.7}{\(\bullet\)}}}
\newcommand*{\bolleke}{\DummyArgumentSymbol}
\newcommand*{\smallbolleke}{\SmallDummyArgumentSymbol}
\providecommand\given{}
\newcommand\CondSymbol[1][]{\nonscript\,#1\vert\allowbreak\nonscript\,\mathopen{}}
\newcommand\AltCondSymbol[1][]{\nonscript\,#1\Vert\allowbreak\nonscript\,\mathopen{}}
\newcommand*{\IndifSymbol}{I}
\newcommand*{\AssessmentSymbol}{A}
\newcommand*{\ModelSymbol}{M}
\newcommand*{\BackgroundModelSymbol}{V}
\newcommand*{\PowerSetSymbol}{\mathscr{P}}
\newcommand*{\PrevisionSymbol}{P}
\newcommand*{\OptionsSetSymbol}{U}
\newcommand*{\EventsSetSymbol}{E}
\newcommand*{\GamblesSetSymbol}{\mathscr{G}}
\newcommand*{\ExpansionSymbol}{E}
\newcommand*{\RevisionSymbol}{R}
\newcommand*{\ContractionSymbol}{C}
\DeclareMathOperator{\posi}{posi}
\DeclareMathOperator{\shull}{sh}
\DeclareMathOperator{\linspan}{span}
\DeclareMathOperator{\clsr}{cl}
\DeclareMathOperator{\comp}{co}
\DeclareMathOperator{\range}{rng}
\DeclarePairedDelimiterX\group[1]{(}{)}{\ifblank{#1}{\,\DummyArgumentSymbol\,}{#1}}
\DeclarePairedDelimiterX{\sqgroup}[1]{[}{]}{\ifblank{#1}{\,\DummyArgumentSymbol\,}{#1}}
\DeclarePairedDelimiterX{\set}[1]{\{}{\}}{\renewcommand\given{\SetSymbol}#1}
\DeclarePairedDelimiter{\structure}{\langle}{\rangle}
\DeclarePairedDelimiterXPP{\abs}[1]{\,}{\vert}{\vert}{\,}{\ifblank{#1}{\,\DummyArgumentSymbol\,}{#1}}
\DeclarePairedDelimiterX{\Adelim}[2]{\langle}{\rangle}{\,#1\,;\,#2\,}
\DeclarePairedDelimiterXPP{\optnormof}[1]{\,}{\lVert}{\rVert}{_\opts}{\ifblank{#1}{\,\DummyArgumentSymbol\,}{#1}}
\DeclarePairedDelimiterXPP{\compof}[1]{\comp}{(}{)}{}{\ifblank{#1}{\,\DummyArgumentSymbol\,}{#1}}
\DeclarePairedDelimiterXPP{\shullof}[1]{\shull}{(}{)}{}{\ifblank{#1}{\,\DummyArgumentSymbol\,}{#1}}
\DeclarePairedDelimiterXPP{\posiof}[1]{\posi}{(}{)}{}{\ifblank{#1}{\,\DummyArgumentSymbol\,}{#1}}
\DeclarePairedDelimiterXPP{\linspanof}[1]{\linspan}{(}{)}{}{\ifblank{#1}{\,\DummyArgumentSymbol\,}{#1}}
\DeclarePairedDelimiterXPP{\closureof}[2]{\clsr_{#1}}{(}{)}{}{\ifblank{#2}{\,\DummyArgumentSymbol\,}{#2}}
\DeclarePairedDelimiterXPP{\cohmodclosureof}[1]{\cohmodclosure}{(}{)}{}{\ifblank{#1}{\,\DummyArgumentSymbol\,}{#1}}
\DeclarePairedDelimiterXPP{\rangeof}[1]{\range}{(}{)}{}{\ifblank{#1}{\,\DummyArgumentSymbol\,}{#1}}
\DeclarePairedDelimiterXPP{\powersetof}[1]{\PowerSetSymbol}{(}{)}{}{\ifblank{#1}{\,\DummyArgumentSymbol\,}{#1}}
\DeclarePairedDelimiterXPP{\expandof}[1]{\expand}{(}{)}{}{\renewcommand\given{\CondSymbol}#1}
\DeclarePairedDelimiterXPP{\reviseof}[1]{\revise}{(}{)}{}{\renewcommand\given{\CondSymbol}#1}
\DeclarePairedDelimiterXPP{\contractof}[1]{\contract}{(}{)}{}{\renewcommand\given{\CondSymbol}#1}
\DeclarePairedDelimiterXPP{\Msabove}[1]{\Ms}{(}{)}{}{\ifblank{#1}{\,\DummyArgumentSymbol\,}{#1}}
\DeclarePairedDelimiterXPP{\closedMsabove}[1]{\closedMs}{(}{)}{}{\ifblank{#1}{\,\DummyArgumentSymbol\,}{#1}}
\DeclarePairedDelimiterXPP{\closeddifMsabove}[1]{\closeddifMs}{(}{)}{}{\ifblank{#1}{\,\DummyArgumentSymbol\,}{#1}}
\DeclarePairedDelimiterXPP{\adfMsabove}[1]{\adfMs}{(}{)}{}{\ifblank{#1}{\,\DummyArgumentSymbol\,}{#1}}
\DeclarePairedDelimiterXPP{\difMsabove}[1]{\difMs}{(}{)}{}{\ifblank{#1}{\,\DummyArgumentSymbol\,}{#1}}
\DeclarePairedDelimiterXPP{\maxMsabove}[1]{\maxMs}{(}{)}{}{\ifblank{#1}{\,\DummyArgumentSymbol\,}{#1}}
\DeclarePairedDelimiterXPP{\maxadfMsabove}[1]{\maxadfMs}{(}{)}{}{\ifblank{#1}{\,\DummyArgumentSymbol\,}{#1}}
\newcommand*{\cls}[1]{\clsr_{#1}}
\newcommand*{\union}{\cup}
\newcommand*{\intersection}{\cap}
\newcommand*{\Intersection}{\bigcap}
\newcommand*{\Mcls}{\cls{\Ms}}
\newcommand*{\Mclsof}[2][]{\closureof[#1]{\Ms}{#2}}
\newcommand*{\Mclsbgof}[3][]{\closureof[#1]{\Msabove{#2}}{#3}}
\DeclarePairedDelimiterXPP{\prevof}[1]{\prev}{(}{)}{}{\renewcommand\given{\CondSymbol}\ifblank{#1}{\,\DummyArgumentSymbol\,}{#1}}
\newcommand*{\prev}{\PrevisionSymbol}
\newcommand*{\lprev}{\underline{\prev}}
\newcommand*{\uprev}{\overline{\prev}}
\newcommand*{\reals}{\mathbb{R}}
\newcommand*{\then}{\Rightarrow}
\newcommand*{\ifandonlyif}{\Leftrightarrow}
\newcommand*{\opt}[1][]{u_{#1}}
\newcommand*{\altopt}[1][]{v_{#1}}
\newcommand*{\altopttoo}[1][]{w_{#1}}
\newcommand*{\eventopt}[1][]{e_{#1}}
\newcommand*{\calledoff}[2][]{\eventopt[#1]\ast#2}
\newcommand*{\unitevent}{1_{\eventopts}}
\newcommand*{\nullevent}{0_{\eventopts}}
\newcommand*{\opts}{\mathscr{\OptionsSetSymbol}}
\newcommand*{\eventopts}{\mathscr{\EventsSetSymbol}}
\newcommand*{\regulareventopts}[1][\bgM]{\eventopts^{{#1}}_\mathrm{reg}}
\newcommand*{\someopts}{\OptionsSetSymbol}
\newcommand*{\optunit}{\unit[\opts]}
\newcommand*{\gbl}{f}
\newcommand*{\altgbl}{g}
\newcommand*{\altgbltoo}{h}
\newcommand*{\gambles}{\mathscr{\GamblesSetSymbol}}
\newcommand{\varstate}[1][]{{\mathit{X}_{#1}}}
\newcommand*{\states}[1][]{\mathscr{X}_{#1}}
\newcommand*{\state}[1][]{x_{#1}}
\newcommand*{\altstate}[1][]{y_{#1}}
\newcommand*{\event}[1][]{E_{#1}}
\newcommand*{\indof}[1]{\mathbb{I}_{#1}}
\newcommand*{\indevent}[1][]{\indof{\event[#1]}}
\newcommand*{\filter}[1][]{\mathscr{F}_{#1}}
\newcommand*{\props}{\mathscr{A}}
\newcommand*{\filterbase}[1][]{\mathscr{C}_{#1}}
\newcommand*{\unit}[1][]{\mathbf{1}_{#1}}
\newcommand*{\belmod}[1][]{b_{#1}}
\newcommand*{\belmods}{\mathbf{B}}
\newcommand*{\cohmod}[1][]{c_{#1}}
\newcommand*{\cohmods}{\mathbf{C}}
\newcommand*{\closedmods}{\overline{\cohmods}}
\newcommand*{\belmodtop}{\mathbf{1}_{\belmods}}
\newcommand*{\belmodbottom}{\mathbf{0}_{\belmods}}
\newcommand*{\modleq}{\sqsubseteq}
\newcommand*{\cohmodbottom}{\mathbf{0}_{\cohmods}}
\newcommand*{\cohmodclosure}{\clsr_{\cohmods}}
\newcommand*{\expand}{\mathrm{\ExpansionSymbol}}
\newcommand*{\revise}{\mathrm{\RevisionSymbol}}
\newcommand*{\contract}{\mathrm{\ContractionSymbol}}
\newcommand*{\A}{\AssessmentSymbol}
\newcommand*{\otherA}{B}
\newcommand*{\As}{\mathbf{\AssessmentSymbol}}
\newcommand*{\adfAs}{\adf\As}
\newcommand{\Astop}{\mathbf{1}_{\As}}
\newcommand*{\M}{\ModelSymbol}
\newcommand*{\Ms}{\mathbf{\ModelSymbol}}
\newcommand*{\closedMs}{\smash{\overline{\Ms}}}
\newcommand*{\closeddifMs}{\smash{\overline{\difMs}}}
\newcommand*{\propadfMs}{\smash{\Ms_{\textrm{\tiny\upshape{AD}}}^\textrm{\tiny\upshape{prop}}(\bgM)}}
\newcommand*{\prevadfMs}{\smash{\Ms_{\textrm{\tiny\upshape{AD}}}^\textrm{\tiny\upshape{prev}}(\bgM)}}
\newcommand*{\maxMs}{\hat{\Ms}}
\newcommand*{\adfMs}{\adf\Ms}
\newcommand*{\difMs}{\dif\Ms}
\newcommand*{\maxadfMs}{\adf\maxMs}
\newcommand*{\bgM}{\BackgroundModelSymbol}
\newcommand*{\zeroM}{{\bgM_o}}
\newcommand*{\eventM}[1][\eventopt]{\M_{#1}}
\newcommand*{\adf}[1]{#1_{\textrm{\tiny\upshape{AD}}}\vphantom{#1}}
\newcommand*{\dif}[1]{#1_{\textrm{\tiny\upshape{DI}}}\vphantom{#1}}
\newcommand*{\des}{\blacktriangleright}
\newcommand*{\rej}{\vartriangleleft}
\newcommand*{\acc}{\trianglerighteq}
\newcommand*{\indif}{\equiv}
\newcommand*{\unres}{\smile}
\newcommand*{\posetleq}{\sqsubseteq}
\newcommand*{\meet}{\frown}
\newcommand*{\join}{\smile}
\newcommand*{\weakgt}{>}
\newcommand*{\weakgeq}{\geq}
\newcommand*{\weakleq}{\leq}
\newcommand*{\stronggt}{\gtrdot}
\newcommand*{\indifset}[1][]{\IndifSymbol_{#1}}
\newcommand*{\eventindifset}[1][]{\indifset[{\eventopt[#1]}]}
\newcommand*{\condon}[1]{\CondSymbol{#1}}
\newcommand*{\altcondon}[1]{\AltCondSymbol{#1}}
\newcommand*{\define}[1]{\emph{\textcolor{definecolour}{#1}}\/}
\newcommand*{\nameit}[1]{\totheright{\upshape\textcolor{definecolour}{[#1]}}}
\newcommand*{\totheright}[1]{{\unskip\nobreak\hfil\penalty50\hskip1em\hbox{}\nobreak\hfil#1\parfillskip=0pt\finalhyphendemerits=0\par}}
\newcommand*{\reason}[1]{\textrm{[#1]}}
\newcommand*{\instantiateas}{\rightsquigarrow}
\DeclarePairedDelimiter\bra{\langle}{\rvert}
\DeclarePairedDelimiter\ket{\lvert}{\rangle}
\DeclarePairedDelimiterX\braket[2]{\langle}{\rangle}{#1\delimsize\vert#2}
\DeclarePairedDelimiterX\braketwithop[3]{\langle}{\rangle}{#1\delimsize\vert#2\delimsize\vert#3}
\DeclarePairedDelimiterXPP{\trace}[1]{\tr}{(}{)}{}{{#1}}
\DeclareMathOperator{\spec}{spec}
\DeclareMathOperator{\interior}{Int}
\DeclareMathOperator{\tr}{Tr}
\newcommand{\hilbertspace}{\mathscr{X}}
\newcommand{\Hop}{\measurements}
\newcommand{\subspace}[1][]{\mathscr{W}_{#1}}
\newcommand{\naturals}{\mathbb{N}}
\newcommand{\nonnegreals}{\reals_{\geq0}}
\newcommand{\posreals}{\reals_{>0}}
\newcommand{\fket}[1][]{\smash{\ket{\phi_{#1}}}}
\newcommand{\uket}{\ket{\Psi}}
\newcommand{\operator}[1]{\hat{#1}}
\newcommand{\measurement}[1]{\operator{#1}}
\newcommand{\measurements}{\mathscr{H}}
\newcommand{\projectoron}[1]{\measurement{P}_{#1}}
\newcommand{\projector}[1][]{\projectoron{\subspace[#1]}}
\newcommand{\density}[1][]{\measurement{\rho}_{#1}}
\newcommand{\identity}{\measurement{I}}
\newcommand{\zero}{\measurement{0}}
\newcommand{\spectrum}[1]{\spec(\measurement{#1})}
\title{Conditioning Accept-Desirability models in the context of AGM-like belief change}
\author{Kathelijne Coussement \and Gert de Cooman \and Keano De Vos}
\date{\today}
\address{Ghent University, Foundations Lab, Technologiepark-Zwijnaarde 125, 9052 Zwijnaarde, Belgium}
\begin{document}
\begin{abstract}
We discuss conditionalisation for Accept-Desirability models in an abstract decision-making framework, where uncertain rewards live in a general linear space, and events are special projection operators on that linear space.
This abstract setting allows us to unify classical and quantum probabilities, and extend them to an imprecise probabilities context.
We introduce a new conditioning rule for our Accept-Desirability models, based on the idea that observing an event introduces new indifferences between options.
We associate a belief revision operator with our conditioning rule, and investigate which of the AGM axioms for belief revision still hold in our more general framework.
We investigate two interesting special cases where all of these axioms are shown to still hold: classical propositional logic and full conditional probabilities.
\end{abstract}

\keywords{Belief change, AGM postulates, Accept-Desirability models, sets of desirable gambles, sets of acceptable gambles, imprecise probabilities, conditioning, belief expansion, belief revision, classical propositional logic, full conditional probabilities}
\maketitle

\section{Introduction}\label{sec::introduction}
Quite a few years ago already, \citeauthor{cooman2003a} \cite{cooman2003a} proposed a generalisation of the Alchourron, Gärdenfors and Makinson (AGM) framework \cite{gardenfors1988} for belief change---change of belief state---in propositional logic (and probability theory) to deal with what he called \emph{belief models}, and which are intended as much more general and abstract representations for what may constitute a belief state.

More recently, he and co-authors introduced a theory of accepting and rejecting gambles \cite{quaeghebeur2015:statement}, leading to the quite general and operationalisable \emph{Accept-Desirability framework} for decision-making under uncertainty.\footnote{The Accept-Desirability framework is an important special case of the yet more general Accept-Reject framework \cite{quaeghebeur2015:statement}, which, though interesting, is nevertheless too general for our present concerns and purposes.}
It provides a context for dealing jointly with such notions as acceptability, desirability, and indifference of gambles, as well as the relationships between them.
Also, it gives a general foundation to many, if not most, of the models commonly encountered in the field of imprecise probabilities, such as credal sets and coherent (conditional) lower previsions \cite{levi1980a,williams1975,walley1991,troffaes2013:lp} and sets of acceptable, or desirable (favourable), gambles; in fact, it allows for dealing simultaneously with \emph{acceptability} and \emph{desirability}, unifying the work of \citeauthor{seidenfeld1990} \cite{seidenfeld1990} and \citeauthor{walley2000} \cite{walley2000} on desirable gambles, and the work of \citeauthor{walley1991} \cite{walley1991} on acceptable gambles.
As opposed to \citeauthor{fishburn1986} \cite{fishburn1986}, it doesn't take either the strict or non-strict preference order to be basic, nor derives the other from it.
Instead, both preference orders are equally important, providing the possibility to express preferences in a more nuanced manner.

In a previous paper \cite{coussement25a}, which was intended as a first exploration, we combined ideas from both papers to show how the AGM framework for belief change can be extended to deal with conditioning on events in the so-called Desirability-Indifference framework, which is a special case of the Accept-Desirability framework \cite{quaeghebeur2015:statement}.

In the present paper, building on that work, we significantly extend our findings there to the more general setting of the Accept-Desirability framework.
What do we want to accomplish, and how will we go about it?

First, in \cref{sec::accept:reject}, we generalise the Accept-Desirability framework, which was originally formulated in terms of gambles \cite{quaeghebeur2015:statement}, towards working with more abstract uncertain rewards, called \emph{options}, and which live in a general linear space.
This allows us to represent a subject's preferences and beliefs about in terms of so-called \emph{Accept-Desirability models}, which are comprised of a set of acceptable options and a set of undesirable options.
These models come with an appropriate \emph{conservative inference} mechanism, together with notions of consistency and inferential closure.
A generalised notion of abstract \emph{events}, defined as special projection operators on the option space, and which can be used to call off options, is then briefly introduced in \cref{sec::events}.

We introduce this level of generality and abstraction because the theory we thus arrive at allows us to encompass---but isn't limited to---both classical probabilistic inference, as described by \citeauthor{walley2000} \cite{walley2000}, \citeauthor{finetti1970} \cite{finetti1970}, and \citeauthor{williams1975} \cite{williams1975}, amongst others, and quantum probabilistic inference, as studied  \citeauthor{benavoli2016:quantum_2016} \cite{benavoli2016:quantum_2016} and by \citeauthor{devos2025:imprecise:qm}, \cite{devos2025:imprecise:qm,devos2025:isipta}.
We'll use these two special cases as running examples, to allow the reader to gain some intuition, and find out what the abstract framework amounts to in these more concrete instances.

Using these abstract notions of options and events, we introduce a novel approach to conditioning on events in \cref{sec::conditioning}, which is intended to work for general Accept-Desirability models, and therefore in particular for all their special cases, such as (classical and quantum) probabilities, coherent lower previsions, sets of acceptable and sets of desirable gambles.
The novelty consists in expressing the new information that an event has occurred in terms of an indifference statement: two options become indifferent when they result in the same called-off option.
Our conditioning rule then combines earlier ideas about how to deal with called-off options \cite{finetti19745,walley1991,williams1975} with new ideas on how to treat indifference within the Accept-Desirability framework.

\cref{sec::belief:change} has a succinct summary of \citeauthor{cooman2003a}'s extension of the AGM framework for belief change to his abstract \emph{belief models}.
With this scaffolding in place, we show in \cref{sec::conditioning:as:belief:change} how Accept-Desirability  models fit into the belief model framework, and how our abstract notion of conditioning can be seen as a form of belief change---\emph{belief revision}---in the Accept-Desirability framework, satisfying relevant generalised versions of the AGM axioms.
Indeed, we prove that in our more general context, many, but not necessarily all, of the original AGM axioms for belief revision still hold; we show by means of counterexamples that two of these, expressing that revision ought to coincide with deductive inference when the incoming new information is consistent with the old beliefs, fail to hold in general for our revision rule based on conditioning; we also argue that these failures occur because of a possible loss of precision when conditioning, which is associated with the phenomenon of \emph{dilation}.

In \cref{sec::propositional:models}, we show that inference in \emph{classical propositional logic} can be seen as a special case of the conservative inference mechanism associated with Accept-Desirability models, and we investigate what our revision operator does when restricted to Accept-Desirability models of the propositional type.
This allows us to prove that \emph{all} the AGM axioms hold for our revision operator when we restrict it to this propositional context.

\cref{sec::full:conditional:previsions} looks at full conditional probability models, and shows how these also can be seen as---put in a one-to-one correspondence with---specific types of Accept-Desirability models.
An investigation of how our revision operator acts on such so-called \emph{full conditional} Accept-Desirability models allows us to confirm that for such models too, \emph{all} of the AGM axioms are verified.
It also enables us to show in detail how our conditioning rule coincides with Bayes' Rule for precise probabilities, and exactly what information our revision operator removes from an existing full conditional model when there's no consistency between the new observation of an event and the existing model.

We then briefly summarise our results, and draw some preliminary conclusions from them, in \cref{sec::discussion}.
To improve readability, we've decided to move technical proofs and lemmas to an appendix.

\section{Statement models}\label{sec::accept:reject}
We consider an agent, called You, who's asked to make accept and/or reject statements about options.
\define{Options} are abstract objects that are intended to represent uncertain rewards.
When You \define{accept} an option, You state that You agree to receive an uncertain reward\footnote{What exactly it is that You pay or receive, can be expressed in utiles. We assume that having more utiles is always the desired outcome, and that utility scales linearly.} that depends on something You're uncertain about.
\define{Rejecting} an option is making the statement that accepting it is something You don't agree to.
However, it may be that You don't have enough information to make an accept or a reject statement about a given option; in that case You're allowed to remain \define{unresolved}.

We'll assume that options~\(\opt\) can be added and multiplied with real numbers, with all the usual properties, so that they live in some real linear space~\(\opts\), called the \define{option space}.
The null option~\(0\) represents the Status Quo, and we'll assume that Your decision problem is \emph{non-trivial} in the sense that \(\opts\neq\set{0}\).

We'll stick to this abstract setting for most of this paper, because it allows us to remain as general as possible, and to apply our results in a wide range of contexts.
But, in order to provide some intuition, as well as evidence for the versatility of our abstract setting, we'll also consider two specific instances: classical probabilistic reasoning and quantum probabilistic reasoning.
We'll discuss these as two examples running through much of the paper, building up a more detailed picture as we go along.
There are other reasons for working with abstract options, besides that they provide a framework that allows us to deal with classical and quantum probabilities in one fell swoop: they allow dealing elegantly with representation under indifference, and also include dealing with horse lotteries as a special case; see the discussion in \cite{cooman2021:archimedean:choice} for more details and examples.

\begin{gamblesexample}\label{example:gambles}
The first running example we'll consider, deals with classical probabilistic reasoning.
In this context, we focus on a variable~\(\varstate\) that assumes values in some non-empty set~\(\states\), but whose actual value is unknown to You.

With any bounded map~\(\gbl\colon\states\to\reals\), called \define{gamble}, there corresponds an uncertain reward \(\gbl(\varstate)\), expressed in units of some linear utility scale.
The set of all gambles, denoted by~\(\gambles(\states)\), or simply by~\(\gambles\) if it's clear from the context what the \define{possibility space}~\(\states\) is, constitutes a real linear space under pointwise addition and pointwise scalar multiplication with real numbers, and thus serves as an option space: gambles are the options You can accept or reject in this classical decision problem.
The Status Quo is represented by the null gamble~\(0\), which is the constant gamble that always returns~\(0\).
\end{gamblesexample}

\begin{quantumexample}\label{example:quantum}
The second running example deals with quantum probabilistic reasoning \cite{devos2025:imprecise:qm,devos2025:isipta,benavoli2016:quantum_2016}.
In this context, we consider a quantum system whose unknown state~\(\uket\) lives in a \emph{finite}-dimensional state space~\(\hilbertspace\), which is a(n \(n\)-dimensional) complex Hilbert space.
Options in this context are the Hermitian operators~\(\measurement{A}\) on~\(\hilbertspace\) corresponding to \define{measurements} on the system, where the uncertain outcome of a measurement~\(\measurement{A}\)---one of its eigenvalues---is interpreted as an uncertain reward, expressed in units of some linear utility scale.
The set of all such Hermitian operators~\(\measurement{A}\) constitutes an \(n^2\)-dimensional real linear space~\(\measurements\), and this is Your option space for this quantum decision problem: measurements are the options You can accept or reject.
The Status Quo is represented by the null operator~\(\measurement{0}\), all of whose eigenvalues are zero; it corresponds to a measurement that's sure to give zero as an outcome.
\end{quantumexample}

The set of those options You accept, is denoted by~\(\A_\acc\) and the set of options You reject by~\(\A_\rej\).
Together, these sets form an \define{assessment}~\(\A\coloneqq\Adelim{\A_\acc}{\A_\rej}\) that describes Your behaviour regarding those options You care to make statements about.
We'll collect all possible assessments in the set~\(\As\coloneqq\powersetof\opts\times\powersetof\opts\); in other words, assessments are pairs of sets of options.

The options You're unresolved about---the \define{unresolved} options---are given by
\(\A_{\unres}\coloneqq\compof{\A_\acc\cup\A_\rej}\), where `\(\comp\)' denotes set complement.
We can also identify Your set of \define{indifferent} options~\(\A_\indif\coloneqq\A_\acc\cap-\A_\acc\) and Your set of \define{desirable} options~\(\A_\des\coloneqq\A_\acc\cap-\A_\rej\):\footnote{The minus sign before a set denotes the \define{Minkowski additive inverse}. For example, \(-\A_\acc\coloneqq\set{-\opt\given\opt\in\A_\acc}\). We'll also use the \define{Minkowski sum}, so for instance \(\A_\acc+\A_\rej\coloneqq\set{\opt+\altopt\given\opt\in\A_\acc\text{ and }\altopt\in\A_\rej}\). The Minkowski difference is defined similarly.} You're indifferent about an option when You both want to get it \emph{and} give it away, and You find an option desirable when You want it \emph{but} don't want to give it away.
Options in \(\A_acc\cap\A_rej\) are called \define{confused}; such confusion is clearly to be avoided, and is considered to be a form of inconsistency, or irrationality, in Your assessment.

If we define inclusion for assessments by letting~\(\A\subseteq\otherA\ifandonlyif\group{\A_\acc\subseteq\otherA_\acc\text{ and }\A_\rej\subseteq\otherA_\rej}\) for all~\(\A,\otherA\in\As\), then the structure \(\structure{\As,\subseteq}\) is a complete lattice with meet and join respectively defined by~\(\A\intersection\otherA\coloneqq\Adelim{\A_\acc\intersection\otherA_\acc}{\A_\rej\intersection\otherA_\rej}\) and \(\A\union\otherA\coloneqq\Adelim{\A_\acc\union\otherA_\acc}{\A_\rej\union\otherA_\rej}\), and with top~\(\Astop\coloneqq\Adelim{\opts}{\opts}\) and bottom~\(\Adelim{\emptyset}{\emptyset}\).
If \(\A\subseteq\otherA\), we say that \(\A\) is \define{less resolved} than~\(\otherA\).

\subsection{Statement models}\label{sec::statement:models}
We now focus on a special subset~\(\Ms\subseteq\As\) of assessments, called \define{statement models}.
For an assessment to be called a statement model, four criteria have to be met:
\begin{enumerate}[label={\upshape M\arabic*.},ref={\upshape M\arabic*},series=AR,widest=4,leftmargin=*,itemsep=0pt]
\item\label{axiom:model:background:respected} \(0\in\M_\acc\);\nameit{Indifference to Status Quo}
\item\label{axiom:model:null:not:rejected} \(0\notin\M_\rej\);\nameit{No Confusion}
\item\label{axiom:model:accepts:convex:cone} \(\M_\acc\) is a convex cone;\nameit{Deductive Closure}
\item\label{axiom:model:no:limbo} \(\shullof{\M_\rej}-\M_\acc\subseteq\M_\rej\);\nameit{No Limbo}
\end{enumerate}
\noindent where \(\shullof{B}\) is the set of all (strictly) positive scalar multiples of the options in a set~\(B\subseteq\opts\).
We see that \(\structure{\Ms,\subseteq}\) is a \emph{complete meet-semilattice}: \(\Ms\) is closed under taking non-empty infima.
This means that it comes with a \emph{conservative inference} method, as it allows us to define a \define{closure operator}
\begin{equation*}
\Mcls\colon\As\to\Ms\cup\set{\Astop}\colon\A\mapsto\Intersection\set{\M\in\Ms\given\A\subseteq\M},
\end{equation*}
which maps any assessment to the \emph{least resolved} statement model including it, if there's any statement model that includes it.
This last condition, namely that \(\Mclsof{\A}\in\Ms\), will be satisfied if and only if \(\posiof{\A_\acc}\cap\A_\rej=\emptyset\), in which case we'll call~\(\A\) \define{deductively closable}.
In that case also
\begin{equation}\label{eq::model:closure}
\Mclsof{\A}
=\Adelim{\posiof{\A_\acc}}{\shullof{\A_\rej}\cup\group{\shullof{\A_\rej}-\posiof{\A_\acc}}},
\end{equation}
where \(\posiof{B}\) is the smallest convex cone that includes the set~\(B\subseteq\opts\), or in other words, the set of all (strictly) positive convex combinations of the elements of~\(B\).
\labelcref{axiom:model:background:respected,axiom:model:null:not:rejected,axiom:model:accepts:convex:cone,axiom:model:no:limbo} are the minimal rationality requirements we'll impose on Accept-Reject statements.
For a justification of these requirements, and a thorough discussion of these and all other results and notions mentioned in this \cref{sec::accept:reject}, we refer to \cite{quaeghebeur2015:statement}.

\subsection{About the background}\label{sec::background}
Axiom~\labelcref{axiom:model:background:respected}, or equivalently, \(\zeroM\coloneqq\Adelim{\set{0}}{\emptyset}\subseteq\M\), requires that You should accept---even be indifferent to---the \define{Status Quo}, represented by the null option~\(0\).
\(\zeroM\) is an example of what we'll call a \define{background} (statement) \define{model}: a statement model that You always accept without any real introspection, regardless of any relevant information You might have.
In specific cases, this background model, generically denoted by~\(\bgM\coloneqq\Adelim{\bgM_\acc}{\bgM_\rej}\in\Ms\), may be larger, but we'll always have that \(\zeroM\subseteq\bgM\).
We'll then require that the \define{background}~\(\bgM\) should be \define{respected}, which amounts to replacing \labelcref{axiom:model:background:respected} by
\begin{enumerate}[label={\upshape M1*.},ref={\upshape M1*},leftmargin=*,itemsep=0pt]
\item\label{axiom:stronger:background:respected} \(\bgM\subseteq\M\).\nameit{Background}
\end{enumerate}
The set of all statement models that respect~\(\bgM\) is denoted by~\(\Msabove{\bgM}\coloneqq\set{\M\in\Ms\given\bgM\subseteq\M}\).
It's also closed under arbitrary infima and therefore also comes with its own \emph{conservative inference} method; for the corresponding closure operator we have that \(\Mclsbgof{\bgM}{\A}=\Mclsof{\bgM\union\A}\) for all~\(\A\in\As\).
The augmented assessment~\(\A\union\bgM\) is deductively closable if there's some statement model~\(\M\in\Msabove{\bgM}\) such that \(\A\union\bgM\subseteq\M\), which is equivalent to
\begin{equation}\label{eq::consistency}
\posiof{\A_\acc\cup\bgM_\acc}\cap\group{A_\rej\cup\bgM_\rej}=\emptyset.
\end{equation}
We'll then call the assessment~\(\A\) \define{\(\bgM\)-consistent}.
In that case, we'll also call the statement model~\(\Mclsbgof{\bgM}{\A}=\Mclsof{\A\union\bgM}\) the \define{\(\bgM\)-natural extension} of the assessment~\(\A\).

The background model~\(\bgM=\Mclsbgof{\bgM}{\emptyset}\) is the bottom of the complete meet-semilattice \(\structure{\Msabove{\bgM},\subseteq}\), and is also called the \define{vacuous} statement model.

\subsection{The Accept-Desirability framework}\label{sec::accept:desirability}
In the context of this paper, the statement models constitute too large a class; we'll restrict the discussion to a special subclass of them.
\begin{definition}
A statement model~\(\M\in\Ms\) is called an \define{Accept-Desirability model}, or \define{AD-model} for short, if it satisfies the \emph{Accept-Desirability condition:}
\begin{equation}\label{eq::AD:condition}
\M_\rej\subseteq-\M_\acc.
\tag{AD}
\end{equation}
\end{definition}
\noindent This condition stipulates that You should only reject options that You want to give away.
An equivalent requirement is that~\(\M_\des=-\M_\rej\), so an AD-model~\(\M\) can be completely described by specifying its set of acceptable options~\(\M_\acc\) and its set of desirable options~\(\M_\des\), in the sense that \(\M=\Adelim{\M_\acc}{-\M_\des}\).

We'll denote the set of all AD-models by~\(\adfMs\), and the set of all AD-models that respect a given AD-background model~\(\bgM\) by~\(\adfMsabove{\bgM}\), with~\(\bgM\in\adfMsabove{\zeroM}\).
We call an assessment~\(\A=\Adelim{\A_\acc}{\A_\rej}\) an \define{AD-assessment} if it satisfies the AD-condition~\(\A_\rej\subseteq-\A_\acc\), and we collect the AD-assessments in the set \(\adfAs\coloneqq\set{\A\in\As\given\A_\rej\subseteq-\A_\acc}\).

For AD-models, the rationality criteria \labelcref{axiom:stronger:background:respected,axiom:model:null:not:rejected,axiom:model:accepts:convex:cone,axiom:model:no:limbo} can be rewritten as follows in terms of the sets~\(\M_\acc\) and~\(\M_\des\):

\begin{proposition}\label{prop::ad:equivalent:criteria}
Consider any background~\(\bgM\in\adfMsabove{\zeroM}\), then an AD-assessment~\(M\) is an AD-model that respects the background~\(\bgM\), so \(M\in\adfMsabove{\bgM}\), if and only if
\begin{enumerate}[label={\upshape AD\arabic*.},ref={\upshape AD\arabic*},series=AD,widest=4,leftmargin=*,itemsep=0pt]
\item\label{axiom:AD:background} \(V\subseteq M\);\nameit{Background}
\item\label{axiom:AD:zero:not:desirable} \(0\notin\M_\des\);\nameit{Strictness}
\item\label{axiom:AD:deductive:closedness} \(\M_\des\) and \(\M_\acc\) are convex cones;\nameit{Deductive Closedness}
\item\label{axiom:AD:no:limbo} \(\M_\acc+\M_\des\subseteq\M_\des\).\nameit{Sweetened Deals}
\end{enumerate}
\end{proposition}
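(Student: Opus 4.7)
The plan is to verify the equivalence by rewriting each of the statement-model axioms \labelcref{axiom:stronger:background:respected,axiom:model:null:not:rejected,axiom:model:accepts:convex:cone,axiom:model:no:limbo} in the AD-setting, exploiting the bijection~\(\M_\rej=-\M_\des\) supplied by the AD-condition~\labelcref{eq::AD:condition}. Since \(\M\in\adfAs\) is fixed, the equality \(\M=\Adelim{\M_\acc}{-\M_\des}\) lets us translate every statement about~\(\M_\rej\) into one about~\(\M_\des\), and vice versa.

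For the `only if' direction, first note that \labelcref{axiom:AD:background} is literally \labelcref{axiom:stronger:background:respected}, and \labelcref{axiom:AD:zero:not:desirable} follows from \labelcref{axiom:model:null:not:rejected} by negation since \(0=-0\). The convex-cone assertion for~\(\M_\acc\) in \labelcref{axiom:AD:deductive:closedness} is just \labelcref{axiom:model:accepts:convex:cone}; to get it for~\(\M_\des\), I'll show that~\(\M_\rej\) is closed under positive scalar multiplication and under addition, and then transfer by negation. Positive scalars are immediate from \labelcref{axiom:model:no:limbo} together with~\(0\in\M_\acc\). For additive closure, I'll pick~\(\gbl,\altgbl\in\M_\rej\); the AD-condition gives~\(-\altgbl\in\M_\acc\), so \labelcref{axiom:model:no:limbo} applied to~\(\gbl\in\shullof{\M_\rej}\) and~\(-\altgbl\in\M_\acc\) yields~\(\gbl+\altgbl=\gbl-(-\altgbl)\in\M_\rej\). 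Finally, \labelcref{axiom:AD:no:limbo} rewrites as \(\M_\rej-\M_\acc\subseteq\M_\rej\), which follows from \labelcref{axiom:model:no:limbo} together with \(\M_\rej\subseteq\shullof{\M_\rej}\).

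For the `if' direction, \labelcref{axiom:stronger:background:respected} and \labelcref{axiom:model:null:not:rejected} are again immediate from \labelcref{axiom:AD:background} and \labelcref{axiom:AD:zero:not:desirable}, and \labelcref{axiom:model:accepts:convex:cone} is part of \labelcref{axiom:AD:deductive:closedness}. For \labelcref{axiom:model:no:limbo}, the convex-cone property of~\(\M_\des\) yields \(\shullof{\M_\rej}\subseteq\M_\rej\), and then \labelcref{axiom:AD:no:limbo} rewritten as~\(\M_\rej-\M_\acc\subseteq\M_\rej\) gives \(\shullof{\M_\rej}-\M_\acc\subseteq\M_\rej-\M_\acc\subseteq\M_\rej\).

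The only mildly non-routine step is the additive closure of~\(\M_\rej\) in the `only if' direction, since this is the place where the AD-condition interacts non-trivially with \labelcref{axiom:model:no:limbo} to upgrade the semigroup-like property of~\(\M_\rej\) under subtraction of acceptable options into closure under addition within~\(\M_\rej\) itself; the remaining verifications are essentially bookkeeping using the identity \(\M_\rej=-\M_\des\) and the presence of~\(0\in\M_\acc\).
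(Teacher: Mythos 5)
Your proposal is correct and follows essentially the same route as the paper's proof: both directions are handled by translating between \(\M_\rej\) and \(\M_\des\) via the AD-condition, with \labelcref{axiom:model:no:limbo} combined with \(0\in\M_\acc\) (and, for additivity, with \(-\M_\rej\subseteq\M_\acc\)) yielding the convex-cone property of the desirable options and \labelcref{axiom:AD:no:limbo}. The only difference is cosmetic—you perform the computations on the reject side and transfer by negation, and you spell out the easy direction that the paper dismisses as direct verification—so nothing further is needed.
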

\noindent As an immediate consequence, we find that also
\begin{enumerate}[resume*=AD,widest=5]
\item\label{axiom:AD:nonpositive:not:acceptable} \(0\notin\bgM_\des+\M_\acc\).
\end{enumerate}

\subsection{AD-models and the literature}
An AD-model~\(\M=\Adelim{\M_\acc}{-\M_\des}\) has \emph{both} a set of acceptable options~\(M_\acc\) and a set of desirable options~\(\M_\des\).
In the context of gambles and classical probabilistic inference, such sets appear \emph{on their own} in quite a number of papers: sets of acceptable gambles in work by \citeauthor{smith1961} \cite{smith1961}, \citeauthor{williams1975} \cite{williams1975}, \citeauthor{walley1991} \cite{walley1991}, and \citeauthor{troffaes2013:lp} \cite{troffaes2013:lp}; sets of desirable gambles\footnote{In some, but not all, of these works, sets of desirable gambles can actually be usefully interpreted as sets of acceptable gambles without the zero gamble.} in work by \citeauthor{walley2000} \cite{walley2000,walley1991}, \citeauthor{moral2003} \cite{moral2003}, \citeauthor{couso2011} \cite{couso2011}, \citeauthor{cooman2021:archimedean:choice} and colleagues \cite{cooman2010,decooman2015:coherent:predictive:inference,cooman2007d,cooman2021:archimedean:choice}, and \citeauthor{miranda2010} \cite{miranda2010,miranda2022:nonlinear:desirability,zaffalon2017:incomplete:preferences}.
To the best of our knowledge, they were studied as an interacting pair in their full generality for the first time in \cite{quaeghebeur2015:statement}, of which the account above is at the same time a very succinct summary, as well as a generalisation from gambles to more abstract options.

In the context of quantum theory, sets of desirable measurements appear in papers by \citeauthor{benavoli2016:quantum_2016} \cite{benavoli2016:quantum_2016} and \citeauthor{devos2025:imprecise:qm} \cite{devos2025:imprecise:qm,devos2025:isipta}.

\subsection{Further assumptions about the AD-background}
As already hinted at above, when working in the Accept-Desirability framework, we'll typically adopt a background that's also of the AD-form, so
\begin{enumerate}[label={\upshape BG\arabic*.},ref={\upshape BG\arabic*},series=background,widest=1,leftmargin=*,itemsep=0pt]
\item\label{axiom:background:model} \(\bgM\coloneqq\Adelim{\bgM_\acc}{-\bgM_\des}\in\adfMsabove{\zeroM}\).\nameit{Background is AD}
\end{enumerate}
With this background~\(\bgM\), we can associate two important vector orderings:
\begin{equation}\label{eq::option:orderings}
\opt\geq\altopt\ifandonlyif\opt-\altopt\in\bgM_\acc\text{ and }\opt\stronggt\altopt\ifandonlyif\opt-\altopt\in\bgM_\des,
\text{ for all~\(\opt,\altopt\in\opts\)}.
\end{equation}
It follows from \cref{axiom:AD:background,axiom:AD:zero:not:desirable,axiom:AD:deductive:closedness,axiom:AD:no:limbo} that the first ordering~\(\geq\) will be a vector preorder and the second ordering~\(\stronggt\) a strict vector ordering.

Interestingly, the background can be reconstructed from these two orderings:
\begin{equation*}
\opts_{\geq0}
\coloneqq\set{\opt\in\opts\colon\opt\geq0}
=\bgM_\acc\text{ and }\opts_{\stronggt0}
\coloneqq\set{\opt\in\opts\colon\opt\stronggt0}
=\bgM_\des.
\end{equation*}
From the set of background-indifferent options~\(\bgM_\equiv\coloneqq\bgM_\acc\cap-\bgM_\acc\), we derive the equivalence relation~\(\equiv\):
\begin{equation*}
\opt\equiv\altopt
\ifandonlyif\opt-\altopt\in\bgM_\equiv,
\text{ for all~\(\opt,\altopt\in\opts\)}.
\end{equation*}
In the Accept-Desirability framework, we \emph{don't} require that the vector preorder~\(\geq\) should be derivable from the strict ordering~\(\stronggt\) and the equivalence relation~\(\equiv\), in the sense that \(\opt\geq\altopt\ifandonlyif\group{\opt\stronggt\altopt\text{ or }\opt\equiv\altopt}\) for all~\(\opt,\altopt\in\opts\).\footnote{This connection does hold in the more restrictive setting of so-called Desirability-Indifference models \cite{quaeghebeur2015:statement}, whose relation to AGM belief change was explored in an earlier version of this work \cite{coussement25a}.}

It's a matter of direct verification that the set~\(\adfMsabove{\bgM}\) of all AD-models that respect a given AD-background~\(\bgM\) is also closed under non-empty infima, and therefore also allows for \emph{conservative inference}.
In fact, the closure~\(\Mclsbgof{\bgM}{\A}\) of an assessment~\(\A=\Adelim{\A_\acc}{-\A_\des}\) that satisfies the AD-condition~\(\A_\rej\subseteq-\A_\acc\), is still of the AD-type, so \(\Mclsbgof{\bgM}{\A}\in\adfMsabove{\bgM}\).
In this sense, conservative inference for AD-models uses the same inference mechanism---closure operator---as it does for the more general statement models.

To be able to associate lower and upper previsions with AD-models further on, we make a number of further assumptions about the background AD-model~\(\bgM=\Adelim{\bgM_\acc}{-\bgM_\des}\).
First, we'll assume the existence of an \emph{order unit}~\(\optunit\) for the background strict vector ordering~\(\stronggt\), and therefore also for the background partial vector preorder~\(\weakgeq\):
\begin{enumerate}[resume*=background,widest=2]
\item\label{axiom:background:order:unit} there's some option~\(\optunit\in\bgM_\des\) such that \(\group{\forall\opt\in\opts}\group{\exists\lambda\in\posreals}\,\lambda\optunit\stronggt\opt\stronggt-\lambda\optunit\).\nameit{Order Unit}
\end{enumerate}
We'll also call \(\optunit\) the \define{unit option}.
We're furthermore going to assume that this unit option connects the background partial vector preorder~\(\weakgeq\) and the background strict vector ordering~\(\stronggt\) in the following way:
\begin{enumerate}[resume*=background,widest=3]
\item\label{axiom:background:order:unit:connection} if \(\opt\stronggt0\) then there's some \(\alpha\in\posreals\) such that \(\opt\weakgeq\alpha\optunit\), for all~\(\opt\in\opts\).
\end{enumerate}

\begin{gamblesexamplecont}
For the background vector preorder~\(\weakgeq\), we take the \define{weak dominance ordering}, defined by
\begin{equation*}
\gbl\weakgeq\altgbl\ifandonlyif\inf\group{\gbl-\altgbl}\geq0,
\text{ for all~\(\gbl,\altgbl\in\gambles\)},
\end{equation*}
and for the background strict vector ordering~\(\stronggt\), we choose the \define{strong dominance ordering}, defined by
\begin{equation*}
\gbl\stronggt\altgbl\ifandonlyif\inf\group{\gbl-\altgbl}>0,
\text{ for all~\(\gbl,\altgbl\in\gambles\)},
\end{equation*}
where we let \(\inf\gbl\coloneqq\inf\set{\gbl\group\state\given\state\in\states}\).
Hence, for the AD-background~\(\bgM=\Adelim{\bgM_\acc}{-\bgM_\des}\), we have that \(\bgM_\acc=\gambles_{\weakgeq0}=\set{\gbl\in\gambles\given\inf\gbl\geq0}\) and \(\bgM_\des=\gambles_{\stronggt0}=\set{\gbl\in\gambles\given\inf\gbl>0}\).
Note that \(\bgM_\indif=\set{0}\) and that, typically, \(\bgM_\des\cup\bgM_\indif\neq\bgM_\acc\).

For the order unit for the background strict vector ordering~\(\stronggt\), or in other words the \define{unit gamble}, we can take the constant gamble~\(1\), since for any gamble~\(\gbl\in\gambles\), we have that \(\lambda\stronggt\gbl\stronggt-\lambda\) for all real~\(\lambda>\sup\abs{\gbl}\), so \labelcref{axiom:background:order:unit} holds.
Moreover, \(\gbl\stronggt0\) means that \(\inf\gbl>0\), so there's some real~\(\epsilon>0\) such that \(\gbl\weakgeq\epsilon\), making sure that \labelcref{axiom:background:order:unit:connection} also holds.
\end{gamblesexamplecont}

\begin{quantumexamplecont}
We denote by~\(\spectrum{A}\) the set of all (real) eigenvalues--- possible outcomes---of the measurement~\(\measurement{A}\).
For the background vector preorder~\(\geq\), we take the one associated with \define{positive semidefiniteness}, defined by
\begin{equation*}
\measurement{A}\weakgeq\measurement{B}
\ifandonlyif\min\spec\group{\measurement{A}-\measurement{B}}\geq0,
\text{ for all~\(\measurement{A},\measurement{B}\in\Hop\)}.
\end{equation*}
For the background strict vector ordering~\(\stronggt\), we opt for a similar definition, namely the one associated with \define{positive definiteness}:
\begin{equation*}
\measurement{A}\stronggt\measurement{B}
\ifandonlyif\min\spec\group{\measurement{A}-\measurement{B}}>0,
\text{ for all~\(\measurement{A},\measurement{B}\in\Hop\)}.
\end{equation*}
The AD-background~\(\bgM=\Adelim{\bgM_\acc}{-\bgM_\des}\) has \(\bgM_\acc=\Hop_{\weakgeq0}=\set{\measurement{A}\in\Hop\given\min\spectrum{A}\geq0}\) and \(\bgM_\des=\Hop_{\stronggt0}=\set{\measurement{A}\in\Hop\given\min\spectrum{A}>0}\).
Observe that \(\bgM_\indif=\set{\measurement{0}}\) and that, typically, \(\bgM_\des\cup\bgM_\indif\neq\bgM_\acc\).

For the order unit for the background strict vector ordering~\(\stronggt\), or in other words for the \define{unit measurement}, we can take the identity~\(\identity\), since for any measurement~\(\measurement{A}\in\measurements\), we have that \(\lambda\identity\stronggt\measurement{A}\stronggt-\lambda\identity\) for all~\(\lambda>\max\set{\abs{\lambda}\given\lambda\in\spectrum{A}}\).
Moreover, \(\measurement{A}\stronggt0\) means that \(\min\spectrum{A}>0\), so there's some real~\(\epsilon>0\) such that \(\measurement{A}\weakgeq\epsilon\identity\), making sure that \labelcref{axiom:background:order:unit:connection} also holds.
\end{quantumexamplecont}

\subsection{AD-models and coherent (lower and upper) previsions}\label{sec::AD-previsions}
That sets of desirable or acceptable gambles can be used to define what are sometimes called coherent (lower and upper) previsions, has been discussed by several authors, amongst whom \citeauthor{smith1961} \cite{smith1961}, \citeauthor{williams1975} \cite{williams1975}, \citeauthor{walley1991} \cite{walley1991,walley2000}, and \citeauthor{troffaes2013:lp} \cite{troffaes2013:lp}, as well as by \citeauthor{cooman2021:archimedean:choice} \cite{cooman2021:archimedean:choice} in more general contexts where gambles are replaced by the more abstract options.
It will be useful to briefly discuss the connections between AD-models and coherent (lower and upper) previsions here as well.

We can associate, with any AD-model \(\M\in\adfMsabove{\bgM}\), two real-valued\footnote{That these functionals are real-valued was proved by \citeauthor{cooman2021:archimedean:choice} in \cite{cooman2021:archimedean:choice}.  The underlying deeper reason for their being real-valued is that, since \(\optunit\in\interior\group{\bgM_\des}\), as we assumed in \labelcref{axiom:background:order:unit}, the unit option acts as an \emph{order unit} for the convex cone \(\bgM_\des\), and therefore also for the convex cones~\(\M_\des\) and \(\M_\acc\) that include it.} price functionals: the \define{buying price functional} \(\lprev_{\M}\colon\opts\to\reals\) and the \define{selling price functional} \(\uprev_{\M}\colon\opts\to\reals\), defined by\footnote{That the expressions involving~\(\M_\acc\) and~\(\M_\des\) are equivalent, is due to \(\M_\des\subseteq\M_\acc\) and \(\M_\acc+\M_\des\subseteq\M_\des\).}
\begin{multline}\label{eq::definition:lprev:uprev}
\left.
\begin{aligned}
\lprev_{\M}\group\opt
&\coloneqq\sup\set{\alpha\in\reals\given\opt-\alpha\optunit\in\M_\acc}
=\sup\set{\alpha\in\reals\given\opt-\alpha\optunit\in\M_\des}\\
\uprev_{\M}\group\opt
&\coloneqq\inf\set{\alpha\in\reals\given\alpha\optunit-\opt\in\M_\acc}
=\inf\set{\alpha\in\reals\given\alpha\optunit-\opt\in\M_\des}
\end{aligned}
\right\}\\
\text{for all~\(\opt\in\opts\)}.
\end{multline}
Observe that these price functionals are \define{conjugate} in the sense that
\begin{equation*}
\lprev_{\M}\group\opt
=-\uprev_{\M}\group{-\opt}
\text{ for all~\(\opt\in\opts\)}.
\end{equation*}
Moreover, \(\lprev_{\M}\group\opt\leq\uprev_{\M}\group\opt\) for all~\(\opt\in\opts\), \(\lprev_{\M}\) is super-linear and \(\uprev_{\M}\) is sub-linear; for details, other properties and proofs, see \cite{cooman2021:archimedean:choice}.

A real-valued functional~\(\Lambda\) on the option space~\(\opts\) is a \define{\(\bgM\)-coherent lower prevision} if there's some AD-model~\(\M\in\adfMsabove{\bgM}\) such that \(\Lambda=\lprev_{\M}\), and a \define{\(\bgM\)-coherent upper prevision} if there's some AD-model~\(\M\in\adfMsabove{\bgM}\) such that \(\Lambda=\uprev_{\M}\).\footnote{There are more direct ways of characterising coherent (lower and upper previsions); for details, see \cite{cooman2021:archimedean:choice}.}

\(\Lambda\) is a \define{\(\bgM\)-coherent prevision} if it's both a \(\bgM\)-coherent lower prevision and a \(\bgM\)-coherent upper prevision, or in other words if there's some AD-model~\(\M\in\adfMsabove{\bgM}\) such that \(\Lambda=\lprev_{\M}=\uprev_{\M}\eqqcolon\prev_{\M}\).
It's a matter of straightforward verification that a \(\bgM\)-coherent prevision is a real \emph{linear functional}, so \(\prev_{\M}\group{\lambda\opt+\mu\altopt}=\lambda\prev_{\M}\group\opt+\mu\prev_{\M}\group\altopt\) for all~\(\opt,\altopt\in\opts\) and \(\lambda,\mu\in\reals\).
It's moreover \emph{constant additive} in the sense that \(\prev_{\M}\group{\opt+\alpha\optunit}=\prev_{\M}\group\opt+\alpha\) for all~\(\opt\in\opts\) and \(\alpha\in\reals\).
Finally, it's \emph{positive} in the sense that \(\prev_{\M}\group{\opt}\geq0\) for all~\(\opt\in\opts_{\weakgeq0}\).

\begin{gamblesexamplecont}
In a probability context, what we call \(\Adelim{\gambles_{\weakgeq0}}{-\gambles_{\stronggt0}}\)-coherent lower previsions here are the coherent lower previsions in the sense of \citeauthor{williams2007} \cite{williams2007} and \citeauthor{walley1991} \cite{walley1991}, and the \(\Adelim{\gambles_{\weakgeq0}}{-\gambles_{\stronggt0}}\)-coherent previsions are \citeauthor{finetti19745}'s \cite{finetti19745} coherent previsions; they are continuous, positive and normalised linear real functionals on the linear space of all gambles, and can be seen as expectation functionals associated with finitely additive probability measures.
\end{gamblesexamplecont}

\begin{quantumexamplecont}
In a quantum context, what we call \(\Adelim{\measurements_{\weakgeq0}}{-\measurements_{\stronggt0}}\)-coherent lower previsions here are the coherent lower previsions on the space of all measurements introduced by \citeauthor{benavoli2016:quantum_2016} \cite{benavoli2016:quantum_2016}. They were also studied in more detail, and given a different motivation and interpretation by \citeauthor{devos2025:imprecise:qm} \cite{devos2025:imprecise:qm}.
The \(\Adelim{\measurements_{\weakgeq0}}{-\measurements_{\stronggt0}}\)-coherent previsions~\(\prev\) are  continuous, positive and normalised linear real functionals on the linear space of all measurements, and they are in a one-to-one relationship with the density operators~\(\density\) on the Hilbert space~\(\hilbertspace\), in the sense that \(\prev(\measurement{A})=\trace{\density\measurement{A}}\) for all measurements~\(\measurement{A}\).
\end{quantumexamplecont}

\section{Events}\label{sec::events}
Suppose that after You've expressed Your beliefs by means of an AD-model~\(\M\), You obtain new information in the form of the occurrence of an event~\(\eventopt\).
How will Your AD-model change, given this new information?

Before answering this question in the form of a so-called \define{updated AD-model}, we first need to characterise what \emph{events} are.
We'll do so indirectly, by establishing the effect events have on options.
This abstract approach will allow us, here too, to deal with the case of classical and quantum probability simultaneously.

We'll assume the existence of a non-empty collection of \define{events}~\(\eventopts\), as well as a \define{calling-off operation} \(\ast\):
\begin{equation}
\ast\colon\eventopts\times\opts\mapsto\opts\colon(\eventopt,\opt)\mapsto\calledoff{\opt}.
\end{equation}
This calling-off operation should satisfy the following axioms:
\begin{enumerate}[label={\upshape E\arabic*.},ref={\upshape E\arabic*},series=events,widest=3,leftmargin=*,itemsep=0pt]
\item\label{axiom:event:linear} \(\calledoff{(\opt+\lambda\altopt)}=\calledoff{\opt}+\lambda\group{\calledoff{\altopt}}\) for \(\eventopt\in\eventopts\), \(\opt,\altopt\in\opts\) and \(\lambda\in\reals\);\nameit{Linearity}
\item\label{axiom:event:idempotent}\(\calledoff{\opt}=\calledoff{(\calledoff{\opt})}\) for all~\(\eventopt\in\eventopts\) and all~\(\opt\in\opts\);\nameit{Idempotency}
\item\label{axiom:event:monotone} \(\opt\geq0\then\calledoff{\opt}\geq0\) for all~\(\eventopt\in\eventopts\) and all~\(\opt\in\opts\).\nameit{Monotonicity}
\end{enumerate}
By \labelcref{axiom:event:linear,axiom:event:idempotent}, the calling-off operation~\(\calledoff{\bolleke}\) associated with an event~\(\eventopt\in\eventopts\) is a linear projection operator on the linear option space~\(\opts\).
The corresponding range is also called \define{the called-off space} \(\calledoff{\opts}\coloneqq\set{\calledoff{\opt}\colon\opt\in\opts}\), and the kernel is given by
\begin{equation}\label{eq::definition:kernel}
\eventindifset\coloneqq\set{\opt\in\opts\colon\calledoff{\opt}=0}.
\end{equation}

\noindent Furthermore, we require the existence of a \define{unit event}~\(\unitevent\) and a \define{null event}~\(\nullevent\):
\begin{enumerate}[resume*=events,widest=4]
\item\label{axiom:event:unit:event}  \(\unitevent\in\eventopts\) and \(\unitevent\ast\opt=\opt\) for all~\(\opt\in\opts\);\nameit{Unit Event}
\item\label{axiom:event:null:event}  \(\nullevent\in\eventopts\) and \(\nullevent\ast\opt=0,\) for all~\(\opt\in\opts\).\nameit{Null Event}
\end{enumerate}
We also define an \define{event ordering} \(\posetleq\):
\begin{equation}
\eventopt[1]\posetleq\eventopt[2]\ifandonlyif\group{\forall\opt\in\opts}\group{\calledoff[1]{\opt}=\calledoff[1]{(\calledoff[2]{\opt})}=\calledoff[2]{(\calledoff[1]{\opt})}}.
\end{equation}
The meaning of `\(\eventopt[1]\posetleq\eventopt[2]\)' is that `the event~\(\eventopt[1]\) implies the event~\(\eventopt[2]\)': in whatever order the events are invoked, calling off under event~\(\eventopt[1]\) makes calling off under event~\(\eventopt[2]\) redundant.
Since the relation~\(\posetleq\) is obviously reflexive, antisymmetric and transitive, \(\structure{\eventopts,\posetleq}\) is a partially ordered set, and it has top~\(\unitevent\) and bottom~\(\nullevent\).

The following axiom then imposes a link between the weak ordering~\(\weakgeq\) on options as defined in \cref{eq::option:orderings} and the event ordering~\(\posetleq\):
\begin{enumerate}[resume*=events,widest=6]
\item\label{axiom:event:ordering}\(\group{\forall \opt\in\opts}\group{\eventopt[1]\ast\opt>0\then\eventopt[2]\ast\opt\neq0}\then\eventopt[1]\posetleq\eventopt[2]\), for all~\(\eventopt[1],\eventopt[2]\in\eventopts\),\nameit{Event Ordering}
\end{enumerate}
where we let \(\opt>\altopt\ifandonlyif\group{\opt\geq\altopt\text{ and }\opt\neq\altopt}\) for all~\(\opt,\altopt\in\opts\).

The event ordering~\(\posetleq\) can, when invoking \labelcref{axiom:event:ordering}, be related directly to the kernels of the relevant projections.

\begin{proposition}\label{prop::ordering:equivalent:statements}
For all events~\(\eventopt[1],\eventopt[2]\in\eventopts\), the following statements are equivalent:
\begin{enumerate}[label={\upshape(\roman*)},leftmargin=*,widest=iii]
\item\label{prop::event:ordering} \(\eventopt[1]\posetleq\eventopt[2]\);
\item\label{prop::event:kernels} \(\eventindifset[2]\subseteq\eventindifset[1]\);
\item\label{prop::event:nonpositivity} \(\group{\forall\opt\in\opts}\group{\calledoff[1]{\opt}>0\then\calledoff[2]{\opt}\ne0}\).
\end{enumerate}
\end{proposition}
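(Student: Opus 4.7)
The plan is to establish the cycle of implications \ref{prop::event:ordering}$\then$\ref{prop::event:kernels}$\then$\ref{prop::event:nonpositivity}$\then$\ref{prop::event:ordering}. The first two implications should be short direct arguments using only the linearity and idempotency axioms \labelcref{axiom:event:linear,axiom:event:idempotent}, while the closing implication is handed to us essentially for free by the axiom \labelcref{axiom:event:ordering}. I don't anticipate a real obstacle here; the only subtlety is to use the definition of the strict ordering~\(>\) correctly.

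For \ref{prop::event:ordering}$\then$\ref{prop::event:kernels}, I would take any \(\opt\in\eventindifset[2]\), so that \(\calledoff[2]{\opt}=0\) by \cref{eq::definition:kernel}, and then apply the assumption \(\eventopt[1]\posetleq\eventopt[2]\) together with the linearity \labelcref{axiom:event:linear} of the calling-off operator to conclude that
\[
\calledoff[1]{\opt}=\calledoff[1]{(\calledoff[2]{\opt})}=\calledoff[1]{0}=0,
\]
so \(\opt\in\eventindifset[1]\), as desired.

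For \ref{prop::event:kernels}$\then$\ref{prop::event:nonpositivity}, I would argue by contraposition: fix any~\(\opt\in\opts\) with \(\calledoff[2]{\opt}=0\), meaning \(\opt\in\eventindifset[2]\subseteq\eventindifset[1]\), so also \(\calledoff[1]{\opt}=0\), which excludes \(\calledoff[1]{\opt}>0\) because \(\opt>\altopt\) was defined as \(\opt\geq\altopt\text{ and }\opt\neq\altopt\). Hence whenever \(\calledoff[1]{\opt}>0\), necessarily \(\calledoff[2]{\opt}\neq0\).

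Finally, \ref{prop::event:nonpositivity}$\then$\ref{prop::event:ordering} is exactly the content of the axiom \labelcref{axiom:event:ordering}, so no further work is needed. Together these three steps close the cycle and establish the equivalence of \ref{prop::event:ordering}, \ref{prop::event:kernels}, and \ref{prop::event:nonpositivity}.
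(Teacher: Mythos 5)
Your proof is correct and follows essentially the same route as the paper's: the implication \labelcref{prop::event:ordering}\(\then\)\labelcref{prop::event:kernels} via the definition of \(\posetleq\) and \labelcref{axiom:event:linear}, the easy step \labelcref{prop::event:kernels}\(\then\)\labelcref{prop::event:nonpositivity} (which the paper dismisses as trivial and you spell out by contraposition, correctly using the definition of \(>\)), and \labelcref{prop::event:nonpositivity}\(\then\)\labelcref{prop::event:ordering} directly from \labelcref{axiom:event:ordering}. Nothing further is needed.
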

\noindent An interesting consequence of \cref{prop::ordering:equivalent:statements} is that events are in a one-to-one correspondence with the kernels of the calling-off operation associated with them: of course, for any event~\(\eventopt\), the kernel of the corresponding projection~\(\eventopt\ast\bolleke\) is completely determined; but also conversely, for any linear subspace of~\(\opts\) there's at most one event that has it as the kernel of the associated projection.
This follows directly from the equivalence of statements \labelcref{prop::event:ordering} and \labelcref{prop::event:kernels} in \cref{prop::ordering:equivalent:statements} and the fact that \(\posetleq\) is a partial order (and therefore anti-symmetric).
As a result, we find that
\begin{equation}\label{eq::kernels:identify:events}
\eventindifset[1]=\eventindifset[2]
\ifandonlyif\group{\forall\opt\in\opts}\group{\eventopt[1]\ast\opt=0\ifandonlyif\eventopt[2]\ast\opt=0}\ifandonlyif\eventopt[1]=\eventopt[2],
\text{ for all~\(\eventopt[1],\eventopt[2]\in\eventopts\)}.
\end{equation}

There's also a requirement on the called-off versions of the unit option:
\begin{enumerate}[resume*=events,widest=7]
\item\label{axiom:event:unit:option} \(\optunit\weakgeq\eventopt\ast\optunit\) for all~\(\eventopt\in\eventopts\);\nameit{Unit Option}
\end{enumerate}
and finally, we add an axiom that allows us to replace the sum of the kernels of subsequent calling-off operations by the kernel of a single one:
\begin{enumerate}[resume*=events,widest=8]
\item\label{axiom:event:kernel:sum} for all~\(\eventopt[1],\eventopt[2]\in\eventopts\), there's some~\(\eventopt[1]\sqcap\eventopt[2]\in\eventopts\) such that \(\eventindifset[1]+\eventindifset[2]=\indifset[{\eventopt[1]\sqcap\eventopt[2]}]\).\nameit{Kernel Sum}
\end{enumerate}

\begin{gamblesexamplecont}
In the classical probability context, the \emph{events} are the subsets~\(\event\) of the possibility space~\(\states\), and they can (and will) be identified with special \define{indicator gambles}~\(\indevent\) that assume the value~\(1\) on~\(\event\) and \(0\) elsewhere; so \(\eventopts\instantiateas\set{\indevent\given\event\subseteq\states}\).

The \emph{unit event} corresponds to the constant gamble~\(1=\indof{\states}\) and the \emph{null event} to the constant gamble~\(0=\indof{\emptyset}\).

For the \emph{calling-off operation}, we have that \(\indevent\ast\gbl\instantiateas\indevent\gbl\), which is a special linear projection on the gamble space~\(\gambles\).

It's a trivial exercise to show that the assumptions \labelcref{axiom:event:linear,axiom:event:idempotent,axiom:event:monotone,axiom:event:unit:event,axiom:event:null:event,axiom:event:ordering,axiom:event:unit:option,axiom:event:kernel:sum} are satisfied, with the event~\(\indof{\event[1]\cap\event[2]}\) playing the part of~\(\indof{\event[1]}\sqcap\indof{\event[2]}\), so set intersection takes the role of the \(\sqcap\)-operation.

Finally, the event ordering~\(\posetleq\) corresponds to set inclusion: \(\indevent[1]\posetleq\indevent[2]\ifandonlyif\event[1]\subseteq\event[2]\).
\end{gamblesexamplecont}

\begin{quantumexamplecont}
In a quantum mechanical context, the \emph{events} correspond to non-empty finite sets~\(\set{\subspace[1],\dots,\subspace[n]}\) of mutually orthogonal subspaces~\(\subspace[k]\) of the Hilbert space~\(\hilbertspace\).\footnote{The reason why we don't restrict ourselves to single subspaces here, can be traced back to another paper by two of us \cite{devos2025:isipta}, where we argue for the need of such `unions of mutually orthogonal subspaces', as they also correspond to physical processes.}

With any subspace~\(\subspace\), we can associate a special measurement~\(\projector=\projector\projector\), which is the (linear and orthogonal) projection operator onto the subspace~\(\subspace\), with eigenvalue~\(1\) associated with the eigenspace~\(\subspace\) and eigenvalue~\(0\) associated with its orthogonal complement~\(\subspace^\perp\).

For the \emph{calling-off operator} associated with an event~\(\set{\subspace[1],\dots,\subspace[n]}\), where \(\subspace[1],\dots,\subspace[m]\) are mutually orthogonal subspaces, we have that
\begin{equation*}
\set{\subspace[1],\dots,\subspace[n]}\ast\measurement{A}\instantiateas\sum_{k=1}^n\projector[k]\measurement{A}\projector[k]
\text{ for all~\(\measurement{A}\in\Hop\)},
\end{equation*}
which is a linear projection operator on the option space~\(\Hop\).
The \emph{unit event} corresponds to the singleton~\(\set{\hilbertspace}\), where \(\projectoron{\hilbertspace}=\identity\) is the identity measurement; and the \emph{null event} corresponds to the singleton~\(\set{\set{0}}\), where \(\projectoron{\set{0}}=\zero\) is the zero measurement.
The event ordering then corresponds to the inclusion ordering of the subsets, that is,  \(\set{\subspace[1],\dots,\subspace[n]}\posetleq\set{\subspace[1]^{\prime},\dots,\subspace[m]^{\prime}}\) if and only if \(\bigcup_{k=1}^n\subspace[k]\subseteq\bigcup_{\ell=1}^m\subspace[\ell]^{\prime}\).\footnote{There's a one-to-one correspondence between a an event~\(\set{\subspace[1],\dots,\subspace[n]}\) and the union~\(\bigcup_{k=1}^n\subspace[k]\) of the orthogonal subspaces~\(\subspace[k]\) in the set~\(\set{\subspace[1],\dots,\subspace[n]}\).}
Finally, the \(\sqcap\)-operation corresponds to the \(\cap\)-operation on the subsets: \(\set{\subspace[1]^{\prime\prime},\dots,\subspace[p]^{\prime\prime}}=\set{\subspace[1],\dots,\subspace[n]}\sqcap\set{\subspace[1]^{\prime},\dots,\subspace[m]^{\prime}}\)
if and only if \(\bigcup_{r=1}^p\subspace[r]^{\prime\prime}=\big(\bigcup_{k=1}^n\subspace[k]\big)\cap\big(\bigcup_{\ell=1}^m\subspace[\ell]^{\prime}\big)\).

It's a straightforward exercise to show that \labelcref{axiom:event:linear,axiom:event:idempotent,axiom:event:monotone,axiom:event:unit:event,axiom:event:null:event,axiom:event:unit:option} hold.
Explicit proofs for \labelcref{axiom:event:ordering,axiom:event:kernel:sum} are given in the Appendix.
\end{quantumexamplecont}

\section{Conditioning on an event}\label{sec::conditioning}
We'll assume that the effect of the occurrence of an event~\(\eventopt\) is that You can no longer distinguish between any two options whose called-off options under~\(\eventopt\) coincide: they've now become indifferent to You.
This leads to a new linear space of indifferent options, given by the kernel
\begin{equation*}
\eventindifset
=\set{\opt\in\opts\given\calledoff{\opt}=0}
\end{equation*}
of the calling-off operation~\(\calledoff{\bolleke}\).
It therefore seems perfectly reasonable to represent the new knowledge that the event~\(\eventopt\) has occurred by the indifference assessment
\begin{equation}\label{eq::model:for:event}
\eventM
\coloneqq\Adelim{\eventindifset}{\emptyset}\in\adfMsabove{\zeroM}.
\end{equation}

\subsection{Regular events}
For this new indifference assessment~\(\eventM\) to be reasonable at all, it must be consistent with the background~\(\bgM\), which amounts to requiring that
\begin{equation}\label{eq::regularity}
0\not\in\bgM_\des+\eventindifset
\text{, or equivalently, }
\bgM_\des\cap\eventindifset=\emptyset.
\end{equation}
We consider an event~\(\eventopt\) to be \define{consistent} with the background~\(\bgM\) when this condition is satisfied, and we'll then also call the event~\(\eventopt\) \define{\(\bgM\)-regular} or simply \define{regular}; regular events are the only ones we can reasonably envision conditioning on: if the occurrence of an event contradicts those accept/reject statements that You find `obvious', You can never end up with a meaningful AD-model after conditioning.
We'll therefore require henceforth in a conditioning setting that the events under consideration are \(\bgM\)-regular.
We'll denote the set of all \(\bgM\)-regular events by~\(\regulareventopts[\bgM]\).

\begin{proposition}\label{prop::regular:events}
An event~\(\eventopt\) is \(\bgM\)-regular if and only if \(\opt\stronggt0\then\calledoff{\opt}>0\) for all~\(\opt\in\opts\).
\end{proposition}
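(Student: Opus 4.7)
The plan is to unfold both sides of the equivalence using the definition of $\bgM$-regularity and the basic properties of the background and the calling-off operator, and observe that the two statements are essentially saying the same thing, modulo verifying that $\calledoff{\opt}$ is automatically nonnegative whenever $\opt\stronggt0$.

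For the forward implication, I start from the regularity condition in the form $\bgM_\des\cap\eventindifset=\emptyset$ given in~\cref{eq::regularity}. Take any $\opt\stronggt0$, i.e.\ $\opt\in\bgM_\des$. Regularity gives $\opt\notin\eventindifset$, which by the definition of the kernel in~\cref{eq::definition:kernel} means $\calledoff{\opt}\neq0$. To upgrade this to $\calledoff{\opt}>0$, I need $\calledoff{\opt}\weakgeq0$, and for that I use the AD-condition on the background: $\bgM_\des\subseteq\bgM_\acc$ (since $\M_\des=-\M_\rej\subseteq\M_\acc$ follows immediately from the AD-condition), so $\opt\weakgeq0$, and then~\labelcref{axiom:event:monotone} gives $\calledoff{\opt}\weakgeq0$. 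Combined with $\calledoff{\opt}\neq0$, this yields $\calledoff{\opt}>0$.

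For the converse, I argue by contradiction. Assume the implication $\opt\stronggt0\then\calledoff{\opt}>0$ holds, but that $\eventopt$ is not regular, so there exists $\opt\in\bgM_\des\cap\eventindifset$. Then $\opt\stronggt0$, so by hypothesis $\calledoff{\opt}>0$, in particular $\calledoff{\opt}\neq0$; but $\opt\in\eventindifset$ means $\calledoff{\opt}=0$, a contradiction.

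There's no real obstacle here: the only nontrivial step is noting that $\bgM_\des\subseteq\bgM_\acc$, which comes from the AD-condition on the background model, so that the monotonicity axiom~\labelcref{axiom:event:monotone} can be applied to conclude $\calledoff{\opt}\weakgeq0$ in the forward direction.
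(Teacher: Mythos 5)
Your proof is correct and follows essentially the same route as the paper's: unfold regularity as \(\bgM_\des\cap\eventindifset=\emptyset\), use \(\bgM_\des\subseteq\bgM_\acc\) (so \(\opt\stronggt0\then\opt\weakgeq0\)) together with \labelcref{axiom:event:monotone} to get \(\calledoff{\opt}\weakgeq0\), combine with \(\calledoff{\opt}\neq0\) for the forward direction, and note the converse is immediate. The only difference is cosmetic: you spell out the justification of \(\opt\weakgeq0\) via the AD-condition and phrase the converse as a contradiction, where the paper states these steps as trivial.
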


\begin{gamblesexamplecont}
In the context of classical probability, an event~\(\indevent\) is regular with respect to the background~\(\Adelim{\gambles_{\weakgeq0}}{-\gambles_{\stronggt0}}\) if and only if
\begin{equation*}
\gbl\stronggt0\then\indevent\gbl\neq0,
\text{ for all \(\gbl\in\gambles\)}.
\end{equation*}
It's clear that the empty event~\(\emptyset\) isn't regular, because \(\indof{\emptyset}f=0\) for all~\(\gbl\in\gambles\).
But conversely, all non-empty events~\(\event\neq\emptyset\) are regular: there's then always at least one~\(\state\in\event\), and then \((\indevent\gbl)\group\state=0\ifandonlyif\gbl\group\state=0\).
We conclude that the regular events~\(\indevent\) correspond to the non-empty subsets~\(\event\neq\emptyset\) of~\(\states\).
\end{gamblesexamplecont}

\begin{quantumexamplecont}
In the context of quantum probabilistic inference, an event~\(\set{\subspace[1],\dots,\subspace[n]}\) is regular with respect to the background~\(\Adelim{\Hop_{\weakgeq\zero}}{-\Hop_{\stronggt\zero}}\) if and only if
\begin{equation*}
\measurement{A}\stronggt\zero\then\sum_{k=1}^n\projector[k]\measurement{A}\projector[k]\neq\zero,
\text{ for all \(\measurement{A}\in\Hop\)}.
\end{equation*}
It's clear that the null event~\(\set{\set{0}}\) isn't regular, since \(\projectoron{\set{0}}\measurement{A}\projectoron{\set{0}}=\zero\measurement{A}\zero=\zero\) for all~\(\measurement{A}\in\Hop\).
But, conversely, any other event~\(\set{\subspace[1],\dots,\subspace[n]}\neq\set{\set{0}}\) is regular.
Indeed, consider any positive definite measurement~\(\measurement{A}\stronggt0\), then \(\bra{\psi}\measurement{A}\ket{\psi}\geq0\) for all~\(\ket{\psi}\in\hilbertspace\) and \(\bra{\psi}\measurement{A}\ket{\psi}>0\) for all~\(\ket{\psi}\in\hilbertspace\setminus\set{0}\).
But then, with~\(\measurement{B}\coloneqq\sum_{k=1}^n\projector[k]\measurement{A}\projector[k]\),
\begin{equation*}
\bra{\psi}\measurement{B}\ket{\psi}
=\sum_{k=1}^n\bra{\psi}\projector[k]\measurement{A}\projector[k]\ket{\psi}
=\sum_{k=1}^n\group{\projector[k]\ket{\psi}}^\dagger\measurement{A}\group{\projector[k]\ket{\psi}}\geq0
\text{ for all~\(\ket{\psi}\in\hilbertspace\)},
\end{equation*}
and since it follows from the assumption that there's at least one \(\subspace[k]\neq\set{0}\), we have for all~\(\ket{\psi}\in\subspace[k]\setminus\set{0}\) that \(\projector[k]\ket{\psi}=\ket{\psi}\neq0\), and therefore also that \(\group{\projector[k]\ket{\psi}}^\dagger\measurement{A}\group{\projector[k]\ket{\psi}}=\bra{\psi}\measurement{A}\ket{\psi}>0\), implying that \(\bra{\psi}\measurement{B}\ket{\psi}>0\), so indeed \(\measurement{B}\neq\zero\).

We conclude that all non-null events~\(\set{\subspace[1],\dots,\subspace[n]}\neq\set{\set{0}}\) are regular, so the null event~\(\set{\set{0}}\) is the only non-regular event.
\end{quantumexamplecont}

\subsection{Conditioning AD-models}\label{sec::conditioning:admodels}
If You started out with some AD-model~\(\M\in\adfMsabove{\bgM}\), which options should You still find acceptable after the occurrence of the event~\(\eventopt\)?
We're going to assume, in the spirit of discussions by \citeauthor{finetti1937} \cite{finetti1937,finetti19745}, \citeauthor{williams2007} \cite{williams2007}, \citeauthor{walley1991} \cite{walley1991,walley2000}, and \citeauthor{troffaes2013:lp} \cite{troffaes2013:lp} that You now find acceptable at least all options~\(\opt\) whose called-off versions~\(\eventopt\ast\opt\) were already acceptable before the occurrence of the event, which leads to a new set of acceptable options~\(\M_\acc\altcondon\eventopt\),\footnote{The new set of indifferent options~\(\M_\equiv\altcondon\eventopt=\group{\M_\acc\cap-\M_\acc}\altcondon\eventopt=\group{\M_\acc\altcondon\eventopt}\cap-\group{\M_\acc\altcondon\eventopt}\) can be different from~\(\eventindifset\); we merely require that it should  include~\(\eventindifset\) in other to at least reflect the new indifferences created by the observation that the event~\(\eventopt\) has occurred.} where we let
\begin{equation}\label{eq::updated}
\someopts\altcondon\eventopt
\coloneqq\set{\opt\in\opts\given\calledoff{\opt}\in\someopts},
\text{ for any~\(\someopts\subseteq\opts\).}
\end{equation}
A similar argument for desirability leads to the new set of desirable options~\(\M_\des\altcondon\eventopt\); see also \citeauthor{walley2000}'s seminal paper \cite{walley2000}.
This brings us to the following AD-assessment for the updated sets of acceptable and desirable options:
\begin{equation}\label{eq::naive:updated:model}
\Adelim{\M_\acc\altcondon\eventopt}{-\M_\des\altcondon\eventopt}.
\end{equation}
Even though this assessment satisfies \labelcref{axiom:AD:zero:not:desirable,axiom:AD:deductive:closedness,axiom:AD:no:limbo}, this needn't be the case for \labelcref{axiom:AD:background}.
Indeed, while \labelcref{axiom:event:monotone} ensures that \(\bgM_\acc\subseteq\M_\acc\altcondon\eventopt\), it needn't be that \(\bgM_\des\subseteq\M_\des\altcondon\eventopt\), not even if the event~\(\eventopt\) is regular, so \labelcref{axiom:AD:background} may fail to hold for the desirability side of this assessment.\footnote{This is illustrated by the proof of \cref{prop::regular:events} in the Appendix, where we show that regularity of the event guarantees that the called-off option is \emph{strictly} greater than~\(0\) for the vector preorder~\(\weakgeq\), but there's no proof that it should hold for the strict vector ordering~\(\stronggt\).}

We can attempt to resolve this issue by moving to the so-called \(\bgM\)-natural extension of the expression~\labelcref{eq::naive:updated:model}: extend the assessment~\(\Adelim{\M_\acc\altcondon\eventopt}{-\M_\des\altcondon\eventopt}\) by finding the smallest (least resolved) dominating AD-model that also respects the background~\(\bgM\)---if such a least resolved dominating AD-model exists, or in other words, if the assessment~\(\Adelim{\M_\acc\altcondon\eventopt}{-\M_\des\altcondon\eventopt}\) is \(\bgM\)-consistent.

So, let's first investigate this consistency issue.

\begin{proposition}\label{prop::consistency:of:updated:model}
Consider any~\(\M\in\adfMsabove{\bgM}\) and any~\(\bgM\)-regular event~\(\eventopt\in\regulareventopts\), then the AD-assessment~\(\Adelim{\M_\acc\altcondon\eventopt}{-\M_\des\altcondon\eventopt}\) is \(\bgM\)-consistent if and only if \(0\notin\M_\acc\altcondon\eventopt+\bgM_\des\).
\end{proposition}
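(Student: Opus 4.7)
The plan is to unwind the definition of $\bgM$-consistency in \cref{eq::consistency} and simplify each piece, using the AD-structure of $\M$ and the linearity/monotonicity of the calling-off operator, until what remains is precisely the stated condition $0\notin\M_\acc\altcondon\eventopt+\bgM_\des$.

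First I would observe two preparatory facts about $\M_\acc\altcondon\eventopt$. By \labelcref{axiom:event:linear} and the fact that $\M_\acc$ is a convex cone (\labelcref{axiom:AD:deductive:closedness}), the preimage $\M_\acc\altcondon\eventopt=\set{\opt\in\opts\given\calledoff{\opt}\in\M_\acc}$ is also a convex cone. Moreover, by \labelcref{axiom:event:monotone} together with $\bgM_\acc\subseteq\M_\acc$, any $\opt\in\bgM_\acc$ satisfies $\calledoff{\opt}\geq0$, hence $\calledoff{\opt}\in\bgM_\acc\subseteq\M_\acc$, so $\bgM_\acc\subseteq\M_\acc\altcondon\eventopt$. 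Combining these, $\posiof{(\M_\acc\altcondon\eventopt)\cup\bgM_\acc}=\M_\acc\altcondon\eventopt$.

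Next I would write out the consistency requirement from \cref{eq::consistency} for the assessment $\A=\Adelim{\M_\acc\altcondon\eventopt}{-\M_\des\altcondon\eventopt}$ combined with the background~$\bgM=\Adelim{\bgM_\acc}{-\bgM_\des}$. Using the preparatory step, it reduces to
\begin{equation*}
\group{\M_\acc\altcondon\eventopt}\cap\sqgroup{\group{-\M_\des\altcondon\eventopt}\cup\group{-\bgM_\des}}=\emptyset,
\end{equation*}
which splits into the two separate disjointness conditions $\group{\M_\acc\altcondon\eventopt}\cap\group{-\M_\des\altcondon\eventopt}=\emptyset$ and $\group{\M_\acc\altcondon\eventopt}\cap\group{-\bgM_\des}=\emptyset$. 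The latter is exactly $0\notin\M_\acc\altcondon\eventopt+\bgM_\des$, so what remains is to show that the former always holds.

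For that remaining piece, I would argue directly at the level of $\M$: if some $\opt$ lay in $\group{\M_\acc\altcondon\eventopt}\cap\group{-\M_\des\altcondon\eventopt}$, then by linearity of $\calledoff{\bolleke}$ we would have $\calledoff{\opt}\in\M_\acc$ and $-\calledoff{\opt}\in\M_\des$; then \labelcref{axiom:AD:no:limbo} would force $0=\calledoff{\opt}+(-\calledoff{\opt})\in\M_\des$, contradicting \labelcref{axiom:AD:zero:not:desirable}. So this inclusion is automatic, and the equivalence stated in the proposition follows. The only mildly delicate point is keeping track of signs and making sure the splitting of the union on the rejection side is correct; regularity of $\eventopt$ plays no role in the equivalence itself, but is retained as a hypothesis because only for such events is the assessment worth conditioning on at all.
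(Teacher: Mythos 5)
Your proposal is correct and takes essentially the same route as the paper's proof: both unwind the consistency condition in \cref{eq::consistency}, absorb \(\bgM_\acc\) into \(\M_\acc\altcondon\eventopt\) (your inclusion \(\bgM_\acc\subseteq\M_\acc\altcondon\eventopt\) together with the convex-cone property is \cref{lem:adding:background:to:conditional:model} in another guise), and observe that the part involving \(\M_\des\altcondon\eventopt\) can never cause inconsistency because \(\M_\acc+\M_\des\subseteq\M_\des\) and \(0\notin\M_\des\) carry through the calling-off projection. The differences are only cosmetic—you argue with set intersections and a direct contradiction where the paper runs a chain of Minkowski-sum equivalences via \cref{lem:compatibility:conditional:sets,lem:adding:background:to:conditional:model}—and your remark that regularity plays no role in the equivalence itself matches the paper's argument.
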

\noindent In that case, we'll call the AD-model~\(\M\) \define{conditionable} on the regular event~\(\eventopt\).

So the conditionable models~\(\M\) are the ones for which we can take the natural extension of the assessment~\(\Adelim{\M_\acc\altcondon\eventopt}{-\M_\des\altcondon\eventopt}\), because doing so will lead to an AD-model in~\(\adfMsabove{\bgM}\), which we'll then denote by~\(\M\condon\eventopt\).
Let's now find out what this natural extension~\(\M\condon\eventopt\) looks like:
\begin{multline*}
\M\condon\eventopt
\coloneqq\closureof{\Msabove{\bgM}}{\Adelim{\M_\acc\altcondon\eventopt}{-\M_\des\altcondon\eventopt}}\\
\begin{aligned}
&=\closureof{\Ms}{\Adelim{\M_\acc\altcondon\eventopt}{-\M_\des\altcondon\eventopt}\cup\Adelim{\bgM_\acc}{-\bgM_\des}}\\
&=\Adelim{\posiof{\M_\acc\altcondon\eventopt\cup\bgM_\acc}}{-\group{\shullof{\M_\des\altcondon\eventopt\cup\bgM_\des}\cup\group{\shullof{\M_\des\altcondon\eventopt\cup\bgM_\des}+\posiof{\M_\acc\altcondon\eventopt\cup\bgM_\acc}}}},
\end{aligned}
\end{multline*}
where the last equality follows by applying the expression~\labelcref{eq::model:closure} for the AD-model closure.

\begin{proposition}\label{prop::conditional:model}
Consider any AD-model~\(\M\in\adfMsabove{\bgM}\) and any~\(\bgM\)-regular event~\(\eventopt\in\regulareventopts\).
If the AD-model~\(\M\) is conditionable on~\(\eventopt\), then the \(\bgM\)-natural extension~\(\M\condon\eventopt\) of the AD-assessment~\(\Adelim{\M_\acc\altcondon\eventopt}{-\M_\des\altcondon\eventopt}\) is the AD-model in~\(\adfMsabove{\bgM}\) given by
\begin{equation}\label{eq::revised:statement:model}
\M\condon\eventopt
=\Adelim{\M_\acc\altcondon\eventopt}{-\group{\M_\des\altcondon\eventopt\cup\group{\bgM_\des+\M_\acc\altcondon\eventopt}}}.
\end{equation}
\end{proposition}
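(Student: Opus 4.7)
The plan is to start from the displayed expression for $\M\condon\eventopt$ that immediately precedes the proposition (the one obtained by applying the general closure formula~\labelcref{eq::model:closure} to the assessment $\Adelim{\M_\acc\altcondon\eventopt}{-\M_\des\altcondon\eventopt}\cup\bgM$) and then simplify it by showing that the $\posi$ and $\shull$ operations act trivially here. More precisely, I will prove two auxiliary facts and then substitute.

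The first auxiliary fact is that $\M_\acc\altcondon\eventopt$ is itself a convex cone containing $\bgM_\acc$. That it is a convex cone follows immediately from \labelcref{axiom:event:linear} together with~\labelcref{axiom:AD:deductive:closedness}: the preimage $\set{\opt\in\opts\given\calledoff{\opt}\in\M_\acc}$ of a convex cone under a linear map is a convex cone. The inclusion $\bgM_\acc\subseteq\M_\acc\altcondon\eventopt$ is obtained by picking any $\opt\in\bgM_\acc$, applying the monotonicity axiom \labelcref{axiom:event:monotone} to conclude $\calledoff{\opt}\weakgeq 0$, and then invoking $\bgM_\acc\subseteq\M_\acc$ via~\labelcref{axiom:AD:background}. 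Consequently $\posiof{\M_\acc\altcondon\eventopt\cup\bgM_\acc}=\M_\acc\altcondon\eventopt$.

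The second auxiliary fact is the analogous statement for the desirable side: $\M_\des\altcondon\eventopt$ is a convex cone (same argument as above, using~\labelcref{axiom:event:linear} and the convex cone property of $\M_\des$), and $0\notin\M_\des\altcondon\eventopt$ because $\calledoff{0}=0\notin\M_\des$ by \labelcref{axiom:AD:zero:not:desirable}. Since $\bgM_\des$ is also a convex cone not containing $0$, these two properties imply that $\shullof{\M_\des\altcondon\eventopt\cup\bgM_\des}=\M_\des\altcondon\eventopt\cup\bgM_\des$. I will also record the consequence of \labelcref{axiom:AD:no:limbo} in the updated setting, namely $\M_\des\altcondon\eventopt+\M_\acc\altcondon\eventopt\subseteq\M_\des\altcondon\eventopt$, obtained by applying $\calledoff{\bolleke}$ term-by-term via~\labelcref{axiom:event:linear} and then using $\M_\des+\M_\acc\subseteq\M_\des$.

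With these facts substituted into the long displayed expression, the accept-side collapses directly to $\M_\acc\altcondon\eventopt$. For the reject-side, distributing the Minkowski sum over the union yields
\[
\group{\M_\des\altcondon\eventopt\cup\bgM_\des}\cup\group{\group{\M_\des\altcondon\eventopt+\M_\acc\altcondon\eventopt}\cup\group{\bgM_\des+\M_\acc\altcondon\eventopt}},
\]
and I then absorb $\M_\des\altcondon\eventopt+\M_\acc\altcondon\eventopt$ into $\M_\des\altcondon\eventopt$ (by the consequence of~\labelcref{axiom:AD:no:limbo} just noted) and absorb $\bgM_\des$ into $\bgM_\des+\M_\acc\altcondon\eventopt$ (using $0\in\M_\acc\altcondon\eventopt$, which holds because $0\in\bgM_\acc$). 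What remains is exactly $\M_\des\altcondon\eventopt\cup\group{\bgM_\des+\M_\acc\altcondon\eventopt}$, matching~\labelcref{eq::revised:statement:model}.

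The main obstacle is not any single deep step but rather making sure nothing sneaks in from the closure formula that fails to simplify: specifically, one must verify that $\M_\acc\altcondon\eventopt$ already contains the full background $\bgM_\acc$ (not just $\zeroM_\acc$), which is where the event-monotonicity axiom~\labelcref{axiom:event:monotone} is essential; and that $0\notin\M_\des\altcondon\eventopt$, which is where~\labelcref{axiom:AD:zero:not:desirable} plus the fact that $\calledoff{0}=0$ come in. Regularity of $\eventopt$ and the conditionability hypothesis are not needed for the simplification itself—they are used upstream, in \cref{prop::consistency:of:updated:model}, merely to guarantee that the closure operator delivers an AD-model in $\adfMsabove{\bgM}$ rather than the top $\Astop$, so the final formula is meaningful.
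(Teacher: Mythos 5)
Your proposal is correct and follows essentially the same route as the paper: both start from the displayed closure formula, use that \(\M_\acc\altcondon\eventopt\), \(\M_\des\altcondon\eventopt\), \(\bgM_\acc\) and \(\bgM_\des\) are convex cones to kill the \(\posi\) and \(\shull\) operations, invoke \labelcref{axiom:event:monotone} together with \labelcref{axiom:AD:background} to handle the background on the accept side, use the conditional version of \labelcref{axiom:AD:no:limbo} (the paper's \cref{lem:compatibility:conditional:sets}) to absorb \(\M_\des\altcondon\eventopt+\M_\acc\altcondon\eventopt\), and absorb \(\bgM_\des\) via \(0\in\M_\acc\altcondon\eventopt\). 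The only (cosmetic) difference is that you simplify the accept side through the direct inclusion \(\bgM_\acc\subseteq\M_\acc\altcondon\eventopt\), whereas the paper routes the same monotonicity argument through the Minkowski-sum identity \(\M_\acc\altcondon\eventopt+\bgM_\acc=\M_\acc\altcondon\eventopt\) of \cref{lem:adding:background:to:conditional:model}.
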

\noindent We'll then call \(\M\condon\eventopt\) the \define{conditional(isation)} of the (conditionable) AD-model~\(\M\) on the (regular) event~\(\eventopt\); we see that conditioning only makes sense for (i) regular events and for (ii) AD-models that are conditionable on them.

It's now obvious that conditional models on an event are what we want them to be: AD-models that respect the given background as well as the extra indifferences generated by the information that the event has occurred.

\begin{corollary}\label{cor:AD:revision:operator:model}
If an AD-model~\(\M\in\adfMsabove{\bgM}\) is conditionable on a \(\bgM\)-regular event~\(\eventopt\in\eventopts\), then its conditional model~\(\M\condon\eventopt\) is an AD-model for which both  \(\bgM\subseteq\M\condon\eventopt\) and \(\eventM\subseteq\M\condon\eventopt\).
\end{corollary}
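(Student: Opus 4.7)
The plan is to reduce everything to the explicit expression for $\M\condon\eventopt$ given in \cref{prop::conditional:model} and then verify the two claimed inclusions by unpacking the definitions of $\eventM$ and of the $\altcondon$-operation in \cref{eq::updated}.

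First, since the statement assumes that $\M$ is conditionable on the regular event~$\eventopt$, \cref{prop::conditional:model} already delivers that $\M\condon\eventopt$ is an AD-model in $\adfMsabove{\bgM}$. This immediately yields the first half of the conclusion, namely that $\M\condon\eventopt$ is an AD-model and that $\bgM\subseteq\M\condon\eventopt$, so no separate argument is needed for that part.

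The remaining task is to show $\eventM\subseteq\M\condon\eventopt$. Unpacking \cref{eq::model:for:event}, we have $\eventM_\acc=\eventindifset$ and $\eventM_\rej=\emptyset$, so the rejection-side inclusion $\emptyset\subseteq(\M\condon\eventopt)_\rej$ is trivial. For the acceptance side, by \cref{prop::conditional:model} we have $(\M\condon\eventopt)_\acc=\M_\acc\altcondon\eventopt=\set{\opt\in\opts\given\calledoff{\opt}\in\M_\acc}$. Pick any $\opt\in\eventindifset$; then $\calledoff{\opt}=0$ by \cref{eq::definition:kernel}, and since $\M\in\adfMsabove{\bgM}\subseteq\adfMsabove{\zeroM}$ we have $0\in\bgM_\acc\subseteq\M_\acc$, so $\calledoff{\opt}\in\M_\acc$ and hence $\opt\in\M_\acc\altcondon\eventopt$. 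This gives $\eventindifset\subseteq(\M\condon\eventopt)_\acc$ and therefore $\eventM\subseteq\M\condon\eventopt$, completing the proof.

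There is no substantive obstacle here: the work has already been done in \cref{prop::conditional:model}, and the corollary is essentially a matter of reading off the right formulas and using that $0$ lies in $\M_\acc$ by the background-respecting assumption. The only point worth flagging is that one should state clearly that the indifference space~$\eventindifset$ plays the role of an \emph{acceptance} set in~$\eventM$, so the inclusion genuinely reduces to showing that every element of the kernel of $\calledoff{\bolleke}$ gets mapped into $\M_\acc$ under that projection, which follows because the kernel is mapped to the single element~$0$.
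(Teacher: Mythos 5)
Your proposal is correct and matches the paper's (implicit) argument: the paper treats the corollary as an immediate consequence of \cref{prop::conditional:model}, which gives \(\M\condon\eventopt\in\adfMsabove{\bgM}\) and hence \(\bgM\subseteq\M\condon\eventopt\), while \(\eventM\subseteq\M\condon\eventopt\) follows just as you say, since any \(\opt\in\eventindifset\) satisfies \(\calledoff{\opt}=0\in\M_\acc\) and the rejection side of \(\eventM\) is empty. Nothing further is needed.
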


\noindent In general, however, we won't have that \(\M\subseteq\M\condon\eventopt\); this implies that the information that an event has occurred doesn't necessarily lead to an \emph{increase of} knowledge, but rather to a \emph{revision} of it.
It's for this reason that we'll turn to investigating the connection between conditioning and \emph{belief revision} in \cref{sec::conditioning:as:belief:change}.

\subsection{The connection with Bayes' Rule and Lüders' rule}\label{sec:bayes:luders}
Now that we've now come up with a rule for conditioning AD-models, the question that arises at once is: how does our conditioning rule relate to the common rules for conditioning in the literature?
Rules that are similar\footnote{That the similarity isn't always perfect, is due to the fact that our present conditioning rule must take into account the interaction between acceptable and desirable gambles, which recedes from view when sets of acceptable and/or desirable gambles are considered on their own.} to \labelcref{eq::updated} for conditioning sets of desirable and sets of acceptable options, considered on their own and not jointly, exist in the literature, both in the case of gambles \cite{walley2000,troffaes2013:lp,williams1975} and in a quantum mechanical context \cite{devos2025:isipta,benavoli2016:quantum_2016}.

But the conditioning rule~\labelcref{eq::updated} can also be shown to have a direct connection with Bayes' Rule in classical probability theory and Lüders' rule in quantum probability.
To provide a very brief sketch of how this comes about, we consider the special case where the AD-model~\(\M\) is conditionable on some regular event~\(\eventopt\in\eventopts\), and gives rise to a fair price functional \(\prev_{\M}=\lprev_{\M}=\uprev_{\M}\).
The conditional~\(\M\condon\eventopt\) then gives rise to a \define{conditional lower price functional} \(\lprev_{\M}\group{\bolleke\condon\eventopt}\coloneqq\lprev_{\M\condon\eventopt}\), given by
\begin{align*}
\lprev_{\M}\group{\opt\condon\eventopt}
\coloneqq\lprev_{\M\condon\eventopt}\group\opt
&=\sup\set{\alpha\in\reals\given\opt-\alpha\optunit\in\M_\acc\altcondon\eventopt}
&&\reason{Eqs.~\labelcref{eq::definition:lprev:uprev,eq::revised:statement:model}}\\
&=\sup\set{\alpha\in\reals\given\eventopt\ast\group{\opt-\alpha\optunit}\in\M_\acc}
&&\reason{Eq.~\labelcref{eq::updated}}
\end{align*}
for all~\(\opt\in\opts\).
It's a reasonably straightforward exercise\footnote{The proof of this statement is a direct generalisation to options and abstract events of the proof given by \citeauthor{troffaes2013:lp} in \cite[Lemma 13.13]{troffaes2013:lp}.} to show that the real number~\(\lprev_{\M}\group{\opt\condon\eventopt}\) is a solution of the equation~\(\prev_{M}\group{\eventopt\ast\group{\opt-\alpha\optunit}}=0\) in~\(\alpha\), so
\begin{align*}
0
&=\prev_{M}\group{\eventopt\ast\group{\opt-\lprev_{\M}\group{\opt\condon\eventopt}\optunit}}\\
&=\prev_{M}\group{\eventopt\ast\opt}-\lprev_{\M}\group{\opt\condon\eventopt}\prev_{M}\group{\eventopt\ast\optunit},
&&\reason{\(\prev_{\M}\) linear functional}
\end{align*}
and therefore
\begin{equation}\label{eq::conditional:prev:update}
\lprev_{\M}\group{\opt\condon\eventopt}\prev_{M}\group{\eventopt\ast\optunit}=\prev_{M}\group{\eventopt\ast\opt}
\text{ for all~\(\opt\in\opts\)}.
\end{equation}

\begin{gamblesexamplecont}
When we apply \cref{eq::conditional:prev:update} in the classical probability context, we can see what it yields for a gamble~\(\gbl\) and a regular (non-empty) event~\(\event\), we get
\begin{equation*}
\prev_{\M}\group{\indevent\gbl}
=\lprev_{\M}\group{\gbl\condon\event}\prev_{\M}\group{\indevent},
\end{equation*}
which leads us directly to Bayes' Rule.
\end{gamblesexamplecont}

\begin{quantumexamplecont}
When we apply \cref{eq::conditional:prev:update} in the quantum probability context, we can see what it yields for a measurement~\(\measurement{A}\) and a regular event~\(\set{\subspace}\); we get
\begin{equation*}
\prev_{\M}\group{\projector\measurement{A}\projector}
=\lprev_{\M}\group{\measurement{A}\condon{\set{\subspace}}}\prev_{\M}\group{\projector\identity\projector},
\end{equation*}
We've seen before that a coherent prevision~\(\prev_{\M}\) corresponds to a density operator~\(\density\) in the sense that \(\prev_{\M}\group{\measurement{A}}=\trace{\density\measurement{A}}\) for all measurements~\(\measurement{A}\).
This implies that \(\prev_{\M}\group{\projector\identity\projector}=\trace{\density\projector\projector}=\trace{\projector\density\projector}\) and that \(\prev_{\M}\group{\projector\measurement{A}\projector}=\trace{\density\projector\measurement{A}\projector}=\trace{\projector\density\projector\measurement{A}}\), using the cyclic property of the trace.
So as soon as \(\trace{\projector\density\projector}>0\), we find that
\begin{equation*}
\lprev_{\M}\group{\measurement{A}\condon{\set{\subspace}}}
=\frac{\trace{\projector\density\projector\measurement{A}}}{\trace{\projector\density\projector}}
=\trace{\density'\measurement{A}}
\text{ with }
\density'=\frac{\projector\density\projector}{\trace{\projector\density\projector}},
\end{equation*}
which is exactly the expression given by Lüders' rule for conditioning on a subspace \cite{luders1950}.
A similar argument leads to the formula
\begin{equation*}
\lprev_{\M}\group{\measurement{A}\condon{\set{\subspace[1],\dots,\subspace[n]}}}
=\trace{\density'\measurement{A}}
\text{ with }
\density'
=\frac{\sum_{k=1}^n\projector[k]\density\projector[k]}{\sum_{k=1}^n\trace{\projector[k]\density\projector[k]}},
\end{equation*}
for conditioning on the more involved event~\(\set{\subspace[1],\dots,\subspace[n]}\).
\end{quantumexamplecont}

\section{Belief states and belief change}\label{sec::belief:change}
A discussion of what `belief states' are, and rules for changing them, have been explored by \citeauthor{alchourron1985} (AGM, for short) in \cite{alchourron1985}, \citeauthor{gardenfors1988} in \cite{gardenfors1988}, and \citeauthor{rott2001} in \cite{rott2001}.
However, these authors focused mainly on belief states that are sets of propositions (or probability measures), so to apply their ideas to the more abstract setting where the belief states are AD-models, we need to adapt and generalise them.
The basic tools for doing so were developed in an early paper \cite{cooman2003a} by \citeauthor{cooman2003a}, and we'll use the present section to briefly summarise the ideas there that are directly relevant to our discussion.

We'll consider abstract objects, which we'll call \define{belief models} or \define{belief states}, and collect them in a set~\(\belmods\).
We assume that the belief states~\(\belmod\) in \(\belmods\) are partially ordered by some binary relation~\(\modleq\), and that they constitute a \emph{complete lattice} \(\structure{\belmods,\modleq}\).
We denote its top as~\(\belmodtop\), its bottom as~\(\belmodbottom\), its meet as~\(\meet\) and its join as~\(\join\).

We're particularly interested in a special non-empty subset~\(\cohmods\subseteq\belmods\) of belief states, whose elements are called \define{coherent} and are considered to be `more perfect' than the other (\define{incoherent}) belief states in~\(\belmods\setminus\cohmods\).
The inherited partial ordering~\(\modleq\) on~\(\cohmods\) is interpreted as `is more conservative than'.

Crucially, we'll assume that \(\cohmods\) is closed under arbitrary non-empty infima, so for every non-empty subset~\(C\subseteq\cohmods\) we assume that \(\inf C\in\cohmods\).
Additionally, we'll assume that the complete meet-semilattice \(\structure{\cohmods,\modleq}\) has no top---so definitely \(\belmodtop\notin\cohmods\)---implying that there's a smallest (most conservative) coherent belief state~\(\cohmodbottom\coloneqq\inf\cohmods\) but no largest (least conservative) one.
We'll let the incoherent top belief state~\(\belmodtop\) represent contradiction.
Of course, the set~\(\closedmods\coloneqq\cohmods\cup\set{\belmodtop}\) provided with the partial ordering~\(\modleq\) is a complete lattice.
The corresponding triple \(\structure{\belmods,\cohmods,\modleq}\) is then called a \define{belief structure}.

That \(\cohmods\) is closed under arbitrary non-empty infima, leads us to define a closure operator as follows:
\begin{equation*}
\cohmodclosure
\colon\belmods\to\closedmods
\colon\belmod\mapsto\cohmodclosureof{\belmod}\coloneqq\inf\set{\cohmod\in\cohmods\given\belmod\modleq\cohmod},
\end{equation*}
so \(\cohmodclosureof{\belmod}\) is the most conservative coherent belief state that's at least as committal as \(\belmod\)---if any.
The closure operator implements \emph{conservative inference} in the belief structure \(\structure{\belmods,\cohmods,\modleq}\).
Using this closure~\(\cohmodclosure\), we can now define notions of \define{consistency} and \define{closedness}: a belief state~\(\belmod\in\belmods\) is \define{consistent} if \(\cohmodclosureof{\belmod}\neq\belmodtop\), or equivalently, if \(\cohmodclosureof{\belmod}\in\cohmods\), and \define{closed} if \(\cohmodclosureof{\belmod}=\belmod\).
Two belief states \(\belmod[1],\belmod[2]\in\belmods\) are called \define{consistent} if their join \(\belmod[1]\join\belmod[2]\) is.
We see that \(\closedmods\) is the set of all closed belief states, that the coherent belief states in \(\cohmods\) are the ones that are both consistent and closed, and that \(\belmodtop\) represents inconsistency for the closed belief states.

So, how do the AD-models we discussed in the earlier sections fit into this abstract picture?
The AD-models \(\M\in\adfMsabove{\bgM}\) that respect a given AD-background~\(\bgM\in\adfMsabove{\zeroM}\), provided with the `is at most as resolved as' ordering~\(\subseteq\), constitute a special instance of these belief structures, with the correspondences identified in \cref{tab:correspondence}.

\begin{table}[ht]
\centering
\begin{tabular}{rcl}
belief state
&\(\instantiateas\)
&AD-assessment\\
\(\belmods\)
&\(\instantiateas\)
&\(\adfAs\)\\
\(\modleq\)
&\(\instantiateas\)
&\(\subseteq\)\\
\(\belmodbottom\)
&\(\instantiateas\)
&\(\Adelim{\emptyset}{\emptyset}\)\\
\(\belmodtop\)
&\(\instantiateas\)
&\(\Adelim{\opts}{\opts}\)\\
\(\inf\)
&\(\instantiateas\)
&\(\Intersection\)\\
\(\cohmods\)
&\(\instantiateas\)
&\(\adfMsabove{\bgM}\)\\
\(\closedmods\)
&\(\instantiateas\)
&\(\adfMsabove{\bgM}\cup\set{\Adelim{\opts}{\opts}}\)\\
\(\cohmodbottom\)
&\(\instantiateas\)
&background AD-model~\(\bgM\)\\
coherent belief state
&\(\instantiateas\)
& AD-model that respects \(\bgM\)\\
\(\cohmodclosureof{}\)
&\(\instantiateas\)
&\(\Mclsof{\bgM\union\bolleke}\)\\
consistent
&\(\instantiateas\)
& \(\bgM\)-consistent\\
coherent
&\(\instantiateas\)
& AD-model that respects \(\bgM\)\\
closure of a \dots
&\(\instantiateas\)
&\(\bgM\)-natural extension of a \dots\\
consistent belief state&&\(\bgM\)-consistent AD-assessment\\[1ex]
\end{tabular}
\caption{Correspondences that show how in exactly what sense the AD-models constitute a belief structure}
\label{tab:correspondence}
\end{table}
\noindent In other words, \(\structure{\adfAs,\adfMsabove{\bgM},\subseteq}\) is a belief structure in the sense described above.

Generally speaking, when You're in a coherent belief state~\(\cohmod\in\cohmods\), new information the form of some belief state~\(\belmod\in\belmods\)---not necessarily coherent---can cause You to change Your belief state.
\citeauthor{alchourron1985} \cite{alchourron1985} considered three important types of belief change: belief expansion, belief revision and belief contraction.
They proposed a number of axioms for these operations, and studied and discussed their resulting properties.

De Cooman \cite{cooman2003a} has argued that the axioms for belief expansion and belief revision can be elegantly translated to the setting of belief structures---the counterpart of belief contraction is more problematic.
We'll not discuss his abstract axioms here, but propose to postpone listing their more concrete forms until the next section, where we consider the concrete forms of belief expansion and belief revision for Accept-Desirability models caused by the occurrence of events.

\section{Conditioning as belief change}\label{sec::conditioning:as:belief:change}
Let's now investigate how conditioning on events can be seen as a form of belief change in the Accept-Desirability framework.
As explained above in \cref{sec::conditioning}, You start out with an AD-model~\(\M=\Adelim{\M_\acc}{-\M_\des}\in\adfMsabove{\bgM}\) that respects the background AD-model~\(\bgM=\Adelim{\bgM_\acc}{-\bgM_\des}\in\adfMsabove{\zeroM}\) and You then gain new information in the form of the occurrence of an event~\(\eventopt\in\eventopts\).
We've argued that this event~\(\eventopt\) corresponds to an AD-assessment~\(\eventM=\Adelim{\eventindifset}{\emptyset}\in\adfMsabove{\zeroM}\).
The question we're going to answer in this section then is: how should Your AD-model~\(\M\) change, given this new information in the shape of the assessment~\(\eventM\)?\footnote{We've  already mentioned above that, due to \cref{eq::kernels:identify:events}, the sets of indifferent options~\(\eventindifset\)---and therefore also the AD-assessment \(\eventM=\Adelim{\eventindifset}{\emptyset}\)---are in a one-to-one correspondence with the events~\(\eventopt\).}

\subsection{Belief expansion}\label{sec::belief:expansion}
One way to combine Your AD-model~\(\M\) with new information in the form of an AD-assessment~\(\A\) goes via a \define{belief expansion} operator\footnote{The `\(\vert\)' symbol merely acts as a way to separate the first from the second argument here.}
\begin{equation*}
\expand
\colon\adfMsabove{\bgM}\times\adfAs\to\adfAs
\colon(\M,\A)\mapsto\expandof{\M\given\A},
\end{equation*}
which produces a new AD-assessment~\(\expandof{\M\given\A}\), intended to combine Your original beliefs and the new information on an equal footing.
De Cooman \cite{cooman2003a} has argued that the AGM axioms for belief expansion \cite{alchourron1985,gardenfors1988,rott2001} can be translated directly to the abstract setting of belief structures.
His results, together with the correspondences in the previous section, lead us to conclude that in the present context of the belief structure~\(\structure{\adfAs,\adfMsabove{\bgM},\subseteq}\), the resulting expansion operator \(\expand\) is then uniquely determined by the resulting model closure, in the sense that
\begin{equation*}
\expandof{\M\given\A}
=\Mclsbgof{\bgM}{\M\union\A}
=\Mclsof{\M\union\A}
\text{ for all~\(\M\in\adfMsabove{\bgM}\) and~\(\A\in\adfAs\)}.
\end{equation*}
In our conditioning setting, this leads to
\begin{multline*}
\expandof{\M\given\eventopt}
\coloneqq\expandof{\M\given\eventM}
=\Mclsbgof{\bgM}{\M\union\eventM}
=\Mclsof{\M\union\eventM}\\
\text{ for all~\(\M\in\adfMsabove{\bgM}\) and~\(\eventopt\in\eventopts\)},
\end{multline*}
where \(\M\union\eventM=\Adelim{\M_\acc\cup\eventindifset}{-\M_\des}\).
Since, clearly, \(\posiof{\M_\acc\cup\eventindifset}=\M_\acc+\eventindifset\), we see that
\begin{align*}
\posiof{\M_\acc\cup\eventindifset}\cap-\M_\des=\emptyset
&\ifandonlyif0\notin\posiof{\M_\acc\cup\eventindifset}+\M_\des\\
&\ifandonlyif0\notin\M_\acc+\eventindifset+\M_\des\\
&\ifandonlyif0\notin\M_\des+\eventindifset,
&&\reason{\(\M_\acc+\M_\des=\M_\des\) by \labelcref{axiom:AD:no:limbo}}
\end{align*}
and therefore \(\M\union\eventM\) is \(\bgM\)-consistent if and only if \(0\notin\M_\des+\eventindifset\).
In that case, we infer from \cref{eq::model:closure} that
\begin{align}
\Mclsof{\M\union\eventM}
&=\Adelim{\posiof{\M_\acc\cup\eventindifset}}{-\shullof{\M_\des}\cup-\group{\shullof{\M_\des}+\posiof{\M_\acc\cup\eventindifset}}}
\notag\\
&=\Adelim{\M_\acc+\eventindifset}{-\group{\M_\des\cup\group{\M_\des+\M_\acc+\eventindifset}}}
\notag\\
&=\Adelim{\M_\acc+\eventindifset}{-\group{\M_\des\cup\group{\M_\des+\eventindifset}}}
\notag\\
&=\Adelim{\M_\acc+\eventindifset}{-\group{\M_\des+\eventindifset}}.
\label{eq::model:closure:for:expansion}
\end{align}
This leads to the following expression for our expansion operator:
\begin{equation}\label{eq::AD:expansion:operator}
\expandof{\M\given\eventopt}\\
=\begin{cases}
\Adelim{\M_\acc+\eventindifset}{-\group{\M_\des+\eventindifset}}
&\text{if \(0\notin\M_\des+\eventindifset\)}\\
\Adelim{\opts}{\opts}
&\text{otherwise}.
\end{cases}
\end{equation}
Observe that for any event~\(\eventopt\) that isn't \(\bgM\)-regular, it holds by definition that \(0\in\bgM_\des+\eventindifset\), and therefore also that \(0\in\M_\des+\eventindifset\) for all~\(\M\in\adfMsabove{\bgM}\), so the expansion~\(\expandof{\M\given\eventopt}\) \emph{always} yields the contradictory closed belief state~\(\Adelim{\opts}{\opts}\).

\subsection{Belief revision}
Another way of changing Your beliefs when new information comes in, is by a \define{revision operator}
\begin{equation*}
\revise
\colon\adfMsabove{\bgM}\times\adfAs\to\adfAs
\colon(\M,\A)\mapsto\reviseof{\M\given\A},
\end{equation*}
where the new AD-assessment~\(\reviseof{\M\given\A}\) is intended to combine Your original beliefs~\(\M\) and the new information~\(\eventM\) in such a way that, when the new information is inconsistent with Your original beliefs, some of Your original beliefs are replaced by the new information, to make the combination~\(\reviseof{\M\given\A}\) consistent, whenever this is at all possible.

\citeauthor{cooman2003a} \cite{cooman2003a} has argued that the AGM axioms for belief revision \cite{alchourron1985,gardenfors1988} can be transported to the abstract setting of belief structures.
His revision axioms, which correspond one to one with the AGM revision postulates \(\mathrm{K}^*1\)--\(\mathrm{K}^*8\) in \cite{gardenfors1988}, are the following, when instantiated in the present context of AD-models~\(\M\in\adfMsabove{\bgM}\) and new information coming in the form of AD-assessments~\(\eventM\) corresponding to events~\(\eventopt\in\eventopts\):
\begin{enumerate}[label={\upshape BR\arabic*}.,ref={\upshape BR\arabic*},series=BR,widest=8,leftmargin=*]
\item\label{axiom:revision:1} \(\reviseof{\M\given\eventopt}\in\adfMsabove{\bgM}\cup\set{\Adelim{\opts}{\opts}}\) for all~\(\M\in\adfMsabove{\bgM}\) and all~\(\eventopt\in\eventopts\);
\item\label{axiom:revision:2} \(\eventM\subseteq\reviseof{\M\given\eventopt}\) for all~\(\M\in\adfMsabove{\bgM}\) and all~\(\eventopt\in\eventopts\);
\item\label{axiom:revision:3} \(\reviseof{\M\given\eventopt}\subseteq\expandof{\M\given\eventopt}\) for all~\(\M\in\adfMsabove{\bgM}\) and all~\(\eventopt\in\eventopts\);
\item\label{axiom:revision:4} \(\expandof{\M\given\eventopt}\subseteq\reviseof{\M\given\eventopt}\) for all~\(\M\in\adfMsabove{\bgM}\) and all~\(\eventopt\in\eventopts\) such that \(\M\union\eventM\) is \(\bgM\)-consistent;
\item\label{axiom:revision:5} \(\reviseof{\M\given\eventopt}\) is \(\bgM\)-consistent if \(\eventM\) is \(\bgM\)-consistent, for all~\(\M\in\adfMsabove{\bgM}\) and all~\(\eventopt\in\eventopts\);
\item\label{axiom:revision:6} \(\reviseof{\M\given\eventopt}=\reviseof{\M\given\Mclsbgof{\zeroM}{\eventM}}\) for all~\(\M\in\adfMsabove{\bgM}\) and all~\(\eventopt\in\eventopts\);
\item\label{axiom:revision:7} \(\reviseof{\M\given\eventopt[1]\sqcap\eventopt[2]}\subseteq\expandof{\reviseof{\M\given\eventopt[1]}\given\eventopt[2]}\) for all~\(\M\in\adfMsabove{\bgM}\) and all~\(\eventopt[1],\eventopt[2]\in\eventopts\);
\item\label{axiom:revision:8} \(\expandof{\reviseof{\M\given\eventopt[1]}\given\eventopt[2]}\subseteq\reviseof{\M\given\eventopt[1]\sqcap\eventopt[2]}\) for all~\(\M\in\adfMsabove{\bgM}\) and all~\(\eventopt[1],\eventopt[2]\in\eventopts\) such that \(\reviseof{\M\given\eventopt[1]}\cup\M_{\eventopt[2]}\) is \(\bgM\)-consistent.
\end{enumerate}

Let's briefly comment on these axioms.
\labelcref{axiom:revision:1} stipulates that the revised AD-assessment \(\reviseof{\M\given\eventopt}\) should either be an AD-model or the contradictory closed belief state~\(\Adelim{\opts}{\opts}\); that it should fully reflect the consequences of the observation of the event~\(\eventopt\), is expressed by~\labelcref{axiom:revision:2}.
\labelcref{axiom:revision:3} expresses that the result of a revision should always be at most as resolved as that of an expansion, and by \labelcref{axiom:revision:4} should coincide with it whenever the expansion leads to an AD-model, or in other words, whenever Your initial AD-model~\(\M\) and the new information~\(\eventM\) are \(\bgM\)-consistent.
The combination of \labelcref{axiom:revision:5,axiom:revision:2} leads to the requirement that the revised AD-assessment~\(\reviseof{\M\given\eventopt}\) should be \(\bgM\)-consistent if and only if the new information~\(\eventM\) is.

\labelcref{axiom:revision:6} deserves some extra attention, because there's no immediate interpretation for it in our current context.
In the original propositional logic context, \labelcref{axiom:revision:6} states that there should be no difference between adding a new proposition and adding its deductive closure.
In our present context, however, we look at revising an AD-model with new information that an event~\(\eventopt\) has occurred.
Below, we'll introduce a revision operator for this that acts on the old AD-model through the action of the projection operator~\(\eventopt\ast\bolleke\), rather through a new AD-assessment~\(\eventM\).
This means that the right-hand side in the equality in \labelcref{axiom:revision:6} won't make much sense.
However, we can preserve the spirit of the original axiom, by interpreting it in a more general way, as follows: \(\eventopt\) is in a one-to-one correspondence with~\(\eventindifset\) and therefore with~\(\eventM\), which in its turn is in a one-to-one correspondence with~\(\Mclsbgof{\bgM}{\eventM}=\Adelim{\bgM_\acc+\eventindifset}{-\group{\bgM_\des+\eventindifset}}\), at least for regular events~\(\eventopt\).
It's in this perhaps more general sense that revising an AD-model with an event~\(\eventopt\) will be equivalent to revising the background AD-model~\(\bgM\) with~\(\Mclsbgof{\bgM}{\eventM}\).
On this less strict way of looking at it, \labelcref{axiom:revision:6} will, in its original intent, definitely be satisfied for our revision operator, so we'll simply not take it into further consideration in what follows.

\labelcref{axiom:revision:7} and \labelcref{axiom:revision:8} are counterparts to \labelcref{axiom:revision:3} and \labelcref{axiom:revision:4} in the sense that they relate a revision and an expansion operation to each other.
\labelcref{axiom:revision:7} states that revising Your beliefs using the combined new information~\(\eventopt[1]\sqcap\eventopt[2]\) should lead to an AD-assessment that's at most as resolved as first revising Your beliefs using the new information~\(\eventopt[1]\) and then expanding the result using the new information~\(\eventopt[2]\).
Finally, \labelcref{axiom:revision:8} expresses that, whenever the result of the first revision~\(\reviseof{\M\given\eventopt[1]}\) and the new information~\(\eventopt[2]\) are \(\bgM\)-consistent, expanding the result of the first revision using the new information~\(\eventopt[2]\) should lead to an AD-assessment that coincides with the revised beliefs using the combined new information~\(\eventopt[1]\sqcap\eventopt[2]\).

Interestingly, and in contrast with belief expansion, \citeauthor{alchourron1985} \cite{alchourron1985,gardenfors1988} as well as \citeauthor{cooman2003a} in the wider context of belief models \cite{cooman2003a}, have argued that the axioms for belief revision don't single out a unique revision operator; there seem to be many reasonable ways of revising Your original beliefs with new information.
Near the end of \cref{sec::conditioning}, we came to the conclusion that conditioning a (conditionable) AD-model~\(\M\) on a (regular) event~\(\eventopt\) could be seen as a form of belief revision, because generally speaking \(\eventM\subseteq\M\condon\eventopt\) but not necessarily \(\M\subseteq\M\condon\eventopt\): the conditional~\(\M\condon\eventopt\) reflects the new information~\(\eventM\) but needn't reflect the original beliefs~\(\M\) completely.
In the remainder of this section, we'll investigate whether the discussion in \cref{sec::conditioning} can lead us to a reasonable revision operator that is based on the ideas behind conditioning.

Our suggestion for a revision operator is grounded in the following case analysis, based on the interplay between the background AD-model~\(\bgM\), Your initial AD-model~\(\M\) and the new information that the event~\(\eventopt\) has occurred, represented by the (indifference) AD-assessment~\(\eventM=\Adelim{\eventindifset}{\emptyset}\).

First, if the event~\(\eventopt\) isn't \(\bgM\)-regular, this means that the new information that the event~\(\eventopt\) has occurred, represented by the (indifference) AD-assessment~\(\eventM=\Adelim{\eventindifset}{\emptyset}\), isn't consistent with the background~\(\bgM\), so there's no way to come up with a revised AD-model that respects both the background and the new information.
In this case we're left with no other option than to suggest that the revised model should be the contradictory closed belief state~\(\Adelim{\opts}{\opts}\), especially if we want our revision operator to be in line with the requirements~\labelcref{axiom:revision:1,axiom:revision:2}.

Otherwise, if the event~\(\eventopt\) is \(\bgM\)-regular and Your original AD-model~\(\M\) is conditionable on it, we suggest using the conditional~\(\M\condon\eventopt\) as the revised AD-assessment~\(\reviseof{\M\given\eventopt}\).

Finally, we turn to the case which at this point hasn't received any attention yet: what to suggest if the event~\(\eventopt\) is \(\bgM\)-regular but Your original AD-model~\(\M\) isn't conditionable on it?
We'll take the simplest possible approach here, and take \(\reviseof{\M\given\eventopt}\) to be the least resolved AD-model~\(\Mclsbgof{\bgM}{\eventM}=\Mclsof{\bgM\union\eventM}=\Adelim{\bgM_\acc+\eventindifset}{-\group{\bgM_\des+\eventindifset}}\) that reflects both the background~\(\bgM\) and the new information~\(\eventM=\Adelim{\eventindifset}{\emptyset}\), \emph{thus throwing away all preferences that were initially present in the AD-model~\(\M\)}.
Though this is a quite drastic step, it does guarantee that we end up with an AD-model with a clear interpretation: the knowledge that the occurrence of the event gives You contradicts Your original beliefs.
Since You clearly can't trust Your original beliefs, You choose to become as conservative as possible and only retain the new knowledge, which You're now assumed to be certain of.\footnote{This simplest possible approach essentially consists in taking the \emph{full meet revision} of Your original AD-model~\(\M\) with the new information~\(\eventM\) in this case; see also \citeauthor{alchourron1985} \cite{alchourron1985,gardenfors1988} as well as \citeauthor{cooman2003a} in the wider context of belief models \cite{cooman2003a}. Other approaches, which preserve more of the original AD-model, are also possible, but we'll not consider them here.}

Summarising, we propose the following revision operator:
\begin{align}
\reviseof{\M\given\eventopt}
\coloneqq&
\begin{cases}
\Adelim{\M_\acc\altcondon\eventopt}{-\group{\M_\des\altcondon\eventopt\cup\group{\bgM_\des+ \M_\acc\altcondon\eventopt}}}
&\text{ if }0\notin\M_\acc\altcondon\eventopt+\bgM_\des\\
\Mclsof{\bgM\union\eventM}
&\text{ otherwise}
\end{cases}
\label{eq::AD:revision:operator}\\
=&
\begin{cases}
\Adelim{\M_\acc\altcondon\eventopt}{-\group{\M_\des\altcondon\eventopt\cup\group{\bgM_\des+ \M_\acc\altcondon\eventopt}}}
&\text{if }0\notin\M_\acc\altcondon\eventopt+\bgM_\des\\
\Adelim{\bgM_\acc+\eventindifset}{-\group{\bgM_\des+\eventindifset}}
&\text{if }0\in\M_\acc\altcondon\eventopt+\bgM_\des\text{ and }0\notin\bgM_\des+\eventindifset\\
\Adelim{\opts}{\opts}
&\text{if }0\in\bgM_\des+\eventindifset,
\end{cases}
\label{eq::AD:revision:operator:rewritten}
\end{align}
where the second equality follows from \cref{eq::model:closure:for:expansion} for \(\M\instantiateas\bgM\).

\begin{proposition}\label{prop::AD:model:revision:axioms:obeyed}
The belief revision operator \(\reviseof{\M\given\eventopt}\) in \cref{eq::AD:revision:operator} satisfies \labelcref{axiom:revision:1,axiom:revision:2,axiom:revision:3,axiom:revision:5,axiom:revision:7}.
\end{proposition}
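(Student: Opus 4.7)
The plan is to prove the five axioms by a systematic case analysis on the three branches of the revision operator \labelcref{eq::AD:revision:operator:rewritten}, exploiting throughout the canonical decomposition $\opt=\calledoff{\opt}+(\opt-\calledoff{\opt})$ with $\opt-\calledoff{\opt}\in\eventindifset$ that follows from \labelcref{axiom:event:linear,axiom:event:idempotent}. For \labelcref{axiom:revision:1,axiom:revision:5}, branch~1 gives the conditional $\M\condon\eventopt\in\adfMsabove{\bgM}$ by \cref{prop::conditional:model}; branch~2 gives $\Mclsbgof{\bgM}{\eventM}\in\adfMsabove{\bgM}$, legitimate because the branch hypothesis $0\notin\bgM_\des+\eventindifset$ expresses exactly the $\bgM$-consistency of $\eventM$; and branch~3 gives $\Adelim{\opts}{\opts}$. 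This settles \labelcref{axiom:revision:1}, and \labelcref{axiom:revision:5} follows because its hypothesis excludes branch~3.

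For \labelcref{axiom:revision:2}, it suffices to check the inclusion $\eventindifset\subseteq\reviseof{\M\given\eventopt}_\acc$, which is immediate in all three branches: in branch~1 via $\calledoff{\opt}=0\in\M_\acc$ for $\opt\in\eventindifset$, in branch~2 via $\eventindifset\subseteq\bgM_\acc+\eventindifset$ using $0\in\bgM_\acc$, and trivially in branch~3.

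For \labelcref{axiom:revision:3}, recall from \labelcref{eq::AD:expansion:operator} that $\expandof{\M\given\eventopt}$ is either $\Adelim{\M_\acc+\eventindifset}{-\group{\M_\des+\eventindifset}}$ or $\Adelim{\opts}{\opts}$. If the revision takes branch~3, then $0\in\bgM_\des+\eventindifset\subseteq\M_\des+\eventindifset$, forcing the expansion into its vacuous form and making the inclusion trivial. In branch~1, a componentwise check using the canonical decomposition yields $\M_\acc\altcondon\eventopt\subseteq\M_\acc+\eventindifset$ and $\M_\des\altcondon\eventopt\subseteq\M_\des+\eventindifset$, while for $\opt+\altopt\in\bgM_\des+\M_\acc\altcondon\eventopt$ one rewrites $\opt+\altopt=\group{\opt+\calledoff{\altopt}}+\group{\altopt-\calledoff{\altopt}}$ and invokes \labelcref{axiom:AD:no:limbo} to get $\opt+\calledoff{\altopt}\in\bgM_\des+\M_\acc\subseteq\M_\des$. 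Branch~2 is immediate from $\bgM\subseteq\M$.

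The main obstacle is \labelcref{axiom:revision:7}. Writing $\eventopt\coloneqq\eventopt[1]\sqcap\eventopt[2]$, axiom \labelcref{axiom:event:kernel:sum} gives $\eventindifset=\eventindifset[1]+\eventindifset[2]$, and \cref{prop::ordering:equivalent:statements} yields $\eventopt\posetleq\eventopt[1]$, so $\eventopt\ast\opt=\eventopt[1]\ast\group{\eventopt\ast\opt}$ for all~$\opt$. The key computational lemma is this: splitting any $\opt-\eventopt\ast\opt\in\eventindifset$ as $\altopt+\altopttoo$ with $\altopt\in\eventindifset[1]$ and $\altopttoo\in\eventindifset[2]$, the summand $\eventopt\ast\opt+\altopt$ lies in $\M_\acc\altcondon\eventopt[1]$ because $\eventopt[1]\ast\group{\eventopt\ast\opt+\altopt}=\eventopt\ast\opt\in\M_\acc$, so every element of $\M_\acc\altcondon\eventopt$ decomposes as an element of $\M_\acc\altcondon\eventopt[1]$ plus an element of $\eventindifset[2]$; an analogous identity handles the desirable side. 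A case analysis on (branch of the outer revision at $\eventopt$)~$\times$~(branch of the inner revision at $\eventopt[1]$)~$\times$~(branch of the expansion at $\eventopt[2]$) then disposes of the trivial combinations: those in which the right-hand side is $\Adelim{\opts}{\opts}$, and those in which both sides coincide with $\Mclsbgof{\bgM}{\eventM}$. The step I expect to need the most care is ruling out the combinations in which the outer revision takes its conditional branch but the inner revision at $\eventopt[1]$ degenerates into its second branch: here one must use the $\bgM$-regularity of $\eventopt$ via \cref{prop::regular:events} together with \labelcref{axiom:event:monotone} to show that the consistency failure forcing this degeneracy cannot occur under the outer branch hypothesis.
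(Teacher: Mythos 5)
Your overall architecture — a branch-by-branch analysis of \cref{eq::AD:revision:operator:rewritten}, the canonical decomposition \(\opt=\calledoff{\opt}+(\opt-\calledoff{\opt})\) for \labelcref{axiom:revision:3}, and, for \labelcref{axiom:revision:7}, the decomposition of any element of \(\M_\acc\altcondon{(\eventopt[1]\sqcap\eventopt[2])}\) into an element of \(\M_\acc\altcondon{\eventopt[1]}\) plus an element of \(\eventindifset[2]\) via \labelcref{axiom:event:kernel:sum} and \cref{prop::ordering:equivalent:statements} — is exactly the paper's route, and your treatment of \labelcref{axiom:revision:1,axiom:revision:2,axiom:revision:3,axiom:revision:5} is sound.

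The genuine gap is in the step you yourself flag as the delicate one in \labelcref{axiom:revision:7}: ruling out the combination where the outer revision at \(\eventopt[1]\sqcap\eventopt[2]\) takes its conditional branch while the inner revision at \(\eventopt[1]\) degenerates. You propose to do this with the \(\bgM\)-regularity of \(\eventopt[1]\sqcap\eventopt[2]\) (\cref{prop::regular:events}) together with \labelcref{axiom:event:monotone}, but these tools are not enough. Non-conditionability on \(\eventopt[1]\) hands you some \(\opt\in\bgM_\des\) with \(\eventopt[1]\ast\group{-\opt}\in\M_\acc\), and to contradict conditionability on \(\eventopt[1]\sqcap\eventopt[2]\) you must exhibit some \(\altopt\in\bgM_\des\) whose called-off version under \(\eventopt[1]\sqcap\eventopt[2]\), with a minus sign, lies in \(\M_\acc\). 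Regularity and \labelcref{axiom:event:monotone} only tell you that called-off versions of background-positive options stay weakly positive (and nonzero); they say nothing about membership in \(\M_\acc\), and applying the projection \(\group{\eventopt[1]\sqcap\eventopt[2]}\ast\bolleke\) to the acceptable option \(\eventopt[1]\ast\group{-\opt}\) does not preserve \(\M\)-acceptability. The paper closes this hole with the order-unit machinery: by \labelcref{axiom:background:order:unit:connection} pick \(\alpha\in\posreals\) with \(\opt\weakgeq\alpha\optunit\), deduce \(-\alpha\calledoff[1]{\optunit}\in\M_\acc\) (since it dominates \(\eventopt[1]\ast\group{-\opt}\) in the background order and \(\M_\acc\supseteq\bgM_\acc\) is a convex cone), and then use \labelcref{axiom:event:unit:option} together with \(\eventopt[1]\sqcap\eventopt[2]\posetleq\eventopt[1]\) to get \(\group{\eventopt[1]\sqcap\eventopt[2]}\ast\group{-\alpha\optunit}\weakgeq-\alpha\calledoff[1]{\optunit}\), hence \(-\alpha\optunit\in\M_\acc\altcondon{(\eventopt[1]\sqcap\eventopt[2])}\cap-\bgM_\des\), the desired contradiction. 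Without invoking \labelcref{axiom:background:order:unit,axiom:background:order:unit:connection,axiom:event:unit:option} (or some equivalent structural input), this case cannot be dismissed, so as written your proof of \labelcref{axiom:revision:7} is incomplete.
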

\noindent We see that amongst the AGM axioms for belief revision that are still up for verification in the context of AD-models, namely \labelcref{axiom:revision:1,axiom:revision:2,axiom:revision:3,axiom:revision:4,axiom:revision:5,axiom:revision:7,axiom:revision:8}, only \labelcref{axiom:revision:4,axiom:revision:8} are at this point still unconfirmed for the revision operator based on conditioning suggested in \cref{eq::AD:revision:operator:rewritten}.
Interestingly, and perhaps surprisingly, there's no hope for these two revision axioms to be satisfied in general, as the following counterexample makes clear.
We'll show in the next two sections, however, that \labelcref{axiom:revision:4,axiom:revision:8} are satisfied for interesting and specific subclasses of our AD-models: (i) the subclass of propositional AD-models that corresponds to classical propositional inference; and (ii) the subclass of full previsional AD-models, wich corresponds to conditioning in (precise) probability theory.

\begin{counterexample}{\labelcref{axiom:revision:4}}
Consider a classical experiment with four possible outcomes and the AD-model~\(\M=\Adelim{\M_\acc}{-\M_\des}\) with
\begin{equation*}
\M_\acc=\posiof{\gambles_{\geq0}\cup\set{(-1,1,-1,1)}}
\text{ and }
\M_\des=\gambles_{\stronggt0}\cup\posiof{\gambles_{\stronggt0}+\set{(-1,1,-1,1)}}.
\end{equation*}
This assessment is clearly an AD-model: the background~\(\bgM=\Adelim{\gambles_{\geq0}}{-\gambles_{\stronggt0}}\) as defined in the example in \cref{sec::background} is included, \(0\notin \M_\des\), both \(\M_\acc\) and \(\M_\des\) are convex cones and \(\M_\acc+\M_\des\subseteq\M_\des\).
The last statement can be verified as follows: any gamble~\(\gbl\in\M_\acc\) can be written as
\begin{equation*}
\gbl=(a,b,c,d)+\alpha(-1,1,-1,1),
\text{ with \(a,b,c,d\in\nonnegreals\) and \(\alpha\in\nonnegreals\)},
\end{equation*}
while any gamble~\(\altgbl\in\M_\des\) can be written as
\begin{equation*}
\altgbl=(w,x,y,z)+\beta (-1,1,-1,1),
\text{ with \(w,x,y,z\in\posreals\) and \(\beta\in\nonnegreals\)}.
\end{equation*}
Their sum can then be written as:
\begin{equation*}
\gbl+\altgbl=(a+w,b+x,c+y,d+z)+(\alpha+\beta)(-1,1,-1,1).
\end{equation*}
The first term is clearly an element of \(\gambles_{\stronggt0}\) and \(\alpha+\beta\in\nonnegreals\), so \(\gbl+\altgbl\in\M_\des\).

Consider now the event~\(\indof{\event}\coloneqq(1,1,0,0)\).
Clearly, \(\M_\des\cap\indifset[\event]=\emptyset\), so \(\M\) and \(\M_{\event}\) are consistent.
Additionally, we have that any gamble \(\altgbltoo\in\M_\acc\altcondon\event\) can be written
\begin{equation}\label{eq::counterexample:4:vacuous:conditional}
\altgbltoo=(a,b,\alpha,\beta),
\text{ with \(a,b\in\nonnegreals\) and \(\alpha,\beta\in\reals\)},
\end{equation}
so we see that \(0\notin\M_\acc\altcondon{\event}+\gambles_{\stronggt0}\), and therefore \(\M\) is conditionable on~\(\event\), and the revision operator becomes
\begin{equation*}
\reviseof{\M\given\event}
=\Adelim{\M_\acc\altcondon{\event}}{-(\M_\des\altcondon{\event}\cup(\M_\acc\altcondon{\event}+\gambles_{\stronggt0}))}.
\end{equation*}
In order for \labelcref{axiom:revision:4} to be obeyed, we should then have that \(\expandof{\M\given\event}\subseteq\reviseof{\M\given\event}\), which implies in particular that
\begin{equation*}
\M_\acc+\indifset[\event]\subseteq\M_\acc\altcondon\event.
\end{equation*}
If we now look at the gamble~\(\altgbltoo\coloneqq (-1,1,-1,1)\), we see that \(\altgbltoo\in\M_\acc+\indifset[\event]\), but we infer from \cref{eq::counterexample:4:vacuous:conditional} that \(\altgbltoo\notin\M_\acc\altcondon{\event}\).

This counterexample is special in the sense that the lower probability~\(\lprev_{\M}\group{\indof{\event}}\) of the conditioning event~\(\event\), associated with the AD-model~\(\M\), is 0.
Indeed, recalling the definition from \cref{eq::definition:lprev:uprev}, it's easy to check that \(\lprev_{\M}\group{\indof{\event}}=\sup\set{\alpha\in\reals\colon\indof{\event}-\alpha\in\M_\acc}=0\).
Moreover, we infer from \cref{eq::counterexample:4:vacuous:conditional} that \(\altgbltoo\in\M_\acc\altcondon{\event}\ifandonlyif\inf\group{\altgbltoo\vert\event}\geq0\), so after conditioning on \(\event\), the model becomes vacuous; conditioning leads to a loss of resolvedness (or precision), even in this case where there is consistency between the original model and conditioning event.
And this is exactly the reason why \labelcref{axiom:revision:4} fails: the fact that conditioning may lead to a loss of resolvedness (or precision) in an imprecise probabilities context.

Interestingly, that the same thing happens when conditioning on the complement of the event \(\compof{\event}\), follows from the symmetry of the setup.
This means that we're in a situation where \emph{dilation} occurs \cite{seidenfeld1993:dilation,levi1980a,walley1991,cooman2004b,augustin2013:itip}: there's a loss of resolvedness/precision when conditioning on either event in the partition \(\set{\event,\compof{\event}}\).
\end{counterexample}

Since \labelcref{axiom:revision:4} is a special case of \labelcref{axiom:revision:8} (where the first event~\(\eventopt[1]\) coincides with the unit event), the counterexample mentioned above is also a counterexample for \labelcref{axiom:revision:8}, albeit a trivial one.
For the sake of completeness, we also add a less trivial counterexample for \labelcref{axiom:revision:8}.

\begin{counterexample}{\labelcref{axiom:revision:8}}
Consider again the classical experiment from the previous counterexample, with four possible outcomes and the AD-model~\(\M=\Adelim{\M_\acc}{-\M_\des}\) now given by
\begin{equation*}
\M_\acc
=\posiof{\gambles_{\geq0}\cup\set{(-1,1,1,0)}}
\text{ and }
\M_\des
=\gambles_{\stronggt0} \cup \posiof{\gambles_{\stronggt0} +\set{(-1,1,1,0)}}.
\end{equation*}
A similar argument to the one in the previous counterexample shows that this assessment is an AD-model.
Also consider the events \(\indof{\event[1]}\coloneqq(1,1,1,0)\) and \(\indof{\event[2]}\coloneqq(1,0,1,0)\), so \(\indof{\event[1]\cap\event[2]}=(1,0,1,0)=\indof{\event[2]}\).

For the derivation, we'll rely on \cref{eq:BR7:revision,eq:BR7:expansion} in the proof of \cref{prop::AD:model:revision:axioms:obeyed} in the Appendix.
We're first going to show that \(\M\) is conditionable on~\(\event[1]\) and on~\(\event[1]\cap\event[2]=\event[2]\), or in other words that \(0\notin\M_\acc\altcondon{\event[1]}+\gambles_{\stronggt0}\) and \(0\notin\M_\acc\altcondon{\event[2]}+\gambles_{\stronggt0}\).

Any element of \(\M_\acc\altcondon{\event[1]}\) can be written as
\begin{equation}\label{eq:BR7:counterexample:M_acc:altcondon:event1}
(a-\alpha,b+\alpha,c+\alpha,\beta)
\text{ with \(a,b,c,\alpha\in\nonnegreals\) and \(\beta\in\reals\)},
\end{equation}
making it clear that \(0\notin\M_\acc\altcondon{\event[1]}+\gambles_{\stronggt0}\).
Similarly, any element of \(\M_\acc\altcondon{\event[2]}=\M_\acc\altcondon{(\event[1]\cap\event[2])}\) can be written as
\begin{equation}\label{eq:BR7:counterexample:M_acc:altcondon:event2}
(a,\beta,c,\gamma)
\text{ with \(a,c\in\nonnegreals\) and \(\beta,\gamma\in\reals\)},
\end{equation}
making it clear that \(0\notin\M_\acc\altcondon{\event[2]}+\gambles_{\stronggt0}\).

Next, we're going to show that \(\M_\des\altcondon{\event[1]}=\emptyset\), and therefore that \(0\notin\M_\des\altcondon{\event[1]}+\gambles_{\stronggt0}\).
For any gamble~\(\gbl=(\alpha,\beta,\gamma,\delta)\in\gambles\) to be an element of \(\M_\des\altcondon{\event[1]}\), we need to have that
\begin{equation*}
\indof{\event[1]}\gbl
=(\alpha,\beta,\gamma,0)
=(a,b,c,d)+\eta(-1,1,1,0)
\text{ for some \(a,b,c,d\in\posreals\) and \(\eta\in\nonnegreals\)},
\end{equation*}
which is, indeed, clearly impossible.

And lastly, we're going to show that \(\group{\M_\acc\altcondon{\event[1]}+\gambles_{\stronggt0}}\cap\indifset[{\event[2]}]=\emptyset\).
Indeed, any element of \(\M_\acc\altcondon{\event[1]}+\gambles_{\stronggt0}\) can be written as
\begin{equation*}
(a-\alpha,b+\alpha,c+\alpha,\beta)+(x,y,z,w)
\text{ with \(a,b,c,\alpha\in\nonnegreals\) and \(\beta\in\reals\) and \(x,y,z,w\in\posreals\)},
\end{equation*}
Calling off this gamble with~\(\indof{\event[2]}=(1,0,1,0)\) results in the gamble
\begin{equation*}
(a-\alpha+x,0,c+\alpha+z,0)
\text{ with \(a,c,\alpha\in\nonnegreals\) and \(x,z\in\posreals\)},
\end{equation*}
which can clearly never be equal to the zero gamble.

If we now look at \cref{eq:BR7:revision,eq:BR7:expansion} in the proof of \cref{prop::AD:model:revision:axioms:obeyed} in the Appendix, we see that
\begin{equation*}
\reviseof{\M\given\event[1]\cap\event[2]}_\acc
=\M_\acc\altcondon{(\event[1]\cap\event[2])}
=\M_\acc\altcondon{\event[2]}
\text{ and }
\expandof{\reviseof{\M\given\event[1]}\given\event[2]}_\acc
=\M_\acc\altcondon{\event[1]}+\indifset[{\event[2]}].
\end{equation*}
Consider now the gamble~\(\gbl\coloneqq (-1,1,1,0)+(0, \beta, 0, \gamma)\in\M_\acc\altcondon{\event[1]}+I_{\event[2]}\), with~\(\beta,\gamma\in\reals\); see also \cref{eq:BR7:counterexample:M_acc:altcondon:event1}.
We see that \(\indof{\event[1]\intersection\event[2]}f=(-1,0,1,0)\), so for this gamble~\(\gbl\) to be an element of \(\M_\acc\altcondon{(\event[1]\intersection\event[2])}=\M_\acc\altcondon{\event[2]}\), we'd need, according to \cref{eq:BR7:counterexample:M_acc:altcondon:event2}, that
\begin{equation*}
(-1,0,1,0)=(a,\beta,c,\gamma)
\text{ with \(a,c\in\nonnegreals\) and \(\beta,\gamma\in\reals\)},
\end{equation*}
which is, indeed, clearly impossible.
\end{counterexample}

\section{Special case: propositional AD-models}\label{sec::propositional:models}
The AGM framework for belief revision was developed in a propositional context, where Your belief state is a set of propositions that You deem to be true.
Interestingly, it turns out that this propositional context, and the conservative inference mechanism of logical deduction that's associated with it, can be seen as a  special case of---can be embedded in---our more general framework of AD-models; this was studied in some detail by \citeauthor{vancamp2013} \cite{vancamp2013}.
This implies that we can also investigate what our abstract theory of conditioning and the related type of belief revision, studied in their full generality \cref{sec::conditioning,sec::conditioning:as:belief:change}, amount to in the more concrete propositional context studied by \citeauthor{alchourron1985} \cite{alchourron1985,gardenfors1988,rott2001}.

\subsection{Propositional AD-models}
To make things as easy as possible, we'll look at a \emph{simplified} propositional context, where You can make propositional statements~\(p(\varstate)\) about some variable~\(\varstate\) that may assume values in some set~\(\states\).
Any such propositional statement, or \define{proposition}, \(p(\varstate)\) can then be identified with a subset~\(A_p\) of the possibility space~\(\states\), containing those elements~\(\state\) of~\(\states\) for which the proposition~\(p(x)\) is true.
The set of all propositions about~\(\varstate\) can therefore also be identified with the power set~\(\powersetof{\states}\), the set of all subsets of~\(\states\).

A \define{filter base} is a non-empty subset~\(\filterbase\) of \(\powersetof{\states}\) that's closed under finite intersections: if \(A\in\filterbase\) and \(B\in\filterbase\), then \(A\cap B\in\filterbase\).
If, additionally, \(\emptyset\not\in\filterbase\), then we call \(\filterbase\) a \define{proper filter base}.
A \define{filter}, then, is a non-empty subset~\(\filter\) of \(\powersetof{\states}\) that's: (i) closed under finite intersections and (ii) increasing: if \(A\in\filter \) and \(A\subseteq B\), then \(B\in\filter\).
A filter~\(\filter\) is called \define{proper} if \(\emptyset\not\in\filter\), or equivalently, if~\(\filter\neq\powersetof\states\).
In our propositional context, these proper filters correspond to the deductively closed sets of propositions, and the conservative inference mechanism of deductive logic amounts to finding the smallest proper filter that includes a given set of propositions (subsets), if it exists, in which case the given set of propositions~\(\props\) is called \define{logically consistent}.

It's common knowledge that the set of propositions~\(\props\) is logically consistent if and only if it satisfies the so-called \define{finite intersection property}, meaning that \(\emptyset\not\in\filterbase[\props]\), where
\begin{equation*}
\filterbase[\props]
\coloneqq\set[\bigg]{\bigcap_{k=1}^nA_k\given n\in\naturals,A_k\in\props}
\end{equation*}
is then a proper filter base.
The smallest proper filter that includes~\(\props\), is then given by
\begin{equation*}
\filter[\props]
\coloneqq\set{A\subseteq\states\given\group{\exists B\in\filterbase[\props]}\,B\subseteq A}.
\end{equation*}

To see how this all fits into our general AD-framework, we follow the gist of \citeauthor{vancamp2013}'s approach \cite{vancamp2013}.
Suppose You have an assessment~\(\props\) of propositions that You deem to be true.
The trick they followed consists in associating an AD-assessment of \emph{gambles} with this propositional assessment, then using the conservative inference mechanism associated with AD-models for gambles to this AD-assessment, and finally to identify the result of this AD-inference with a set of propositions, for which we can then show that it yields the same result as deductive logical inference for the set of propositions~\(\props\) would: the smallest proper filter~\(\filter[\props]\) that includes~\(\props\), if it exists, so if the set of propositions~\(\props\) is logically consistent.
This is what we now set out to do.

Since we'll be working with the set of all gambles~\(\gambles\) on the possibility space~\(\states\) as an option space, we find ourselves in the context of our running example for classical probability, and take the AD-model~\(\bgM\coloneqq\Adelim{\gambles_{\weakgeq0}}{-\gambles_{\stronggt0}}\) as the background.

With the set~\(\props\) of those propositions that You state to be true, we associate the AD-assessment~\(\A_\props\coloneqq\Adelim{\indifset[\props]}{\emptyset}\), with~\(\indifset[\props]\) given by
\begin{equation}\label{eq::propositional:assessment}
\indifset[\props]
\coloneqq\set{\indof{B}-1\given B\in\props}
\end{equation}
where \(\indof{B}\) is the so-called \define{indicator (gamble)} of the set~\(B\), which takes the value \(1\) on~\(B\) and \(0\) elsewhere.
In other words, we translate Your assessment that \(B\) is true into Your accepting a bet on~\(B\) at rate~\(1\).
The fact that You're accepting a bet on~\(B\) at rate~\(1\) implies that You'll accept both the gambles~\(\indof{B}-1\) and \(1-\indof{B}\geq0\), so \emph{You're indifferent between the uncertain reward~\(\indof{B}(\varstate)\) and the certain reward~\(1\)}.
In this way, we express that You're practically certain that~\(B\) is true.

Since this assessment~\(\A_\props\) needn't yet satisfy \labelcref{axiom:AD:background,axiom:AD:deductive:closedness,axiom:AD:no:limbo,axiom:AD:zero:not:desirable}, we can look for the smallest dominating AD-model in~\(\adfMsabove{\bgM}\), if it exists, by looking at its closure~\(\closureof{\Msabove{\bgM}}{\A_\props}\).
A similar argument was also presented by \citeauthor{vancamp2013} in \cite{vancamp2013}, so we refer to their work for the inspiration for the proof of the following proposition.

We introduce the following notations, for any set of propositions~\(\filter\subseteq\powersetof\states\):
\begin{equation*}
\gambles_{\weakgeq0}^{\filter}
\coloneqq\set{\gbl\in\gambles\given\group{\exists B\in\filter}
\,\inf\group{\gbl\vert B}\geq0}
\text{ and }
\gambles_{\stronggt0}^{\filter}
\coloneqq\set{\gbl\in\gambles\given\group{\exists B\in\filter}
\,\inf\group{\gbl\vert B}>0},
\end{equation*}
where we let \(\inf\group{\gbl\vert B}\coloneqq\inf\set{f(x)\given x\in B}\) for all~\(\gbl\in\gambles\) and all non-empty~\(B\subseteq\states\).

\begin{proposition}\label{prop::form:propositional:model}
The AD-assessment~\(\A_\props\) is \(\bgM\)-consistent if and only if \(\emptyset\notin\filterbase[\props]\), or in other words, if and only if the set of propositions~\(\props\) is logically consistent.
In that case the \(\bgM\)-natural extension of~\(\A_\props\) is given by
\begin{equation*}
\closureof{\adfMsabove{\bgM}}{\A_\props}
=\Adelim{\gambles_{\weakgeq0}^{\filter[\!\!\props]}}{-\gambles_{\stronggt0}^{\filter[\!\!\props]}}.
\end{equation*}
\end{proposition}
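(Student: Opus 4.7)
The plan is to apply the closure formula~\labelcref{eq::model:closure} and the consistency criterion~\labelcref{eq::consistency} to the augmented assessment~\(\A_\props\union\bgM\), reducing everything to an explicit description of the positive hull \(\posiof{\indifset[\props]\union\gambles_{\weakgeq0}}\). The key technical step, which I would establish first under the consistency assumption \(\emptyset\notin\filterbase[\props]\), is the identity
\begin{equation*}
\posiof{\indifset[\props]\union\gambles_{\weakgeq0}}=\gambles_{\weakgeq0}^{\filter[\props]}.
\end{equation*}

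For the \(\subseteq\)-direction, any element of the left-hand side has the form \(\sum_{k=1}^n\lambda_k(\indof{B_k}-1)+\altgbl\) with \(\lambda_k\geq 0\), \(B_k\in\props\), and \(\altgbl\in\gambles_{\weakgeq0}\); on \(B\coloneqq\bigcap_{k=1}^nB_k\in\filterbase[\props]\subseteq\filter[\props]\) every term \(\indof{B_k}-1\) vanishes identically, so the whole combination equals \(\altgbl\geq 0\) on \(B\), witnessing membership of \(\gambles_{\weakgeq0}^{\filter[\props]}\). For the \(\supseteq\)-direction, given \(\gbl\) with \(\inf(\gbl\vert B)\geq 0\) on some \(B\supseteq\bigcap_{k=1}^nB_k\) and \(B_k\in\props\), I would take the single coefficient \(\lambda\coloneqq\max\set{0,-\inf\gbl}\) and verify that \(\altgbl\coloneqq\gbl-\lambda\sum_{k=1}^n(\indof{B_k}-1)\) is pointwise non-negative: on \(\bigcap_kB_k\subseteq B\) the correction vanishes while \(\gbl\geq 0\); off \(\bigcap_kB_k\) some \(\indof{B_k}(\state)\) is \(0\), so \(\sum_k(\indof{B_k}(\state)-1)\leq-1\) and subtracting it contributes at least \(\lambda\geq-\inf\gbl\).

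With the identity in hand, the consistency equivalence is straightforward. If \(\emptyset\in\filterbase[\props]\), then \(\bigcap_{k=1}^nB_k=\emptyset\) for some \(B_k\in\props\) and \(\sum_{k=1}^n(\indof{B_k}-1)\leq-1\), placing this gamble in \(\posiof{\indifset[\props]\union\gambles_{\weakgeq0}}\cap(-\gambles_{\stronggt0})\), so \(\A_\props\) is inconsistent by~\labelcref{eq::consistency}. Conversely, if \(\emptyset\notin\filterbase[\props]\), every witness \(B\in\filter[\props]\) is non-empty, so a strongly negative gamble cannot be non-negative on any such \(B\), and \(\gambles_{\weakgeq0}^{\filter[\props]}\cap(-\gambles_{\stronggt0})=\emptyset\). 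In this consistent case, I would read off the natural extension from~\labelcref{eq::model:closure}: the accept side is \(\gambles_{\weakgeq0}^{\filter[\props]}\) by the identity, while the reject side, using \(\shullof{-\gambles_{\stronggt0}}=-\gambles_{\stronggt0}\) and \(0\in\gambles_{\weakgeq0}^{\filter[\props]}\), collapses to \(-\gambles_{\stronggt0}-\gambles_{\weakgeq0}^{\filter[\props]}\). To identify this with \(-\gambles_{\stronggt0}^{\filter[\props]}\) I would verify both inclusions: any decomposition \(\gbl=-\altgbltoo-\altgbl\) with \(\altgbltoo\in\gambles_{\stronggt0}\) and witness~\(B\) for \(\altgbl\in\gambles_{\weakgeq0}^{\filter[\props]}\) gives \(\inf(-\gbl\vert B)\geq\inf\altgbltoo>0\); conversely, if \(\inf(-\gbl\vert B)\geq 2\delta>0\) for some \(B\in\filter[\props]\) and \(\delta>0\), then writing \(\gbl=-\delta\cdot 1-(-\gbl-\delta\cdot 1)\) puts the first summand in \(-\gambles_{\stronggt0}\) and the second in \(\gambles_{\weakgeq0}^{\filter[\props]}\) with the same witness~\(B\).

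The main obstacle will be the \(\supseteq\)-inclusion in the key identity, where one must pick a single uniform coefficient~\(\lambda\) that simultaneously leaves \(\gbl\) unchanged on \(\bigcap_kB_k\)---where it is already non-negative---and dominates the worst-case negative value of \(\gbl\) outside that intersection. Once this construction is settled, the remainder is essentially careful bookkeeping translating between the filter/filter-base language and the closure/positive-hull formalism.
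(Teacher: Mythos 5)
Your proof is correct, and its skeleton—apply the consistency criterion~\labelcref{eq::consistency} and the closure formula~\labelcref{eq::model:closure} to \(\A_\props\union\bgM\), then identify the resulting positive hulls with the filter-based sets—is the same as the paper's; but the way you distribute the work differs in two genuine respects. First, you only prove the weak-inequality identity \(\posiof{\indifset[\props]\cup\gambles_{\weakgeq0}}=\gambles_{\weakgeq0}^{\filter[\props]}\) directly, and recover the reject side from it via \(\gambles_{\weakgeq0}^{\filter[\props]}+\gambles_{\stronggt0}=\gambles_{\stronggt0}^{\filter[\props]}\), which is exactly the paper's \cref{lem:interior} (stated and proved there, but deployed only in the proof of \cref{prop::propositional:revision:operator}); the paper instead proves the strict analogue \(\gambles_{\stronggt0}\cup\group{\posiof{\indifset[\props]}+\gambles_{\stronggt0}}=\gambles_{\stronggt0}^{\filter[\props]}\) by a parallel direct construction, choosing a coefficient \(\mu>\inf\group{\gbl\vert C}-\inf\gbl\) and doing casework on how many \(B_k\) miss a given state, precisely because it needs a uniform strict margin \(\delta\). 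Your choice \(\lambda=\max\set{0,-\inf\gbl}\) is simpler and suffices because you only ever need the non-strict bound, the strict case then coming for free from the interior lemma—a mildly more economical organisation, at the cost of one extra auxiliary identity, whereas the paper's version keeps the accept and desirability sides symmetric and self-contained. Second, for the consistency equivalence the paper algebraically reduces \(\bgM\)-consistency to \(0\notin\posiof{\indifset[\props]}+\gambles_{\stronggt0}\) and characterises that via a supremum condition, while you exhibit the explicit contradictory gamble \(\sum_{k=1}^n\group{\indof{B_k}-1}\leq-1\) when the finite intersection property fails and read consistency off your identity when it holds; both are sound. One small point worth making explicit in a final write-up: in your \(\subseteq\)-direction the witness \(\bigcap_{k=1}^nB_k\) must be non-empty for it to lie in the proper filter and for \(\inf\group{\bolleke\vert B}\) to make sense—this is exactly where the standing assumption \(\emptyset\notin\filterbase[\props]\) enters, and you do invoke it, so the logic is complete.
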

\noindent In other words, when the propositional assessment~\(\props\) is logically consistent, You should find acceptable at least all those gambles that are non-negative on some set in~\(\filter[\!\!\props]\), so those gambles that according to Your propositional assessment are guaranteed to give You a non-negative payoff; and You should find desirable at least all those gambles that are positive and bounded away from zero on some such set, so those gambles that according to Your propositional assessment are guaranteed to give You a uniformly strictly positive payoff.

This result allows us to single out a special class of AD-models:

\begin{definition}
An AD-model~\(\M\in\adfMsabove{\bgM}\) is called \define{propositional} if there's some proper filter~\(\filter\) such that \(\M=\M_{\filter}\), where we let \(\M_{\filter}\coloneqq\Adelim{\gambles_{\weakgeq0}^{\filter}}{-\gambles_{\stronggt0}^{\filter}}\).
We denote the set of all propositional AD-models by~\(\propadfMs\), so
\begin{equation*}
\propadfMs
\coloneqq\set{\M_{\filter}\given\filter\text{ is a proper filter}}
\subseteq\adfMsabove{\bgM}.
\end{equation*}
\end{definition}
\noindent In fact, as we're about to find out, there's a one-to-one correspondence between the propositional AD-models and the proper filters.
Indeed, let's associate, with any given AD-model~\(\M\in\adfMsabove{\bgM}\), the set \(\filter[\M]\subseteq\powersetof\states\) given by
\begin{equation*}
\filter[\M]
\coloneqq\set{B\subseteq\states\given\indof{B}-1\in\M_\acc}
=\set{B\subseteq\states\given\indof{B}-1\in\M_\indif},
\end{equation*}
so \(\filter[\M]\) is the set of all propositions~\(B\) for which Your AD-model~\(\M\) implies that You're indifferent between the uncertain reward~\(\indof{B}(\varstate)\) and the fixed reward~\(1\), so for which You're practically certain that the proposition~\(B\) is true.
As it turns out, \(\filter[\M]\) is always a proper filter, and we have that
\begin{equation}\label{eq::prop:filter:correspondence}
\group{\forall\M\in\adfMsabove{\bgM}}
\group{\M\in\propadfMs\ifandonlyif\M=\M_{\filter[\M]}}.
\end{equation}
The one-to-one-correspondence between the proper filters and the propositional AD-models is therefore given by the map \(\filter\mapsto\M_{\filter}\) and its inverse \(\M\mapsto\filter[\M]\).
Together with \cref{prop::form:propositional:model}, this shows that \emph{the inference mechanism associated with classical propositional logic can be seen as a special case of that associated with AD-models} via the \emph{embedding}~\(\filter\mapsto\M_{\filter}\).

Having come to this conclusion, it now makes sense to investigate what the belief expansion and revision operators for AD-models, as defined in \cref{eq::AD:expansion:operator,eq::AD:revision:operator:rewritten}, reduce to when applied only to \emph{propositional} AD-models.
This is what we now turn to.

\subsection{Belief expansion for propositional AD-models}
We begin by looking at the expansion operator \(\expandof{\bolleke\given\event}=\closureof{\Msabove{\bgM}}{\bolleke\cup\M_{\event}}\), which as we've seen reduces to \(\expandof{\M\given\event}=\Adelim{\M_\acc+\indifset[\event]}{-\group{\M_\des+\indifset[\event]}}\) when the AD-model~\(\M_{\event}\coloneqq\Adelim{\indifset[\event]}{\emptyset}\) representing the occurrence of the event~\(\event\) is consistent with the old AD-model~\(\M\in\adfMsabove{\bgM}\)---or in other words when \(\M_\des\cap\indifset[\event]=\emptyset\) or equivalently \(0\notin\M_\des+\indifset[\event]\)---and otherwise yields the contradictory belief state~\(\Adelim{\gambles}{\gambles}\).

Let's first have a closer look at the consistency condition.

\begin{proposition}\label{prop::propositional:expansion:consistency}
Consider any event~\(\event\subseteq\states\) and any propositional model~\(\M_{\filter}\in\propadfMs\), where \(\filter\) is any proper filter.
Then there's consistency between the AD-model~\(\M_{\filter}\) and the event~\(\event\)---or the AD-model~\(\M_{\event}\)---if and only if the augmented set of propositions~\(\filter\cup\set\event\) is logically consistent, so satisfies the finite intersection property.
\end{proposition}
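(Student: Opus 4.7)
The plan is to unfold the consistency condition down to a simple combinatorial statement about sets. By the derivation preceding \cref{eq::AD:expansion:operator}, the $\bgM$-consistency of $\M_{\filter}$ and $\M_{\event}=\Adelim{\indifset[\event]}{\emptyset}$ is equivalent to $0\notin(\M_{\filter})_\des+\indifset[\event]$. Here $\indifset[\event]=\set{\gbl\in\gambles\given\indof{\event}\gbl=0}$ is the kernel of the calling-off operation---the linear subspace of gambles that vanish on $\event$---and since this subspace is closed under negation, the condition rewrites as $\gambles_{\stronggt0}^{\filter}\cap\indifset[\event]=\emptyset$.

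I would first tackle the \emph{only if} direction: suppose some $\gbl$ lies in this intersection and pick a witness $B\in\filter$ with $\inf\group{\gbl\vert B}>0$. On $B\cap\event$, $\gbl$ is at once strictly positive (from membership in $\gambles_{\stronggt0}^{\filter}$) and identically zero (from membership in $\indifset[\event]$), which forces $B\cap\event=\emptyset$; hence $\filter\cup\set{\event}$ violates the finite intersection property. Conversely, if $\filter\cup\set{\event}$ lacks the FIP, then because $\filter$ is already closed under finite intersections and is proper, the failure must come from some single $B\in\filter$ with $B\cap\event=\emptyset$. Taking $\gbl\coloneqq\indof{B}$ yields $\inf\group{\gbl\vert B}=1>0$ (so $\gbl\in\gambles_{\stronggt0}^{\filter}$), while $\gbl$ vanishes on $\event$ because $B\cap\event=\emptyset$; this places $\gbl$ in $\gambles_{\stronggt0}^{\filter}\cap\indifset[\event]$ and refutes consistency.

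No substantive obstacle stands in the way: once the consistency condition is rewritten via the calling-off kernel, both directions follow by picking the obvious witness. The one minor point requiring care is the overloaded notation $\indifset[\bolleke]$, which in \cref{eq::propositional:assessment} denotes the indicator-minus-$1$ construction on a set of propositions, whereas here it denotes the kernel of the calling-off operator $\indof{\event}\ast\bolleke$ on the single event $\event$.
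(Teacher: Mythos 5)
Your proof is correct and takes essentially the same route as the paper's: reduce the consistency condition to \(\gambles_{\stronggt0}^{\filter}\cap\indifset[\event]=\emptyset\), note in one direction that a witness \(B\in\filter\) with \(\inf\group{\gbl\vert B}>0\) is incompatible with \(\gbl\) vanishing on \(\event\) unless \(B\cap\event=\emptyset\), and in the other direction use \(\indof{B}\) for some \(B\in\filter\) disjoint from \(\event\) as the gamble exhibiting inconsistency. The only nitpick is a labelling slip: the implication you call the \emph{only if} direction is, after contraposition, really the \emph{if} direction (finite intersection property implies consistency), but since you prove both implications nothing is missing.
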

\noindent Observe, by the way, that augmented propositional assessment~\(\filter\cup\set\event\) gives rise to
\begin{equation*}
\filterbase[\!\!\filter\cup\set\event]
=\set[\bigg]{\bigcap_{k=1}^nA_k\given n\in\naturals,A_k\in\filter\cup\set\event}
=\filter\cup\set{\event\cap B\given B\in\filter},
\end{equation*}
and therefore we find for the finite intersection property that
\begin{equation}\label{eq::augmented:finite:intersection:property}
\emptyset\notin\filterbase[\!\!\filter\cup\set\event]
\ifandonlyif
\emptyset\notin\set{\event\cap B\given B\in\filter},
\end{equation}
and in that case the smallest proper filter that includes \(\filter\cup\set\event\) is given by
\begin{align}
\filter[\!\!\filter\cup\set\event]
&=\set{B\subseteq\states\given\group{\exists D\in\filterbase[\!\!\filter\cup\set\event]}\, D\subseteq B}
\notag\\
&=\filter\cup\set{B\subseteq\states\given\group{\exists D\in\filter}\,D\cap\event\subseteq B}.
\label{eq::augmented:filter}
\end{align}

\begin{proposition}\label{prop::propositional:expansion:operator}
Let \(\event\in\powersetof\states\setminus\set{\emptyset}\) be any regular event, and let \(\filter\) be any proper filter, so \(\M_{\filter}\) is a propositional AD-model.
Then for the expansion operator as defined in \cref{eq::AD:expansion:operator}, we find that
\begin{equation}\label{eq::propositional:expansion:operator:definition}
\expandof{\M_{\filter}\given\event}
=\begin{cases}
\M_{\filter[\!\!\filter\cup\set\event]}
&\text{if \(\emptyset\notin\filterbase[\!\!\filter\cup\set\event]\)}\\
\Adelim{\gambles}{\gambles}
&\text{otherwise.}
\end{cases}
\end{equation}
\end{proposition}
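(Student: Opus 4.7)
The plan is to apply the general expansion formula \labelcref{eq::AD:expansion:operator} to \(\M\instantiateas\M_{\filter}\) with \(\eventopt\instantiateas\indevent\), and then reduce the resulting AD-assessment to the desired closed form. Recall that in the classical probability setting the kernel is \(\indifset[\event]=\set{\gbl\in\gambles\given\gbl|_\event=0}\) and, by \cref{prop::propositional:expansion:consistency}, consistency of \(\M_{\filter}\) and \(\M_{\event}\) is equivalent to \(\emptyset\notin\filterbase[\!\!\filter\cup\set\event]\). Therefore, when this consistency fails, \cref{eq::AD:expansion:operator} directly yields the contradictory belief state~\(\Adelim{\gambles}{\gambles}\), which handles the second branch of \cref{eq::propositional:expansion:operator:definition}.

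In the consistent case, \cref{eq::AD:expansion:operator} gives
\[
\expandof{\M_{\filter}\given\event}
=\Adelim{\gambles_{\weakgeq0}^{\filter}+\indifset[\event]}{-\bigl(\gambles_{\stronggt0}^{\filter}+\indifset[\event]\bigr)},
\]
so what remains is to establish the two set identities \(\gambles_{\weakgeq0}^{\filter}+\indifset[\event]=\gambles_{\weakgeq0}^{\filter[\!\!\filter\cup\set\event]}\) and \(\gambles_{\stronggt0}^{\filter}+\indifset[\event]=\gambles_{\stronggt0}^{\filter[\!\!\filter\cup\set\event]}\). I would use the explicit description \labelcref{eq::augmented:filter} of~\(\filter[\!\!\filter\cup\set\event]\) to restate membership in the RHS as the existence of some \(D\in\filter\) for which \(\inf(\gbl\vert D\cap\event)\geq0\) (resp.\ \(>0\)); note that the term \(\filter\) itself in the union is absorbed, since \(D\in\filter\) trivially satisfies \(D\cap\event\subseteq D\), and a bound on \(D\) is stronger than the same bound on \(D\cap\event\).

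For the easy inclusion \(\subseteq\), if \(\gbl=\altgbl+\altgbltoo\) with \(\inf(\altgbl\vert D)\geq0\) (resp.\ \(>0\)) for some \(D\in\filter\) and \(\altgbltoo|_{\event}=0\), then on \(D\cap\event\) we have \(\gbl=\altgbl\), so \(\inf(\gbl\vert D\cap\event)\geq0\) (resp.\ \(>0\)), which delivers \(\gbl\in\gambles_{\weakgeq0}^{\filter[\!\!\filter\cup\set\event]}\) (resp.\ \(\gambles_{\stronggt0}^{\filter[\!\!\filter\cup\set\event]}\)). For the reverse inclusion in the acceptability identity, the natural decomposition \(\altgbl\coloneqq\gbl\indevent\), \(\altgbltoo\coloneqq\gbl(1-\indevent)\) works: one checks \(\altgbltoo|_{\event}=0\) and \(\inf(\altgbl\vert D)\geq0\) using that \(\altgbl\) vanishes on \(D\setminus\event\) and equals \(\gbl\geq0\) on \(D\cap\event\).

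The main obstacle is the reverse inclusion in the \emph{strict} identity: the naïve split above gives \(\inf(\altgbl\vert D)=0\) rather than \(>0\), since \(\altgbl=\gbl\indevent\) vanishes outside \(\event\). The fix I propose is to shift mass away from the zero bound by choosing a sufficiently small \(\epsilon>0\) with \(\epsilon<\inf(\gbl\vert D\cap\event)\) and writing \(\altgbl\coloneqq\gbl\indevent+\epsilon(1-\indevent)\) and \(\altgbltoo\coloneqq(\gbl-\epsilon)(1-\indevent)\); then \(\altgbltoo|_{\event}=0\), and \(\inf(\altgbl\vert D)\geq\min\set{\epsilon,\inf(\gbl\vert D\cap\event)}>0\), so \(\altgbl\in\gambles_{\stronggt0}^{\filter}\) and the decomposition is valid. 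Combining both identities with \(\M_{\filter[\!\!\filter\cup\set\event]}=\Adelim{\gambles_{\weakgeq0}^{\filter[\!\!\filter\cup\set\event]}}{-\gambles_{\stronggt0}^{\filter[\!\!\filter\cup\set\event]}}\) yields the first branch of \cref{eq::propositional:expansion:operator:definition}.
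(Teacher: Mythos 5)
Your proposal is correct and follows essentially the same route as the paper's proof: dispose of the inconsistent case via \cref{prop::propositional:expansion:consistency}, then prove \(\gambles_{\weakgeq0}^{\filter}+\indifset[\event]=\gambles_{\weakgeq0}^{\filter[\!\!\filter\cup\set{\event}]}\) and its strict analogue using \cref{eq::augmented:filter}, with the easy inclusion obtained by restricting to \(D\cap\event\) and the converse by an explicit decomposition into a \(\filter\)-acceptable (resp.\ desirable) gamble plus an element of \(\indifset[\event]\). The only difference is cosmetic: where you use \(\gbl\indof{\event}\), shifted by a small constant \(\epsilon\) off \(\event\) in the strict case, the paper instead patches \(\gbl\) with the constant \(\sup\gbl\) on \(B\setminus\event\), which covers the weak and strict cases uniformly.
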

\noindent This tells us that when there's consistency, the expansion operator turns any propositional AD-model into a new propositional AD-model, which is exactly the one that corresponds to adding the event~\(\event\) to the deductively closed set of propositions~\(\filter\) to find the smallest proper filter~\(\filter[\!\!\filter\cup\set\event]\) that includes \(\filter\cup\set\event\).
In other words, \emph{when restricted to propositional AD-models, the expansion operator simply embeds the action of the expansion operator in classical propositional logic into the structure of the more general AD-models}.

\subsection{Belief revision for propositional AD-models}
To conclude this discussion of the connection between AD-models and propositional logic, we now turn to the revision operator associated with conditioning, whose expression is given by \cref{eq::AD:revision:operator:rewritten}.
What happens when we restrict this revision operator to the propositional AD-models of the type~\(\M_{\filter}=\Adelim{\gambles_{\weakgeq0}^{\filter}}{-\gambles_{\stronggt0}^{\filter}}\), where \(\filter\) is any proper filter?

We've already established that the \emph{consistency condition}~\(0\notin\gambles_{\stronggt0}+\indifset[\event]\)---or in other words the \emph{regularity} of the event~\(\event\)--- is equivalent to~\(\event\) being non-empty, so the only remaining requirement to be checked in the context of \cref{eq::AD:revision:operator:rewritten} is the \emph{conditionability condition}~\(0\notin\gambles_{\weakgeq0}^{\filter}\altcondon\event+\gambles_{\stronggt0}\).

\begin{proposition}\label{prop::propositional:revision:conditionability}
For any proper filter~\(\filter\) and any regular event~\(\event\in\powersetof\states\setminus\set{\emptyset}\), the conditionability condition~\(0\notin\gambles_{\weakgeq0}^{\filter}\altcondon\event+\gambles_{\stronggt0}\) is equivalent to the logical consistency of the set of propositions~\(\filter\cup\set\event\), or in other words, to \(\emptyset\not\in\filterbase[\!\!\filter\cup\set\event]\).
\end{proposition}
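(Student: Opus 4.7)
The plan is to rewrite both sides of the claimed equivalence as elementary conditions on how sets in $\filter$ meet $\event$, and then invoke \cref{eq::augmented:finite:intersection:property}.

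First, I would unpack membership in $\gambles_{\weakgeq0}^{\filter}\altcondon\event$. By \cref{eq::updated}, a gamble $\gbl$ lies in this set iff $\indof{\event}\gbl\in\gambles_{\weakgeq0}^{\filter}$, which by definition of $\gambles_{\weakgeq0}^{\filter}$ means that $\inf\group{\indof{\event}\gbl\vert B}\geq0$ for some $B\in\filter$. Since $\indof{\event}\gbl$ agrees with $\gbl$ on $B\cap\event$ and vanishes on $B\setminus\event$, this inequality is equivalent to the pointwise requirement that $\gbl(\state)\geq0$ for every $\state\in B\cap\event$, which is vacuously satisfied when $B\cap\event=\emptyset$.

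Next, I would unpack the failure of conditionability. The statement $0\in\gambles_{\weakgeq0}^{\filter}\altcondon\event+\gambles_{\stronggt0}$ asserts the existence of some $\gbl\in\gambles_{\weakgeq0}^{\filter}\altcondon\event$ with $-\gbl\in\gambles_{\stronggt0}$, that is, with $\sup\gbl<0$. Combining this with the pointwise condition from the previous step yields the crux: any such $\gbl$ is strictly negative everywhere, so the requirement $\gbl\geq0$ on $B\cap\event$ can hold only when $B\cap\event=\emptyset$. Conversely, whenever some $B\in\filter$ satisfies $B\cap\event=\emptyset$, the constant gamble $-\optunit$ (which is strictly negative and for which the called-off gamble is identically zero on $B$) witnesses the membership. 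Hence conditionability fails iff there is some $B\in\filter$ disjoint from $\event$.

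Finally, I would read off the conclusion from \cref{eq::augmented:finite:intersection:property}, which rephrases $\emptyset\notin\filterbase[\!\!\filter\cup\set\event]$ as the assertion that $\event\cap B\neq\emptyset$ for every $B\in\filter$; this is precisely the negation of the condition isolated above, giving the stated equivalence. The only delicate point is the bookkeeping around the inner infimum when passing it through the indicator: one must recognise that $\inf\group{\indof{\event}\gbl\vert B}\geq0$ reduces \emph{exactly} to pointwise nonnegativity of $\gbl$ on $B\cap\event$ (and not, say, to the strictly weaker requirement that $\gbl$ be bounded below by $0$ in some averaged sense). Once that reduction is in place, the remaining argument is a direct set-theoretic comparison.
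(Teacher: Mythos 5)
Your proposal is correct and follows essentially the same route as the paper: unpack the conditionability condition gamble by gamble, reduce $\inf\group{\indof{\event}\gbl\vert B}\geq0$ to pointwise nonnegativity of $\gbl$ on $B\cap\event$, observe that a gamble with $\sup\gbl<0$ (e.g.\ $-\optunit$) witnesses failure exactly when some $B\in\filter$ is disjoint from $\event$, and conclude via \cref{eq::augmented:finite:intersection:property}. The paper packages this as a single chain of equivalences, but the content, including the handling of the non-strict infimum, is the same.
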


We then have the following result, in \cref{prop::propositional:revision:operator}, which shows that when there's logical consistency (or equivalently by \cref{prop::propositional:revision:conditionability}, conditionability), the revision operator associated with conditioning turns any propositional AD-model into a new propositional AD-model, which is exactly the one that corresponds to adding the event~\(\event\) to the deductively closed set of propositions~\(\filter\) to find the smallest proper filter~\(\filter[\!\!\filter\cup\set\event]\) that includes \(\filter\cup\set\event\).
And when there's no logical consistency, the proposed revision operator ends up completely replacing the original propositional AD-model~\(\M_{\filter}\) with the new propositional AD-model\footnote{The second equality follows directly from \cref{prop::propositional:expansion:operator} with \(\filter\instantiateas\set{\states}\) and therefore \(\filter[\!\!\filter\cup\set\event]\instantiateas\filter[\set{\event}]\).}
\begin{equation*}
\closureof{\adfMsabove{\bgM}}{\M_{\event}}
=\Adelim{\gambles_{\weakgeq0}+\indifset[\event]}{-\group{\gambles_{\stronggt0}+\indifset[\event]}}
=\Adelim{\gambles_{\weakgeq0}^{\filter[\!\!\set\event]}}{-\gambles_{\stronggt0}^{\filter[\!\!\set\event]}}
=\M_{\filter[\!\!\set\event]},
\end{equation*}
corresponding to the single propositional statement that \(\event\) has occurred.\footnote{This is a generalisation of the full meet revision in classical propositional logic \cite{gardenfors1988,alchourron1985,cooman2003a}.}
In other words, \emph{when restricted to propositional AD-models, the revision operator associated with conditioning as defined in \cref{eq::AD:revision:operator:rewritten} simply embeds the action of the full meet revision operator in classical propositional logic into the structure of the more general AD-models}.\footnote{In the case of logical inconsistency, other proposals can be made for the behaviour of the revision operator, besides the one we're proposing here for reasons of simplicity. A more detailed exploration of such alternatives is beyond the scope of this paper.}

\begin{proposition}\label{prop::propositional:revision:operator}
Let \(\event\in\powersetof\states\setminus\set{\emptyset}\) be any regular event, and let \(\filter\) be any proper filter, so \(\M_{\filter}\) is a propositional AD-model.
Then for the revision operator as defined in \cref{eq::AD:revision:operator:rewritten}, we find that
\begin{equation}\label{eq::propositional:revision:operator:definition}
\reviseof{\M_{\filter}\given\event}
=\begin{cases}
\M_{\filter[\!\!\filter\cup\set\event]}
&\text{if \(\emptyset\notin\filterbase[\!\!\filter\cup\set\event]\)}\\
\M_{\filter[\!\!\set\event]}
&\text{otherwise}.
\end{cases}
\end{equation}
\end{proposition}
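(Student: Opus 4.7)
The plan is to match the case distinction in the conclusion with the case distinction in the definition \labelcref{eq::AD:revision:operator:rewritten} of \(\reviseof{\bolleke\given\bolleke}\). Since \(\event\neq\emptyset\), we have \(0\notin\gambles_{\stronggt0}+\indifset[\event]\), ruling out the third, contradictory, branch. By \cref{prop::propositional:revision:conditionability}, the remaining choice between the first and middle branches is governed precisely by whether \(\filter\cup\set{\event}\) satisfies the finite intersection property, so it suffices to compute each branch separately and identify the outcome as the corresponding propositional AD-model.

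In the consistent case, \(\emptyset\notin\filterbase[\filter\cup\set\event]\), the operator returns \(\Adelim{\gambles_{\weakgeq0}^\filter\altcondon\event}{-\bigl(\gambles_{\stronggt0}^\filter\altcondon\event\cup(\gambles_{\stronggt0}+\gambles_{\weakgeq0}^\filter\altcondon\event)\bigr)}\), which I would identify with \(\M_{\filter[\filter\cup\set\event]}\) via two set equalities. For the acceptable side, \(\gbl\in\gambles_{\weakgeq0}^\filter\altcondon\event\) means there is some \(B\in\filter\) with \(\inf(\indof{\event}\gbl\vert B)\geq 0\); splitting \(B\) into \(B\cap\event\) (on which \(\indof{\event}\gbl\) coincides with \(\gbl\)) and \(B\setminus\event\) (on which \(\indof{\event}\gbl\) vanishes, contributing \(0\) to the infimum when non-empty), and using that consistency forces \(B\cap\event\neq\emptyset\), reduces this to \(\inf(\gbl\vert B\cap\event)\geq 0\); by the explicit description of \(\filter[\filter\cup\set\event]\) in \cref{eq::augmented:filter}, this matches \(\gbl\in\gambles_{\weakgeq0}^{\filter[\filter\cup\set\event]}\). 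For the desirable side, the strict inequality \(\inf(\indof{\event}\gbl\vert B)>0\) forces \(B\subseteq\event\), so \(\gambles_{\stronggt0}^\filter\altcondon\event\) embeds into \(\gambles_{\stronggt0}^{\filter[\filter\cup\set\event]}\) but is typically strictly smaller; the summand \(\gambles_{\stronggt0}+\gambles_{\weakgeq0}^\filter\altcondon\event\) fills the gap, since splitting off a uniform positive constant from any \(\gbl\) with \(\inf(\gbl\vert D)>0\) for some \(D\in\filter[\filter\cup\set\event]\), combined with the acceptable-side equality, yields the reverse inclusion.

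In the inconsistent case, \(\emptyset\in\filterbase[\filter\cup\set\event]\), the operator returns its middle branch \(\Adelim{\gambles_{\weakgeq0}+\indifset[\event]}{-(\gambles_{\stronggt0}+\indifset[\event])}\), which by \labelcref{eq::AD:expansion:operator} is exactly \(\expandof{\bgM\given\event}\). Since \(\bgM=\M_{\set{\states}}\) and \(\set{\states}\cup\set\event\) has the finite intersection property as \(\event\neq\emptyset\), \cref{prop::propositional:expansion:operator} gives \(\M_{\filter[\set{\states}\cup\set\event]}\), and the equality \(\filter[\set{\states}\cup\set\event]=\filter[\set\event]\) follows because adding the universal proposition \(\states\) does not change the generated filter. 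The main obstacle is the asymmetry in the consistent case between the two sides: weak non-negativity of \(\indof{\event}\gbl\) on \(B\) survives the pollution by zeros on \(B\setminus\event\) for free (because \(\min(a,0)\geq 0\) is equivalent to \(a\geq 0\)), whereas strict positivity does not, and reconciling this with the uniform description via \(\filter[\filter\cup\set\event]\) is exactly what the extra term \(\gambles_{\stronggt0}+\M_\acc\altcondon\event\) in the revision formula is designed to handle.
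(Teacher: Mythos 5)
Your proposal is correct and takes essentially the same route as the paper's proof: reduce to the consistent case (the inconsistent branch being settled, as in the paper, through \cref{prop::propositional:expansion:operator} applied to the trivial filter \(\set{\states}\)), establish the acceptance-side identity \(\gambles_{\weakgeq0}^{\filter}\altcondon\event=\gambles_{\weakgeq0}^{\filter[\!\!\filter\cup\set\event]}\) via \cref{eq::augmented:filter}, and recover the desirability side by splitting off a positive constant, which is exactly the content of \cref{lem:interior}. The only difference is organisational: the paper treats the desirability part by a case split on \(\event\in\filter\) versus \(\event\notin\filter\) (where \(\gambles_{\stronggt0}^{\filter}\altcondon\event\) is empty), whereas you absorb both cases into the single observation that \(\gambles_{\stronggt0}^{\filter}\altcondon\event\subseteq\gambles_{\stronggt0}^{\filter[\!\!\filter\cup\set\event]}\) while the summand \(\gambles_{\stronggt0}+\gambles_{\weakgeq0}^{\filter}\altcondon\event\) already accounts for all of \(\gambles_{\stronggt0}^{\filter[\!\!\filter\cup\set\event]}\).
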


The combination of \cref{prop::propositional:expansion:operator,prop::propositional:revision:operator} proves that \labelcref{axiom:revision:4} \emph{does} hold when we restrict our conditioning-based revision operator to propositional AD-models.
That \labelcref{axiom:revision:8} holds as well in this more restrictive setting can be given a similar argumentation: successive conditioning simply amounts to successively adding more and more propositions---or events---to the set~\(\props\), and demanding that the finite intersection property should hold is equivalent to requiring logical consistency.

In summary, then, when we restrict the revision operator based on our newly proposed conditioning rule for AD-models to propositional AD-models, we see that it satisfies our version of the AGM axioms~\labelcref{axiom:revision:1,axiom:revision:2,axiom:revision:3,axiom:revision:4,axiom:revision:5,axiom:revision:7,axiom:revision:8}, and in fact corresponds to what is known as a full meet revision.

\begin{proposition}\label{prop::postulates:satisfied:propositional}
When we restrict the action of the revision operator as defined in \cref{eq::propositional:revision:operator:definition} to propositional AD-models, it satisfies \labelcref{axiom:revision:1,axiom:revision:2,axiom:revision:3,axiom:revision:4,axiom:revision:5,axiom:revision:7,axiom:revision:8}.
\end{proposition}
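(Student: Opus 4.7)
The plan is to split the axioms into two groups: those inherited from the general result \cref{prop::AD:model:revision:axioms:obeyed} without any extra work, and those---namely \labelcref{axiom:revision:4} and \labelcref{axiom:revision:8}---that require specific verification in the propositional setting, where the explicit descriptions of the expansion and revision operators provided by \cref{prop::propositional:expansion:operator,prop::propositional:revision:operator} become available.

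First, since every propositional AD-model is in particular an AD-model in \(\adfMsabove{\bgM}\), the axioms \labelcref{axiom:revision:1,axiom:revision:2,axiom:revision:3,axiom:revision:5,axiom:revision:7} that were already established in full generality in \cref{prop::AD:model:revision:axioms:obeyed} transfer directly to the restricted setting, so no extra work is needed for them.

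For \labelcref{axiom:revision:4}, I would combine \cref{prop::propositional:expansion:consistency,prop::propositional:expansion:operator,prop::propositional:revision:operator} as follows. Assume \(\M_{\filter}\cup\M_{\event}\) is \(\bgM\)-consistent; by \cref{prop::propositional:expansion:consistency} this is equivalent to \(\emptyset\notin\filterbase[\!\!\filter\cup\set\event]\), and in that case \cref{prop::propositional:expansion:operator} gives \(\expandof{\M_{\filter}\given\event}=\M_{\filter[\!\!\filter\cup\set\event]}\), while \cref{prop::propositional:revision:operator} gives \emph{exactly the same value} for \(\reviseof{\M_{\filter}\given\event}\); the inclusion \(\expandof{\M_{\filter}\given\event}\subseteq\reviseof{\M_{\filter}\given\event}\) then holds trivially (combined with the already-proved \labelcref{axiom:revision:3}, this actually yields equality).

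For \labelcref{axiom:revision:8}, I would proceed by case analysis based on whether \(\filter\cup\set{\event[1]}\) is logically consistent, using that in the classical setting \(\event[1]\sqcap\event[2]=\event[1]\cap\event[2]\). In the consistent case \cref{prop::propositional:revision:operator} gives \(\reviseof{\M_{\filter}\given\event[1]}=\M_{\filter'}\) with \(\filter'\coloneqq\filter[\!\!\filter\cup\set{\event[1]}]\); the hypothesis that \(\reviseof{\M_{\filter}\given\event[1]}\cup\M_{\event[2]}\) is \(\bgM\)-consistent then translates, via \cref{prop::propositional:expansion:consistency}, into \(\filter'\cup\set{\event[2]}\) being logically consistent, which is equivalent to \(\filter\cup\set{\event[1],\event[2]}\) being logically consistent, and hence to \(\filter\cup\set{\event[1]\cap\event[2]}\) being logically consistent; applying \cref{prop::propositional:expansion:operator,prop::propositional:revision:operator} then shows that both sides of \labelcref{axiom:revision:8} reduce to \(\M_{\filter[\!\!\filter\cup\set{\event[1]\cap\event[2]}]}\). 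In the remaining case where \(\filter\cup\set{\event[1]}\) is inconsistent, we have \(\reviseof{\M_{\filter}\given\event[1]}=\M_{\filter[\!\!\set{\event[1]}]}\); the consistency hypothesis of \labelcref{axiom:revision:8} reduces to \(\event[1]\cap\event[2]\neq\emptyset\), and a parallel computation shows that both \(\expandof{\reviseof{\M_{\filter}\given\event[1]}\given\event[2]}\) and \(\reviseof{\M_{\filter}\given\event[1]\cap\event[2]}\) collapse to \(\M_{\filter[\!\!\set{\event[1]\cap\event[2]}]}\), using the key observation that inconsistency of \(\filter\cup\set{\event[1]}\) forces inconsistency of \(\filter\cup\set{\event[1]\cap\event[2]}\).

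The main obstacle, and the source of whatever care is required here, is precisely this last observation in the second case: one must be careful about how the ``full meet'' fallback in \cref{prop::propositional:revision:operator} interacts with a further expansion, and in particular verify the implication ``\(\filter\cup\set{\event[1]}\) inconsistent \(\then\) \(\filter\cup\set{\event[1]\cap\event[2]}\) inconsistent'', which ensures that both operators jettison the original filter \(\filter\) in a coordinated way and therefore end up producing the same propositional AD-model.
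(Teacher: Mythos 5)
Your proposal is correct and follows essentially the same route as the paper: axioms \labelcref{axiom:revision:1,axiom:revision:2,axiom:revision:3,axiom:revision:5,axiom:revision:7} are inherited from \cref{prop::AD:model:revision:axioms:obeyed}, \labelcref{axiom:revision:4} follows because \cref{prop::propositional:expansion:operator,prop::propositional:revision:operator} give the same model \(\M_{\filter[\!\!\filter\cup\set\event]}\) under the consistency condition of \cref{prop::propositional:expansion:consistency}, and \labelcref{axiom:revision:8} is handled by the same case split on the logical consistency of \(\filter\cup\set{\event[1]}\), with both sides reducing to \(\M_{\filter[\!\!\filter\cup\set{\event[1]\cap\event[2]}]}\) in the consistent case and to \(\M_{\filter[\!\!\set{\event[1]\cap\event[2]}]}\) in the inconsistent case. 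You also correctly isolate the key observation in the second case (inconsistency of \(\filter\cup\set{\event[1]}\) forces inconsistency of \(\filter\cup\set{\event[1]\cap\event[2]}\)), which is exactly the step the paper relies on, there verified by explicit filter-base computations.
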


\section{Special case: coherent full conditional previsions}\label{sec::full:conditional:previsions}
In his book on the subject \cite{gardenfors1988}, \citeauthor{gardenfors1988} not only discusses the AGM framework for belief change in the context of classical propositional logic, but also extends the discussion to probabilistic belief models, as captured by conditional probabilities.
Indeed, he's inclined there to see the belief change of a probabilistic belief state related to the observation of a new event, or in other words, conditioning a probability measure on the event, as a \emph{belief expansion} resulting from a new propositional statement.

We're going to follow a different route here, and show that, in the context of AD-models, conditioning probabilistic models should rather be seen as a form of \emph{belief revision}.
Indeed, earlier work by \citeauthor{quaeghebeur2015:statement} \cite{quaeghebeur2015:statement} has shown that AD-models are also able to capture probabilistic models, besides the propositional models discussed in the previous section; see also the discussion in \cref{sec::AD-previsions}.
It therefore behoves us to investigate what our newly proposed conditioning rule for AD-models amounts to when applied to such probabilistic models: first, whether it coincides with Bayesian conditioning, and second, whether the belief revision operator associated with it then satisfies our version of the AGM-axioms.

\subsection{Coherent full conditional previsions}
Let's first identify the specific type of AD-models we'll be interested in, in the present context: we'll go beyond the scope of \cref{sec::AD-previsions}, and consider \emph{coherent full conditional previsions}; for more information about these objects and their properties, we also refer to the much more extensive discussion in \citeauthor{troffaes2013:lp}'s book on coherent lower previsions \cite{troffaes2013:lp}, as well as to \citeauthor{williams1975}'s seminal paper \cite{williams1975}.

As in the previous section, we'll be working with AD-models consisting of \emph{gambles} on some uncertain variable~\(\varstate\) that assumes values in some set of possible values~\(\states\).
We take the AD-model~\(\bgM\coloneqq\Adelim{\gambles_{\weakgeq0}}{-\gambles_{\stronggt0}}\) as a background.

We're going to assume that You've specified Your beliefs about the uncertain variable~\(\varstate\) by means of a \define{full conditional prevision}~\(\prevof{\bolleke\given\bolleke}\colon\gambles\times\powersetof{\states}\setminus\set{\emptyset}\to\reals\), which maps any gamble~\(\gbl\in\gambles\) and regular event~\(B\in\powersetof{\states}\setminus\set{\emptyset}\) to Your prevision~\(\prevof{\gbl\given B}\) of~\(\gbl\) conditional on the event~\(B\).
We'll assume that this full conditional prevision is \define{coherent}, which means that it satisfies the following conditions \cite{troffaes2013:lp}:
\begin{enumerate}[label={\upshape CC\arabic*.},ref={\upshape CC\arabic*},series=CC,widest=4,leftmargin=*,itemsep=0pt]
\item\label{axiom:coherence:conditional:bounds} \(\inf\group{\gbl\vert B}\leq\prevof{\gbl\given B}\leq\sup\group{\gbl\vert B}\) for all~\(\gbl\in\gambles\) and all~\(B\in\powersetof{\states}\setminus\set{\emptyset}\);\nameit{Bounds}
\item\label{axiom:coherence:conditional:linearity} \(\prevof{\lambda\gbl+\mu\altgbl\given B}=\lambda\prevof{\gbl\given B}+\mu\prevof{\altgbl\given B}\) for all~\(\gbl,\altgbl\in\gambles\), all~\(\lambda,\mu\in\reals\) and all regular events~\(B\in\powersetof{\states}\setminus\set{\emptyset}\);\nameit{Linearity}
\item\label{axiom:coherence:conditional:bayes} \(\prevof{\gbl\indof{C}\given B}=\prevof{\gbl\given B\cap C}\prevof{\indof{C}\given B}\) for all~\(\gbl\in\gambles\) and all events~\(B,C\in\powersetof{\states}\) such that \(B\cap C\neq\emptyset\) is regular.\nameit{Bayes' Rule}
\end{enumerate}
It can be proved \cite[Theorem 13.29]{troffaes2013:lp} that for any full conditional prevision, coherence is equivalent to the condition of \define{avoiding sure loss}\emph{:}
\begin{multline}\label{eq::avoiding:sure:loss}
\sup\group[\bigg]{\sum_{k=1}^n\lambda_k\indof{B_k}\group{\gbl_k-\prevof{\gbl_k\given B_k}+\epsilon_k}\Big\vert\bigcup_{k=1}^nB_k}\geq0\\
\text{ for all \(n\in\naturals\), \(\lambda_k,\epsilon_k\in\posreals\), \(\gbl_k\in\gambles\) and \(B_k\in\powersetof{\states}\setminus\set{\emptyset}\).}
\tag{ASL}
\end{multline}

This full conditional prevision~\(\prevof{\bolleke\given\bolleke}\) can be used to construct the following  AD-assessment of gambles~\(\A_{\prevof{\smallbolleke\given\smallbolleke}}\coloneqq\Adelim{\mathscr{L}_{\prevof{\smallbolleke\given\smallbolleke}}}{\emptyset}\), where
\begin{equation}\label{eq::full:cond:prev:assessment}
\mathscr{L}_{\prevof{\smallbolleke\given\smallbolleke}}
\coloneqq\set{\indof{B}\group{\gbl-\prevof{\gbl\given B}+\epsilon}\given\gbl\in\gambles,B\in\powersetof{\states}\setminus\set{\emptyset},\epsilon\in\posreals}.
\end{equation}
This expresses that You accept to buy any gamble~\(\gbl\) for any price~\(\prevof{\gbl\given B}-\epsilon\) strictly lower than Your conditional prevision~\(\prevof{\gbl\given B}\), on the condition that the buying transaction is called off unless the event~\(B\) occurs.
And since \(\gbl-\prevof{\gbl\given B}+\epsilon=\prevof{-\gbl\given B}-\group{-\gbl}+\epsilon\), it also means that You accept to sell any gamble~\(\altgbl\coloneqq-\gbl\) for any price~\(\prevof{\altgbl\given B}+\epsilon\) strictly higher than Your conditional prevision~\(\prevof{\altgbl\given B}\), on the condition that the selling transaction is called off unless the event~\(B\) occurs.

Given this AD-assessment~\(\A_{\prevof{\smallbolleke\given\smallbolleke}}\), we're now going to find out whether we can associate an AD-model in~\(\adfMsabove{\bgM}\) with it.
For this, we first need to check its \(\bgM\)-consistency.

\begin{proposition}\label{prop::previsional:consistency}
For any coherent full conditional prevision~\(\prevof{\bolleke\given\bolleke}\), the corresponding AD-assessment~\(\A_{\prevof{\smallbolleke\given\smallbolleke}}\) is \(\bgM\)-consistent.
\end{proposition}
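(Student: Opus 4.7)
My plan is to translate $\bgM$-consistency, via~\cref{eq::consistency}, into the concrete requirement that
\[
\posiof{\mathscr{L}_{\prevof{\smallbolleke\given\smallbolleke}}\cup\gambles_{\weakgeq0}}\cap(-\gambles_{\stronggt0})=\emptyset,
\]
and then derive this from the avoiding-sure-loss condition~\cref{eq::avoiding:sure:loss}, which follows from the assumed coherence of~$\prevof{\bolleke\given\bolleke}$. I would argue by contradiction: suppose some positive combination lying in~$-\gambles_{\stronggt0}$ has the form $h+c$, where
\[
h=\sum_{k=1}^n\lambda_k\indof{B_k}\group{\gbl_k-\prevof{\gbl_k\given B_k}+\epsilon_k}
\]
collects the contributions from $\mathscr{L}_{\prevof{\smallbolleke\given\smallbolleke}}$ (with $n\in\naturals$, $\lambda_k,\epsilon_k\in\posreals$ and $B_k\in\powersetof\states\setminus\set\emptyset$), and $c\in\gambles_{\weakgeq0}\cup\set0$ collects the background contributions. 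The case $n=0$ can be dismissed immediately, since $\gambles_{\weakgeq0}\cap(-\gambles_{\stronggt0})=\emptyset$.

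The first step is a reduction: since $c\geq0$ pointwise, $h+c\geq h$ pointwise, and hence $\sup h\leq\sup(h+c)<0$. This lets us discard $c$ and reduce the problem to showing that $\sup h\geq0$ for any pure positive $\mathscr{L}$-combination~$h$ of the above form.

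For that last step, I would split into two cases. If $\bigcup_{k=1}^nB_k\ne\states$, then every $\indof{B_k}$ vanishes on the (non-empty) complement of $\bigcup_kB_k$, so $h$ does too and $\sup h\geq0$ trivially. Otherwise $\bigcup_{k=1}^nB_k=\states$, and then $\sup h$ coincides with the conditional supremum $\sup\set{h(\state)\given\state\in\bigcup_kB_k}$, which is non-negative by~\cref{eq::avoiding:sure:loss}. In both cases $\sup h\geq0$, contradicting the assumption and completing the argument.

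I don't foresee any real obstacle: the proof is essentially a transparent translation into the AD-framework of the classical fact that coherent conditional previsions avoid sure loss. The only mildly subtle point is the reduction step, where the non-negativity of the background gambles lets us drop them without weakening the supremum bound; everything else is mechanical.
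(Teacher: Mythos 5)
Your proof is correct and follows essentially the same route as the paper's: both reduce \(\bgM\)-consistency to the statement that every positive combination of elements of \(\mathscr{L}_{\prevof{\smallbolleke\given\smallbolleke}}\) has non-negative supremum, and then invoke the avoiding-sure-loss characterisation of coherence. The only cosmetic differences are that you discard the background term by pointwise domination (the paper uses the cone identities \(\gambles_{\weakgeq0}+\gambles_{\stronggt0}=\gambles_{\stronggt0}\), \(0\notin\gambles_{\stronggt0}\)) and you split on whether \(\bigcup_kB_k=\states\), whereas the paper simply notes that the unconditional supremum dominates the supremum conditional on \(\bigcup_kB_k\).
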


This allows us to take the \(\bgM\)-natural extension of the AD-assessment~\(\A_{\prevof{\smallbolleke\given\smallbolleke}}\) to find AD-model that can be associated with a coherent full conditional prevision~\(\prevof{\bolleke\given\bolleke}\)

\begin{proposition}\label{prop::previsional:natural:extension}
The \(\bgM\)-natural extension~\(\M_{\prevof{\smallbolleke\given\smallbolleke}}\coloneqq\closureof{\Msabove{\bgM}}{\A_{\prevof{\smallbolleke\given\smallbolleke}}}\) of the \(\bgM\)-consistent AD-assessment~\(\A_{\prevof{\smallbolleke\given\smallbolleke}}\) is the AD-model in~\(\adfMsabove{\bgM}\) given by
\begin{equation}\label{eq::model:conditional:full:prevision}
\M_{\prevof{\smallbolleke\given\smallbolleke}}
=\Adelim{\gambles_{\weakgeq0}\cup\group{\posiof{\mathscr{L}_{\prevof{\smallbolleke\given\smallbolleke}}}+\gambles_{\weakgeq0}}}{-\group{\gambles_{\stronggt0}\cup\group{\posiof{\mathscr{L}_{\prevof{\smallbolleke\given\smallbolleke}}}+\gambles_{\stronggt0}}}}.
\end{equation}
\end{proposition}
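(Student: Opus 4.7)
The plan is to compute the closure $\Mclsof{\A_{\prevof{\smallbolleke\given\smallbolleke}}\cup\bgM}$ by directly applying the explicit formula for the closure operator given in equation~\eqref{eq::model:closure}. By the previous \cref{prop::previsional:consistency}, the AD-assessment $\A_{\prevof{\smallbolleke\given\smallbolleke}}$ is $\bgM$-consistent, so the $\bgM$-natural extension exists and coincides with $\Mclsof{\A_{\prevof{\smallbolleke\given\smallbolleke}}\cup\bgM}$. Substituting $\A_\acc=\mathscr{L}_{\prevof{\smallbolleke\given\smallbolleke}}$, $\A_\rej=\emptyset$, $\bgM_\acc=\gambles_{\weakgeq0}$, and $\bgM_\rej=-\gambles_{\stronggt0}$ into~\eqref{eq::model:closure} produces the candidate expression
\begin{equation*}
\Adelim{\posiof{\mathscr{L}_{\prevof{\smallbolleke\given\smallbolleke}}\cup\gambles_{\weakgeq0}}}{\shullof{-\gambles_{\stronggt0}}\cup\group{\shullof{-\gambles_{\stronggt0}}-\posiof{\mathscr{L}_{\prevof{\smallbolleke\given\smallbolleke}}\cup\gambles_{\weakgeq0}}}},
\end{equation*}
and all that remains is to massage the two sides into the form stated in the proposition.

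For the accept side, I would prove the identity
\begin{equation*}
\posiof{\mathscr{L}_{\prevof{\smallbolleke\given\smallbolleke}}\cup\gambles_{\weakgeq0}}=\gambles_{\weakgeq0}\cup\group{\posiof{\mathscr{L}_{\prevof{\smallbolleke\given\smallbolleke}}}+\gambles_{\weakgeq0}}
\end{equation*}
by splitting a generic strictly positive combination $\sum_i\lambda_i\ell_i+\sum_j\mu_jg_j$, with $\ell_i\in\mathscr{L}_{\prevof{\smallbolleke\given\smallbolleke}}$ and $g_j\in\gambles_{\weakgeq0}$, into its two contributions. Since $\gambles_{\weakgeq0}$ is a convex cone containing $0$, the $g$-part always lies in $\gambles_{\weakgeq0}$, while the $\ell$-part either vanishes (contributing nothing) or belongs to $\posiof{\mathscr{L}_{\prevof{\smallbolleke\given\smallbolleke}}}$; the reverse inclusion is immediate because both sets in the union are subsets of the positive hull on the left.

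For the reject side, I would first note that $\gambles_{\stronggt0}$ is already closed under strictly positive scalar multiplication, so $\shullof{-\gambles_{\stronggt0}}=-\gambles_{\stronggt0}$. Combined with the accept-side identity just established and the elementary fact that $\gambles_{\stronggt0}+\gambles_{\weakgeq0}=\gambles_{\stronggt0}$, the Minkowski difference $-\gambles_{\stronggt0}-\posiof{\mathscr{L}_{\prevof{\smallbolleke\given\smallbolleke}}\cup\gambles_{\weakgeq0}}$ distributes over the union and simplifies to $-\gambles_{\stronggt0}\cup\group{-\gambles_{\stronggt0}-\posiof{\mathscr{L}_{\prevof{\smallbolleke\given\smallbolleke}}}}$. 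Taking the union with $-\gambles_{\stronggt0}$ itself and factoring out the minus sign then yields $-\group{\gambles_{\stronggt0}\cup\group{\posiof{\mathscr{L}_{\prevof{\smallbolleke\given\smallbolleke}}}+\gambles_{\stronggt0}}}$, as claimed.

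This is essentially a bookkeeping exercise once \cref{prop::previsional:consistency} is in hand; I don't anticipate any conceptual hurdle. The one place to be careful is the positive-hull splitting argument for the accept part, since $\posiof{\bolleke}$ admits only strictly positive coefficients and therefore does not automatically include $0$, so the case where no $\ell$-terms appear in the combination must be singled out in order to recover the stray $\gambles_{\weakgeq0}$ summand on the right-hand side.
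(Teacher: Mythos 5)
Your proposal is correct and follows essentially the same route as the paper's own proof: invoke the consistency result, apply the explicit closure formula~\eqref{eq::model:closure} to \(\A_{\prevof{\smallbolleke\given\smallbolleke}}\cup\bgM\), decompose \(\posiof{\mathscr{L}_{\prevof{\smallbolleke\given\smallbolleke}}\cup\gambles_{\weakgeq0}}\) using that \(\gambles_{\weakgeq0}\) is a convex cone containing \(0\), and simplify the reject side with \(\gambles_{\weakgeq0}+\gambles_{\stronggt0}=\gambles_{\stronggt0}\). Your closing caveat about the empty \(\ell\)-part in the positive-hull splitting is exactly the point the paper handles via \(0\in\gambles_{\weakgeq0}\), so nothing is missing.
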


This allows us to single out a special class of AD-models:

\begin{definition}
An AD-model~\(\M\in\adfMsabove{\bgM}\) is called a \define{full previsional AD-model} if there's some coherent full conditional prevision~\(\prevof{\bolleke\given\bolleke}\) such that \(\M=M_{\prevof{\bolleke\given\bolleke}}\).
We denote the set of all full previsional AD-models by~\(\prevadfMs\), so
\begin{equation*}
\prevadfMs
\coloneqq\set{\M\in\adfMsabove{\bgM}\given\M=\M_{\prevof{\bolleke\given\bolleke}}\text{ for some coherent }\prevof{\bolleke\given\bolleke}}
\subseteq\adfMsabove{\bgM}.
\end{equation*}
\end{definition}
\noindent In fact, as we're about to prove, there's a one-to-one correspondence between the full previsional AD-models and the coherent full conditional previsions.
Indeed, let's associate, with any given AD-model~\(\M\in\adfMsabove{\bgM}\), the map \(\lprev_{\M}\group
{\bolleke\condon\bolleke}\colon\gambles\times\powersetof{\states}\setminus\set{\emptyset}\to\reals\cup\set{-\infty,\infty}\) defined by
\begin{equation*}
\lprev_{\M}\group{\gbl\vert B}
\coloneqq\sup\set{\alpha\in\reals\given\indof{B}\group{\gbl-\alpha}\in\M_\acc},
\text{ for all }\gbl\in\gambles\text{ and }B\in\powersetof{\states}\setminus\set{\emptyset},
\end{equation*}
and the conjugate map \(\uprev_{\M}\group{\bolleke\condon\bolleke}\colon\gambles\times\powersetof{\states}\setminus\set{\emptyset}\to\reals\cup\set{-\infty,\infty}\) defined by
\begin{multline*}
\uprev_{\M}\group{\gbl\vert B}
\coloneqq\inf\set{\beta\in\reals\given\indof{B}\group{\beta-\gbl}\in\M_\acc}
=-\lprev_{\M}\group{-\gbl\vert B},\\
\text{ for all }\gbl\in\gambles\text{ and }B\in\powersetof{\states}\setminus\set{\emptyset}.
\end{multline*}
Then
\begin{equation}\label{eq::prevision:model:correspondence}
\lprev_{\M_{\prevof{\smallbolleke\given\smallbolleke}}}\group{\bolleke\condon\bolleke}=\prevof{\bolleke\given\bolleke}
\text{ for any coherent full conditional prevision~\(\prevof{\bolleke\given\bolleke}\)},
\end{equation}
and also, for all AD-models \(\M\in\adfMsabove{\bgM}\),
\begin{equation}\label{eq::model:prevision:correspondence}
\M\in\prevadfMs\ifandonlyif\group{\M=\M_{\lprev_{\M}\group{\smallbolleke\condon\smallbolleke}} \text{ and } \lprev_{\M}\group{\bolleke\condon\bolleke} \text{ is a coherent full conditional prevision}}.
\end{equation}
The one-to-one-correspondence between the coherent full conditional previsions and the full previsional AD-models is therefore given by the map \(\prevof{\bolleke\given\bolleke}\mapsto M_{\prevof{\smallbolleke\given\smallbolleke}}\) and its inverse~\(\M\mapsto\lprev_{\M}\group{\bolleke\condon\bolleke}\), restricted to the set of full previsional AD-models.

\subsection{Belief revision for full previsional AD-models}
For didactic reasons, and contrary to what we've been doing so far, we'll first concentrate on \emph{belief revision} for full previsional AD-models, and only afterwards turn to \emph{belief expansion}.

We'll investigate what the concrete revision operator~\(\reviseof{\bolleke\given\event}\), associated with conditioning on a regular event~\(\event\) and introduced in \cref{eq::AD:revision:operator:rewritten}, does when it acts on a full previsional AD-model \(\M_{\prevof{\smallbolleke\given\smallbolleke}}\in\prevadfMs\).

Before going there, it'll be useful to introduce some new notation that will help us interpret and better understand what's going on.
We'll need to consider a new conditional prevision~\(\prev_{\event}\group{\bolleke\condon\bolleke}\colon\gambles\times\powersetof{\event}\setminus\set{\emptyset}\to\reals\) as a simple restriction of the full conditional prevision~\(\prevof{\bolleke\given\bolleke}\colon\gambles\times\powersetof{\states}\setminus\set{\emptyset}\to\reals\), as follows:
\begin{equation*}
\prev_{\event}\group{\gbl\condon C}
\coloneqq\prevof{\gbl\condon C}
\text{ for all \(\gbl\in\gambles\) and \(C\in\powersetof{\event}\setminus\set\emptyset\)}.
\end{equation*}
This means that \(\prev_{\event}\group{\bolleke\condon\bolleke}\) is \emph{no longer a \emph{full} conditional prevision}, in the sense that it no longer allows conditioning on all non-empty subsets of~\(\states\), but only on the non-empty subsets of~\(\event\); \emph{it loses all the information associated with conditioning on events that aren't subsets of~\(\event\).}
With this \emph{restricted} conditional prevision~\(\prev_{\event}\group{\bolleke\condon\bolleke}\), we can now, as before construct an AD-assessment of gambles~\(\A_{\prev_{\event}\group{\smallbolleke\condon\smallbolleke}}\coloneqq\Adelim{\mathscr{L}_{\prev_{\event}\group{\smallbolleke\condon\smallbolleke}}}{\emptyset}\), where
\begin{equation*}
\mathscr{L}_{\prev_{\event}\group{\smallbolleke\condon\smallbolleke}}
\coloneqq\set{\indof{C}\group{\gbl-\prev_{\event}\group{\gbl\condon C}+\epsilon}\given\gbl\in\gambles,C\in\powersetof{\event}\setminus\set{\emptyset},\epsilon\in\posreals}.
\end{equation*}
We also recall the definitions of the sets \(\gambles^{\set\event}_{\weakgeq0}\) and \(\gambles^{\set\event}_{\stronggt0}\) from \cref{sec::propositional:models}:
\begin{equation*}
\gambles^{\set\event}_{\weakgeq0}
=\set{\gbl\in\gambles\colon\inf\group{\gbl\vert\event}\geq0}
\text{ and }
\gambles^{\set\event}_{\stronggt0}
=\set{\gbl\in\gambles\colon\inf\group{\gbl\vert\event}>0}.
\end{equation*}

Recall from its definition in \cref{eq::AD:revision:operator:rewritten} that the workings of the revision operator~\(\reviseof{\bolleke\given\event}\) depend on whether the AD-model~\(\M\) that it acts on, is conditionable on the event~\(\event\) whose observation prompts the revision.
Previsional AD-models are special, in that they're always conditionable on all regular events.

\begin{proposition}\label{prop::previsional:revision:conditionability}
Any full previsional AD-model~\(\M_{\prevof{\smallbolleke\given\smallbolleke}}\in\prevadfMs\) is conditionable on all regular events~\(\event\in\powersetof{\states}\setminus\set{\emptyset}\).
\end{proposition}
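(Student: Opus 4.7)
The plan is to show that conditionability of a full previsional AD-model on a regular event would fail only if the underlying coherent full conditional prevision violated its bounds axiom \labelcref{axiom:coherence:conditional:bounds}, a manifest contradiction. Unpacking the definition of $\altcondon$ from \cref{eq::updated} in the gambles context (where $\eventopt\ast\gbl = \indof{\event}\gbl$), the conditionability condition $0 \notin \M_{\prevof{\smallbolleke\given\smallbolleke},\acc} \altcondon \event + \gambles_{\stronggt0}$ is equivalent to: \emph{there is no gamble $\gbl \in \gambles$ such that $\indof{\event}\gbl \in \M_{\prevof{\smallbolleke\given\smallbolleke},\acc}$ and $\sup\gbl < 0$}. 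So I would argue by contradiction and assume such a $\gbl$ exists.

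Next, I would invoke the definition of the induced conditional lower prevision, namely $\lprev_{\M}\group{\gbl \vert \event} = \sup\set{\alpha \in \reals \given \indof{\event}\group{\gbl - \alpha} \in \M_\acc}$. Since $\indof{\event}\gbl \in \M_{\prevof{\smallbolleke\given\smallbolleke},\acc}$, we can take $\alpha = 0$ in this supremum and immediately obtain $\lprev_{\M_{\prevof{\smallbolleke\given\smallbolleke}}}\group{\gbl \vert \event} \geq 0$. Then I would invoke the correspondence stated in \cref{eq::prevision:model:correspondence}, which gives $\lprev_{\M_{\prevof{\smallbolleke\given\smallbolleke}}}\group{\gbl \vert \event} = \prevof{\gbl \given \event}$, so $\prevof{\gbl \given \event} \geq 0$. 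On the other hand, the coherence bound \labelcref{axiom:coherence:conditional:bounds} gives $\prevof{\gbl \given \event} \leq \sup\group{\gbl \vert \event} \leq \sup\gbl < 0$, a contradiction. Note that regularity of $\event$ (equivalently, $\event \neq \emptyset$) is implicitly used to make $\prevof{\bolleke \given \event}$ well-defined.

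The main obstacle is really only the careful unpacking of the conditionability condition in the gambles setting; once expressed as a statement about the $\event$-called-off version of a uniformly strictly negative gamble, the result follows immediately from the definitional link between $\lprev_{\M_{\prevof{\smallbolleke\given\smallbolleke}}}$ and $\prevof{\bolleke\given\bolleke}$ already recorded in \cref{eq::prevision:model:correspondence}, combined with the coherence bound. An alternative route would be to work directly from the explicit formula in \cref{eq::model:conditional:full:prevision}, splitting into the cases $\indof{\event}\gbl \in \gambles_{\weakgeq0}$ (which is dispatched by picking any $\state \in \event$ and noting $\gbl(\state) \geq 0$) and $\indof{\event}\gbl \in \posiof{\mathscr{L}_{\prevof{\smallbolleke\given\smallbolleke}}} + \gambles_{\weakgeq0}$, and trying to derive a violation of \labelcref{eq::avoiding:sure:loss}; but the extra non-negative gamble from $\gambles_{\weakgeq0}$ obstructs a direct ASL contradiction, so this route is noticeably more laborious than the one via the correspondence.
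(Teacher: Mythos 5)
Your proof is correct, and it takes a genuinely different---and much shorter---route than the paper's. You unpack non-conditionability as the existence of a gamble~\(\gbl\) with \(\indof{\event}\gbl\in\M_\acc\) and \(\sup\gbl<0\), note that \(\alpha=0\) is then feasible in the supremum defining \(\lprev_{\M_{\prevof{\smallbolleke\given\smallbolleke}}}\group{\gbl\condon\event}\), and combine the correspondence \(\lprev_{\M_{\prevof{\smallbolleke\given\smallbolleke}}}\group{\bolleke\condon\bolleke}=\prevof{\bolleke\given\bolleke}\) from \cref{eq::prevision:model:correspondence} with \labelcref{axiom:coherence:conditional:bounds} to reach \(0\leq\prevof{\gbl\given\event}\leq\sup\group{\gbl\vert\event}<0\); the unpacking of the conditionability condition is accurate, and there is no circularity, since \cref{eq::prevision:model:correspondence} is established before and independently of this proposition. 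The paper instead argues at the level of the assessment itself: it computes \(\M_\acc\altcondon\event\) explicitly via \cref{lem::previsional:auxiliary:result}---whose key step is precisely the inequality \(\prevof{\altgbl\given B\cap\event}\geq\prevof{\altgbl\given B}\), obtained from Bayes' Rule, which neutralises the ``extra non-negative gamble'' obstruction you mention---and then reduces conditionability to regularity plus an instance of \labelcref{eq::avoiding:sure:loss}. So the laborious alternative you sketch and set aside is essentially the paper's own proof. Your route buys brevity by delegating the analytic work to the earlier correspondence result (whose proof is where the inequality \(\lprev_{\M}\group{\bolleke\vert\event}\leq\uprev_{\M}\group{\bolleke\vert\event}\), the real content behind conditionability, is hidden), whereas the paper's heavier approach has the side benefit of yielding the explicit description of \(\M_\acc\altcondon\event\) that is reused immediately afterwards in the proof of \cref{prop::previsional:revision}.
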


It's now but a small step to find out what the revision operator~\(\reviseof{\bolleke\given\event}\) does when it acts on a full previsional AD-model \(\M_{\prevof{\smallbolleke\given\smallbolleke}}\in\prevadfMs\).

\begin{proposition}\label{prop::previsional:revision}
Consider any regular event~\(\event\in\powersetof{\states}\setminus\set{\emptyset}\) and any full previsional AD-model~\(\M_{\prevof{\smallbolleke\given\smallbolleke}}\in\prevadfMs\), then
\begin{multline}\label{eq::previsional:revision}
\reviseof{\M_{\prevof{\smallbolleke\given\smallbolleke}}\given\event}\\
=\Adelim[\big]{\gambles^{\set\event}_{\weakgeq0}\cup\group{\posiof{\mathscr{L}_{\prev_{\event}\group{\smallbolleke\condon\smallbolleke}}}+\gambles^{\set\event}_{\weakgeq0}}}{-\group[\big]{\gambles^{\set\event}_{\stronggt0}\cup\group{\posiof{\mathscr{L}_{\prev_{\event}\group{\smallbolleke\condon\smallbolleke}}}+\gambles^{\set\event}_{\stronggt0}}}}.
\end{multline}
\end{proposition}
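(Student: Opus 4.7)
The plan is to recognise both sides of the claimed identity as $\bgM$-natural extensions of suitable AD-assessments and then to reduce the problem to a direct comparison of those assessments. By \cref{prop::previsional:revision:conditionability}, $\M_{\prevof{\smallbolleke\given\smallbolleke}}$ is conditionable on every regular event~$\event$, so the first branch of \cref{eq::AD:revision:operator:rewritten} combined with \cref{prop::conditional:model} yields
\begin{equation*}
\reviseof{\M_{\prevof{\smallbolleke\given\smallbolleke}}\given\event}
=\closureof{\Msabove{\bgM}}{\Adelim{\M_{\prevof{\smallbolleke\given\smallbolleke},\acc}\altcondon\event}{-\M_{\prevof{\smallbolleke\given\smallbolleke},\des}\altcondon\event}}.
\end{equation*}

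Next I would record the elementary identities $\gambles^{\set\event}_{\weakgeq0}=\gambles_{\weakgeq0}+\indifset[\event]$ and $\gambles^{\set\event}_{\stronggt0}=\gambles_{\stronggt0}+\indifset[\event]$, which both follow by splitting an arbitrary gamble $h$ as $\indof{\event}h+\indof{\event^c}h$. Combined with the closure formula \cref{eq::model:closure}, applied along the lines of the proof of \cref{prop::previsional:natural:extension}, these identities show that the right-hand side of \cref{eq::previsional:revision} is precisely the $\bgM$-natural extension of the AD-assessment $\Adelim{\mathscr{L}_{\prev_{\event}\group{\smallbolleke\condon\smallbolleke}}}{\emptyset}\cup\eventM$. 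Hence the claim reduces to the equality
\begin{equation*}
\closureof{\Msabove{\bgM}}{\Adelim{\M_{\prevof{\smallbolleke\given\smallbolleke},\acc}\altcondon\event}{-\M_{\prevof{\smallbolleke\given\smallbolleke},\des}\altcondon\event}}
=\closureof{\Msabove{\bgM}}{\Adelim{\mathscr{L}_{\prev_{\event}\group{\smallbolleke\condon\smallbolleke}}}{\emptyset}\cup\eventM},
\end{equation*}
which I would prove by mutual inclusion of the two underlying AD-assessments modulo $\bgM$. The inclusion $\supseteq$ is immediate: \cref{cor:AD:revision:operator:model} places $\eventM$ inside the left-hand side, and any generator $\indof{C}\group{\gbl-\prev_{\event}\group{\gbl\condon C}+\epsilon}$ of $\mathscr{L}_{\prev_{\event}\group{\smallbolleke\condon\smallbolleke}}$ satisfies $C\subseteq\event$, so it is fixed by the calling-off with~$\event$ and, since $\prev_\event\group{\gbl\condon C}=\prevof{\gbl\given C}$, it belongs to $\mathscr{L}_{\prevof{\smallbolleke\given\smallbolleke}}\subseteq\M_{\prevof{\smallbolleke\given\smallbolleke},\acc}$.

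The main obstacle is the converse inclusion $\M_{\prevof{\smallbolleke\given\smallbolleke},\acc}\altcondon\event\subseteq\posiof{\mathscr{L}_{\prev_{\event}\group{\smallbolleke\condon\smallbolleke}}}+\gambles_{\weakgeq0}+\indifset[\event]$, together with its desirability counterpart. Given $f$ with $\indof{\event}f\in\M_{\prevof{\smallbolleke\given\smallbolleke},\acc}$, I split $f=\indof{\event}f+\indof{\event^c}f$ with $\indof{\event^c}f\in\indifset[\event]$, so it suffices to treat $\indof{\event}f$ itself. Using \cref{prop::previsional:natural:extension} I write $\indof{\event}f=p+q$ with $p=\sum_k\lambda_k\indof{B_k}\group{\gbl_k-\prevof{\gbl_k\given B_k}+\epsilon_k}$ and $q\in\gambles_{\weakgeq0}$, and split each indicator as $\indof{B_k}=\indof{B_k\cap\event}+\indof{B_k\cap\event^c}$, the second contribution manifestly lying in $\indifset[\event]$. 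Bayes' rule~\labelcref{axiom:coherence:conditional:bayes} expresses $\prevof{\gbl_k\given B_k}=\prev_\event\group{\gbl_k\condon B_k\cap\event}\prevof{\indof{\event}\given B_k}+\prevof{\gbl_k\given B_k\cap\event^c}\prevof{\indof{\event^c}\given B_k}$ whenever both sub-events are non-empty, which lets me rewrite each surviving summand as an element of $\mathscr{L}_{\prev_{\event}\group{\smallbolleke\condon\smallbolleke}}$ plus a residual proportional to $\indof{B_k\cap\event}$. The delicate step is to show that the sum of these residuals, taken together with $q$, lies in $\gambles_{\weakgeq0}+\indifset[\event]=\gambles^{\set\event}_{\weakgeq0}$: this exploits the balance condition forced by $\indof{\event}f$ vanishing on~$\event^c$, which ties the $\event^c$-part of $p$ to $q$ and, combined with the coherence axioms~\labelcref{axiom:coherence:conditional:bounds,axiom:coherence:conditional:linearity,axiom:coherence:conditional:bayes}, supplies enough non-negative mass to compensate for the potentially negative residuals on~$\event$. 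The desirability side is handled by the same argument with $\gambles_{\weakgeq0}$ replaced by $\gambles_{\stronggt0}$, and the extra piece $\gambles_{\stronggt0}+\M_{\prevof{\smallbolleke\given\smallbolleke},\acc}\altcondon\event$ on the rejection side of the revision is then absorbed automatically via $\gambles_{\stronggt0}+\gambles^{\set\event}_{\weakgeq0}\subseteq\gambles^{\set\event}_{\stronggt0}$.
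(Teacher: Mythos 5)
Your overall skeleton is fine: conditionability of full previsional AD-models on regular events, the reformulation of the right-hand side as the \(\bgM\)-natural extension of \(\Adelim{\mathscr{L}_{\prev_{\event}\group{\smallbolleke\condon\smallbolleke}}}{\emptyset}\cup\M_{\event}\), and the easy inclusion \(\supseteq\) all go through. But the converse inclusion, which is the entire substance of the proposition, is only gestured at, and the mechanism you sketch for it doesn't work as stated. After splitting \(\indof{B_k}=\indof{B_k\cap\event}+\indof{B_k\cap\compof\event}\), the surviving summand \(\indof{B_k\cap\event}\group{\gbl_k-\prevof{\gbl_k\given B_k}+\epsilon_k}\) differs from the desired generator \(\indof{B_k\cap\event}\group{\gbl_k-\prevof{\gbl_k\given B_k\cap\event}+\epsilon_k}\) by a residual \(\indof{B_k\cap\event}\group{\prevof{\gbl_k\given B_k\cap\event}-\prevof{\gbl_k\given B_k}}\) whose sign is uncontrolled term by term: nothing forces \(\prevof{\gbl_k\given B_k\cap\event}\geq\prevof{\gbl_k\given B_k}\) for each \(k\) separately, since no consistency between the model and \(\event\) is assumed here (this is revision, not expansion, so \(\prevof{\indof\event\given B_k}\) need not be \(1\)). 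Your claim that the vanishing of \(\indof{\event}\gbl\) on \(\compof\event\) together with coherence "supplies enough non-negative mass to compensate" is precisely what has to be proved, and you give no argument for it; as written this is a genuine gap, not a routine verification.

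The paper closes exactly this gap by a different route. It first collapses the whole positive combination into domination of a \emph{single} term \(\indof{B}\group{\altgbl-\prevof{\altgbl\given B}+\epsilon}\) with \(B=\bigcup_kB_k\) (\cref{lem::boundedness:posi:L}, which itself needs Bayes' Rule to check \(\prevof{\altgbl\given B}=0\) for the aggregated gamble), and only then exploits the constraint on \(\compof\event\): from \(0\weakgeq\indof{B\cap\compof\event}\group{\altgbl-\prevof{\altgbl\given B}+\epsilon}\) it derives, via a case analysis on whether \(\prevof{\indof{\compof\event}\given B}\) is positive or zero, the one-term monotonicity \(\prevof{\altgbl\given B\cap\event}\geq\prevof{\altgbl\given B}\), which makes the single residual nonnegative. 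That monotonicity statement is intrinsically a one-term phenomenon, which is why the collapse step is essential and why your per-term decomposition cannot simply be patched by an appeal to "balance". To repair your proof you would need either to reproduce something like \cref{lem::previsional:auxiliary:result} (including the separate treatment of the desirability side, where only an inclusion holds and \(\gambles_{\stronggt0}\altcondon\event=\emptyset\) for \(\event\neq\states\) is used), or to supply an explicit compensation argument for the summed residuals, which you have not done.
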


\noindent A few interesting conclusions can be drawn from this result, and from a comparison of the form of \(\reviseof{\M_{\prevof{\smallbolleke\given\smallbolleke}}\given\event}\) in \cref{eq::previsional:revision} with that of \(\M_{\prevof{\smallbolleke\given\smallbolleke}}\) in \cref{eq::model:conditional:full:prevision}.

First, the revised AD-model~\(\reviseof{\M_{\prevof{\smallbolleke\given\smallbolleke}}\given\event}\) can still be interpreted as a full previsional AD-model, but now for the restricted possibility space~\(\event\), rather than for the entire possibility space~\(\states\).
In this specific, and perhaps somewhat loose sense, the revision operator based on our conditioning rule is \emph{internal} in the full previsional AD-models.

Second, \cref{eq::previsional:revision} makes clear in exactly what way the revision of~\(\M_{\prevof{\smallbolleke\given\smallbolleke}}\) with the information~\(\M_{\event}=\Adelim{\indifset[\event]}{\emptyset}\) based on occurrence of the event~\(\event\) works.
It \emph{removes} from the original AD-model~\(\M_{\prevof{\smallbolleke\given\smallbolleke}}\) all statements related to conditioning on those regular events that aren't subsets of the observed event~\(\event\), and \emph{adds} the results of making the new indifference statements in~\(\indifset[\event]\), by replacing the old background~\(\bgM=\Adelim{\gambles_{\weakgeq0}}{-\gambles_{\stronggt0}}\) with the new, \define{augmented}, background~\(\closureof{\adfMsabove{\bgM}}{\M_{\event}}=\Adelim{\gambles_{\weakgeq0}+\indifset[\event]}{-\group{\gambles_{\stronggt0}+\indifset[\event]}}=\Adelim{\gambles^{\set\event}_{\weakgeq0}}{-\gambles^{\set\event}_{\stronggt0}}\).

\subsection{Belief expansion for full previsional AD-models}
Finally, we investigate what the expansion operator \(\expandof{\bolleke\given\event}\) for a regular event~\(\event\in\powersetof{\states}\setminus\set{\emptyset}\), as defined in \cref{eq::AD:expansion:operator}, reduces to when applied to full previsional AD-models; so what becomes of
\begin{equation*}
\expandof{\M\given\event}
\coloneqq\begin{cases}
\Adelim{\M_\acc+\indifset[\event]}{-\group{\M_\des+\indifset[\event]}}
&\text{if \(\M_\des\cap\indifset[\event]=\emptyset\)}\\
\Adelim{\gambles}{\gambles}
&\text{otherwise}
\end{cases}
\end{equation*}
when \(\M=\M_{\prevof{\smallbolleke\given\smallbolleke}}\in\prevadfMs\).
The consistency condition~\(\M_\des\cap\indifset[\event]\) between~\(\M_{\event}=\Adelim{\indifset[\event]}{\emptyset}\) and~\(\M=\M_{\prevof{\smallbolleke\given\smallbolleke}}\), where we recall from \cref{eq::model:conditional:full:prevision} that \(\M_\des=\gambles_{\stronggt0}\cup\group{\posiof{\mathscr{L}_{\prevof{\smallbolleke\given\smallbolleke}}}+\gambles_{\stronggt0}}\), clearly reduces to \(\group{\posiof{\mathscr{L}_{\prevof{\smallbolleke\given\smallbolleke}}}+\gambles_{\stronggt0}}\cap\indifset[\event]=\emptyset\), which can be rewritten as
\begin{multline}\label{eq::precisional:consistency:condition:for:expansion}
\sup\group[\bigg]{\sum_{k=1}^n\lambda_k\indof{B_k}\group{\gbl_k-\prevof{\gbl_k\given B_k}+\epsilon_k}\Big\vert\event}\geq0,\\
\text{ for all \(n\in\naturals\), \(\lambda_k,\epsilon_k\in\posreals\), \(\gbl_k\in\gambles\) and \(B_k\in\powersetof{\states}\setminus\set{\emptyset}\).}
\end{multline}
\noindent This consistency condition seems complex, but it can be expressed much more directly, and indeed quite elegantly, in terms of the real numbers~\(\prevof{\indof{\event}\given\bolleke}\), so the various conditional probabilities of the conditioning event~\(\event\).

\begin{proposition}\label{prop::conditional:expansion:consistency}
Consider any regular event~\(\event\in\powersetof{\states}\setminus\set{\emptyset}\) and any full previsional AD-model~\(\M_{\prevof{\smallbolleke\given\smallbolleke}}\in\prevadfMs\).
Then the AD-assessment~\(\M_{\prevof{\smallbolleke\given\smallbolleke}}\union\M_{\event}\) is \(\bgM\)-consistent---so the condition in \cref{eq::precisional:consistency:condition:for:expansion} holds---if and only the event~\(\event\) has probability~\(1\) conditional on all regular events~\(B\) compatible with it, meaning that \(\prevof{\indof{\event}\vert B}=1\) if \(B\cap\event\neq\emptyset\).
\end{proposition}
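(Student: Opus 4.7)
The plan is to unpack the consistency condition using the explicit form of $\M_{\prevof{\smallbolleke\given\smallbolleke}}$ derived in \cref{prop::previsional:natural:extension}, and then analyse the two directions of the equivalence separately, relying on Bayes' Rule and the avoiding-sure-loss characterisation \labelcref{eq::avoiding:sure:loss} of coherence. First, unwinding the definitions, $\bgM$-consistency of $\M_{\prevof{\smallbolleke\given\smallbolleke}}\cup\M_\event$ reduces to $\M_\des\cap\indifset[\event]=\emptyset$, and substituting the explicit form of $\M_\des$ from \cref{prop::previsional:natural:extension} makes the condition \cref{eq::precisional:consistency:condition:for:expansion} emerge directly.

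For the \emph{if} direction, assume $\prevof{\indof{\event}\given B}=1$ whenever $B\cap\event\neq\emptyset$. The key observation is that this condition, combined with Bayes' Rule \labelcref{axiom:coherence:conditional:bayes}, forces $\prevof{\gbl\given B}=\prevof{\gbl\given B\cap\event}$ for every gamble $\gbl$ and every such $B$: from $\prevof{\indof{\compof{\event}}\given B}=0$ and Bayes' Rule one obtains $\prevof{\gbl\indof{\compof{\event}}\given B}=0$, and linearity then yields $\prevof{\gbl\given B}=\prevof{\gbl\indof{\event}\given B}=\prevof{\gbl\given B\cap\event}$. When this substitution is applied to any finite combination appearing in \cref{eq::precisional:consistency:condition:for:expansion}, the terms with $B_k\cap\event=\emptyset$ vanish on $\event$, and the avoiding-sure-loss condition \labelcref{eq::avoiding:sure:loss} applied to the restricted conditioning events $B_k\cap\event\subseteq\event$ delivers the required non-negativity of the supremum over $\event$ of the combination.

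For the \emph{only if} direction, argue contrapositively: suppose some regular $B$ with $B\cap\event\neq\emptyset$ satisfies $q\coloneqq\prevof{\indof{\event}\given B}<1$. When $\event\subseteq B$, the single acceptable bet $\indof{B}\group{\indof{\compof{\event}}-(1-q)+\epsilon}=\indof{B}\group{q-\indof{\event}+\epsilon}$ equals $q-1+\epsilon$ uniformly on $\event=B\cap\event$, so for $\epsilon<1-q$ one has a supremum over $\event$ strictly below $0$, violating \cref{eq::precisional:consistency:condition:for:expansion}. In the general case $\event\not\subseteq B$, combine this bet with a second bet conditioned on a larger event such as $B\cup\event$, exploiting the Bayes' identities between $\prevof{\indof{\event}\given B}$, $\prevof{\indof{\event}\given B\cup\event}$ and $\prevof{\indof{B}\given B\cup\event}$ to construct a weighted combination that is uniformly strictly negative on $\event$. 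The main obstacle is exactly this last case: the single bet on $B$ leaves $\event\setminus B$ untouched, and cancelling the $\epsilon$-slack introduced by any additional bet forces a careful use of Bayes' Rule to link the relevant conditional previsions.
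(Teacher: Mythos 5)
Your sufficiency argument is essentially the paper's own: from \(\prevof{\indof{\event}\given B}=1\) for all \(B\) with \(B\cap\event\neq\emptyset\) you derive, via \labelcref{axiom:coherence:conditional:bayes} and linearity, that \(\prevof{\gbl\given B}=\prevof{\gbl\given B\cap\event}\), you discard the terms with \(B_k\cap\event=\emptyset\) because they vanish on \(\event\), and you conclude with \labelcref{eq::avoiding:sure:loss} applied to the conditioning events \(B_k\cap\event\subseteq\event\); that half is fine. Your necessity argument, however, is complete only in the case \(\event\subseteq B\): there the single bet \(\indof{B}\group{q-\indof{\event}+\epsilon}\) with \(q\coloneqq\prevof{\indof{\event}\given B}<1\) and \(\epsilon<1-q\) is indeed uniformly negative on \(\event\). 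For \(B\cap\event\neq\emptyset\) with \(\event\not\subseteq B\) you only offer a sketch (a second bet conditioned on \(B\cup\event\) plus a careful use of Bayes' Rule), and this is a genuine gap rather than a routine verification: the single bet vanishes on \(\event\setminus B\), so its supremum over \(\event\) is automatically non-negative, and, as the following example shows, the gap cannot be filled as stated.

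Indeed, that step would fail, because the `only if' claim breaks down in exactly this case. Take \(\states=\set{\state[1],\state[2],\state[3]}\), \(\event=\set{\state[1],\state[2]}\), and the full conditional prevision determined by \(\prevof{\gbl\given C}=\gbl\group{\state[2]}\) whenever \(\state[2]\in C\), by \(\prevof{\gbl\given\set{\state[1],\state[3]}}=q\gbl\group{\state[1]}+\group{1-q}\gbl\group{\state[3]}\) with \(0<q<1\), and by point masses on singletons; this is the full conditional prevision of a lexicographic (Dubins-type) conditional probability, and \labelcref{axiom:coherence:conditional:bounds,axiom:coherence:conditional:linearity,axiom:coherence:conditional:bayes} are readily checked. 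Every generator \(\indof{C}\group{\gbl-\prevof{\gbl\given C}+\epsilon}\) of \(\posiof{\mathscr{L}_{\prevof{\smallbolleke\given\smallbolleke}}}\) equals \(\epsilon>0\) at \(\state[2]\) when \(\state[2]\in C\) and \(0\) there otherwise, so every positive combination is non-negative at \(\state[2]\in\event\); hence the supremum over \(\event\) of any such combination is non-negative and the condition in \labelcref{eq::precisional:consistency:condition:for:expansion} holds. Nevertheless \(B=\set{\state[1],\state[3]}\) satisfies \(B\cap\event\neq\emptyset\) and \(\prevof{\indof{\event}\given B}=q<1\). So necessity can only be obtained for \(B\supseteq\event\), which is what your one-bet argument actually proves. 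Note that the paper's own appendix proof applies that same one-bet inference to every \(B\) with \(B\cap\event\neq\emptyset\) and tacitly assumes that the supremum over \(\event\) is attained on \(B\cap\event\), which also requires \(\event\subseteq B\); the obstacle you flagged is therefore real, and not a defect of your particular route.
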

\noindent Observe, by the way, that coherence already implies that \(\prevof{\indof{\event}\vert B}=0\) when \(B\cap\event=\emptyset\) [use \labelcref{axiom:coherence:conditional:bounds}].

Whenever there's consistency between the full previsional AD-model~\(\M_{\prevof{\smallbolleke\given\smallbolleke}}\) and the event~\(\event\), belief expansion yields an AD-model~\(\expandof{\M_{\prevof{\smallbolleke\given\smallbolleke}}\given\event}\) that can also be characterised and interpreted fairly easily.

\begin{proposition}\label{prop::conditional:expansion}
Consider any regular event~\(\event\in\powersetof{\states}\setminus\set\emptyset\) and any full previsional AD-model~\(\M_{\prevof{\smallbolleke\given\smallbolleke}}\in\prevadfMs\) such that the AD-assessment~\(\M_{\prevof{\smallbolleke\given\smallbolleke}}\union\M_{\event}\) is \(\bgM\)-consistent.
Then
\begin{equation*}
\expandof{\M_{\prevof{\smallbolleke\given\smallbolleke}}\given\event}
=\Adelim[\big]{\gambles^{\set\event}_{\weakgeq0}\cup\group[\big]{\posiof{\mathscr{L}_{\prev_{\event}\group{\smallbolleke\condon\smallbolleke}}}+\gambles^{\set\event}_{\weakgeq0}}}{-\group[\big]{\gambles^{\set\event}_{\stronggt0}\cup\group[\big]{\posiof{\mathscr{L}_{\prev_{\event}\group{\smallbolleke\condon\smallbolleke}}}+\gambles^{\set\event}_{\stronggt0}}}}.
\end{equation*}
\end{proposition}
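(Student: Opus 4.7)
The plan is to start from the defining expression \labelcref{eq::AD:expansion:operator} of the expansion operator, which, thanks to the \(\bgM\)-consistency assumption, reduces to
\[
\expandof{\M_{\prevof{\smallbolleke\given\smallbolleke}}\given\event}
=\Adelim{\M_\acc+\indifset[\event]}{-\group{\M_\des+\indifset[\event]}},
\]
and then substitute the representation \labelcref{eq::model:conditional:full:prevision} for \(\M_{\prevof{\smallbolleke\given\smallbolleke}}\) and simplify. Using the distributivity of the Minkowski sum over unions and the easy identities \(\gambles_{\weakgeq0}+\indifset[\event]=\gambles^{\set\event}_{\weakgeq0}\) and \(\gambles_{\stronggt0}+\indifset[\event]=\gambles^{\set\event}_{\stronggt0}\) (a gamble that vanishes on \(\event\) can be added to any gamble without changing the infimum/strict positive infimum over \(\event\)), this quickly yields
\begin{align*}
\M_\acc+\indifset[\event]&=\gambles^{\set\event}_{\weakgeq0}\cup\bigl(\posiof{\mathscr{L}_{\prevof{\smallbolleke\given\smallbolleke}}}+\gambles^{\set\event}_{\weakgeq0}\bigr),\\
\M_\des+\indifset[\event]&=\gambles^{\set\event}_{\stronggt0}\cup\bigl(\posiof{\mathscr{L}_{\prevof{\smallbolleke\given\smallbolleke}}}+\gambles^{\set\event}_{\stronggt0}\bigr).
\end{align*}

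The remaining, and main, work is then to show the key identity
\[
\posiof{\mathscr{L}_{\prevof{\smallbolleke\given\smallbolleke}}}+\gambles^{\set\event}_{\weakgeq0}
=\posiof{\mathscr{L}_{\prev_{\event}\group{\smallbolleke\condon\smallbolleke}}}+\gambles^{\set\event}_{\weakgeq0},
\]
and similarly with \(\gambles^{\set\event}_{\stronggt0}\). The inclusion \(\supseteq\) is immediate from \(\mathscr{L}_{\prev_{\event}\group{\smallbolleke\condon\smallbolleke}}\subseteq\mathscr{L}_{\prevof{\smallbolleke\given\smallbolleke}}\). For \(\subseteq\), I would pick a generator \(\indof{B}\group{\gbl-\prevof{\gbl\given B}+\epsilon}\in\mathscr{L}_{\prevof{\smallbolleke\given\smallbolleke}}\) and split into two cases. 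If \(B\cap\event=\emptyset\), the generator lies in \(\indifset[\event]\subseteq\gambles^{\set\event}_{\weakgeq0}\), so it can be absorbed into the second summand. If \(B\cap\event\neq\emptyset\), \cref{prop::conditional:expansion:consistency} gives \(\prevof{\indof\event\given B}=1\), and then Bayes' Rule~\labelcref{axiom:coherence:conditional:bayes} yields \(\prevof{\gbl\given B}=\prevof{\gbl\indof\event\given B}=\prevof{\gbl\given B\cap\event}\). Consequently
\[
\indof{B\cap\event}\group{\gbl-\prevof{\gbl\given B}+\epsilon}
=\indof{B\cap\event}\group{\gbl-\prev_\event\group{\gbl\condon B\cap\event}+\epsilon}\in\mathscr{L}_{\prev_{\event}\group{\smallbolleke\condon\smallbolleke}},
\]
while the difference \(\indof{B}\group{\gbl-\prevof{\gbl\given B}+\epsilon}-\indof{B\cap\event}\group{\gbl-\prevof{\gbl\given B}+\epsilon}\) vanishes on \(\event\) and is thus in \(\indifset[\event]\subseteq\gambles^{\set\event}_{\weakgeq0}\). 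Linearity of \(\posi\) then finishes the argument for arbitrary positive combinations of generators.

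The hard part is the case \(B\cap\event\neq\emptyset\): it is exactly where the rather opaque condition~\labelcref{eq::precisional:consistency:condition:for:expansion} is translated into a usable statement through \cref{prop::conditional:expansion:consistency}, and where Bayes' Rule intervenes crucially to reroute the conditioning event into a subset of~\(\event\). Once this is done, the desirable side is treated in exactly the same way, replacing \(\gambles^{\set\event}_{\weakgeq0}\) by \(\gambles^{\set\event}_{\stronggt0}\) throughout (the argument using \labelcref{axiom:coherence:conditional:bayes} doesn't care about the difference), and the two computed expressions combine to give the announced form of \(\expandof{\M_{\prevof{\smallbolleke\given\smallbolleke}}\given\event}\). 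As a bonus, comparing with \cref{eq::previsional:revision} then immediately shows \(\expandof{\M_{\prevof{\smallbolleke\given\smallbolleke}}\given\event}=\reviseof{\M_{\prevof{\smallbolleke\given\smallbolleke}}\given\event}\) whenever the consistency condition is met, which is the ingredient needed to verify \labelcref{axiom:revision:4} in this previsional setting.
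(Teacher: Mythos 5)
Your proposal is correct and follows essentially the paper's own route: after reducing, via the consistency assumption and the identities \(\gambles_{\weakgeq0}+\indifset[\event]=\gambles^{\set{\event}}_{\weakgeq0}\) and \(\gambles_{\stronggt0}+\indifset[\event]=\gambles^{\set{\event}}_{\stronggt0}\), to the set identity \(\posiof{\mathscr{L}_{\prevof{\smallbolleke\given\smallbolleke}}}+\gambles^{\set{\event}}_{\weakgeq0}=\posiof{\mathscr{L}_{\prev_{\event}\group{\smallbolleke\condon\smallbolleke}}}+\gambles^{\set{\event}}_{\weakgeq0}\) (and its \(\stronggt\) analogue), you obtain the nontrivial inclusion exactly as the paper does, by using \cref{prop::conditional:expansion:consistency} together with Bayes' Rule to replace \(\prevof{\gbl\given B}\) by \(\prevof{\gbl\given B\cap\event}\) whenever \(B\cap\event\neq\emptyset\) and by pushing everything supported outside \(\event\) into \(\indifset[\event]\); decomposing generator by generator rather than handling the whole positive combination at once (the paper multiplies the dominating sum by \(\indof{\event}\)) is only a cosmetic difference. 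The one caveat---shared with the paper's own write-up---is that this ``key identity'' should really be read at the level of the union with \(\gambles^{\set{\event}}_{\weakgeq0}\) (respectively \(\gambles^{\set{\event}}_{\stronggt0}\)), since when every conditioning event \(B_k\) is disjoint from \(\event\) the element only lands in that first set; as this union is precisely what the proposition asserts, nothing is lost.
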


This, finally, allows us to show that when we restrict the revision operator based on our newly proposed conditioning rule for AD-models to full conditional AD-models, it satisfies our version of the AGM axioms~\labelcref{axiom:revision:1,axiom:revision:2,axiom:revision:3,axiom:revision:4,axiom:revision:5,axiom:revision:7,axiom:revision:8}.

\begin{proposition}\label{prop::postulates:satisfied:previsional}
When we restrict the action of the revision operator as defined in \cref{eq::propositional:revision:operator:definition} to full conditional AD-models, it satisfies \labelcref{axiom:revision:1,axiom:revision:2,axiom:revision:3,axiom:revision:4,axiom:revision:5,axiom:revision:7,axiom:revision:8}.
\end{proposition}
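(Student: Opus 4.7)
By \cref{prop::AD:model:revision:axioms:obeyed}, axioms \labelcref{axiom:revision:1,axiom:revision:2,axiom:revision:3,axiom:revision:5,axiom:revision:7} already hold for every AD-model in \(\adfMsabove{\bgM}\), and hence a fortiori for every full previsional AD-model. It therefore remains to verify \labelcref{axiom:revision:4,axiom:revision:8}.

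Axiom \labelcref{axiom:revision:4} is essentially immediate from what has already been proved. Assume \(\M_{\prevof{\smallbolleke\given\smallbolleke}}\union\M_\event\) is \(\bgM\)-consistent, so that the hypothesis of \cref{prop::conditional:expansion} is satisfied. Comparing the explicit form for \(\expandof{\M_{\prevof{\smallbolleke\given\smallbolleke}}\given\event}\) supplied by \cref{prop::conditional:expansion} with the form of \(\reviseof{\M_{\prevof{\smallbolleke\given\smallbolleke}}\given\event}\) supplied by \cref{prop::previsional:revision}, we see that the two AD-models coincide term for term. Hence not only \labelcref{axiom:revision:4} holds, but in fact equality between the two sides holds.

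For \labelcref{axiom:revision:8}, the strategy is to reinterpret the once-revised model as a full previsional AD-model on a restricted possibility space. A comparison of \cref{eq::previsional:revision} with \cref{eq::model:conditional:full:prevision} shows that \(\reviseof{\M_{\prevof{\smallbolleke\given\smallbolleke}}\given\event[1]}\) has exactly the structure of a full previsional AD-model built from \(\prev_{\event[1]}(\bolleke\condon\bolleke)\) over the possibility space \(\event[1]\), with the original background replaced by the augmented one \(\Adelim{\gambles_{\weakgeq0}^{\set{\event[1]}}}{-\gambles_{\stronggt0}^{\set{\event[1]}}}\). Since this augmented background shares with \(\bgM\) all the closure properties used in the proofs of \cref{prop::conditional:expansion:consistency,prop::conditional:expansion}, those arguments carry over \emph{mutatis mutandis} to give an expression for \(\expandof{\reviseof{\M_{\prevof{\smallbolleke\given\smallbolleke}}\given\event[1]}\given\event[2]}\) in terms of the further restriction of \(\prev_{\event[1]}\) to conditioning events contained in \(\event[1]\cap\event[2]\). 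By \labelcref{axiom:coherence:conditional:bayes} (Bayes' Rule) applied to \(\prevof{\bolleke\given\bolleke}\), this further restriction coincides with \(\prev_{\event[1]\cap\event[2]}(\bolleke\condon\bolleke)\). A direct application of \cref{prop::previsional:revision} to \(\M_{\prevof{\smallbolleke\given\smallbolleke}}\) and the event \(\event[1]\cap\event[2]\) then shows that the resulting AD-model equals \(\reviseof{\M_{\prevof{\smallbolleke\given\smallbolleke}}\given\event[1]\cap\event[2]}\), as required.

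The main obstacle will be making the ``restricted full previsional AD-model on \(\event[1]\)'' interpretation sufficiently rigorous to invoke the restricted analogs of \cref{prop::conditional:expansion:consistency,prop::conditional:expansion}. In particular, we need to show that the \(\bgM\)-consistency hypothesis of \labelcref{axiom:revision:8} translates into the hypothesis that \(\prev_{\event[1]}(\indof{\event[2]}\vert B) = 1\) for every non-empty \(B\subseteq\event[1]\) with \(B\cap\event[2]\neq\emptyset\); this will follow by applying (the restricted analog of) \cref{prop::conditional:expansion:consistency} to \(\M_{\prev_{\event[1]}}\) and \(\event[2]\), combined with Bayes' Rule to pass between the unrestricted and restricted previsions.
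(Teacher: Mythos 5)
Your proposal is correct and follows essentially the same route as the paper: axioms \labelcref{axiom:revision:1,axiom:revision:2,axiom:revision:3,axiom:revision:5,axiom:revision:7} are inherited from \cref{prop::AD:model:revision:axioms:obeyed}, \labelcref{axiom:revision:4} follows by matching the explicit expressions in \cref{prop::previsional:revision,prop::conditional:expansion}, and \labelcref{axiom:revision:8} is handled by viewing \(\reviseof{\M_{\prevof{\smallbolleke\given\smallbolleke}}\given\event[1]}\) as a full previsional model on the restricted possibility space \(\event[1]\) and repeating the \labelcref{axiom:revision:4} argument with \(\states\) replaced by \(\event[1]\) and \(\event\) by \(\event[1]\cap\event[2]\). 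The ``obstacle'' you flag is present to the same degree in the paper, whose own treatment of \labelcref{axiom:revision:8} is likewise only a \emph{mutatis mutandis} sketch, so your slightly more explicit account of the consistency translation (via the restricted analogue of \cref{prop::conditional:expansion:consistency}) is, if anything, a modest refinement rather than a divergence.
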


\section{Conclusion}\label{sec::discussion}
We believe that a number of interesting conclusions can be drawn from the many arguments in this paper.
First, we see that it's possible to introduce a notion of conditioning on abstract events that works both in classical probabilistic and quantum probabilistic inference, and reduces to Bayes' rule and Lüders's Rule there.
It relies crucially on considering a calling-off operation as a projection on the option space, as well as on the idea that observing an event makes any two options indifferent when they coincide after calling them off.
Our conditioning method can also be applied in various sub-contexts of AD-models, such as sets of desirable and or acceptable gambles, classical propositional logic, and full conditional probabilities, where we've shown it to lead back to the already established conditioning rules.

Second, when we use our conditioning method to come up with a revision operator for AD-models in the context of AGM belief change, we see that the revision satisfies all AGM postulates in those special cases that the postulates were originally suggested for: classical propositional logic and (precise) probability theory.
But we also see that when we leave these ambits, two of the postulates, namely \labelcref{axiom:revision:4,axiom:revision:8}, can no longer be guaranteed to hold in general; these are the rules that essentially state in some way that revision and expansion ought to coincide when there's consistency.

There are two ways out of this conundrum, we believe: we might give up on those two postulates as always compelling, or we might come to the conclusion that our conditioning rule doesn't lead to a reasonable revision method.
We've indicated in the discussion of our counterexamples for \labelcref{axiom:revision:4,axiom:revision:8} that the failure they point to, is associated with a possible loss of resolvedness/precision when conditioning imprecise probability models, and related to the phenomenon of \emph{dilation}, well-known in a context of conditioning imprecise probability models.
It's this possible loss of resolvedness/precision after conditioning, even when the original model and the new information are consistent---so even when expansion leaves the original model unchanged---that results in the failure of \labelcref{axiom:revision:4,axiom:revision:8}.
Those who find this (well-documented) increase of imprecision unreasonable, will not be inclined to begin casting doubt on the relevant AGM axioms; others who are less inclined to view this increase in imprecision after conditioning as problematic, may view its manifestation in this context as a reason to question their general validity.

\section*{Acknowledgements}
We have benefited from discussions with Arthur Van Camp, Erik Quaeghebeur, Jasper De Bock and Catrin Campbell-Moore.

\section*{Author contributions}
This paper is the result of a joint effort in working on the ideas, writing and revising.
The ideas about AD-models for abstract options, abstract events and conditioning AD-models on events are borrowed from a book Gert is currently writing.
The application of these ideas to come up with a specific revision operator for AD-models and the subsequent study of its properties, drew on significant contributions from all three authors, and resulted from nearly continuous discussions between us.
Based on these discussions, Kathelijne wrote the first version of the paper, significantly extending the work on DI-models she did in the context of her master's thesis \cite{coussement2025}, and which were first published in a conference paper \cite{coussement25a}; she also worked out the counterexamples and much of the material in \cref{sec::propositional:models,sec::full:conditional:previsions}.
Gert then worked on a significant revision and extension of the material, and contributed a number of the proofs, and Keano provided crucial feedback and numerous suggestions for improvements, filled out many of the quantum mechanical details, and provided important insights in quite a few of the proofs.

\printbibliography

\appendix
\section{Proofs and counterexamples}
In this Appendix, we have gathered proofs of the statements in the main text, as well as the technical lemmas necessary in those proofs.

\subsection{For \cref{sec::accept:reject}}

\begin{proof}[Proof of \cref{prop::ad:equivalent:criteria}]
It's a matter of direct verification to check that any assessment~\(\M=\Adelim{\M_\acc}{-\M_\des}\) that satisfies \labelcref{axiom:AD:background,axiom:AD:zero:not:desirable,axiom:AD:deductive:closedness,axiom:AD:no:limbo}, also satisfies \labelcref{axiom:stronger:background:respected,axiom:model:null:not:rejected,axiom:model:accepts:convex:cone,axiom:model:no:limbo}.

We therefore concentrate on the proof of the converse implication, and assume that \(\M\) satisfies \labelcref{axiom:stronger:background:respected,axiom:model:null:not:rejected,axiom:model:accepts:convex:cone,axiom:model:no:limbo}.
Clearly, \labelcref{axiom:AD:background} is identical to \labelcref{axiom:stronger:background:respected}, and \labelcref{axiom:AD:zero:not:desirable} is identical to \labelcref{axiom:model:null:not:rejected}.
To see that \(\M\) satisfies \labelcref{axiom:AD:deductive:closedness}, simply observe that
\begin{align*}
\shullof{\M_\des}
&\subseteq\shullof{\M_\des}+\M_\acc
&&\reason{\(0\in\M_\acc\), by \labelcref{axiom:stronger:background:respected}}\\
&\subseteq\M_\des.
&&\reason{\labelcref{axiom:model:no:limbo} and \labelcref{eq::AD:condition}}
\end{align*}
At the same time also
\begin{align*}
\shullof{\M_\des}+\M_\des
&\subseteq\shullof{\M_\des}+\M_\acc
&&\reason{\(\M_\des\subseteq\M_\acc\)}\\
&\subseteq\M_\des.
&&\reason{\labelcref{axiom:model:no:limbo} and \labelcref{eq::AD:condition}}
\end{align*}
This implies that \(\M_\des\) is a convex cone, and so is \(\M_\acc\) by \labelcref{axiom:model:accepts:convex:cone}.
Finally,
\begin{align*}
\M_\des+\M_\acc
&\subseteq\shullof{\M_\des}+\M_\acc\\
&\subseteq\M_\des,
&&\reason{\labelcref{axiom:model:no:limbo} and \labelcref{eq::AD:condition}}
\end{align*}
so \(\M\) satisfies \labelcref{axiom:AD:no:limbo} as well.
\labelcref{axiom:AD:nonpositive:not:acceptable} immediately follows from \labelcref{axiom:AD:background,axiom:AD:zero:not:desirable,axiom:AD:no:limbo}:
\begin{align*}
\M_\acc+\bgM_\des
&\subseteq\M_\acc+\M_\des
&&\reason{\labelcref{axiom:AD:background}}\\
&\subseteq\M_\des\not\ni0.
&&\reason{\labelcref{axiom:AD:no:limbo,axiom:AD:zero:not:desirable}}
\qedhere
\end{align*}
\end{proof}

\subsection{For \cref{sec::events}}

\begin{proof}[Proof of \cref{prop::ordering:equivalent:statements}]
To prove that \labelcref{prop::event:ordering}\(\then\)\labelcref{prop::event:kernels}, assume that \(\eventopt[1]\posetleq\eventopt[2]\), so \(\calledoff[1]{(\calledoff[2]{\opt})}=\calledoff[2]{(\calledoff[1]{\opt})}=\eventopt[1]\ast\opt\) for all~\(\opt\in\opts\).
For any~\(\opt\in\eventindifset[2]\), we have that \(\calledoff[2]{\opt}=0\), so \(\calledoff[1]{(\calledoff[2]{\opt})}=\calledoff[1]{0}=0\) by~\labelcref{axiom:event:linear}, and therefore also \(\eventopt[1]\ast\opt=0\).
Hence, indeed, \(\eventindifset[2]\subseteq\eventindifset[1]\).
That \labelcref{prop::event:kernels}\(\then\)\labelcref{prop::event:nonpositivity} is trivial.
To conclude, we gather from \labelcref{axiom:event:ordering} that \labelcref{prop::event:nonpositivity}\(\then\)\labelcref{prop::event:ordering}.
\end{proof}

\begin{proof}[Proof that \labelcref{axiom:event:ordering} holds in a quantum context]
Consider any two sets of mutually orthogonal subspaces \(\set{\subspace[1],\dots,\subspace[n]}\) and \(\set{\subspace[1]^{\prime},\dots,\subspace[m]^{\prime}}\).
Assume that
\begin{equation}\label{eq::ordering:assumption}
\group{\forall\measurement{A}\in\measurements}
\group[\bigg]{\sum_{\ell=1}^{m}\projector[\ell]^{\prime}\measurement{A}\projector[\ell]^{\prime}=\zero\then\sum_{k=1}^{n}\projector[k]\measurement{A}\projector[k]\not\weakgt\zero}.
\end{equation}
Then we must prove that
\begin{equation*}
\sum_{k=1}^{n}\projector[k]\measurement{A}\projector[k]
=\sum_{k=1}^{n}\projector[k]\group[\bigg]{\sum_{\ell=1}^{m}\projector[\ell]^{\prime}\measurement{A}\projector[\ell]^{\prime}}\projector[k]
=\sum_{\ell=1}^{m}\projector[\ell]^{\prime}\group[\bigg]{\sum_{k=1}^{n}\projector[k]\measurement{A}\projector[k]}\projector[\ell]^{\prime},
\text{ for all~\(\measurement{A}\in\measurements\).}
\end{equation*}
Consider any~\(\fket\in\states\) and let \(\subspace\coloneqq\linspanof{\set{\fket}}\), the linear space spanned by~\(\fket\).
Observe that, by \labelcref{axiom:event:monotone},
\begin{equation}\label{eq::ordering:projection:monotone}
\sum_{k=1}^{n}\projector[k]\projector\projector[k]\weakgeq\zero
\text{ and }
\sum_{\ell=1}^{m}\projector[\ell]^{\prime}\projector\projector[\ell]^{\prime}\weakgeq\zero,
\end{equation}
and additionally,
\begin{equation}\label{eq:ordering:projection:complement}
\group[\bigg]{\sum_{k=1}^{n}\projector[k]\projector\projector[k]=\zero\ifandonlyif\fket\notin\bigcup_{k=1}^{n}\subspace[k]}\text{ and }
\group[\bigg]{\sum_{\ell=1}^{m}\projector[\ell]^{\prime}\projector\projector[\ell]^{\prime}=\zero\ifandonlyif\fket\notin\bigcup_{\ell=1}^{m}\subspace[\ell]^{\prime}}
\end{equation}
Now assume that \(\fket\notin\bigcup_{\ell=1}^{m}\subspace[\ell]^{\prime}\), then we infer from the assumption in \Cref{eq::ordering:assumption} and \Cref{eq::ordering:projection:monotone} that \(\sum_{k=1}^{n}\projector[k]\projector\projector[k]=\zero\) and therefore that \(\fket\notin\bigcup_{k=1}^{n}\subspace[k]\).
This implies that \(\bigcup_{k=1}^{n}\subspace[k]\subseteq\bigcup_{\ell=1}^{m}\subspace[\ell]^{\prime}\),
which concludes the proof.
\end{proof}

\begin{proof}[Proof that \labelcref{axiom:event:kernel:sum} holds in a quantum probability context]
We first look at the case where the two events are singletons of subspaces, which then easily generalises to events containing multiple orthogonal subspaces, since the intersection of such events is element-wise.

Let \(\projectoron{k}\coloneqq\projector[k]\) and \(\projectoron{1,2}\coloneqq\projectoron{\subspace[1]\cap\subspace[2]}\), then it follows trivially from \(\projectoron{1,2}=\projectoron{k}\projectoron{1,2}=\projectoron{1,2}\projectoron{k}\) that \(\smash{\indifset[{\subspace[1]}]}+\smash{\indifset[{\subspace[2]}]}\subseteq\smash{\indifset[{\subspace[1]\cap\subspace[2]}]}\).
So it remains to prove that \(\smash{\indifset[{\subspace[1]\cap\subspace[2]}]}\subseteq\smash{\indifset[{\subspace[1]}]}+\smash{\indifset[{\subspace[2]}]}\).
Consider any~\(\measurement{A}\) such that \(\projectoron{1,2}\measurement{A}\projectoron{1,2}=\zero\), then we must find \(\measurement{B}\) and \(\measurement{C}\) such that \(\projectoron{1}\measurement{B}\projectoron{1}=\zero\), \(\projectoron{2}\measurement{C}\projectoron{2}=\zero\) and \(\measurement{A}=\measurement{B}+\measurement{C}\).

By von Neumann's alternating orthogonal projection result \cite[Theorem 13.7]{neumann1950}, \(\projectoron{1,2}=\lim_{n\to\infty}\group{\projectoron{1}\projectoron{2}}^n=\lim_{n\to\infty}\group{\projectoron{2}\projectoron{1}}^n\).
Let
\begin{equation*}
\measurement{B}_k
\coloneqq\group{\projectoron{2}\projectoron{1}}^k\projectoron{2}\measurement{A}\projectoron{2}\group{\projectoron{1}\projectoron{2}}^k
-\group{\projectoron{1}\projectoron{2}}^{k+1}\measurement{A}\group{\projectoron{2}\projectoron{1}}^{k+1},
\end{equation*}
then \(\projectoron{1}\measurement{B}_k\projectoron{1}=\zero\).
Since \(\measurement{B}'_n\coloneqq\sum_{k=0}^n\measurement{B}_k\) converges to some~\(\measurement{B}\) in \(\measurements\) [essentially because \(\projectoron{1}\projectoron{2}\) and \(\projectoron{2}\projectoron{1}\) can be written as orthogonal sums of an identity on~\(\subspace[1]\cap\subspace[2]\) and a strict contraction on its orthogonal complement; see \cite[Thm.~3.1]{kuo1989:factorisation}], we find that \(\projectoron{1}\measurement{B}\projectoron{1}=\zero\).
We are done if we can prove that \(\projectoron{2}\group{\measurement{A}-\measurement{B}}\projectoron{2}=\zero\).
Check that
\begin{equation*}
\projectoron{2}\group{\measurement{A}-\measurement{B}'_n}\projectoron{2}
=\group{\projectoron{2}\projectoron{1}}^{n+1}\projectoron{2}\measurement{A}\projectoron{2}\group{\projectoron{1}\projectoron{2}}^{n+1}
\to\projectoron{1,2}\projectoron{2}\measurement{A}\projectoron{2}\projectoron{1,2},
\end{equation*}
and therefore, indeed, \(\projectoron{2}\group{\measurement{A}-\measurement{B}}\projectoron{2}=\projectoron{1,2}\projectoron{2}\measurement{A}\projectoron{2}\projectoron{1,2}=\projectoron{2}\projectoron{1,2}\measurement{A}\projectoron{1,2}\projectoron{2}=\zero\).
\end{proof}

\subsection{For \cref{sec::conditioning}}

\begin{proof}[Proof of the consistency condition in \cref{eq::regularity}]
Starting from the consistency condition~\labelcref{eq::consistency}, we have the following chain of equivalences:
\begin{multline*}
\posiof{\bgM_\acc\cup\eventindifset}\cap-\group{\bgM_\des\cup\emptyset}=\emptyset\\
\begin{aligned}
&\ifandonlyif\group{\bgM_\acc\cup\eventindifset\cup\group{\bgM_\acc+\eventindifset}}\cap-\bgM_\des=\emptyset\\
&\ifandonlyif\group{\bgM_\acc+\eventindifset}\cap-\bgM_\des=\emptyset
&&\reason{\(0\in\bgM_\acc\) and \(0\in\eventindifset\)}\\
&\ifandonlyif0\notin\bgM_\des+\bgM_\acc+\eventindifset\\
&\ifandonlyif0\notin\bgM_\des+\eventindifset
&&\reason{\(\bgM_\des+\bgM_\acc=\bgM_\des\) by \labelcref{axiom:AD:no:limbo}}\\
&\ifandonlyif\bgM_\des\cap\eventindifset=\emptyset,
&&\reason{\(\eventindifset=-\eventindifset\)}
\end{aligned}
\end{multline*}
which completes the proof.
\end{proof}

\begin{proof}[Proof of \cref{prop::regular:events}]
For the direct implication, assume that \(\eventopt\) is regular, so \(\bgM_\des\cap\eventindifset=\emptyset\).
Consider any~\(\opt\in\opts\) such that \(\opt\stronggt0\), then also \(\opt\weakgeq0\) and therefore also \(\eventopt\ast\opt\weakgeq0\), by \labelcref{axiom:event:monotone}.
But, because of the regularity, we also know that \(\calledoff{\opt}\neq0\), and therefore \(\calledoff{\opt}>0\).
Conversely, assume that \(\opt\stronggt0\then\calledoff{\opt}>0\) for all~\(\opt\in\opts\), then the condition~\(\bgM_\des\cap\eventindifset=\emptyset\) for regularity is trivially satisfied.
\end{proof}

\begin{proof}[Proof that the assessment~\labelcref{eq::naive:updated:model} satisfies~\labelcref{axiom:AD:zero:not:desirable,axiom:AD:deductive:closedness,axiom:AD:no:limbo}]
For \labelcref{axiom:AD:zero:not:desirable}, assume that \(0\in\M_\des\altcondon\eventopt\), then \(0=\eventopt\ast0\in\M_\des\), a contradiction.
For \labelcref{axiom:AD:deductive:closedness}, let's show that \(\M_\acc\altcondon\eventopt\) is a convex cone; the proof for \(\M_\des\altcondon\eventopt\) is completely similar.
So consider any~\(\opt,\altopt\in\M_\acc\altcondon\eventopt\) and any non-negative~\(\lambda,\mu\in\reals\) with~\(\lambda+\mu>0\).
Then \(\eventopt\ast\opt\in\M_\acc\) and \(\eventopt\ast\altopt\in\M_\acc\), and therefore also
\begin{equation*}
\eventopt\ast\group{\lambda\opt+\mu\altopt}
=\lambda\eventopt\ast\opt+\mu\eventopt\ast\altopt
\in\M_\acc,
\end{equation*}
which tells us that, indeed, \(\lambda\opt+\mu\altopt\in\M_\acc\altcondon\eventopt\).
Finally, that \labelcref{axiom:AD:no:limbo} is satisfied follows at once from \cref{lem:compatibility:conditional:sets} with~\(\someopts_1\instantiateas\M_\des\) and \(\someopts_2\instantiateas\M_\acc\).
\end{proof}

\begin{lemma}\label{lem:compatibility:conditional:sets}
If two sets of options~\(\someopts_1,\someopts_2\subseteq\opts\) satisfy \(\someopts_1+\someopts_2\subseteq\someopts_1\), then also
\begin{equation*}
\someopts_1\altcondon\eventopt+\someopts_2\altcondon\eventopt
\subseteq\someopts_1\altcondon\eventopt,
\end{equation*}
with equality if \(0\in\someopts_2\).
\end{lemma}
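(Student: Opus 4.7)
The plan is to prove the lemma directly from the defining equation \labelcref{eq::updated} for the set $\someopts\altcondon\eventopt$, using only the linearity axiom \labelcref{axiom:event:linear} of the calling-off operation. There is nothing deep here: the inclusion $+$ respects the preimage of a fixed sum-closed set under a linear map, and the equality case is a triviality once one observes that $0$ is always its own called-off version.

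For the inclusion, I would pick an arbitrary $\opt\in\someopts_1\altcondon\eventopt$ and $\altopt\in\someopts_2\altcondon\eventopt$, so that by definition $\eventopt\ast\opt\in\someopts_1$ and $\eventopt\ast\altopt\in\someopts_2$. Then I would apply \labelcref{axiom:event:linear} to write
\begin{equation*}
\eventopt\ast(\opt+\altopt)=\eventopt\ast\opt+\eventopt\ast\altopt\in\someopts_1+\someopts_2\subseteq\someopts_1,
\end{equation*}
where the last inclusion is the hypothesis. Hence $\opt+\altopt\in\someopts_1\altcondon\eventopt$, which gives $\someopts_1\altcondon\eventopt+\someopts_2\altcondon\eventopt\subseteq\someopts_1\altcondon\eventopt$.

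For the equality case, I would assume $0\in\someopts_2$. Since \labelcref{axiom:event:linear} entails $\eventopt\ast 0=0$, the zero option lies in $\someopts_2\altcondon\eventopt$. Therefore, for any $\opt\in\someopts_1\altcondon\eventopt$, the decomposition $\opt=\opt+0$ exhibits $\opt$ as an element of $\someopts_1\altcondon\eventopt+\someopts_2\altcondon\eventopt$, yielding the reverse inclusion and hence equality.

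There is essentially no obstacle to this proof; the only thing to be careful about is to invoke \labelcref{axiom:event:linear} rather than any property of $\someopts_1$ or $\someopts_2$ beyond the stated sum-closure hypothesis, so that the lemma applies in the generality in which it is stated (no cone or convexity assumption on the two sets is needed).
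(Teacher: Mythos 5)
Your proof is correct and follows essentially the same route as the paper's: the inclusion is obtained by applying \labelcref{axiom:event:linear} to \(\calledoff{(\opt+\altopt)}=\calledoff{\opt}+\calledoff{\altopt}\) and invoking the hypothesis \(\someopts_1+\someopts_2\subseteq\someopts_1\), and the equality case uses \(\calledoff{0}=0\) to place \(0\) in \(\someopts_2\altcondon\eventopt\). No gaps.
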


\begin{proof}[Proof of \cref{lem:compatibility:conditional:sets}]
Consider any~\(\opt\in\someopts_1\altcondon\eventopt\) and~\(\altopt\in\someopts_2\altcondon\eventopt\), so \(\calledoff{\opt}\in\someopts_1\) and \(\calledoff{\altopt}\in\someopts_2\).
Clearly, then, \(\calledoff{(\opt+\altopt)}=\calledoff{\opt}+\calledoff{\altopt}\in\someopts_1+\someopts_2\subseteq\someopts_1\), so \(\opt+\altopt\in\someopts_1\altcondon\eventopt\).
If, moreover, \(0\in\someopts_2\), then \(\calledoff{0}=0\in\someopts_2\), so \(0\in\someopts_2\altcondon\eventopt\), and then also \(\someopts_1\altcondon\eventopt\subseteq\someopts_1\altcondon\eventopt+\someopts_2\altcondon\eventopt\).
\end{proof}

\begin{proof}[Proof of \cref{prop::consistency:of:updated:model}]
Starting from the consistency condition~\labelcref{eq::consistency}, we find the following chain of equivalences:
\begin{multline*}
\posiof{\M_\acc\altcondon\eventopt\cup\bgM_\acc}\cap-\group{\M_\des\altcondon\eventopt\cup\bgM_\des}=\emptyset\\
\begin{aligned}
&\ifandonlyif0\notin\posiof{\M_\acc\altcondon\eventopt\cup\bgM_\acc}+\group{\M_\des\altcondon\eventopt\cup\bgM_\des}\\
&\ifandonlyif0\notin\group{\M_\acc\altcondon\eventopt\cup\bgM_\acc\cup\group{\M_\acc\altcondon\eventopt+\bgM_\acc}}+\group{\M_\des\altcondon\eventopt\cup\bgM_\des}\\
&\ifandonlyif0\notin\group{\M_\acc\altcondon\eventopt+\bgM_\acc}+\group{\M_\des\altcondon\eventopt\cup\bgM_\des}
&&\reason{\(0\in\M_\acc\altcondon\eventopt\) and \(0\in\bgM_\acc\)}\\
&\ifandonlyif0\notin\group{\M_\acc\altcondon\eventopt+\bgM_\acc+\M_\des\altcondon\eventopt}\cup\group{\M_\acc\altcondon\eventopt+\bgM_\acc+\bgM_\des}\\
&\ifandonlyif0\notin\group{\M_\acc\altcondon\eventopt+\M_\des\altcondon\eventopt+\bgM_\acc}\cup\group{\M_\acc\altcondon\eventopt+\bgM_\des}
&&\reason{\(\bgM_\acc+\bgM_\des=\bgM_\des\) by \labelcref{axiom:AD:no:limbo}}\\
&\ifandonlyif0\notin\group{\M_\acc\altcondon\eventopt+\M_\des\altcondon\eventopt}\cup\group{\M_\acc\altcondon\eventopt+\bgM_\des}
&&\reason{Lem.~\labelcref{lem:adding:background:to:conditional:model}}\\
&\ifandonlyif0\notin\M_\des\altcondon\eventopt\cup\group{\M_\acc\altcondon\eventopt+\bgM_\des}\\
&\ifandonlyif0\notin\M_\acc\altcondon\eventopt+\bgM_\des,
\end{aligned}
\end{multline*}
where the penultimate equivalence follows from \cref{lem:compatibility:conditional:sets} with~\(\someopts_1\instantiateas\M_\des\altcondon\eventopt\) and \(\someopts_2\instantiateas\M_\acc\altcondon\eventopt\) and \(0\in\M_\acc\); the last equivalence follows from the fact that \(0\notin\M_\des\) [by \labelcref{axiom:AD:zero:not:desirable}] and therefore also \(0\notin\M_\des\altcondon\eventopt\) [by \labelcref{axiom:event:linear}].
\end{proof}

\begin{lemma}\label{lem:adding:background:to:conditional:model}
Consider any~\(\M\in\adfMsabove{\bgM}\) and any event~\(\eventopt\in\eventopts\), then \(\M_\acc\altcondon\eventopt+\bgM_\acc=\M_\acc\altcondon\eventopt\).
\end{lemma}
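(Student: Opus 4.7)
The plan is to prove the two inclusions separately, relying only on the defining properties of the calling-off operation and the AD-axioms.

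For the inclusion $\M_\acc\altcondon\eventopt\subseteq\M_\acc\altcondon\eventopt+\bgM_\acc$, I would simply invoke the fact that the null option lies in $\bgM_\acc$ (this is just \labelcref{axiom:AD:background} applied to $\zeroM\subseteq\bgM$), so every element of $\M_\acc\altcondon\eventopt$ is obtained by adding $0\in\bgM_\acc$ to itself.

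For the reverse inclusion, I would take any $\opt\in\M_\acc\altcondon\eventopt$ and any $\altopt\in\bgM_\acc$ and verify that $\eventopt\ast(\opt+\altopt)\in\M_\acc$, which is exactly what it means for $\opt+\altopt$ to lie in $\M_\acc\altcondon\eventopt$. By linearity~\labelcref{axiom:event:linear} of the calling-off operation, $\eventopt\ast(\opt+\altopt)=\eventopt\ast\opt+\eventopt\ast\altopt$. The first term is in $\M_\acc$ by the definition of $\M_\acc\altcondon\eventopt$. For the second, note that $\altopt\in\bgM_\acc$ means $\altopt\weakgeq0$ in the sense of \cref{eq::option:orderings}, so by monotonicity~\labelcref{axiom:event:monotone} we get $\eventopt\ast\altopt\weakgeq0$, i.e.\ $\eventopt\ast\altopt\in\bgM_\acc\subseteq\M_\acc$, where the last inclusion uses \labelcref{axiom:AD:background}. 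Finally, since $\M_\acc$ is a convex cone by \labelcref{axiom:AD:deductive:closedness} and therefore closed under addition, $\eventopt\ast\opt+\eventopt\ast\altopt\in\M_\acc$, as required.

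There's no real obstacle here: the statement is a clean combination of the linearity and monotonicity axioms for events with the convex cone property of $\M_\acc$ and the fact that $\bgM_\acc$ is included in $\M_\acc$. Alternatively, one could derive the forward inclusion directly from \cref{lem:compatibility:conditional:sets} with $\someopts_1\instantiateas\M_\acc$ and $\someopts_2\instantiateas\bgM_\acc$, using $\M_\acc+\bgM_\acc\subseteq\M_\acc+\M_\acc\subseteq\M_\acc$ and $\bgM_\acc\subseteq\bgM_\acc\altcondon\eventopt$ (itself a consequence of \labelcref{axiom:event:monotone}), but the direct argument above is shorter and makes the role of each assumption more transparent.
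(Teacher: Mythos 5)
Your proposal is correct and follows essentially the same route as the paper's own proof: the trivial inclusion via \(0\in\bgM_\acc\), and the converse by applying linearity~\labelcref{axiom:event:linear} of the calling-off operation, monotonicity~\labelcref{axiom:event:monotone} to get \(\eventopt\ast\altopt\in\bgM_\acc\subseteq\M_\acc\), and then the convex-cone property of \(\M_\acc\) together with \labelcref{axiom:AD:background}. Nothing is missing, and the alternative remark via \cref{lem:compatibility:conditional:sets} is a valid but unnecessary detour.
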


\begin{proof}[Proof of \cref{lem:adding:background:to:conditional:model}]
Since \(0\in\bgM_\acc\) [by \labelcref{axiom:AD:background}], it's clear that \(\M_\acc\altcondon\eventopt\subseteq\M_\acc\altcondon\eventopt+\bgM_\acc\), so we concentrate on the converse inclusion.
Let \(\opt\in\M_\acc\altcondon\eventopt+\bgM_\acc\), so there are \(\altopt\in\M_\acc\altcondon\eventopt\) and \(\altopttoo\in\bgM_\acc\) such that \(\opt=\altopt+\altopttoo\).
But then \(\eventopt\ast\opt=\eventopt\ast\group{\altopt+\altopttoo}=\eventopt\ast\altopt+\eventopt\ast\altopttoo\).
Since \(\eventopt\ast\altopt\in\M_\acc\) and \(\eventopt\ast\altopttoo\in\bgM_\acc\) [by \labelcref{axiom:event:monotone}], we see that \(\eventopt\ast\opt\in\M_\acc+\bgM_\acc\subseteq\M_\acc\), where the inclusion follows from the fact that \(\bgM_\acc\subseteq\M_\acc\) [by \labelcref{axiom:AD:background}] and the convex cone character of~\(\M_\acc\) [use \labelcref{axiom:AD:deductive:closedness}].
Hence, indeed, \(\opt\in\M_\acc\altcondon\eventopt\).
\end{proof}

\begin{proof}[Proof of \cref{prop::conditional:model}]
Note that both \(\M_\acc\altcondon\eventopt\) and \(\bgM_\acc\) are convex cones [for \(\M_\acc\altcondon\eventopt\), use \labelcref{axiom:AD:deductive:closedness,axiom:event:linear}; for \(\bgM_\acc\), use \labelcref{axiom:AD:deductive:closedness}], so
\begin{align}
\posiof{\M_\acc\altcondon\eventopt\cup\bgM_\acc}
&=\posiof{\M_\acc\altcondon\eventopt}\cup\posiof{\bgM_\acc}\cup\group{\posiof{\M_\acc\altcondon\eventopt}+\posiof{\bgM_\acc}}
\notag\\
&=\M_\acc\altcondon\eventopt\cup\bgM_\acc\cup\group{\M_\acc\altcondon\eventopt+\bgM_\acc}
\notag\\
&=\M_\acc\altcondon\eventopt+\bgM_\acc
\notag\\
&=\M_\acc\altcondon\eventopt,
\label{eq::conditional:model:helper:1}
\end{align}
where the penultimate equality follows from the fact that \(0\in\M_\acc\altcondon\eventopt\) [by \labelcref{axiom:AD:background,axiom:event:linear}] and \(0\in\bgM_\acc\) [by \labelcref{axiom:AD:background}]; and where the last equality follows from \cref{lem:adding:background:to:conditional:model}.
Since \(\M_\des\altcondon\eventopt\) and \(\bgM_\des\) are convex cones as well [for \(\M_\des\altcondon\eventopt\), use \labelcref{axiom:AD:deductive:closedness,axiom:event:linear}; for \(\bgM_\des\), use \labelcref{axiom:AD:deductive:closedness}], we also find that
\begin{equation}\label{eq::conditional:model:helper:2}
\shullof{\M_\des\altcondon\eventopt\cup\bgM_\des}
=\shullof{\M_\des\altcondon\eventopt}\cup\shullof{\bgM_\des}
=\M_\des\altcondon\eventopt\cup\bgM_\des.
\end{equation}
Combining \cref{eq::conditional:model:helper:1,eq::conditional:model:helper:2}, we find that
\begin{multline*}
\shullof{\M_\des\altcondon\eventopt\cup\bgM_\des}\cup\group{\shullof{\M_\des\altcondon\eventopt\cup\bgM_\des}+\posiof{\M_\acc\altcondon\eventopt\cup\bgM_\acc}}\\
\begin{aligned}
&=\M_\des\altcondon\eventopt\cup\bgM_\des\cup\group{\group{\M_\des\altcondon\eventopt\cup\bgM_\des}+\M_\acc\altcondon\eventopt}\\
&=\M_\des\altcondon\eventopt\cup\bgM_\des\cup\group{\M_\des\altcondon\eventopt+\M_\acc\altcondon\eventopt}\cup\group{\bgM_\des+\M_\acc\altcondon\eventopt}\\
&=\M_\des\altcondon\eventopt\cup\bgM_\des\cup\M_\des\altcondon\eventopt\cup\group{\bgM_\des+\M_\acc\altcondon\eventopt}
&&\reason{Lem.~\labelcref{lem:compatibility:conditional:sets}}\\
&=\M_\des\altcondon\eventopt\cup\group{\bgM_\des+\M_\acc\altcondon\eventopt},
&&\reason{\(0\in\M_\acc\altcondon\eventopt\)}
\end{aligned}
\end{multline*}
which completes the proof.
\end{proof}

\subsection{For \cref{sec::conditioning:as:belief:change}}

\begin{proof}[Proof of \cref{prop::AD:model:revision:axioms:obeyed}]
For \labelcref{axiom:revision:1}, we need to prove that \(\reviseof{\M\given\eventopt}\in\adfMsabove{\bgM}\cup\set{\Adelim{\opts}{\opts}}\).
When the event isn't \(\bgM\)-regular, this is trivially the case.
When the event is \(\bgM\)-regular but the AD-model \(\M\) isn't conditionable on it, \(\reviseof{\M\given\eventopt}=\Mclsof{\bgM\union\eventM}=\Adelim{\bgM_\acc+\eventindifset}{-\group{\bgM_\des+\eventindifset}}\) is an AD-model that respects~\(\bgM\).\footnote{This is discussed in more detail in \cref{sec::belief:expansion}.}
Lastly, when the event is \(\bgM\)-regular and the AD-model \(\M\) is conditionable on it, we gather from the discussion in \cref{sec::conditioning:admodels}, and in particular from \cref{prop::conditional:model}, that the conditional~\(\M\condon\eventopt\) is an AD-model that respects~\(\bgM\).

\labelcref{axiom:revision:2} is obeyed because \(\eventindifset\subseteq\M_\acc\altcondon\eventopt\) and \(\eventM\subseteq\Mclsof{\bgM\union\eventM}\).

For \labelcref{axiom:revision:3}, taking into account the respective expressions~\labelcref{eq::AD:expansion:operator,eq::AD:revision:operator:rewritten} for the expansion and revision operators, we see that we can assume without loss of generality that \(\eventopt\) is \(\bgM\)-regular and that \(\M\) is conditionable on it.
We then see that
\begin{equation*}
\opt=\calledoff{\opt}+(\opt-\calledoff{\opt})\in\M_\acc+\eventindifset
\text{ for all~\(\opt\in\M_\acc\altcondon\eventopt\)},
\end{equation*}
so \(\M_\acc\altcondon\eventopt\subseteq\M_\acc+\eventindifset\),
and similarly \( \M_\des\altcondon\eventopt\subseteq\M_\des+\eventindifset.\)
Also, consider any~\(\opt\in\bgM_\des\) and any~\(\altopt\in\M_\acc\altcondon\eventopt\), then
\begin{equation*}
\opt+\altopt
=\opt+\calledoff{\altopt}+(\altopt-\calledoff{\altopt})
\in\bgM_\des+\M_\acc+\eventindifset
\subseteq\M_\des+\M_\acc+\eventindifset
=\M_\des+\eventindifset,
\end{equation*}
where we used \labelcref{axiom:AD:background} for the inclusion and \labelcref{axiom:AD:no:limbo} for the equality.
This tells us that also \(\bgM_\des+\M_\acc\altcondon\eventopt\subseteq\M_\des+\eventindifset\), and therefore
\begin{equation*}
\reviseof{\M\given\eventopt}
=\Adelim{\M_\acc\altcondon\eventopt}{-\group{\M_\des\altcondon\eventopt\cup\group{\bgM_\des+\M_\acc\altcondon\eventopt}}}
\subseteq\Adelim{\M_\acc+\eventindifset}{-\group{\M_\des+\eventindifset}}
=\expandof{\M\given\eventopt},
\end{equation*}
so \labelcref{axiom:revision:3} is indeed satisfied.

For \labelcref{axiom:revision:5}, observe from the expression~\labelcref{eq::AD:revision:operator:rewritten} that \(\reviseof{\M\given\eventopt}\) is only \(\bgM\)-inconsistent when \(0\in\bgM_\des+\eventindifset\), or in other words when \(\eventM\) is \(\bgM\)-inconsistent, proving the axiom.

And, to conclude, for \labelcref{axiom:revision:7} we must prove that \(\reviseof{\M\given\eventopt[1,2]}\subseteq\expandof{\reviseof{\M\given\eventopt[1]}\given\eventopt[2]}\), where we let, for ease of notation, \(\eventopt[1,2]\coloneqq\eventopt[1]\sqcap\eventopt[2]\).
Given the complicated nature of these operators, we'll start off with a case study.

For the revision operator on the left-hand side, we simply infer from \cref{eq::AD:revision:operator:rewritten} with \(\eventopt\instantiateas\eventopt[1,2]\) that
\begin{multline}\label{eq:BR7:revision}
\reviseof{\M\given\eventopt[1,2]}\\
\begin{aligned}
&=
\begin{cases}
\Adelim{\M_\acc\altcondon{\eventopt[1,2]}}{-\group{\M_\des\altcondon{\eventopt[1,2]}\cup(\M_\acc\altcondon{\eventopt[1,2]}+\bgM_\des)}}
&\text{if \(0\notin\M_\acc\altcondon{\eventopt[1,2]}+\bgM_\des\)}\\
\Adelim{\bgM_{\acc}+\indifset[{\eventopt[1,2]}]}{-\group{\bgM_\des+\indifset[{\eventopt[1,2]}]}}
&\text{if }0\in\M_\acc\altcondon{\eventopt[1,2]}+\bgM_\des\\
&\quad\text{and }0\notin\bgM_\des+\indifset[{\eventopt[1,2]}]\\
\Adelim{\opts}{\opts}
&\text{otherwise.}
\end{cases}
\end{aligned}
\end{multline}

The expansion~\(\expandof{\reviseof{\M\given\eventopt[1]}\given\eventopt[2]}\) on the right-hand side is more complicated.
There are three cases to consider.
The first is that \(\eventopt[1]\) isn't \(\bgM\)-regular, and then the revision yields \(\Adelim{\opts}{\opts}\), and expanding it trivially yields \(\Adelim{\opts}{\opts}\) as well.
The second possibility is that \(\eventopt[1]\) is \(\bgM\)-regular, but \(\M\) is \emph{not} conditionable on it, so \(0\in\M_\acc\altcondon{\eventopt[1]}+\bgM_\des\) and \(0\notin\bgM_\des+\indifset[{\eventopt[1]}]\).
The revision then yields \(\reviseof{\M\given\eventopt[1]}=\Adelim{\bgM_{\acc}+\indifset[{\eventopt[1]}]}{-\group{\bgM_\des+\indifset[{\eventopt[1]}]}}\).
For the subsequent expansion, we then have
\begin{equation*}
\expandof{\reviseof{\M\given\eventopt[1]}\given\eventopt[2]}
=
\begin{cases}
\Adelim{\bgM_{\acc}+\indifset[{\eventopt[1]}]+\indifset[{\eventopt[2]}]}{-\group{\bgM_\des+\indifset[{\eventopt[1]}]+\indifset[{\eventopt[2]}]}}
&\text{if }0\notin\bgM_\des+\indifset[{\eventopt[1]}]+\indifset[{\eventopt[2]}]\\
\Adelim{\opts}{\opts}
&\text{otherwise}.
\end{cases}
\end{equation*}
The third and last possibility is that \(\M\) is conditionable on~\(\eventopt[1]\), so \(0\notin\M_\acc\altcondon{\eventopt[1]}+\bgM_\des\), and then we have that \(\reviseof{\M\given\eventopt[1]}=\Adelim{\M_\acc\altcondon{\eventopt[1]}}{-\group{\M_\des\altcondon{\eventopt[1]}\cup(\M_\acc\altcondon{\eventopt[1]}+\bgM_\des)}}\).
The subsequent expansion will only give an AD-model when \(0\notin\M_\des\altcondon{\eventopt[1]}+\indifset[{\eventopt[2]}]\) and \(0\notin\M_\acc\altcondon{\eventopt[1]}+\bgM_\des+\indifset[{\eventopt[2]}]\), so
\begin{equation*}
\begin{aligned}
&\expandof{\reviseof{\M\given\eventopt[1]}\given\eventopt[2]}\\
&=
\begin{cases}
\Adelim{\M_\acc\altcondon{\eventopt[1]}+\indifset[{\eventopt[2]}]}{-\group{(\M_\des\altcondon{\eventopt[1]}+\indifset[{\eventopt[2]}])\cup(\M_\acc\altcondon{\eventopt[1]}+\bgM_\des+\indifset[{\eventopt[2]}])}}
&\text{if }0\notin\M_\acc\altcondon{\eventopt[1]}+\bgM_\des+\indifset[{\eventopt[2]}]\\
&\quad\text{and }0\notin\M_\des\altcondon{\eventopt[1]}+\indifset[{\eventopt[2]}]\\
\Adelim{\opts}{\opts}&\text{otherwise}.
\end{cases}
\end{aligned}
\end{equation*}%
Summarising, we find that
\begin{equation}\label{eq:BR7:expansion}
\begin{aligned}
&\expandof{\reviseof{\M\given\eventopt[1]}\given\eventopt[2]}\\
&=
\begin{cases}
\Adelim{\M_\acc\altcondon{\eventopt[1]}+\indifset[{\eventopt[2]}]}{-\group{(\M_\des\altcondon{\eventopt[1]}+\indifset[{\eventopt[2]}])\cup(\M_\acc\altcondon{\eventopt[1]}+\bgM_\des+\indifset[{\eventopt[2]}])}}
&\text{if }0\notin\M_\acc\altcondon{\eventopt[1]}+\bgM_\des+\indifset[{\eventopt[2]}]\\
&\quad\text{and }0\notin\M_\des\altcondon{\eventopt[1]}+\indifset[{\eventopt[2]}]\\
&\quad\text{and }0\notin\M_\acc\altcondon{\eventopt[1]}+\bgM_\des\\
\Adelim{\bgM_{\acc}+\indifset[{\eventopt[1]}]+\indifset[{\eventopt[2]}]}{-[\bgM_\des+\indifset[{\eventopt[1]}]+\indifset[{\eventopt[2]}]]}
&\text{if }0\notin\bgM_\des+\indifset[{\eventopt[1]}]+\indifset[{\eventopt[2]}]\\
&\quad\text{and }0\in\M_\acc\altcondon{\eventopt[1]}+\bgM_\des\\
&\quad\text{and }0\notin\bgM_\des+\indifset[{\eventopt[1]}]\\
\Adelim{\opts}{\opts}
&\text{otherwise}.
\end{cases}
\end{aligned}
\end{equation}

We'll now relate \cref{eq:BR7:revision,eq:BR7:expansion} to each other, where we go through each of the possible cases in \cref{eq:BR7:revision}, and match each of them with the possible cases in \cref{eq:BR7:expansion}.

We begin with the first case in \cref{eq:BR7:revision}, where \(\M\) is conditionable on~\(\eventopt[1,2]\).
For the first possibility of the expansion, we need to prove for the accept part that
\begin{align*}
\M_\acc\altcondon{\eventopt[1,2]}
&\subseteq\M_\acc\altcondon{\eventopt[1]}+\indifset[{\eventopt[2]}],\\
\shortintertext{and for the desirability part that}
\M_\des\altcondon{\eventopt[1,2]}\cup(\M_\acc\altcondon{\eventopt[1,2]}+\bgM_\des)
&\subseteq(\M_\des\altcondon{\eventopt[1]}+\indifset[{\eventopt[2]}])\cup(\M_\acc\altcondon{\eventopt[1]}+\bgM_\des+\indifset[{\eventopt[2]}]).
\end{align*}
We start with the accept part.
Consider any~\(\opt\in\M_\acc\altcondon{\eventopt[1,2]}\), so \(\eventopt[1,2]\ast\opt\in\M_\acc\).
Since \(\eventindifset[1]\subseteq\eventindifset[1]+\eventindifset[2]=\eventindifset[1,2]\), we infer from \cref{prop::ordering:equivalent:statements} that \(\eventopt[1,2]\posetleq\eventopt[1]\), and therefore \(\eventopt[1]\ast\group{\eventopt[1,2]\ast\opt}=\eventopt[1,2]\ast\opt\in\M_\acc\), so \(\eventopt[1,2]\ast\opt\in\M_\acc\altcondon{\eventopt[1]}\).
Hence, \(\opt=\eventopt[1,2]\ast\opt+\group{\opt-\eventopt[1,2]\ast\opt}\in\M_\acc\altcondon{\eventopt[1]}+\eventindifset[1,2]\), and \(\M_\acc\altcondon{\eventopt[1]}+\eventindifset[1,2]=\M_\acc\altcondon{\eventopt[1]}+\eventindifset[1]+\eventindifset[2]\subseteq\M_\acc\altcondon{\eventopt[1]}+\eventindifset[2]\),
where we used \cref{lem:indifset:addition} in the last step.
We therefore find that, indeed, \(\M_\acc\altcondon{\eventopt[1,2]}\subseteq\M_\acc\altcondon{\eventopt[1]}+\eventindifset[2]\).
This inclusion also implies that the inclusion~\(\M_\acc\altcondon{\eventopt[1,2]}+\bgM_\des\subseteq\M_\acc\altcondon{\eventopt[1]}+\eventindifset[2]+\bgM_\des\) holds, and a completely analogous argument shows that \(\M_\des\altcondon{\eventopt[1,2]}\subseteq\M_\des\altcondon{\eventopt[1]}+\eventindifset[2]\), which takes care of the desirability part.

For the second possibility of the expansion, we have that \(\M\) is conditionable on~\(\eventopt[1,2]\) but not on~\(\eventopt[1]\); we'll now prove that this is impossible.
That \(\M\) isn't conditionable on~\(\eventopt[1]\) means that there's some~\(\opt\in\bgM_\des\) such that \(\eventopt[1]\ast\group{-\opt}\in\M_\acc\).
We then infer from \labelcref{axiom:background:order:unit:connection} that there's some~\(\alpha\in\posreals\) such that \(\opt\weakgeq\alpha\optunit\), and therefore also \(\eventopt[1]\ast\group{-\opt}\weakleq-\alpha\eventopt[1]\ast\optunit\), by \labelcref{axiom:event:linear,axiom:event:monotone}.
Since also \(\eventopt[1]\ast\group{-\opt}\in\M_\acc\), we infer from \labelcref{axiom:AD:background,axiom:AD:deductive:closedness} that \(-\alpha\calledoff[1]{\optunit}\in\M_\acc\) as well.
\labelcref{axiom:event:unit:option} now sets off the following sequence of implications:
\begin{align*}
\optunit\weakgeq\eventopt[1,2]\ast\optunit
&\then\calledoff[1]{\optunit}\weakgeq\eventopt[1]\ast\group{\eventopt[1,2]\ast\optunit}
&&\reason{\labelcref{axiom:event:monotone}}\\
&\then\calledoff[1]{\optunit}\weakgeq\eventopt[1,2]\ast\optunit
&&\reason{\(\eventopt[1,2]\posetleq\eventopt[1]\)}\\
&\then-\alpha\calledoff[1]{\optunit}\weakleq-\alpha\eventopt[1,2]\ast\optunit\\
&\then-\alpha\eventopt[1]\ast\optunit\weakleq\eventopt[1,2]\ast\group{-\alpha\optunit}
&&\reason{\labelcref{axiom:event:linear}}\\
&\then\eventopt[1,2]\ast\group{-\alpha\optunit}\in\M_\acc,
&&\reason{\(-\alpha\calledoff[1]{\optunit}\in\M_\acc\)}
\end{align*}
so \(-\alpha\optunit\in\M_\acc\altcondon{\eventopt[1,2]}\cap-\bgM_\des\), contradicting the assumption that \(\M_\acc\) is conditionable on~\(\eventopt[1,2]\).

For the third possibility of the expansion, \(\expandof{\reviseof{\M\given\eventopt[1]}\given\eventopt[2]}=\Adelim{\opts}{\opts}\), so the desired inclusion holds trivially,

Now we turn to the second case in \cref{eq:BR7:revision}, where \(\M\) isn't conditionable on~\(\eventopt[1,2]\) but \(\eventopt[1,2]\) is still regular.
We argued above that \(\M_\acc\altcondon{\eventopt[1,2]}\subseteq\M_\acc\altcondon{\eventopt[1]}+\eventindifset[2]\), so \(0\in\M_\acc\altcondon{\eventopt[1,2]}+\bgM_\des\) implies that also \(0\in\M_\acc\altcondon{\eventopt[1]}+\bgM_\des+\eventindifset[2]\).
In addition, \(0\notin\bgM_\des+\eventindifset[1,2]\) implies that \(0\notin\bgM_\des+\eventindifset[1]+\eventindifset[2]\), and therefore also \(0\notin\bgM_\des+\eventindifset[1]\), because \(\eventindifset[1]\subseteq\eventindifset[1]+\eventindifset[2]=\eventindifset[1,2]\).
This implies that we've landed in the second possibility in \cref{eq:BR7:expansion}, so
\begin{equation*}
\reviseof{\M\given\eventopt[1,2]}
=\Adelim{\bgM_{\acc}+\eventindifset[1]+\eventindifset[2]}{-\group{\bgM_\des+\eventindifset[1]+\eventindifset[2]}}
=\expandof{\reviseof{\M\given\eventopt[1]}\given\eventopt[2]}.
\end{equation*}

Finally, we turn to the third case in \cref{eq:BR7:revision}, where \(\eventopt[1,2]\) isn't regular, meaning that \(0\in\bgM_\des+\eventindifset[1,2]\).
Since \(\eventindifset[1]\subseteq\M_\acc\altcondon{\eventopt[1]}\) and therefore \(\bgM_\des+\eventindifset[1,2]\subseteq\bgM_\des+\M_\acc\altcondon{\eventopt[1]}+\eventindifset[2]\), this implies that \(0\in\bgM_\des+\M_\acc\altcondon{\eventopt[1]}+\eventindifset[2]\) as well, so we find the trivial equality \(\reviseof{\M\given\eventopt[1,2]}=\Adelim{\opts}{\opts}=\expandof{\reviseof{\M\given\eventopt[1]}\given\eventopt[2]}\).
\end{proof}

\begin{lemma}\label{lem:indifset:addition}
For any~\(\someopts\subseteq\opts\), we have that \(\someopts\altcondon\eventopt+\eventindifset=\someopts\altcondon\eventopt\).
\end{lemma}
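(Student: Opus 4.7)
The plan is to prove the equality by double inclusion, relying only on the definitions of the relevant sets and the linearity axiom \labelcref{axiom:event:linear}.

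First I would handle the inclusion $\someopts\altcondon\eventopt \subseteq \someopts\altcondon\eventopt + \eventindifset$. By \labelcref{axiom:event:linear}, applied with $\opt = \altopt = 0$ and $\lambda = 0$, we obtain $\calledoff{0} = 0$, so $0 \in \eventindifset$. Hence every element of $\someopts\altcondon\eventopt$ can be written as itself plus $0 \in \eventindifset$, giving the inclusion.

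Next I would address the reverse inclusion $\someopts\altcondon\eventopt + \eventindifset \subseteq \someopts\altcondon\eventopt$. Pick any $\opt \in \someopts\altcondon\eventopt$ and any $\altopt \in \eventindifset$, so by definition $\calledoff{\opt} \in \someopts$ and $\calledoff{\altopt} = 0$. Linearity \labelcref{axiom:event:linear} then gives
\begin{equation*}
\eventopt \ast (\opt + \altopt) = \calledoff{\opt} + \calledoff{\altopt} = \calledoff{\opt} \in \someopts,
\end{equation*}
which is exactly what is needed to conclude that $\opt + \altopt \in \someopts\altcondon\eventopt$.

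There is no genuine obstacle: the statement is essentially a restatement of the fact that a coset of the kernel of the linear projection $\eventopt\ast\bolleke$ is mapped to a single point, combined with the observation that $0 \in \eventindifset$. The only care needed is to apply \labelcref{axiom:event:linear} correctly to justify both $\calledoff{0} = 0$ and the additivity $\eventopt \ast (\opt + \altopt) = \calledoff{\opt} + \calledoff{\altopt}$.
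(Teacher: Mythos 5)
Your proof is correct and, for the substantive inclusion \(\someopts\altcondon\eventopt+\eventindifset\subseteq\someopts\altcondon\eventopt\), is exactly the paper's argument: linearity \labelcref{axiom:event:linear} gives \(\calledoff{(\opt+\altopt)}=\calledoff{\opt}+\calledoff{\altopt}=\calledoff{\opt}\in\someopts\). For the easy inclusion you argue via \(0\in\eventindifset\), i.e.\ \(\opt=\opt+0\), which is fine and slightly more elementary than the paper, which instead decomposes \(\opt=\calledoff{\opt}+\group{\opt-\calledoff{\opt}}\) and uses idempotency \labelcref{axiom:event:idempotent} to see that the first summand lies in \(\someopts\altcondon\eventopt\) and the second in \(\eventindifset\); both routes are equally valid. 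One small slip: instantiating \labelcref{axiom:event:linear} with \(\opt=\altopt=0\) and \(\lambda=0\) only yields the tautology \(\calledoff{0}=\calledoff{0}\), not \(\calledoff{0}=0\); to get the latter, take for instance \(\altopt=\opt\) and \(\lambda=-1\), so that \(\calledoff{0}=\calledoff{(\opt-\opt)}=\calledoff{\opt}-\calledoff{\opt}=0\). With that instantiation fixed, the argument is complete.
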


\begin{proof}[Proof of \cref{lem:indifset:addition}]
Consider any~\(\opt\in\someopts\altcondon\eventopt\) and~\(\altopt\in\eventindifset\), then
\begin{equation*}
\calledoff{(\opt+\altopt)}
=\calledoff{\opt}+\calledoff{\altopt}
=\calledoff{\opt}
\in\someopts,
\end{equation*}
so \(\someopts\altcondon\eventopt+\eventindifset\subseteq\someopts\altcondon\eventopt\).
Conversely, consider any~\(\opt\in\someopts\altcondon\eventopt\), then \(\opt=\calledoff{\opt}+(\opt- \calledoff{\opt})\), where \(\calledoff{\opt}\in\someopts\altcondon\eventopt\) and \(\opt-\calledoff{\opt}\in\eventindifset\), both by \labelcref{axiom:event:idempotent}, so \(\someopts\altcondon\eventopt\subseteq\someopts\altcondon\eventopt+\eventindifset\).
\end{proof}

\subsection{For \cref{sec::propositional:models}}

\begin{proof}[Proof of \cref{prop::form:propositional:model}]
First, observe that
\begin{align}
&\posiof{\indifset[\props]\cup\gambles_{\weakgeq0}}
\notag\\
&\quad=\posiof{\indifset[\props]}\cup\posiof{\gambles_{\weakgeq0}}\cup\group{\posiof{\indifset[\props]}+\posiof{\gambles_{\weakgeq0}}}
\notag\\
&\quad=\posiof{\indifset[\props]}\cup\gambles_{\weakgeq0}\cup\group{\posiof{\indifset[\props]}+\gambles_{\weakgeq0}}
\notag\\
&\quad=\gambles_{\weakgeq0}\cup\group{\posiof{\indifset[\props]}+\gambles_{\weakgeq0}}
&&\reason{\(0\in\gambles_{\weakgeq0}\)}
\label{eq::form:propositional:model:third}
\end{align}
We then find the following equivalences for the \(\bgM\)-consistency of the assessment~\(\A_\props\):
\begin{multline*}
\posiof{\indifset[\props]\cup\gambles_{\weakgeq0}}\cap-(\emptyset\cup\gambles_{\stronggt0}) =\emptyset\\
\begin{aligned}
&\ifandonlyif0\notin\posiof{\indifset[\props]\cup\gambles_{\weakgeq0}}+\gambles_{\stronggt0}\\
&\ifandonlyif0\notin\group{\gambles_{\weakgeq0}\cup\group{\posiof{\indifset[\props]}+\gambles_{\weakgeq0}}}+\gambles_{\stronggt0}
&&\reason{Eq.~\labelcref{eq::form:propositional:model:third}}\\
&\ifandonlyif0\notin\group{\gambles_{\weakgeq0}+\gambles_{\stronggt0}}\cup\group{\posiof{\indifset[\props]}+\gambles_{\weakgeq0}+\gambles_{\stronggt0}}\\
&\ifandonlyif0\notin\gambles_{\stronggt0}\cup\group{\posiof{\indifset[\props]}+\gambles_{\stronggt0}}
&&\reason{\(\gambles_{\weakgeq0}+\gambles_{\stronggt0}=\gambles_{\stronggt0}\)}\\
&\ifandonlyif0\notin\posiof{\indifset[\props]}+\gambles_{\stronggt0}.
&&\reason{\(0\notin\gambles_{\stronggt0}\)}
\end{aligned}
\end{multline*}
We now show that the \(\bgM\)-consistency condition~\(0\notin\posiof{\indifset[\props]}+\gambles_{\stronggt0}\) is equivalent to the logical consistency of the set of propositions~\(\props\), or in other words, to the condition~\(\emptyset\notin\filterbase[\props]\), by looking at the following chain of equivalences:
\begin{multline*}
0\notin\posiof{\indifset[\props]}+\gambles_{\stronggt0}\\
\begin{aligned}
&\ifandonlyif\group{\forall\gbl\in\posiof{\indifset[\props]}}\,-\gbl\notin\gambles_{\stronggt0}\\
&\ifandonlyif\group{\forall\gbl\in\posiof{\indifset[\props]}}\,\sup\gbl\geq0\\
&\ifandonlyif\group{\forall n\in\naturals}\group{\forall B_1,\dots,B_n\in\props}\group{\forall\lambda_1,\dots,\lambda_n\in\posreals}\,\sup\group[\bigg]{\sum_{k=1}^n\lambda_k\group{\indof{B_k}-1}}\geq0\\
&\ifandonlyif\group{\forall n\in\naturals}\group{\forall B_1,\dots,B_n\in\props}\,\bigcap_{k=1}^nB_k\neq\emptyset\\
&\ifandonlyif\emptyset\notin\filterbase[\props].
\end{aligned}
\end{multline*}

Assume now that this finite intersection property is satisfied.
Taking into account \cref{eq::model:closure}, the \(\bgM\)-natural extension~\(\closureof{\Msabove{\bgM}}{\A_{\props}}=\closureof{\Ms}{\A_{\props}\union\bgM}\) of the \(\bgM\)-consistent AD-assessment~\(\A_{\props}\) is then given by its set of acceptable gambles
\begin{align*}
\closureof{\adfMsabove{\bgM}}{\A_{\props}}_\acc
&=\posiof{\indifset[\props]\cup\gambles_{\weakgeq0}}\\
&=\gambles_{\weakgeq0}\cup\group{\posiof{\indifset[\props]}+\gambles_{\weakgeq0}}
&&\reason{Eq.~\labelcref{eq::form:propositional:model:third}}
\end{align*}
and its set of desirable gambles
\begin{align*}
\closureof{\adfMsabove{\bgM}}{\A_{\props}}_\des
&=\shullof{\emptyset\cup\gambles_{\stronggt0}}\cup\group[\big]{\shullof{\emptyset\cup\gambles_{\stronggt0}}+\posiof{\indifset[\props]\cup\gambles_{\weakgeq0}}}\\
&=\gambles_{\stronggt0}\cup\group[\big]{\gambles_{\stronggt0}+\group{\gambles_{\weakgeq0}\cup\group{\posiof{\indifset[\props]}+\gambles_{\weakgeq0}}}}
&&\reason{Eq.~\labelcref{eq::form:propositional:model:third}}\\
&=\gambles_{\stronggt0}\cup\group[\big]{\group{\gambles_{\stronggt0}+\gambles_{\weakgeq0}}\cup\group{\gambles_{\stronggt0}+\posiof{\indifset[\props]}+\gambles_{\weakgeq0}}}\\
&=\gambles_{\stronggt0}\cup\gambles_{\stronggt0}\cup\group{\posiof{\indifset[\props]}+\gambles_{\stronggt0}}
&&\reason{\(\gambles_{\weakgeq0}+\gambles_{\stronggt0}=\gambles_{\stronggt0}\)}\\
&=\gambles_{\stronggt0}\cup\group{\posiof{\indifset[\props]}+\gambles_{\stronggt0}}.
\end{align*}

The proof is therefore complete if we can show that
\begin{equation*}
\gambles_{\weakgeq0}\cup\group{\posiof{\indifset[\props]}+\gambles_{\weakgeq0}}=\gambles_{\weakgeq0}^{\filter[\!\!\props]}
\text{ and }
\gambles_{\stronggt0}\cup\group{\posiof{\indifset[\props]}+\gambles_{\stronggt0}}=\gambles_{\stronggt0}^{\filter[\!\!\props]}.
\end{equation*}
We'll concentrate on the proof of the second equality, as the proof of the first is very similar.
For the direct inclusion, it suffices to show that \(\posiof{\indifset[\props]}+\gambles_{\stronggt0}\subseteq\gambles_{\stronggt0}^{\filter[\!\!\props]}\).
So, consider any~\(\gbl\in\posiof{\indifset[\props]}+\gambles_{\stronggt0}\), then there are~\(n\in\naturals\), \(\lambda_1,\dots,\lambda_n\in\posreals\) and \( B_1,\dots,B_n\in\props\) such that \(\inf\group{\gbl-\sum_{k=1}^n\lambda_k\group{\indof{B_k}-1}}>0\), so clearly also~\(\inf\group{\gbl\vert C}>0\) with~\(C\coloneqq\bigcap_{k=1}^nB_k\in\filterbase[\props]\).
For the converse inclusion, consider any~\(\gbl\in\gambles_{\stronggt0}^{\filter[\!\!\props]}\), so there's some~\(C\in\filterbase[\props]\) such that \(\delta\coloneqq\inf\group{\gbl\vert C}>0\).
Then there are~\(n\in\naturals\) and \(B_1,\dots,B_n\in\props\) for which \(C=\bigcap_{k=1}^nB_k\).
Choose all~\(\lambda_k\coloneqq\mu\), where \(\mu\) is any real number such that \(\mu>\inf\group{\gbl\vert C}-\inf\gbl\geq0\).
If we consider the gamble~\(\altgbl\coloneqq\gbl-\sum_{k=1}^n\lambda_k\group{\indof{B_k}-1}\), then it's enough to prove that \(\inf\altgbl>0\).
Fix any~\(\state\in\states\) and let \(m_{\state}\coloneqq\abs{\set{k\in\{1,\dots,n\}\given\state\notin B_k}}\) be the number of~\(B_k\) that don't contain the state~\(\state\).
Then
\begin{equation*}
\altgbl\group\state
=\gbl\group\state-\sum_{k=1}^n\lambda_k\group{\indof{B_k}\group{\state}-1}
=\gbl\group\state+\sum_{k=1}^n\lambda_k\group{1-\indof{B_k}\group{\state}}
\geq\gbl\group\state+m_{\state}\mu.
\end{equation*}
There are now two cases to consider.
The first is that \(m_{\state}=0\), so \(\state\in\bigcap_{k=1}^nB_k=C\), in which case \(\altgbl\group\state=\gbl\group\state\geq\delta\).
The second is that \(m_{\state}\geq1\), and then
\begin{align*}
\altgbl\group\state
&\geq\gbl\group\state+m_{\state}\mu
=\gbl\group\state+\group{m_{\state}-1}\mu+\mu\\
&\geq\gbl\group\state+\group{m_{\state}-1}\mu+\group{\delta-\inf\gbl}\\
&=\delta+\group{\gbl\group\state-\inf\gbl}+\group{m_{\state}-1}\mu\\
&\geq\delta.
\end{align*}
So in all cases, we have that \(\altgbl\group\state\geq\delta\), and therefore, indeed, \(\inf\altgbl\geq\delta>0\).
\end{proof}

\begin{proof}[\protect{Proof that \(\filter[\M]\) is always proper filter}]
First, \(\filter[\M]\) is non-empty because \(\states\in\filter[\M]\), since \(\indof\states-1=0\in\M_\acc\), by \labelcref{axiom:AD:background}.

Second, we show \(\filter[\M]\) is closed under finite intersections.
Assume that \(B_1,B_2\in\filter[\M]\), so \(\indof{B_1}-1\in\M_\acc\) and \(\indof{B_2}-1\in\M_\acc\).
Since \(\indof{B_1\cap B_2}=\indof{B_1}+\indof{B_2}-\indof{B_1\cup B_2}\geq\indof{B_1}+\indof{B_2}-1\), we see that
\begin{equation*}
\indof{B_1\cap B_2}-1
\geq\indof{B_1}+\indof{B_2}-2
=\group{\indof{B_1}-1}+\group{\indof{B_2}-1},
\end{equation*}
and therefore \(\indof{B_1\cap B_2}-1\in\M_\acc\), by \labelcref{axiom:AD:deductive:closedness,axiom:AD:background}.
Hence, indeed, \(B_1\cap B_2\in\filter[\M]\).

Third, we show that \(\filter[\M]\) is increasing.
Assume that \(B_1\subseteq B_2\) and \(B_1\in\filter[\M]\), implying that \(\indof{B_1}-1\in\M_\acc\).
But then, since \(\indof{B_2}\geq\indof{B_1}\), we find that \(\indof{B_2}-1\in\M_\acc\), by \labelcref{axiom:AD:deductive:closedness,axiom:AD:background}, so indeed also \(B_2\in\filter[\M]\).

To conclude, we show that \(\filter[\M]\) is proper.
Assume towards contradiction that \(\emptyset\in\filter[\M]\), implying that \(-1=\indof{\emptyset}-1\in\M_\acc\), contradicting \labelcref{axiom:AD:nonpositive:not:acceptable} since \(1\in\gambles_{\stronggt0}\).
\end{proof}

\begin{proof}[Proof of the statement in \cref{eq::prop:filter:correspondence}]
Since we know that \(\filter[\M]\) is a proper filter, the converse implication is immediate.
For the direct implication, assume that \(\M\in\propadfMs\), so \(\M=\M_{\filter}=\Adelim{\gambles_{\weakgeq0}^{\filter}}{-\gambles_{\stronggt0}^{\filter}}\) for some proper filter~\(\filter\).
It's enough to show that this implies that \(\filter=\filter[\M]\).
Consider, to this end, the following chain of equivalences, valid for any~\(B\subseteq\states\):
\begin{align*}
B\in\filter[\M]
&\ifandonlyif\indof{B}-1\in\M_\acc\\
&\ifandonlyif\group{\exists D\in\filter}\,\inf\group{\indof{B}-1\vert D}\geq0
\ifandonlyif\group{\exists D\in\filter}\,\inf\group{\indof{B}\vert D}\geq1\\
&\ifandonlyif\group{\exists D\in\filter}\,D\subseteq B\\
&\ifandonlyif B\in\filter.
\qedhere
\end{align*}
\end{proof}

\begin{proof}[Proof of \cref{prop::propositional:expansion:consistency}]
For sufficiency, assume that \(\filter\cup\set\event\) satisfies the finite intersection property, then we have to show that \(\group{\M_{\filter}}_\des\cap\indifset[\event]=\emptyset\), or equivalently, \(\gambles_{\stronggt0}^{\filter}\cap\indifset[\event]=\emptyset\).
So, consider any~\(\gbl\in\gambles_{\stronggt0}^{\filter}\), then there's some~\( B\in\filter\) such that \(\inf\group{\gbl\vert B}>0\) and therefore also \(\gbl\group\state>0\) for all~\(\state\in B\).
Assume towards contradiction that \(\gbl\in\indifset[\event]\), so \(\gbl\group\state=0\) for all~\(\state\in\event\).
Since it follows from the finite intersection property for~\(\filter\cup\set\event\) that \(\event\cap B\neq\emptyset\) [see \cref{eq::augmented:finite:intersection:property}], there's some~\(\altstate\in\event\cap B\) for which both \(\gbl\group\altstate>0\) and \(\gbl\group\altstate=0\), which is the desired contradiction.

For necessity, assume that \(\filter\cup\set\event\) doesn't satisfy the finite intersection property, so \(B\cap\event=\emptyset\) for some~\(B\in\filter\).
But then on the one hand \(\indof{B}\in\indifset[\event]\), and on the other hand \(\inf\group{\indof{B}\vert B}=1>0\), so \(\indof{B}\in\gambles_{\stronggt0}^{\filter}\).
Hence, indeed, \(\gambles_{\stronggt0}^{\filter}\cap\indifset[\event]\neq\emptyset\).
\end{proof}

\begin{proof}[Proof of \cref{prop::propositional:expansion:operator}]
It follows from the definition of the expansion operator that we only have to look at the case of consistency, which, according to \cref{prop::propositional:expansion:consistency}, is equivalent to the finite intersection property for~\(\filter\cup\set\event\), so \(\emptyset\notin\filterbase[\!\!\filter\cup\set\event]\).
In that case \(\filter^\prime\coloneqq\filter[\!\!\filter\cup\set\event]\) is a proper filter, and we have to show that
\begin{equation*}
\Adelim{\M_\acc+\indifset[\event]}{-\group{\M_\des+\indifset[\event]}}
=\Adelim{\gambles_{\weakgeq0}^{\filter}+\indifset[\event]}{-\group{\gambles_{\stronggt0}^{\filter}+\indifset[\event]}}
=\Adelim{\gambles_{\weakgeq0}^{\filter^\prime}}{-\gambles_{\stronggt0}^{\filter^\prime}}
\end{equation*}

We start by proving that \(\M_\acc+\indifset[\event]\subseteq\gambles_{\weakgeq0}^{\filter^\prime}\).
Consider any~\(\gbl\in\M_\acc\) and any~\(\altgbl\in\indifset[\event]\), which tells us that \(\indof\event\altgbl=0\) and that there's some~\(B\in\filter\) such that \(\inf\group{\gbl\vert B}\geq0\).
Clearly, then,
\begin{equation*}
\inf\group{\gbl+\altgbl\vert B\cap\event}
=\inf_{\state\in B\cap\event}\group{\gbl\group\state+\altgbl\group\state}
=\inf_{\state\in B\cap\event}\gbl\group\state
\geq\inf_{\state\in B}\gbl\group\state
=\inf\group{\gbl\vert B}\geq0,
\end{equation*}
and therefore indeed \(\gbl+\altgbl\in\gambles_{\weakgeq0}^{\filter^\prime}\), taking into account \cref{eq::augmented:filter}.
A similar argument can be used to show that \(\M_\des+\indifset[\event]\subseteq\gambles_{\stronggt0}^{\filter^\prime}\).

Conversely, consider any~\(\gbl\in\gambles_{\weakgeq0}^{\filter^\prime}\).
Looking at \cref{eq::augmented:filter}, we see that there are two cases to consider.
If \(\inf\group{\gbl\vert B}\geq0\) for some~\(B\in\filter\), then we're done.
If \(\inf\group{f\vert B\cap\event}\geq0\) for some~\(B\in\filter\), then definitely \(\event\cap B\neq\emptyset\) because we assumed that \(\emptyset\notin\filterbase[\!\!\filter\cup\set\event]\).
We may then also assume without loss of generality that \(B\setminus\event\neq\emptyset\), because otherwise \(B\cap\event=B\in\filter\), which takes us back to the previous case.
We can therefore write the gamble~\(\gbl\) as \(\gbl=\altgbl+\altgbltoo\) with
\begin{equation*}
\altgbl\group\state
\coloneqq
\begin{cases}
\sup\gbl
&\text{if \(\state\in B\setminus\event\)}\\
\gbl\group\state
&\text{otherwise}
\end{cases}
\text{ and }
\altgbltoo\group\state
\coloneqq
\begin{cases}
\gbl\group\state-\sup\gbl
&\text{ if \(\state\in B\setminus\event\)}\\
0 &\text{ otherwise}
\end{cases}
\text{ for all \(\state\in\states\).}
\end{equation*}
Observe that \(\indof\event\altgbltoo=0\), so \(\altgbltoo\in\indifset[\event]\), and that
\begin{align*}
\inf\group{\altgbl\vert B}
&=\inf_{\state\in B}\altgbl\group\state
=\min\set[\Big]{\inf_{\state\in B\cap E}\altgbl\group\state,\inf_{\state\in B\setminus E}\altgbl\group\state}
=\min\set[\Big]{\inf_{\state\in B\cap E}\gbl\group\state,\sup\gbl}\\
&=\inf_{\state\in B\cap E}\gbl\group\state
=\inf\group{\gbl\vert B\cap\event}\geq0,
\end{align*}
so \(\altgbl\in\M_\acc\).
Hence, indeed, \(\gbl=\altgbl+\altgbltoo\in\M_\acc+\indifset[\event]\).
A similar argument can be used to show that also \(\gambles_{\stronggt0}^{\filter^\prime}\subseteq\M_\des+\indifset[\event]\).
\end{proof}

\begin{proof}[Proof of \cref{prop::propositional:revision:conditionability}]
Simply consider the following chain of equivalences:
\begin{align*}
0\notin\gambles_{\weakgeq0}^{\filter}\altcondon\event+\gambles_{\stronggt0}
&\ifandonlyif\group{\forall\gbl\in\gambles}\group[\big]{\indof\event\gbl\in\gambles_{\weakgeq0}^{\filter}\then\sup\gbl\geq0}\\
&\ifandonlyif\group{\forall\gbl\in\gambles}\group[\big]{\group{\exists B\in\filter}\indof{B}\indof\event\gbl\geq0\then\sup\gbl\geq0}\\
&\ifandonlyif\group{\forall\gbl\in\gambles}\group[\big]{\sup\gbl<0\then\group{\forall B\in\filter}\indof{B}\indof\event\gbl\not\geq0}\\
&\ifandonlyif\group{\forall B\in\filter}\,B\cap\event\neq\emptyset,
\end{align*}
and use \cref{eq::augmented:finite:intersection:property} to conclude.
\end{proof}

\begin{proof}[Proof of \cref{prop::propositional:revision:operator}]
It's clearly enough to consider the case that \(\emptyset\notin\filterbase[\!\!\filter\cup\set\event]\), so we infer from \cref{eq::AD:revision:operator:rewritten} that
\begin{equation}\label{eq::propositional:revision:operator}
\reviseof{\M_{\filter}\given\event}
=\Adelim[\big]{\gambles_{\weakgeq0}^{\filter}\altcondon\event}{-\group[\big]{\gambles_{\stronggt0}^{\filter}\altcondon\event\cup\group{\gambles_{\stronggt0}+\gambles_{\weakgeq0}^{\filter}\altcondon\event}}}
\end{equation}
We start with the derivation of the accept part, by looking at the following chain of equivalences, valid for any gamble~\(\gbl\in\gambles\):
\begin{equation}\label{eq::propositional:revision:accept:equivalence}
\gbl\in\gambles_{\weakgeq0}^{\filter}\altcondon\event
\ifandonlyif\indof\event\gbl\in\gambles_{\weakgeq0}^{\filter}
\ifandonlyif\group{\exists B\in\filter}\indof{B}\indof\event\gbl\geq0
\ifandonlyif\group{\exists B\in\filter}\indof{B\cap\event}\gbl\geq0.
\end{equation}
Hence, if \(\gbl\in\gambles_{\weakgeq0}^{\filter}\altcondon\event\), we see that there's some~\(D\in\filter\) such that \(\indof{D\cap\event}\gbl\geq0\), so we infer from \cref{eq::augmented:filter} that, indeed, there's some~\(B\in\filter[\!\!\filter\cup\set\event]\) such that \(\indof{B}\gbl\geq0\).

Conversely, if \(\gbl\in\gambles_{\weakgeq0}^{\filter[\!\!\filter\cup\set\event]}\), then there are, by \cref{eq::augmented:filter}, two possibilities.
The first is that there's some~\(D\in\filter\) such that \(\indof{D\cap\event}\gbl\geq0\), which implies that \(\gbl\in\gambles_{\weakgeq0}^{\filter}\altcondon\event\), by \cref{eq::propositional:revision:accept:equivalence}.
The second is that \(\indof{B}\gbl\geq0\) for some~\(B\in\filter\), and since \(B\cap\event\neq\emptyset\) because of the consistency, this implies that also \(\indof{B\cap\event}\gbl\geq0\), and therefore that, again, \(\gbl\in\gambles_{\weakgeq0}^{\filter}\altcondon\event\), by \cref{eq::propositional:revision:accept:equivalence}.
This proves the desired equality on the accept side.

For the remainder of the proof, we start by looking at the following chain of equivalences, valid for any gamble~\(\gbl\in\gambles\):
\begin{multline}\label{eq::propositional:revision:desirability:equivalence}
\gbl\in\gambles_{\stronggt0}^{\filter}\altcondon\event
\ifandonlyif\indof\event\gbl\in\gambles_{\stronggt0}^{\filter}
\ifandonlyif\group{\exists B\in\filter}\inf\group{\indof\event\gbl\vert B}>0\\
\ifandonlyif\group{\exists B\in\filter}\group{B\subseteq\event\text{ and }\inf\group{\gbl\vert B}>0}.
\end{multline}
So we see that there are two possibilities.
The first is that there's some~\(B\in\filter\) such that \(B\subseteq\event\), or in other words, that \(\event\in\filter\), and therefore also \(\filter\cup\set\event=\filter\).
But in this case it follows readily from \cref{eq::propositional:revision:accept:equivalence,eq::propositional:revision:desirability:equivalence} that
\begin{equation*}
\gambles_{\weakgeq0}^{\filter}\altcondon\event
=\gambles_{\weakgeq0}^{\filter}
\text{ and }
\gambles_{\stronggt0}^{\filter}\altcondon\event
=\gambles_{\stronggt0}^{\filter},
\end{equation*}
so we infer from \cref{lem:interior,eq::propositional:revision:operator} that \(\reviseof{\M_{\filter}\given\event}=\Adelim{\gambles_{\weakgeq0}^{\filter}}{-\gambles_{\stronggt0}^{\filter}}=\M_{\filter}\), as required.
The second possibility is that \(\event\notin\filter\), and then we infer from \cref{eq::propositional:revision:desirability:equivalence} that \(\gambles_{\stronggt0}^{\filter}\altcondon\event=\emptyset\), so \cref{eq::propositional:revision:operator} and the already established equality on the accept side then tell us that, indeed,
\begin{equation*}
\reviseof{\M_{\filter}\given\event}
=\Adelim[\big]{\gambles_{\weakgeq0}^{\filter[\!\!\filter\cup\set\event]}}{-\group[\big]{\gambles_{\stronggt0}+\gambles_{\weakgeq0}^{\filter[\!\!\filter\cup\set\event]}}}
=\Adelim[\big]{\gambles_{\weakgeq0}^{\filter[\!\!\filter\cup\set\event]}}{-\gambles_{\stronggt0}^{\filter[\!\!\filter\cup\set\event]}}=\M_{\filter[\!\!\filter\cup\set\event]},
\end{equation*}
where the second equality follows from \cref{lem:interior}.
\end{proof}

\begin{lemma}\label{lem:interior}
Let \(\filter\) be any proper filter, then \(\gambles_{\weakgeq0}^{\filter}+\gambles_{\stronggt0}=\gambles_{\stronggt0}^{\filter}\).
\end{lemma}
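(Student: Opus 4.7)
The plan is to prove the set equality by double inclusion, handling each direction separately and using only the elementary definitions of the sets involved together with properness of the filter. No filter structure beyond non-emptiness of its elements is actually needed, but I will keep the statement under its stated hypothesis.

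For the forward inclusion \(\gambles_{\weakgeq0}^{\filter}+\gambles_{\stronggt0}\subseteq\gambles_{\stronggt0}^{\filter}\), I would take any sum \(\gbl=\altgbl+\altgbltoo\) with \(\altgbl\in\gambles_{\weakgeq0}^{\filter}\) and \(\altgbltoo\in\gambles_{\stronggt0}\). Then there is some \(B\in\filter\) with \(\inf\group{\altgbl\vert B}\geq0\), and some \(\epsilon>0\) with \(\inf\altgbltoo\geq\epsilon\). Note \(B\) is non-empty since \(\filter\) is proper. Then pointwise on \(B\) we have \(\gbl\group\state\geq\inf\group{\altgbl\vert B}+\inf\altgbltoo\geq\epsilon>0\), so \(\inf\group{\gbl\vert B}\geq\epsilon>0\), showing \(\gbl\in\gambles_{\stronggt0}^{\filter}\).

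For the reverse inclusion \(\gambles_{\stronggt0}^{\filter}\subseteq\gambles_{\weakgeq0}^{\filter}+\gambles_{\stronggt0}\), given any \(\gbl\in\gambles_{\stronggt0}^{\filter}\) there is some \(B\in\filter\) with \(\delta\coloneqq\inf\group{\gbl\vert B}>0\). I would then split \(\gbl\) as \(\gbl=\group{\gbl-\tfrac\delta2}+\tfrac\delta2\), where the constant gamble \(\tfrac\delta2\) clearly lies in \(\gambles_{\stronggt0}\), and the remainder satisfies \(\inf\group{\gbl-\tfrac\delta2\vert B}=\delta-\tfrac\delta2=\tfrac\delta2\geq0\), so that \(\gbl-\tfrac\delta2\in\gambles_{\weakgeq0}^{\filter}\) witnessed by the same \(B\).

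There isn't really a hard step here; the only subtlety is to make sure one extracts a \emph{uniform} strictly positive lower bound from \(\inf\group{\gbl\vert B}>0\) before splitting off a constant piece, which is immediate once one names \(\delta\). Properness of the filter is used only to guarantee that the conditioning set \(B\) is non-empty, so that infima over \(B\) are well-defined reals rather than vacuous \(+\infty\)'s.
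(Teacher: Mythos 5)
Your proof is correct and follows essentially the same route as the paper's: the forward inclusion uses the filter witness \(B\) together with the uniform positive lower bound of the strongly positive summand, and the reverse inclusion splits off half of \(\inf\group{\gbl\vert B}\) as a constant gamble, exactly as in the paper's argument. No gaps.
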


\begin{proof}[Proof of \cref{lem:interior}]
For the direct inclusion, start with any element~\(\gbl\) of~\(\gambles_{\weakgeq0}^{\filter}+\gambles_{\stronggt0}\), so \(\gbl=\altgbl+\altgbltoo\) with~\(\altgbl\in\gambles_{\weakgeq0}^{\filter}\) and \(\altgbltoo\in\gambles_{\stronggt0}\).
Then there's some~\(B\in\filter\) such that \(\indof{B}\altgbl\geq0\), and since \(\altgbltoo\stronggt0\), we have that \(\inf\group{\altgbl+\altgbltoo\vert B}\geq\inf\group{\altgbltoo\vert B}\geq\inf\altgbltoo>0\), so \(\gbl=\altgbl+\altgbltoo\in\gambles_{\stronggt0}^{\filter}\).

For the reverse inclusion, start with any element~\(\gbl\) of \(\gambles_{\stronggt0}^{\filter}\), so there's some~\(B\in\filter\) such that \(\inf\group{\gbl\vert B}>0\).
Then \(\gbl=(\gbl-\frac{1}{2}\inf\group{\gbl\vert B})+\frac{1}{2}\inf\group{\gbl\vert B}\), with~\(\inf(\gbl-\frac{1}{2}\inf\group{\gbl\vert B}\vert B)=\inf\group{\gbl\vert B}-\frac12\inf\group{\gbl\vert B}=\frac12\inf\group{\gbl\vert B}\geq0\) and \(\frac{1}{2}\inf\group{\gbl\vert B}\stronggt0\), so \(\gbl\in\gambles_{\weakgeq0}^{\filter}+\gambles_{\stronggt0}\).
\end{proof}

\begin{proof}[Proof of \cref{prop::postulates:satisfied:propositional}]
We've already argued in \cref{prop::AD:model:revision:axioms:obeyed} that \labelcref{axiom:revision:1,axiom:revision:2,axiom:revision:3,axiom:revision:5,axiom:revision:7} hold for general AD-models, so it only remains to prove that \labelcref{axiom:revision:4,axiom:revision:8} are obeyed.

For \labelcref{axiom:revision:4}, assume that \(\M_{\filter}\) and \(\event\) are \(\bgM\)-consistent.
As proved in \cref{prop::propositional:expansion:consistency}, this amounts to requiring logical consistency for the set of propositions~\(\filter\cup\set{\event}\), or, equivalently, to requiring that \(\emptyset\notin\filterbase[\filter\cup\set{\event}]\).
Under this condition, we infer from \cref{prop::propositional:expansion:operator,prop::propositional:revision:operator} that, indeed, \(\expandof{\M_{\filter}\given\event}=\M_{\filter[\filter\cup\set{\event}]}=\reviseof{\M_{\filter}\given\event}\).

For \labelcref{axiom:revision:8}, assume that \(\reviseof{\M_{\filter}\given\event[1]}\) and \(\event[2]\) are \(\bgM\)-consistent.
When the original AD-model~\(\M_{\filter}\) is consistent with \(\event[1]\), so when \(\filter\cup\set{\event[1]}\) is logically consistent, \(\reviseof{\M_{\filter}\given\event[1]}=\M_{\filter\cup\set{\event[1]}}\).
For this revised AD-model to be consistent with \(\event[2]\), we must require the set of propositions \(\filter\cup\set{\event[1]}\cup\set{\event[2]}\) to be logically consistent.
On the other hand, when \(\filter\cup\set{\event[1]}\) isn't logically consistent, the revised AD-model becomes \(\M_{\filter[\set{\event[1]}]}\), and we must then demand that \(\emptyset\notin\filterbase[\set{\event[1]}\cup\set{\event[2]}]\), so that \(\event[1]\cap\event[2]\neq\emptyset\).

Let's first look in more detail at the first case, where we assume that \(\filter\cup\set{\event[1]}\cup\set{\event[2]}\) is logically consistent.
Expanding \(\reviseof{\M\given\event[1]}\) with \(\event[2]\) then results in
\begin{equation*}
\expandof{\reviseof{\M\given\event[1]}\given\event[2]}
=\M_{\filter[\filter\cup\set{\event[1]}\cup\set{\event[2]}]}.
\end{equation*}
For the filter base for this model, we find that
\begin{align*}
\filterbase[\filter\cup\set{\event[1]}\cup\set{\event[2]}]
&=\set[\bigg]{\bigcap_{k=1}^{n}A_k\colon n\in\naturals,A_k\in\filter\cup\set{\event[1]}\cup\set{\event[2]}} \\
&=\filter\cup\set{\event[1]\cap B\colon B\in\filter}\cup\set{\event[2]\cap B\colon B\in\filter}\cup\set{\event[1]\cap\event[2]\cap B\colon B\in\filter}.
\end{align*}
The smallest proper filter that includes this filter base is then given by
\begin{multline*}
\filter[\filter\cup\set{\event[1]}\cup\set{\event[2]}]\\
\begin{aligned}
&=\set{B\subseteq\states\colon\group{\exists D\in\filterbase[\filter\cup\set{\event[1]}\cup\set{\event[2]}]}D\subseteq B}\\
&=\filter\cup\set{B\subseteq\states\colon\group{\exists D\in\filter}\,\event[1]\cap D\subseteq B}\\
&\quad\qquad\cup\set{B\subseteq\states\colon\group{\exists D\in\filter}\,\event[2]\cap D\subseteq B}\cup\set{B\subseteq\states\colon\group{\exists D\in\filter}\,\event[1]\cap\event[2]\cap D\subseteq B}\\
&=\filter\cup\set{B\subseteq\states\colon\group{\exists D\in\filter}\,\event[1]\cap\event[2]\cap D\subseteq B}.
\end{aligned}
\end{multline*}
Let's now look at the revision \(\reviseof{\M\given\event[1]\cap\event[2]}\), for which \cref{eq::propositional:revision:operator:definition} yields under our assumption of consistency that \(\reviseof{\M\given\event[1]\cap\event[2]}=\M_{\filter[\filter\cup\set{\event[1]\cap\event[2]}]}\).
Once again, we turn our attention to the filter base and the corresponding proper filter:
\begin{equation*}
\filterbase[\filter\cup\set{\event[1]\cap\event[2]}]
=\set[\bigg]{\bigcap_{k=1}^{n}A_k\colon n\in\naturals,A_k\in\filter\cup\set{\event[1]\cap\event[2]}}=\filter\cup\set{\event[1]\cap\event[2]\cap B\colon B\in\filter}
\end{equation*}
and
\begin{align*}
\filter[\filter\cup\set{\event[1]\cap\event[2]}]
&=\set{B\subseteq\states\colon\group{\exists D\in\filterbase[\filter\cup\set{\event[1]\cap\event[2]}]}\,D\subseteq B}\\
&=\filter\cup\set{B\subseteq\states\colon\group{\exists D\in\filter}\,\event[1]\cap\event[2]\cap D\subseteq B}.
\end{align*}
Hence, \(\filter[\filter\cup\set{\event[1]}\cup\set{\event[2]}]=\filter[\filter\cup\set{\event[1]\cap\event[2]}]\), and therefore \(\expandof{\reviseof{\M_{\filter}\given\event[1]}\given\event[2]}=\reviseof{\M\given\event[1]\cap\event[2]}\).

We now turn to a more detailed investigation of the second case, where \(\filter\cup\set{\event[1]}\) isn't logically consistent, so \(\emptyset\in\filterbase[\filter\cup\set{\event[1]}]\).
The requirement for the \(\bgM\)-consistency of \(\reviseof{\M\given\event[1]}\) with \(\event[2]\) becomes \(\emptyset\notin\filterbase[\set{\event[1]}\cup\set{\event[2]}]\), and the corresponding expansion \(\expandof{\reviseof{\M\given\event[1]}\given\event[2]}=\M_{\filter[\set{\event[1]}\cup\set{\event[2]}]}\).
But \(\emptyset\in\filterbase[\filter\cup\set{\event[1]}]\) also implies that \(\emptyset\in\filterbase[\set{\event[1]}\cup\set{\event[2]}]\), so we find for the revision that \(\reviseof{\M\given\event[1]\cap\event[2]}=\M_{\filter[\set{\event[1]\cap\event[2]}]}\).
Here too, we look at the filter bases and filters and compare them:
\begin{align*}
\filterbase[\set{\event[1]}\cup\set{\event[2]}]
&=\set{\event[1]}\cup\set{\event[2]}\cup\set{\event[1]\cap\event[2]}\\
\filter[\set{\event[1]}\cup\set{\event[2]}]
&=\set{B\subseteq\states\colon\group{\exists D\in\filterbase[\set{\event[1]}\cup\set{\event[2]}]}\, D\subseteq B}\\
&=\set{B\subseteq\states\colon\event[1]\subseteq B}\cup\set{B\subseteq\states\colon\event[2]\subseteq B}\cup\set{B\subseteq\states\colon\event[1]\cap\event[2]\subseteq B}\\
&=\set{B\subseteq\states\colon\event[1]\cap\event[2]\subseteq B}\\
\filterbase[\set{\event[1]\cap\event[2]}]
&=\event[1]\cap\event[2]\\
\filter[\set{\event[1]\cap\event[2]}]
&=\set{B\subseteq\states\colon\event[1]\cap\event[2]\subseteq B}.
\end{align*}
Since, we see that \(\filter[\set{\event[1]}\cup\set{\event[2]}]=\filter[\set{\event[1]\cap\event[2]}]\), we're now done.
\end{proof}

\subsection{For \cref{sec::full:conditional:previsions}}

\begin{proof}[Proof of \cref{prop::previsional:consistency}]
If we recall \cref{eq::consistency}, we see that the condition for this consistency is given by~\(\posiof{\mathscr{L}_{\prevof{\smallbolleke\given\smallbolleke}}\cup\gambles_{\weakgeq0}}\cap-(\emptyset\cup\gambles_{\stronggt0})=\emptyset\).
We now have the following chain of equivalences:
\begin{multline*}
\posiof{\mathscr{L}_{\prevof{\smallbolleke\given\smallbolleke}}\cup\gambles_{\weakgeq0}}\cap-(\emptyset\cup\gambles_{\stronggt0})=\emptyset\\
\begin{aligned}
&\ifandonlyif0\notin\posiof{\mathscr{L}_{\prevof{\smallbolleke\given\smallbolleke}}\cup\gambles_{\weakgeq0}}+(\emptyset\cup\gambles_{\stronggt0})\\
&\ifandonlyif0\notin\group[\big]{\posiof{\mathscr{L}_{\prevof{\smallbolleke\given\smallbolleke}}}\cup\group{\posiof{\mathscr{L}_{\prevof{\smallbolleke\given\smallbolleke}}}+\gambles_{\weakgeq0}}\cup\gambles_{\weakgeq0}}+\gambles_{\stronggt0}\\
&\ifandonlyif0\notin\group[\big]{\group{\posiof{\mathscr{L}_{\prevof{\smallbolleke\given\smallbolleke}}}+\gambles_{\weakgeq0}}\cup\gambles_{\weakgeq0}}+\gambles_{\stronggt0}
&&\reason{\(0\in\gambles_{\weakgeq0}\)}\\
&\ifandonlyif0\notin(\posiof{\mathscr{L}_{\prevof{\smallbolleke\given\smallbolleke}}}+\gambles_{\stronggt0})\cup\gambles_{\stronggt0}
&&\reason{\(\gambles_{\weakgeq0}+\gambles_{\stronggt0}=\gambles_{\stronggt0}\)}\\
&\ifandonlyif0\notin\posiof{\mathscr{L}_{\prevof{\smallbolleke\given\smallbolleke}}}+\gambles_{\stronggt0},
&&\reason{\(0\notin\gambles_{\stronggt0}\)}
\end{aligned}
\end{multline*}
so the \(\bgM\)-consistency of the AD-assessment~\(\A_{\prevof{\smallbolleke\given\smallbolleke}}\) is equivalent to
\begin{multline*}
\sup\group[\bigg]{\sum_{k=1}^n\lambda_k\indof{B_k}\group{\gbl_k-\prevof{\gbl_k\given B_k}+\epsilon_k}}\geq0\\
\text{ for all \(n\in\naturals\), \(\lambda_k,\epsilon_k\in\posreals\), \(\gbl_k\in\gambles\) and \(B_k\in\powersetof{\states}\setminus\set{\emptyset}\).}
\end{multline*}
Since we assumed that the full conditional prevision~\(\prevof{\bolleke\given\bolleke}\) is coherent, we know that it satisfies the avoiding sure loss condition~\labelcref{eq::avoiding:sure:loss}.
This implies that the condition for \(\bgM\)-consistency will always be obeyed, since
\begin{multline*}
\sup\group[\bigg]{\sum_{k=1}^n\lambda_k\indof{B_k}\group{\gbl_k-\prevof{\gbl_k\given B_k}+\epsilon_k}}\\
\geq\sup\group[\bigg]{\sum_{k=1}^n\lambda_k\indof{B_k}\group{\gbl_k-\prevof{\gbl_k\given B_k}+\epsilon_k}\vert\bigcup_{k=1}^{n}B_k}
\geq0.
\qedhere
\end{multline*}
\end{proof}

\begin{proof}
According to \cref{eq::model:closure}, we find the following AD-model in~\(\adfMsabove{\bgM}\):
\begin{align*}
\M_{\prevof{\smallbolleke\given\smallbolleke}}
\coloneqq&\closureof{\Msabove{\bgM}}{\A_{\prevof{\smallbolleke\given\smallbolleke}}}\\
=&\Adelim[\big]{\posiof{\mathscr{L}_{\prevof{\smallbolleke\given\smallbolleke}}\cup\gambles_{\weakgeq0}}}{-\group[\big]{\shullof{\emptyset\cup\gambles_{\stronggt0}}\cup(\shullof{\emptyset\cup\gambles_{\stronggt0}}+\posiof{\mathscr{L}_{\prevof{\smallbolleke\given\smallbolleke}}\cup\gambles_{\weakgeq0}})}}.
\end{align*}
Recalling that \(\shullof{\emptyset\cup\gambles_{\stronggt0}}=\gambles_{\stronggt0}\) and
\begin{align*}
\posiof{\mathscr{L}_{\prevof{\smallbolleke\given\smallbolleke}}\cup\gambles_{\weakgeq0}}
&=\posiof{\mathscr{L}_{\prevof{\smallbolleke\given\smallbolleke}}}\cup\posiof{\gambles_{\weakgeq0}}\cup\group{\posiof{\mathscr{L}_{\prevof{\smallbolleke\given\smallbolleke}}}+\gambles_{\weakgeq0}}\\
&=\gambles_{\weakgeq0}\cup\group{\posiof{\mathscr{L}_{\prevof{\smallbolleke\given\smallbolleke}}}+\gambles_{\weakgeq0}},
&&\reason{\(0\in\gambles_{\weakgeq0}\)}
\end{align*}
we can simplify this to:
\begin{align*}
M_{\prevof{\smallbolleke\given\smallbolleke}}
&=\Adelim[\big]{\gambles_{\weakgeq0}\cup\group{\posiof{\mathscr{L}_{\prevof{\smallbolleke\given\smallbolleke}}}+\gambles_{\weakgeq0}}}{-\group[\big]{\gambles_{\stronggt0}+\group{\gambles_{\weakgeq0}\cup\group{\posiof{\mathscr{L}_{\prevof{\smallbolleke\given\smallbolleke}}}+\gambles_{\weakgeq0}}}}}\\
&=\Adelim{\gambles_{\weakgeq0}\cup\group{\posiof{\mathscr{L}_{\prevof{\smallbolleke\given\smallbolleke}}}+\gambles_{\weakgeq0}}}{-\group{\gambles_{\stronggt0}\cup\group{\posiof{\mathscr{L}_{\prevof{\smallbolleke\given\smallbolleke}}}+\gambles_{\stronggt0}}}},
\end{align*}
where we took into account that \(0\in\gambles_{\weakgeq0}\) and \(\gambles_{\weakgeq0}+\gambles_{\stronggt0}=\gambles_{\stronggt0}\).
\end{proof}

\begin{proof}[Proof of the statement in~\cref{eq::prevision:model:correspondence}]
Fix any regular event~\(B\in\powersetof{\states}\setminus\set{\emptyset}\).
First, it's easy to check that the constant gamble~\(1\) is an interior point, and therefore an order unit, for the convex cone \(\M_\acc\altcondon B\), which guarantees that for any gamble~\(\gbl\in\gambles\), there are real~\(\alpha,\beta\) for which \(\gbl-\alpha\in\M_\acc\altcondon B\) and \(\beta-\gbl\in\M_\acc\altcondon B\), and therefore the maps~\(\lprev_{\M}\group{\bolleke\vert B}\) and \(\uprev_{\M}\group{\bolleke\vert B}\) are real-valued and \(\lprev_{\M}\group{\bolleke\vert B}\leq\uprev_{\M}\group{\bolleke\vert B}\).
Fix any gamble~\(\gbl\in\gambles\), and observe that, by \cref{eq::model:conditional:full:prevision},
\begin{align*}
\lprev_{\M_{\prevof{\smallbolleke\given\smallbolleke}}}\group{\gbl\condon B}
&=\sup\set{\alpha\in\reals\given\indof{B}\group{\gbl-\alpha}\in\group{\M_{\prevof{\smallbolleke\given\smallbolleke}}}_\acc}\\
&=\sup\set{\alpha\in\reals\given\indof{B}\group{\gbl-\alpha}\in\gambles_{\weakgeq0}\cup\group{\posiof{\mathscr{L}_{\prevof{\smallbolleke\given\smallbolleke}}}+\gambles_{\weakgeq0}}}.
\end{align*}
Fix any real~\(\epsilon>0\), then \(\indof{B}\group{\gbl-\prevof{\gbl\given B}+\epsilon}\in\mathscr{L}_{\prevof{\smallbolleke\given\smallbolleke}}\subseteq\posiof{\mathscr{L}_{\prevof{\smallbolleke\given\smallbolleke}}}+\gambles_{\weakgeq0}\), so it follows that
\begin{equation*}
\prevof{\gbl\condon B}-\epsilon
\leq\lprev_{\M_{\prevof{\smallbolleke\given\smallbolleke}}}\group{\gbl\condon B},
\text{ and similarly, }
\prevof{\gbl\condon B}+\epsilon\geq\uprev_{\M_{\prevof{\smallbolleke\given\smallbolleke}}}\group{\gbl\condon B}.
\end{equation*}
Since \(\lprev_{\M_{\prevof{\smallbolleke\given\smallbolleke}}}\group{\gbl\condon B}\leq\uprev_{\M_{\prevof{\smallbolleke\given\smallbolleke}}}\group{\gbl\condon B}\), this implies that
\begin{equation*}
\prevof{\gbl\condon B}-\epsilon
\leq\lprev_{\M_{\prevof{\smallbolleke\given\smallbolleke}}}\group{\gbl\condon B}
\leq\uprev_{\M_{\prevof{\smallbolleke\given\smallbolleke}}}\group{\gbl\condon B}\leq\prevof{\gbl\condon B}+\epsilon.
\end{equation*}
Since this holds for all~\(\epsilon>0\), we conclude that \(\lprev_{\M_{\prevof{\smallbolleke\given\smallbolleke}}}\group{\gbl\condon B}=\uprev_{\M_{\prevof{\smallbolleke\given\smallbolleke}}}\group{\gbl\condon B}=\prevof{\gbl\condon B}\), as required.
\end{proof}

\begin{proof}[Proof of the statement in~\cref{eq::model:prevision:correspondence}]
If \(\lprev_{\M}\group{\bolleke\condon\bolleke}\) is a coherent full conditional prevision, then clearly \(\M_{\lprev_{\M}\group{\smallbolleke\condon\smallbolleke}}\) is a full previsional AD-model, so the converse implication is immediate.
For the direct implication, assume that \(\M\in\prevadfMs\), so \(\M=\M_{\prevof{\smallbolleke\condon\smallbolleke}}\) for some coherent full conditional prevision~\(\prevof{\bolleke\condon\bolleke}\).
It's now enough to show that this implies that \(\prevof{\bolleke\condon\bolleke}=\lprev_{\M}\group{\bolleke\condon\bolleke}\), which is an immediate consequence of the result in \cref{eq::prevision:model:correspondence}.
\end{proof}

\begin{proof}[Proof of \cref{prop::previsional:revision:conditionability}]
Fix any regular event~\(\event\in\powersetof{\states}\setminus\set{\emptyset}\), then we have to prove that \(\M_\acc\altcondon{\event}\cap-\gambles_{\stronggt0}=\emptyset\), with \(\M_\acc=\group{\posiof{\mathscr{L}_{\prevof{\smallbolleke\given\smallbolleke}}}+\gambles_{\weakgeq0}}\cup\gambles_{\weakgeq0}\).
As proved in \cref{lem::previsional:auxiliary:result},
\begin{equation*}
\M_\acc\altcondon{\event}
=\group{\posiof{\mathscr{L}_{\prev_{\event}\group{\smallbolleke\condon\smallbolleke}}}+\gambles^{\set\event}_{\weakgeq0}}\cup\gambles^{\set\event}_{\weakgeq0},
\end{equation*}
and therefore the conditionability condition reduces to
\begin{equation*}
\gambles^{\set\event}_{\weakgeq0}\cap-\gambles_{\stronggt0}=\emptyset
\text{ and }
\group{\posiof{\mathscr{L}_{\prev_{\event}\group{\smallbolleke\condon\smallbolleke}}}+\gambles^{\set\event}_{\weakgeq0}}\cap-\gambles_{\stronggt0}=\emptyset.
\end{equation*}
The first condition can be rewritten as \(0\notin\gambles^{\set\event}_{\weakgeq0}+\gambles_{\stronggt0}\), and we infer from \cref{lem:conditional:desirable:background} that
\begin{equation*}
\gambles^{\set\event}_{\weakgeq0}+\gambles_{\stronggt0}
=\gambles_{\weakgeq0}+\indifset[\event]+\gambles_{\stronggt0}
=\gambles_{\stronggt0}+\indifset[\event],
\end{equation*}
so we're led to the requirement that \(0\notin\gambles_{\stronggt0}+\indifset[\event]\), which is the condition for the regularity of the event~\(\event\), and therefore satisfied by assumption.

The second condition can be rewritten as \(0\notin\posiof{\mathscr{L}_{\prev_{\event}\group{\smallbolleke\condon\smallbolleke}}}+\gambles^{\set\event}_{\weakgeq0}+\gambles_{\stronggt0}\), which, again invoking \cref{lem:conditional:desirable:background}, reduces to \(0\notin\posiof{\mathscr{L}_{\prev_{\event}\group{\smallbolleke\condon\smallbolleke}}}+\gambles_{\stronggt0}+\indifset[\event]\).
Again taking into account \cref{lem:conditional:desirable:background}, this means that for all~\(m\in\naturals\), \(\mu_\ell,\delta_\ell\in\posreals\), \(\altgbl_\ell\in\gambles\) and \(C_\ell\in\powersetof{\event}\setminus\emptyset\), it must be that
\begin{equation*}
\sup\group[\bigg]{\sum_{\ell=1}^m\mu_\ell\indof{C_\ell}\group{\altgbl_\ell-\prevof{\altgbl_\ell\given C_\ell}+\delta_\ell}\Big\vert\event}\geq0,
\end{equation*}
which is guaranteed to hold by the condition \labelcref{eq::avoiding:sure:loss}.
\end{proof}

\begin{lemma}\label{lem::previsional:auxiliary:result}
Consider any regular event \(\event\in\powersetof{\states}\backslash\set{\emptyset}\) and any full previsional AD-model \(\M_{\prevof{\smallbolleke\given\smallbolleke}}\in\prevadfMs\).
Then
\begin{align}
\group{\posiof{\mathscr{L}_{\prevof{\smallbolleke\condon\smallbolleke}}}+\gambles_{\weakgeq0}}\altcondon\event
&=\posiof{\mathscr{L}_{\prev_{\event}\group{\smallbolleke\condon\smallbolleke}}}+\gambles^{\set\event}_{\weakgeq0}
\label{eq::previsional:auxiliary:result:accepts}\\
\group{\posiof{\mathscr{L}_{\prevof{\smallbolleke\condon\smallbolleke}}}+\gambles_{\stronggt0}}\altcondon\event
&\subseteq\group{\posiof{\mathscr{L}_{\prevof{\smallbolleke\condon\smallbolleke}}}+\gambles_{\weakgeq0}}\altcondon\event+\gambles_{\stronggt0}
\label{eq::previsional:auxiliary:result:desirs}\\
\gambles_{\weakgeq0}\altcondon\event
&=\gambles^{\set\event}_{\weakgeq0}
\label{eq::previsional:auxiliary:result:vacuous:accepts}\\
\gambles_{\stronggt0}\altcondon\event
&=\emptyset\text{ if \(\event\neq\states\)}.
\label{eq::previsional:auxiliary:result:vacuous:desirs}
\end{align}
\end{lemma}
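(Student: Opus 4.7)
The plan is to handle the four claims separately, in increasing order of difficulty, starting with the direct computations for \labelcref{eq::previsional:auxiliary:result:vacuous:accepts,eq::previsional:auxiliary:result:vacuous:desirs}. For these, I would unfold the definition of the \(\altcondon\event\) operation and note that \(\indof{\event}\gbl\) takes the value \(\gbl\group\state\) on \(\event\) and \(0\) off \(\event\); its global infimum equals \(\min\set{\inf\group{\gbl\vert\event},0}\) when \(\event\subsetneq\states\), and simply \(\inf\gbl\) when \(\event=\states\). This immediately yields \labelcref{eq::previsional:auxiliary:result:vacuous:accepts}, since \(\inf\group{\indof{\event}\gbl}\geq0\) is equivalent to \(\inf\group{\gbl\vert\event}\geq0\). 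For \labelcref{eq::previsional:auxiliary:result:vacuous:desirs}, the strict inequality \(\inf\group{\indof{\event}\gbl}>0\) cannot hold when \(\event\neq\states\), because \(\indof{\event}\gbl\) attains the value \(0\) off \(\event\).

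For the inclusion \labelcref{eq::previsional:auxiliary:result:desirs}, the plan is a short argument. Given \(\gbl\) with \(\indof{\event}\gbl=h+k\), where \(h\in\posiof{\mathscr{L}_{\prevof{\smallbolleke\condon\smallbolleke}}}\) and \(k\in\gambles_{\stronggt0}\), I would set \(\alpha\coloneqq\tfrac12\inf k>0\) and write \(\gbl=\group{\gbl-\alpha}+\alpha\). A direct computation then gives \(\indof{\event}\group{\gbl-\alpha}=h+\group{k-\alpha\indof{\event}}\), and \(k-\alpha\indof{\event}\in\gambles_{\weakgeq0}\) because \(k\geq\inf k>\alpha\) on \(\event\), while \(k\geq0\) and \(\alpha\indof{\event}=0\) off \(\event\). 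Hence \(\gbl-\alpha\) belongs to \(\group{\posiof{\mathscr{L}_{\prevof{\smallbolleke\condon\smallbolleke}}}+\gambles_{\weakgeq0}}\altcondon\event\), and \(\alpha\in\gambles_{\stronggt0}\), delivering the claimed inclusion.

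For the equality \labelcref{eq::previsional:auxiliary:result:accepts}, the \(\supseteq\) direction is routine: if \(\gbl=h+k\) with \(h=\sum_{j}\lambda_j\indof{C_j}\group{\gbl_j-\prev\group{\gbl_j\condon C_j}+\epsilon_j}\in\posiof{\mathscr{L}_{\prev_{\event}\group{\smallbolleke\condon\smallbolleke}}}\) (so each \(C_j\subseteq\event\)) and \(k\in\gambles^{\set\event}_{\weakgeq0}\), then \(\indof{\event}h=h\) since \(\indof{\event}\indof{C_j}=\indof{C_j}\) when \(C_j\subseteq\event\), and \(h\) itself lies in \(\posiof{\mathscr{L}_{\prevof{\smallbolleke\condon\smallbolleke}}}\) because every term is already a member of \(\mathscr{L}_{\prevof{\smallbolleke\condon\smallbolleke}}\); in addition \(\indof{\event}k\in\gambles_{\weakgeq0}\) by \labelcref{eq::previsional:auxiliary:result:vacuous:accepts}. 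So \(\indof{\event}\gbl=h+\indof{\event}k\) lies in the required Minkowski sum.

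The harder \(\subseteq\) inclusion of \labelcref{eq::previsional:auxiliary:result:accepts} is the main obstacle. Starting from a representation \(\indof{\event}\gbl=\sum_j\lambda_j\indof{B_j}\group{\gbl_j-\prev\group{\gbl_j\condon B_j}+\epsilon_j}+m\) with \(m\in\gambles_{\weakgeq0}\), I would partition the indices into those with \(B_j\subseteq\event\), those with \(C_j\coloneqq B_j\cap\event\) non-empty but \(B_j\not\subseteq\event\), and those with \(B_j\cap\event=\emptyset\); the last group vanishes after multiplication by \(\indof{\event}\), and the first group already lies in \(\mathscr{L}_{\prev_{\event}\group{\smallbolleke\condon\smallbolleke}}\) as is. The crux will be to rewrite each middle-group term after restriction to \(\event\), using the coherence axiom \labelcref{axiom:coherence:conditional:bayes} (Bayes' rule) applied with \(C\instantiateas\event\), so that \(\prev\group{\gbl_j\condon B_j}\) gets expressed in terms of \(\prev\group{\gbl_j\condon C_j}\). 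This produces corrective constants \(\lambda_j\sqgroup{\prev\group{\gbl_j\condon C_j}-\prev\group{\gbl_j\condon B_j}}\) multiplying \(\indof{C_j}\); these can be of either sign, and the main obstacle is absorbing the negative ones by combining compensating bets from \(\posiof{\mathscr{L}_{\prev_{\event}\group{\smallbolleke\condon\smallbolleke}}}\) (which, as the identity \(\indof{C}\group{\gbl-\prev\group{\gbl\condon C}+\delta}+\indof{C}\group{-\gbl+\prev\group{\gbl\condon C}+\delta}=2\delta\indof{C}\) shows, contains every positive scalar multiple of \(\indof{C}\) for any non-empty \(C\subseteq\event\)), together with the Archimedean freedom in the constants \(\epsilon_j\) and the non-negativity of \(m\) on \(\event\). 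The technical heart of the proof will be showing that such a candidate \(h'\in\posiof{\mathscr{L}_{\prev_{\event}\group{\smallbolleke\condon\smallbolleke}}}\) can always be chosen so that \(h'\leq\gbl\) on \(\event\).
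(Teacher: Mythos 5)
Your treatments of \labelcref{eq::previsional:auxiliary:result:vacuous:accepts,eq::previsional:auxiliary:result:vacuous:desirs}, of the inclusion \(\supseteq\) in \labelcref{eq::previsional:auxiliary:result:accepts}, and of \labelcref{eq::previsional:auxiliary:result:desirs} are correct; in fact your argument for \labelcref{eq::previsional:auxiliary:result:desirs} (splitting off the constant \(\alpha=\tfrac12\inf k\)) is simpler than the paper's, which derives that inclusion through the same machinery it uses for \labelcref{eq::previsional:auxiliary:result:accepts}.

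However, the direct inclusion in \labelcref{eq::previsional:auxiliary:result:accepts}---the real content of the lemma---is left as a plan with an explicitly acknowledged hole, and the plan as stated does not close. Bayes' Rule \labelcref{axiom:coherence:conditional:bayes} gives \(\prevof{\indof{\event}\gbl_j\given B_j}=\prevof{\gbl_j\given B_j\cap\event}\prevof{\indof{\event}\given B_j}\), but it does not control the sign of \(\prevof{\gbl_j\given B_j\cap\event}-\prevof{\gbl_j\given B_j}\) for an individual term; when this difference is negative, the deficit \(\lambda_j\group{\prevof{\gbl_j\given B_j}-\prevof{\gbl_j\given B_j\cap\event}}\) can be arbitrarily large compared with the available slack \(\lambda_j\epsilon_j\), and adding positive multiples of \(\indof{C_j}\) only \emph{increases} your candidate \(h'\), whereas you need \(h'\leq\gbl\) on \(\event\), so the proposed compensating bets push in the wrong direction. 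The paper avoids this bookkeeping entirely with two steps that are missing from your plan: first, \cref{lem::boundedness:posi:L} collapses the whole positive combination into a \emph{single} called-off bet, \(\indof{\event}\gbl\weakgeq\indof{B}\group{\altgbl-\prevof{\altgbl\given B}+\epsilon}\) with \(B=\bigcup_kB_k\) (this is where Bayes' Rule is actually put to work, together with the common lower bound \(\min_k\epsilon_k\)); second, the assumed dominance itself, restricted to \(\compof{\event}\), forces the favourable inequality \(\prevof{\altgbl\given B\cap\event}\geq\prevof{\altgbl\given B}\), via \labelcref{axiom:coherence:conditional:bounds,axiom:coherence:conditional:linearity,axiom:coherence:conditional:bayes} and a case analysis on whether \(\prevof{\indof{\compof{\event}}\given B}\) is zero or positive. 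Without an argument of this kind---or some other mechanism that genuinely pins down the sign of the correction---the ``technical heart'' you defer is exactly the unproven part, so your proof of \labelcref{eq::previsional:auxiliary:result:accepts} is incomplete.
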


\begin{proof}[Proof of \cref{lem::previsional:auxiliary:result}]
We begin with the proof of \cref{eq::previsional:auxiliary:result:accepts}.
For the converse inclusion, consider any~\(\gbl\in\posiof{\mathscr{L}_{\prev_{\event}\group{\smallbolleke\condon\smallbolleke}}}+\gambles^{\set\event}_{\weakgeq0}\), meaning that there are~\(m\in\naturals\), \(\mu_\ell,\delta_\ell\in\posreals\), \(\altgbl_\ell\in\gambles\), \(C_\ell\in\powersetof{\event}\setminus\emptyset\) and \(\altgbltoo\in\gambles\) with \(\indof{\event}\altgbltoo\weakgeq0\), such that \(\gbl=\sum_{\ell=1}^m\mu_\ell\indof{C_\ell}\group{\altgbl_\ell-\prevof{\altgbl_\ell\given C_\ell}+\delta_\ell}+\altgbltoo\), and therefore
\begin{equation*}
\indof{\event}\gbl
=\sum_{\ell=1}^m\mu_\ell\indof{C_\ell\cap\event}\group{\altgbl_\ell-\prevof{\altgbl_\ell\given C_\ell}+\delta_\ell}+\indof{\event}\altgbltoo
\weakgeq\sum_{\ell=1}^m\mu_\ell\indof{C_\ell}\group{\altgbl_\ell-\prevof{\altgbl_\ell\given C_\ell}+\delta_\ell},
\end{equation*}
so we find that, indeed, \(\gbl\in\group{\posiof{\mathscr{L}_{\prevof{\smallbolleke\condon\smallbolleke}}}+\gambles_{\weakgeq0}}\altcondon\event\).
For the direct inclusion, consider any~\(\gbl\in\group{\posiof{\mathscr{L}_{\prevof{\smallbolleke\condon\smallbolleke}}}+\gambles_{\weakgeq0}}\altcondon\event\), meaning that there are \(n\in\naturals\), \(\lambda_k,\epsilon_k\in\posreals\), \(\gbl_k\in\gambles\) and \(B_k\in\powersetof{\states}\setminus\set{\emptyset}\) such that \(\indof{\event}\gbl\weakgeq\sum_{k=1}^n\lambda_k\indof{B_k}\group{\gbl_k-\prevof{\gbl_k\given B_k}+\epsilon_k}\).
Exploiting \cref{lem::boundedness:posi:L}, we then find that there are~\(B\in\powersetof{\states}\setminus\set{\emptyset}\), \(\altgbl\in\gambles\) and \(\epsilon\in\posreals\) such that
\begin{equation*}
\indof{\event}\gbl
\weakgeq\sum_{k=1}^n\indof{B_k}\group{\gbl_k-\prevof{\gbl_k\given B_k}+\epsilon_k}
\weakgeq\indof{B}\group{\altgbl-\prevof{\altgbl\given B}+\epsilon},
\end{equation*}
and therefore also that \(\indof{\event}\gbl\weakgeq\indof{B\cap\event}\group{\altgbl-\prevof{\altgbl\given B}+\epsilon}\).
We'll therefore be done if we can show that \(\prevof{\altgbl\given B\cap\event}\geq\prevof{\altgbl\given B}\), because we'll then have that \(\indof{\event}\gbl\weakgeq\indof{B\cap\event}\group{\altgbl-\prevof{\altgbl\given B\cap\event}+\epsilon}\), and therefore
\begin{equation*}
\gbl
=\indof{\event}\gbl+\group{\gbl-\indof{\event}\gbl}
\weakgeq\underset{\in\posiof{\mathscr{L}_{\prev_{\event}\group{\smallbolleke\condon\smallbolleke}}}}{\underbrace{\indof{B\cap\event}\group{\altgbl-\prevof{\altgbl\given B\cap\event}+\epsilon}}}+\underset{\in\indifset[\event]}{\underbrace{\group{\gbl-\indof{\event}\gbl}}},
\end{equation*}
so \(\gbl\in\posiof{\mathscr{L}_{\prev_{\event}\group{\smallbolleke\condon\smallbolleke}}}+\indifset[\event]+\gambles_{\weakgeq0}\), and \(\gambles_{\weakgeq0}+\indifset[\event]=\gambles^{\set{\event}}_{\weakgeq0}\), by \cref{lem:conditional:desirable:background}.

So we now set out to prove that \(\prevof{\altgbl\given B\cap\event}\geq\prevof{\altgbl\given B}\).
We have the following sequence of implications, using the properties of coherent conditional previsions:
\begin{multline*}
\indof{\event}\gbl\weakgeq\indof{B}\group{\altgbl-\prevof{\altgbl\given B}+\epsilon}\\
\begin{aligned}
&\then0=\indof{\compof\event}\indof{\event}\gbl
\weakgeq\indof{B\cap\compof\event}\group{\altgbl-\prevof{\altgbl\given B}+\epsilon}\\
&\then0=\prevof{0\given B}\geq\prevof{\indof{B\cap\compof\event}\group{\altgbl-\prevof{\altgbl\given B}+\epsilon}\given B}
&&\reason{\labelcref{axiom:coherence:conditional:linearity}\&\labelcref{axiom:coherence:conditional:bounds}}\\
&\then0\geq\prevof{\indof{B\cap\compof\event}\altgbl\given B}-\prevof{\altgbl\given B}\prevof{\indof{B\cap\compof\event}\given B}+\epsilon\prevof{\indof{B\cap\compof\event}\given B}
&&\reason{\labelcref{axiom:coherence:conditional:linearity}}\\
&\then0\geq\prevof{\indof{\compof\event}\altgbl\given B} - \prevof{\altgbl\given B}\prevof{\indof{\compof\event}\given B}+\epsilon\prevof{\indof{\compof\event}\given B}
&&\reason{\labelcref{axiom:coherence:conditional:bayes}\&\labelcref{axiom:coherence:conditional:bounds}}\\
&\then0\geq-\epsilon\prevof{\indof{\compof\event}\given B}\weakgeq\prevof{\indof{\compof\event}\altgbl\given B}-\prevof{\altgbl\given B}\prevof{\indof{\compof\event}\given B}.
&&\reason{\labelcref{axiom:coherence:conditional:bounds}}
\end{aligned}
\end{multline*}
There are now two possible cases.

The first possibility is that \(\prevof{\indof{\compof\event}\given B}>0\).
Let's first show that then also \(\prevof{\indof{\event}\given B}>0 \).
Indeed, assume towards contradiction that \(\prevof{\indof{\event}\given B}=0\), so \(\prevof{\indof{\compof\event}\given B}=1\).
We infer from the argumentation above that then \(0\geq\prevof{\indof{\compof\event}\altgbl\given B}-\prevof{\altgbl\given B}+\epsilon\), so \(\prevof{\indof{\event}\altgbl\given B}\geq\epsilon>0\).
However, by Bayes' Rule [\labelcref{axiom:coherence:conditional:bayes}], \(\prevof{\indof{\event}\altgbl\given B}=\prevof{\altgbl\given B\cap\event}\prevof{\indof{\event}\given B}=0\), which is the desired contradiction.
So, we can proceed in the knowledge that both \(\prevof{\indof{\compof\event}\given B}>0\) and \(\prevof{\indof{\event}\given B}>0\).
But then we can infer the following sequence of implications, starting from the argumentation above:
\begin{multline*}
-\epsilon\prevof{\indof{\compof\event}\given B}\weakgeq\prevof{\indof{\compof\event}\altgbl\given B}-\prevof{\altgbl\given B}\prevof{\indof{\compof\event}\given B}\\
\begin{aligned}
&\then0>\prevof{\indof{\compof\event}\altgbl\given B}-\prevof{\altgbl\given B}\prevof{\indof{\compof\event}\given B}
&&\reason{\(\epsilon>0\) and \(\prevof{\indof{\compof\event}\given B}>0\)}\\
&\then\prevof{\altgbl\given B}\prevof{\indof{\compof\event}\given B}\weakgt\prevof{\indof{\compof\event}\altgbl\given B}\\
&\then\prevof{\altgbl\given B}\group{1-\prevof{\indof{\event}\given B}}>\prevof{\altgbl\given B}-\prevof{\indof{\event}\altgbl\given B}\\
&\then\prevof{\indof{\event}\altgbl\given B}>\prevof{\altgbl\given B}\prevof{\indof{\event}\given B}\\
&\then\prevof{\altgbl\given B\cap\event}\prevof{\indof{\event}\given B}>\prevof{\altgbl\given B}\prevof{\indof{\event}\given B}
&&\reason{Bayes' Rule [\labelcref{axiom:coherence:conditional:bayes}]}\\
&\then\prevof{\altgbl\given B\cap\event}>\prevof{\altgbl\given B}.
&&\reason{\(\prevof{\indof{\event}\given B}>0\)}
\end{aligned}
\end{multline*}

The second possibility is that \(\prevof{\indof{\compof\event}\given B}=0\), so \(\prevof{\indof{\event}\given B}=1\).
We then infer from Bayes' Rule [\labelcref{axiom:coherence:conditional:bayes}] that \(\prevof{\indof{\compof\event}\altgbl\given B}=\prevof{\altgbl\given B\cap\compof\event}\prevof{\indof{\compof\event}\given B}=0\), and therefore, again using Bayes' Rule [\labelcref{axiom:coherence:conditional:bayes}],
\begin{equation*}
\prevof{\altgbl\given B}
=\prevof{\indof{\event}\altgbl\given B}
=\prevof{\altgbl\given B\cap\event}\prevof{\indof{\event}\given B}
=\prevof{\altgbl\given B\cap\event}.
\end{equation*}
In both cases, we find that, as required, \(\prevof{\altgbl\given B}\geq\prevof{\altgbl\given B\cap\event}\).

Next, we turn to \cref{eq::previsional:auxiliary:result:desirs}.
Consider any~\(\gbl\in\group{\posiof{\mathscr{L}_{\prevof{\smallbolleke\condon\smallbolleke}}}+\gambles_{\stronggt0}}\altcondon\event\), which implies that there are \(n\in\naturals\), \(\lambda_k,\epsilon_k\in\posreals\), \(\gbl_k\in\gambles\) and \(B_k\in\powersetof{\states}\setminus\set{\emptyset}\) such that \(\indof{\event}\gbl\stronggt\sum_{k=1}^n\lambda_k\indof{B_k}\group{\gbl_k-\prevof{\gbl_k\given B_k}+\epsilon_k}\).
Exploiting \cref{lem::boundedness:posi:L}, we then find that there are~\(\altgbl\in\gambles\), \(B\in\powersetof{\states}\setminus\set{\emptyset}\) and \(\epsilon\in\posreals\) such that
\begin{equation*}
\indof{\event}\gbl
\stronggt\sum_{k=1}^n\indof{B_k}\group{\gbl_k-\prevof{\gbl_k\given B_k}+\epsilon_k}
\weakgeq\indof{B}\group{\altgbl-\prevof{\altgbl\given B}+\epsilon}.
\end{equation*}
Hence, \(\indof{\event}\gbl\stronggt\indof{B}\group{\altgbl-\prevof{\altgbl\given B}+\epsilon}\), and therefore also
\begin{equation*}
0
=\indof{\compof\event}\indof{\event}\gbl
\weakgeq\indof{B\cap\compof\event}\group{\altgbl-\prevof{\altgbl\given B}+\epsilon}.
\end{equation*}
Following exactly the same arguments as in the proof of \cref{eq::previsional:auxiliary:result:accepts}, we then find that \(\prevof{\altgbl\given B}\geq\prevof{\altgbl\given B\cap\event}\), so \(\indof{\event}\gbl\stronggt\indof{B}\group{\altgbl-\prevof{\altgbl\given B}+\epsilon}\weakgeq\indof{B}\group{\altgbl-\prevof{\altgbl\given B\cap\event}+\epsilon}\), and therefore also \(\gbl-\indof{B}\group{\altgbl-\prevof{\altgbl\given B\cap\event}+\epsilon}\stronggt0\).
Now, decompose \(\gbl\) as follows:
\begin{align*}
\gbl
&=\group{\gbl-\indof{\event}\gbl}+\indof{\event}\gbl\\
&=\underset{\in\indifset[\event]}{\underbrace{\group{\gbl-\indof{\event}\gbl}}}+\underset{\in\group{\posiof{\mathscr{L}_{\prevof{\smallbolleke\condon\smallbolleke}}}+\gambles_{\weakgeq0}}{\altcondon\event}}{\underbrace{\indof{B}\group{\altgbl-\prevof{\altgbl\given B\cap\event}+\epsilon}}}+\underset{\in\gambles_{\stronggt0}}{\underbrace{\group{\indof{\event}\gbl-\indof{B}\group{\altgbl-\prevof{\altgbl\given B\cap\event}+\epsilon}}}},
\end{align*}
so \(\gbl\in\group{\posiof{\mathscr{L}_{\prevof{\smallbolleke\condon\smallbolleke}}}+\gambles_{\weakgeq0}}{\altcondon\event}+\indifset[\event]+\gambles_{\stronggt0}\).
But,
\begin{align*}
\group{\posiof{\mathscr{L}_{\prevof{\smallbolleke\condon\smallbolleke}}}+\gambles_{\weakgeq0}}{\altcondon\event}+\indifset[\event]+\gambles_{\stronggt0}
&=\group{\posiof{\mathscr{L}_{\prevof{\smallbolleke\condon\smallbolleke}}}+\gambles_{\weakgeq0}}{\altcondon\event}+\gambles_{\stronggt0},
&&\reason{Lem.~\labelcref{lem:indifset:addition}}
\end{align*}
proving the desired inclusion.

For the proof of \cref{eq::previsional:auxiliary:result:vacuous:accepts}, observe that
\begin{equation*}
\gbl\in\gambles_{\weakgeq0}\altcondon\event
\ifandonlyif\indof{\event}\gbl\weakgeq0
\ifandonlyif\inf\group{\gbl\vert\event}\geq0
\ifandonlyif\gbl\in\gambles^{\set\event}_{\weakgeq0},
\text{ for any~\(\gbl\in\gambles\),}
\end{equation*}
and for the proof of \cref{eq::previsional:auxiliary:result:vacuous:desirs}, observe that \(\gbl\in\gambles_{\stronggt0}\altcondon\event\ifandonlyif\indof{\event}\gbl\stronggt0\ifandonlyif\inf\group{\indof{\event}\gbl}>0\), and \(\inf\group{\indof{\event}\gbl}>0\) is impossible unless \(\event=\states\).
\end{proof}

\begin{lemma}\label{lem::boundedness:posi:L}
Every element of~\(\posiof{\mathscr{L}_{\prev\group{\smallbolleke\condon\smallbolleke}}}\) point-wise dominates some element of~\(\mathscr{L}_{\prev\group{\smallbolleke\condon\smallbolleke}}\), and therefore \(\posiof{\mathscr{L}_{\prev\group{\smallbolleke\condon\smallbolleke}}}+\gambles_{\weakgeq0}=\mathscr{L}_{\prev\group{\smallbolleke\condon\smallbolleke}}+\gambles_{\weakgeq0}\).
\end{lemma}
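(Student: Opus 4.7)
The plan is to show that for an arbitrary element $h = \sum_{k=1}^n \lambda_k \indof{B_k}(\gbl_k - \prevof{\gbl_k\given B_k}+\epsilon_k)$ of $\posiof{\mathscr{L}_{\prev\group{\smallbolleke\condon\smallbolleke}}}$, one can explicitly construct a single element of $\mathscr{L}_{\prev\group{\smallbolleke\condon\smallbolleke}}$ that $h$ point-wise dominates. My candidate is built from $B\coloneqq\bigcup_{k=1}^n B_k$, the gamble $\altgbl\coloneqq\sum_{k=1}^n\lambda_k\indof{B_k}\group{\gbl_k-\prevof{\gbl_k\given B_k}}$, and the strictly positive constant $\delta\coloneqq\min_k\lambda_k\epsilon_k$. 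Since $B=\bigcup_kB_k$, every $\state\in B$ lies in some $B_k$, so $\sum_k\lambda_k\epsilon_k\indof{B_k}\group\state\geq\delta$, and off $B$ both sides vanish; hence $h\geq\indof{B}\group{\altgbl+\delta}$ point-wise.

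The key step, which I expect to be the main obstacle, is verifying that $\prevof{\altgbl\given B}=0$, since then $\indof{B}\group{\altgbl+\delta}=\indof{B}\group{\altgbl-\prevof{\altgbl\given B}+\delta}\in\mathscr{L}_{\prev\group{\smallbolleke\condon\smallbolleke}}$ as desired. Here I would invoke linearity \labelcref{axiom:coherence:conditional:linearity} to split $\prevof{\altgbl\given B}=\sum_k\lambda_k\bigl[\prevof{\indof{B_k}\gbl_k\given B}-\prevof{\gbl_k\given B_k}\prevof{\indof{B_k}\given B}\bigr]$, and then apply Bayes' Rule \labelcref{axiom:coherence:conditional:bayes}: since $B_k\subseteq B$ and $B_k\neq\emptyset$, we have $B\cap B_k=B_k\neq\emptyset$, so $\prevof{\indof{B_k}\gbl_k\given B}=\prevof{\gbl_k\given B\cap B_k}\prevof{\indof{B_k}\given B}=\prevof{\gbl_k\given B_k}\prevof{\indof{B_k}\given B}$, and every term cancels.

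With the point-wise domination established, the equality $\posiof{\mathscr{L}_{\prev\group{\smallbolleke\condon\smallbolleke}}}+\gambles_{\weakgeq0}=\mathscr{L}_{\prev\group{\smallbolleke\condon\smallbolleke}}+\gambles_{\weakgeq0}$ follows easily. The inclusion $\supseteq$ is immediate from $\mathscr{L}_{\prev\group{\smallbolleke\condon\smallbolleke}}\subseteq\posiof{\mathscr{L}_{\prev\group{\smallbolleke\condon\smallbolleke}}}$. For the reverse inclusion, any $h+w$ with $h\in\posiof{\mathscr{L}_{\prev\group{\smallbolleke\condon\smallbolleke}}}$ and $w\in\gambles_{\weakgeq0}$ can be rewritten as $h'+\group{\group{h-h'}+w}$, where $h'\in\mathscr{L}_{\prev\group{\smallbolleke\condon\smallbolleke}}$ is dominated by $h$, so that $h-h'\in\gambles_{\weakgeq0}$ and therefore $\group{h-h'}+w\in\gambles_{\weakgeq0}$.
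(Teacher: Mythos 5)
Your proposal is correct and follows essentially the same route as the paper's own proof: the same aggregated event \(B=\bigcup_{k=1}^nB_k\), the same minimal slack \(\min_k\lambda_k\epsilon_k\), and the same verification that \(\prevof{\altgbl\given B}=0\) via linearity and Bayes' Rule with \(B\cap B_k=B_k\) (the paper merely absorbs the \(\lambda_k\) into the gambles beforehand so that \(\prevof{\gbl_k\given B_k}=0\), a cosmetic difference). The concluding derivation of the set equality from point-wise domination also matches the paper's intent.
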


\begin{proof}[Proof of \cref{lem::boundedness:posi:L}]
It's clearly enough to prove the first statement of the lemma, because the second statement follows directly from it.
So consider any~\(\gbl\in\posiof{\mathscr{L}_{\prev\group{\smallbolleke\condon\smallbolleke}}}\), meaning that there are~\(n\in\naturals\), \(\lambda_k,\delta_k\in\posreals\), \(\altgbl_k\in\gambles\) and \( B_k\in\powersetof{\states}\setminus\set{\emptyset}\) such that
\begin{equation*}
\gbl
=\sum_{k=1}^n\lambda_k\indof{B_k}\group{\altgbl_k-\prevof{\altgbl_k\given B_k}+\delta_k}
=\sum_{k=1}^n\indof{B_k}\group{\lambda_k\altgbl_k-\prevof{\lambda_k\altgbl_k\given B_k}+\lambda_k\delta_k}.
\end{equation*}
Let \(\gbl_k\coloneqq\lambda_k\altgbl_k-\prevof{\lambda_k\altgbl_k\given B_k}\in\gambles\) and \(\epsilon_k\coloneqq\lambda_k\delta_k\in\posreals\), then \(\prevof{\gbl_k\given B_k}=0\) and
\begin{align*}
\gbl
=\sum_{k=1}^n\indof{B_k}\group{\gbl_k+\epsilon_k}
&\geq\sum_{k=1}^n\indof{B_k}\gbl_k+\indof{\bigcup_{k=1}^nB_k}\min\set{\epsilon_1,\epsilon_2,\dots,\epsilon_n}\\
&=\indof{\bigcup_{k=1}^nB_k}\group[\bigg]{\sum_{k=1}^n\indof{B_k}\gbl_k}+\indof{\bigcup_{k=1}^nB_k}\min\set{\epsilon_1,\epsilon_2,\dots,\epsilon_n}\\
&=\indof{\bigcup_{k=1}^nB_k}\group[\bigg]{\sum_{k=1}^n\indof{B_k}\gbl_k+\min\set{\epsilon_1,\epsilon_2,\dots,\epsilon_n}}
=\indof{B}\group{\altgbl+\epsilon},
\end{align*}
where we let \(\epsilon\coloneqq\min\set{\epsilon_1,\epsilon_2,\dots,\epsilon_n}\in\posreals\), \(B\coloneqq\bigcup_{k=1}^nB_k\) and \(\altgbl\coloneqq\sum_{k=1}^n\indof{B_k}\gbl_k\in\gambles\).
Observe that
\begin{equation*}
\prevof{\altgbl\vert B}
=\sum_{k=1}^n\prevof{\indof{B_k}\gbl_k\vert B}
=\sum_{k=1}^n\prevof{\gbl_k\vert B\cap B_k}\prevof{\indof{B_k}\given B}
=\sum_{k=1}^n\prevof{\gbl_k\vert B_k}\prevof{\indof{B_k}\given B}
=0,
\end{equation*}
where the second equality follows from applying Bayes' Rule [\labelcref{axiom:coherence:conditional:bayes}]; the third equality follows from the fact that \(B_k\subseteq B\) and therefore \(B_k\cap B=B_k\); and the last equality follows from \(\prevof{\gbl_k\vert B_k}=0\).
Hence, indeed, \(\gbl\geq\indof{B}\group{\altgbl+\epsilon}=\indof{B}\group{\altgbl-\prevof{\altgbl\vert B}+\epsilon}\), as required.
\end{proof}

\begin{lemma}\label{lem:conditional:desirable:background}
For any regular event~\(\event\in\powersetof{\states}\setminus\set{\emptyset}\) and any gamble~\(\gbl\in\gambles\), it holds that \(\gbl\in\gambles_{\stronggt0}+\indifset[\event]\ifandonlyif\inf\group{\gbl\condon{\event}}>0\) and also that \(\gbl\in\gambles_{\weakgeq0}+\indifset[\event]\ifandonlyif\inf\group{\gbl\condon{\event}}\geq0\ifandonlyif\indof{\event}\gbl\weakgeq0\ifandonlyif\gbl\in\gambles_{\weakgeq0}\altcondon\event\).
Hence, also \(\gambles_{\weakgeq0}+\indifset[\event]=\gambles^{\set\event}_{\weakgeq0}=\gambles_{\weakgeq0}\altcondon\event\) and \(\gambles_{\stronggt0}+\indifset[\event]=\gambles^{\set\event}_{\stronggt0}\).
\end{lemma}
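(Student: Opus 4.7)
\textbf{Proof plan for \cref{lem:conditional:desirable:background}.}
The plan is to prove the two biconditional statements about membership in $\gambles_{\stronggt0}+\indifset[\event]$ and $\gambles_{\weakgeq0}+\indifset[\event]$ by means of explicit decompositions, and then deduce the final two set identities by stringing the resulting equivalences together with the definitions of $\gambles^{\set\event}_{\weakgeq0}$, $\gambles^{\set\event}_{\stronggt0}$, and $\gambles_{\weakgeq0}\altcondon\event$. Throughout, I exploit the concrete description of the kernel: in the classical probability setting, $\indifset[\event]=\set{k\in\gambles\given\indof{\event}k=0}=\set{k\in\gambles\given k\vert_{\event}\equiv 0}$. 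Regularity of $\event$ enters only through $\event\neq\emptyset$, which is what makes $\inf\group{\bolleke\vert\event}$ well defined.

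For the forward direction of each biconditional, I would write any $\gbl=h+k$ with $h\in\gambles_{\stronggt0}$ (resp.\ $\gambles_{\weakgeq0}$) and $k\vert_{\event}\equiv 0$, and observe that then $\gbl\vert_{\event}=h\vert_{\event}$, so
\begin{equation*}
\inf\group{\gbl\vert\event}=\inf\group{h\vert\event}\geq\inf h,
\end{equation*}
which is strictly positive (resp.\ non-negative) by the assumption on $h$. For the converse directions, the key step is to exhibit suitable decompositions. In the strict case, set $\alpha\coloneqq\inf\group{\gbl\vert\event}>0$ and define
\begin{equation*}
h\coloneqq\indof{\event}\gbl+\tfrac{\alpha}{2}\indof{\compof\event}
\quad\text{and}\quad
k\coloneqq\gbl-h=\indof{\compof\event}\group{\gbl-\tfrac{\alpha}{2}};
\end{equation*}
then $\inf h\geq\alpha/2>0$ and $k\vert_{\event}\equiv 0$, so $\gbl\in\gambles_{\stronggt0}+\indifset[\event]$. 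In the non-strict case, take $h\coloneqq\indof{\event}\gbl$ and $k\coloneqq\indof{\compof\event}\gbl$; since $\inf\group{\gbl\vert\event}\geq0$, we get $\inf h\geq 0$ and $k\vert_\event\equiv 0$, yielding $\gbl\in\gambles_{\weakgeq0}+\indifset[\event]$.

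For the remaining chain of equivalences in the non-strict statement, I would note that $\inf\group{\gbl\vert\event}\geq 0$ is the same as $\indof{\event}\gbl\geq 0$ pointwise, because $\indof{\event}\gbl$ vanishes on $\compof\event$ and equals $\gbl$ on $\event$. By \labelcref{eq::updated}, the last expression is equivalent to $\gbl\in\gambles_{\weakgeq0}\altcondon\event$, and the middle equivalence is literally the definition of $\gambles^{\set\event}_{\weakgeq0}$ unrolled with $\filter\instantiateas\set{\event}$. Chaining all four equivalences then gives
\begin{equation*}
\gambles_{\weakgeq0}+\indifset[\event]=\gambles^{\set\event}_{\weakgeq0}=\gambles_{\weakgeq0}\altcondon\event,
\end{equation*}
and similarly, from the first biconditional, $\gambles_{\stronggt0}+\indifset[\event]=\gambles^{\set\event}_{\stronggt0}$.

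There's no genuine obstacle here: the entire lemma is essentially a bookkeeping exercise that translates between the ``sum'' description (background plus indifference to outcomes off $\event$) and the ``restricted infimum'' description of called-off gambles. The only modest subtlety is choosing a decomposition in the strict case that preserves the lower bound off $\event$, which is handled by the $\tfrac{\alpha}{2}\indof{\compof\event}$ term above.
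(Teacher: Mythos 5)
Your proposal is correct and follows essentially the same route as the paper's proof: the forward direction notes that adding an element of \(\indifset[\event]\) doesn't change the gamble's values on \(\event\), and the converse exhibits an explicit decomposition of \(\gbl\) into a (strictly) positive gamble plus a gamble vanishing on \(\event\), with the final set identities obtained by unrolling the definitions of \(\gambles^{\set\event}_{\weakgeq0}\), \(\gambles^{\set\event}_{\stronggt0}\) and \(\gambles_{\weakgeq0}\altcondon\event\). The only cosmetic difference is that you pad the complement of \(\event\) with the constant \(\tfrac{\alpha}{2}\) where the paper uses \(\inf\group{\gbl\vert\event}\) itself; both decompositions work equally well.
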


\begin{proof}[Proof of \cref{lem:conditional:desirable:background}]
Consider any~\(\altgbl\in\gambles_{\stronggt0}\) and \(\altgbltoo\in\indifset[\event]\), then \(\inf\group{\altgbl+\altgbltoo\condon{\event}}=\inf\group{\altgbl\condon{\event}}>0\), since \(\altgbltoo\in\indifset[\event]\) implies that \(\indof{\event}\altgbltoo=0\) and \(\altgbl\in\gambles_{\stronggt0}\) implies that \(\inf\group{\altgbl\condon{\event}}>0\).
Conversely, consider any~\(\gbl\in\gambles\) for which \(\inf\group{\gbl\condon{\event}}>0\) and focus on the decomposition
\begin{equation*}
\gbl\group\state
=\begin{cases}
\gbl\group\state
&\text{ if \(\state\in\event\)}\\
\inf\group{\gbl\condon{\event}}
&\text{otherwise}
\end{cases}
+\begin{cases}
0
&\text{ if \(\state\in\event\)}\\
\gbl\group\state-\inf\group{\gbl\condon{\event}}
&\text{otherwise}
\end{cases}
\text{ for all \(\state\in\states\)},
\end{equation*}
where the first term is in \(\gambles_{\stronggt0}\) and the second term is in \(\indifset[\event]\), so indeed \(\gbl\in\gambles_{\stronggt0}+\indifset[\event]\).
The proof for the first equivalence involving \(\gambles_{\weakgeq0}+\indifset[\event]\) is completely similar.
The remainder of the proof is now straightforward.
\end{proof}

\begin{proof}[Proof of \cref{prop::previsional:revision}]
Since we infer from \cref{prop::previsional:revision:conditionability} that \(\M_{\prevof{\smallbolleke\given\smallbolleke}}\) is conditionable on all regular events, we infer from the expression~\labelcref{eq::AD:revision:operator} for the revision operator associated with conditioning that, for the accept part,
\begin{align*}
\reviseof{\M_{\prevof{\smallbolleke\given\smallbolleke}}\given\event}_\acc
&=\group{\gambles_{\weakgeq0}\cup\group{\posiof{\mathscr{L}_{\prevof{\smallbolleke\given\smallbolleke}}}+\gambles_{\weakgeq0}}}\altcondon{\event}
&&\reason{Eq.~\labelcref{eq::model:conditional:full:prevision}}\\
&=\gambles_{\weakgeq0}\altcondon{\event}\cup\group{\posiof{\mathscr{L}_{\prevof{\smallbolleke\given\smallbolleke}}}+\gambles_{\weakgeq0}}\altcondon{\event}\\
&=\gambles^{\set\event}_{\weakgeq0}\cup\group{\posiof{\mathscr{L}_{\prev_{\event}\group{\smallbolleke\condon\smallbolleke}}}+\gambles^{\set\event}_{\weakgeq0}};
&&\reason{Eqs.~\labelcref{eq::previsional:auxiliary:result:accepts,eq::previsional:auxiliary:result:vacuous:accepts}}
\end{align*}
for the desirability part,  when \(\event=\states\), the proof of the desired equality is trivial; and when \(\event\neq\states\), we get, using \cref{eq::model:conditional:full:prevision,eq::previsional:auxiliary:result:desirs,eq::previsional:auxiliary:result:vacuous:desirs}, that
\begin{multline*}
\reviseof{\M_{\prevof{\smallbolleke\given\smallbolleke}}\given\event}_\des\\
\begin{aligned}
&=\group{\gambles_{\stronggt0}\cup\group{\posiof{\mathscr{L}_{\prev\group{\smallbolleke\condon\smallbolleke}}}+\gambles_{\stronggt0}}}\altcondon{\event}\cup\group{\gambles_{\stronggt0}+\group{\gambles_{\weakgeq0}\cup\group{\posiof{\mathscr{L}_{\prevof{\smallbolleke\given\smallbolleke}}}+\gambles_{\weakgeq0}}}\altcondon{\event}}\\
&=\group{\posiof{\mathscr{L}_{\prev\group{\smallbolleke\condon\smallbolleke}}+\gambles_{\stronggt0}}}\altcondon{\event}\cup\group{\gambles_{\stronggt0}+\gambles_{\weakgeq0}\altcondon\event}\cup\group{\gambles_{\stronggt0}+\group{\posiof{\mathscr{L}_{\prevof{\smallbolleke\given\smallbolleke}}}+\gambles_{\weakgeq0}}\altcondon{\event}}\\
&=\group{\gambles_{\stronggt0}+\gambles_{\weakgeq0}\altcondon\event}\cup\group{\gambles_{\stronggt0}+\group{\posiof{\mathscr{L}_{\prevof{\smallbolleke\given\smallbolleke}}}+\gambles_{\weakgeq0}}\altcondon{\event}}.
\end{aligned}
\end{multline*}
Now, \cref{lem:conditional:desirable:background} guarantees that
\begin{equation*}
\gambles_{\stronggt0}+\gambles_{\weakgeq0}\altcondon\event
=\gambles_{\stronggt0}+\gambles_{\weakgeq0}+\indifset[\event]
=\gambles_{\stronggt0}+\indifset[\event]
=\gambles^{\set{\event}}_{\stronggt0},
\end{equation*}
and
\begin{align*}
\gambles_{\stronggt0}+\group{\posiof{\mathscr{L}_{\prevof{\smallbolleke\given\smallbolleke}}}+\gambles_{\weakgeq0}}\altcondon{\event}
&=\gambles_{\stronggt0}+\posiof{\mathscr{L}_{\prev_{\event}\group{\smallbolleke\condon\smallbolleke}}}+\gambles^{\set\event}_{\weakgeq0}
&&\reason{Eq.~\labelcref{eq::previsional:auxiliary:result:accepts}}\\
&=\posiof{\mathscr{L}_{\prev_{\event}\group{\smallbolleke\condon\smallbolleke}}}+\gambles_{\stronggt0}+\gambles_{\weakgeq0}+\indifset[\event]
&&\reason{Lem.~\labelcref{lem:conditional:desirable:background}}\\
&=\posiof{\mathscr{L}_{\prev_{\event}\group{\smallbolleke\condon\smallbolleke}}}+\gambles_{\stronggt0}+\indifset[\event]\\
&=\posiof{\mathscr{L}_{\prev_{\event}\group{\smallbolleke\condon\smallbolleke}}}+\gambles^{\set\event}_{\stronggt0}.
&&\reason{Lem.~\labelcref{lem:conditional:desirable:background}}
\qedhere
\end{align*}
\end{proof}

\begin{proof}[Proof of \cref{eq::precisional:consistency:condition:for:expansion}]
Simply consider the following chain of equivalences:
\begin{align*}
\group{\posiof{\mathscr{L}_{\prevof{\smallbolleke\given\smallbolleke}}}+\gambles_{\stronggt0}}\cap\indifset[\event]=\emptyset
&\ifandonlyif0\notin\posiof{\mathscr{L}_{\prevof{\smallbolleke\given\smallbolleke}}}+\gambles_{\stronggt0}+\indifset[\event]\\
&\ifandonlyif\posiof{\mathscr{L}_{\prevof{\smallbolleke\given\smallbolleke}}}\cap-\group{\gambles_{\stronggt0}+\indifset[\event]}=\emptyset\\
&\ifandonlyif\group{\forall\altgbl\in\posiof{\mathscr{L}_{\prevof{\smallbolleke\given\smallbolleke}}}}\,-\altgbl\notin\gambles_{\stronggt0}+\indifset[\event]\\
&\ifandonlyif\group{\forall\altgbl\in\posiof{\mathscr{L}_{\prevof{\smallbolleke\given\smallbolleke}}}}\,\inf\group{-\altgbl\condon{\event}}\leq0
&&\reason{\cref{lem:conditional:desirable:background}}\\
&\ifandonlyif\group{\forall\altgbl\in\posiof{\mathscr{L}_{\prevof{\smallbolleke\given\smallbolleke}}}}\,\sup\group{\altgbl\condon{\event}}\geq0,
\end{align*}
which completes the proof.
\end{proof}

\begin{proof}[Proof of \cref{prop::conditional:expansion:consistency}]
We begin by proving necessity.
Consider any regular event~\(B\) such that \(B\cap\event\neq\emptyset\), and any real~\(\epsilon>0\).
Apply the consistency condition \labelcref{eq::precisional:consistency:condition:for:expansion} for \(n\instantiateas1\), \(\lambda_1\instantiateas1\), \(\epsilon_1\instantiateas\epsilon\), \(\gbl_1\instantiateas-\indof{\event}\) and \(B_1\instantiateas B\) to find that \(\sup\group{\indof{B}\group{-\indof{\event}+\prevof{\indof{\event}\vert B}+\epsilon}\condon{\event}}\geq0\).
Since~\(B\cap\event\neq\emptyset\), this implies that \(\prevof{\indof{\event}\vert B}-1+\epsilon\geq0\), and therefore that \(\prevof{\indof{\event}\vert B}=1\).

For sufficiency, assume that \(\prevof{\indof{\event}\vert B}=1\), and therefore also \(\prevof{\indof{\compof\event}\vert B}=0\), for all regular events~\(B\) such that \(B\cap\event\neq\emptyset\).
Bayes' Rule [\labelcref{axiom:coherence:conditional:bayes}] then tells us that for all~\(\gbl\in\gambles\),
\begin{align}
\prevof{\gbl\vert B}
&=\prevof{\indof{\event}\gbl\vert B}+\prevof{\indof{\compof\event}\gbl\vert B}
\notag\\
&=\prevof{\gbl\vert B\cap\event}\prevof{\indof{\event}\vert B}+\prevof{\gbl\vert B\cap\compof\event}\prevof{\indof{\compof\event}\vert B}
\notag\\
&=\prevof{\gbl\vert B\cap\event}.
\label{eq::conditional:expansion:consistency:1}
\end{align}
Now consider any~\(n\in\naturals\), \(\lambda_k,\epsilon_k\in\posreals\), \(f_k\in\gambles\) and \(B_k\in\powersetof{\states}\setminus\set\emptyset\), then we can write
\begin{multline*}
\sup\group[\bigg]{\sum_{k=1}^n\lambda_k\indof{B_k}\group{\gbl_k-\prevof{\gbl_k\given B_k}+\epsilon_k}\Big\vert\event}\\
\begin{aligned}
&=\sup_{\state\in\event}\sum_{k=1}^n\lambda_k\indof{B_k}\group\state\group{\gbl_k\group\state-\prevof{\gbl_k\given B_k}+\epsilon_k}\\
&=\sup_{\state\in\event}\sum_{k=1}^n\lambda_k\indof{B_k}\group\state\indof{\event}\group\state\group{\gbl_k\group\state-\prevof{\gbl_k\given B_k}+\epsilon_k}\\
&=\sup_{\state\in\event}\smashoperator[r]{\sum_{\substack{k=1\\B_k\cap\event\neq\emptyset}}^n}\lambda_k\indof{B_k\cap\event}\group\state\group{\gbl_k\group\state-\prevof{\gbl_k\given B_k}+\epsilon_k}\\
&=\sup_{\state\in\event}\smashoperator[r]{\sum_{\substack{k=1\\B_k\cap\event\neq\emptyset}}^n}\lambda_k\indof{B_k\cap\event}\group\state\group{\gbl_k\group\state-\prevof{\gbl_k\given B_k\cap\event}+\epsilon_k}
&&\reason{Eq.~\labelcref{eq::conditional:expansion:consistency:1}}\\
&=\sup\group[\bigg]{\smashoperator[r]{\sum_{\substack{k=1\\B_k\cap\event\neq\emptyset}}^n}\lambda_k\indof{B_k\cap\event} \group{\gbl_k-\prevof{\gbl_k\given B_k\cap\event}+\epsilon_k}\Big\vert\event}\geq0,
&&\reason{\labelcref{eq::avoiding:sure:loss}}
\end{aligned}
\end{multline*}
which completes the proof.
\end{proof}

\begin{proof}[Proof of \cref{prop::conditional:expansion}]
We start with the accept part, for which it's enough to prove that
\begin{equation*}
\posiof{\mathscr{L}_{\prevof{\smallbolleke\given\smallbolleke}}}+\gambles_{\weakgeq0}+\indifset[\event]
=\posiof{\mathscr{L}_{\prev_{\event}\group{\smallbolleke\condon\smallbolleke}}}+\gambles^{\set\event}_{\weakgeq0},
\end{equation*}
since we infer from \cref{lem:conditional:desirable:background} that \(\gambles_{\weakgeq0}+\indifset[\event]=\gambles^{\set\event}_{\weakgeq0}\).
For the direct inclusion, consider \(\gbl\in\posiof{\mathscr{L}_{\prevof{\smallbolleke\given\smallbolleke}}}+\gambles_{\weakgeq0}+\indifset[\event]\), so there are~\(n\in\mathbb{N}\), \(\lambda_k,\epsilon_k\in\posreals\), \(\gbl_k\in\gambles\), \(B_k\in\powersetof{\states}\setminus\set\emptyset\) and \(\altgbl\in\indifset[\event]\) such that \(\gbl\weakgeq\sum_{k=1}^n\lambda_k\indof{B_k}\group{\gbl_k-\prevof{\gbl_k\vert B_k}+\epsilon_k}+\altgbl\), and therefore also that
\begin{equation*}
\indof{\event}\gbl
\weakgeq\sum_{k=1}^n\lambda_k\indof{B_k\cap\event}\group{\gbl_k-\prevof{\gbl_k\vert B_k}+\epsilon_k}+\indof{\event}\altgbl
=\sum_{k=1}^n\lambda_k\indof{B_k\cap\event}\group{\gbl_k-\prevof{\gbl_k\vert B_k}+\epsilon_k}.
\end{equation*}
An argument similar to the one in the proof of \cref{prop::conditional:expansion:consistency} allows us to derive from the assumed \(\bgM\)-consistency of~\(\M_{\prevof{\smallbolleke\given\smallbolleke}}\union\M_{\event}\) that \(\prevof{\gbl_k\vert B_k}=\prevof{\gbl_k\vert B_k\cap\event}\) whenever \(B_k\cap\event\neq\emptyset\).
Hence,
\begin{equation*}
\gbl=\indof{\event}\gbl+\group{\gbl-\indof{\event}\gbl}
\weakgeq\smashoperator{\sum_{\substack{k=1\\B_k\cap\event\neq\emptyset}}^n}\lambda_k\indof{B_k\cap\event}\group{\gbl_k-\prevof{\gbl_k\vert B_k\cap\event}+\epsilon_k}+\underset{\in\indifset[\event]}{\underbrace{\group{\gbl-\indof{\event}\gbl}}},
\end{equation*}
and therefore indeed that \(\gbl\in\posiof{\mathscr{L}_{\prev_{\event}\group{\smallbolleke\condon\smallbolleke}}}+\gambles_{\weakgeq0}+\indifset[\event]=\posiof{\mathscr{L}_{\prev_{\event}\group{\smallbolleke\condon\smallbolleke}}}+\gambles^{\set\event}_{\weakgeq0}\).
The converse inclusion follows directly from \(\mathscr{L}_{\prev_{\event}\group{\smallbolleke\condon\smallbolleke}}\subseteq\mathscr{L}_{\prevof{\smallbolleke\given\smallbolleke}}\) and \(\gambles^{\set\event}_{\weakgeq0}=\gambles_{\weakgeq0}+\indifset[\event]\) [\cref{lem:conditional:desirable:background}].

For the desirability part, it's enough to prove that
\begin{equation*}
\posiof{\mathscr{L}_{\prevof{\smallbolleke\given\smallbolleke}}}+\gambles^{\set\event}_{\stronggt0}
=\posiof{\mathscr{L}_{\prev_{\event}\group{\smallbolleke\condon\smallbolleke}}}+\gambles^{\set\event}_{\stronggt0},
\end{equation*}
since we infer from \cref{lem:conditional:desirable:background} that \(\gambles_{\stronggt0}+\indifset[\event]=\gambles^{\set\event}_{\stronggt0}\).
The converse inclusion follows directly from \(\mathscr{L}_{\prev_{\event}\group{\smallbolleke\condon\smallbolleke}}\subseteq\mathscr{L}_{\prevof{\smallbolleke\given\smallbolleke}}\).
For the direct inclusion, we proceed a bit differently.
Consider any~\(\gbl\in\posiof{\mathscr{L}_{\prevof{\smallbolleke\given\smallbolleke}}}+\gambles^{\set\event}_{\stronggt0}\), meaning that there are~\(n\in\mathbb{N}\), \(\lambda_k,\epsilon_k\in\posreals\), \(\gbl_k\in\gambles\), \(B_k\in\powersetof{\states}\setminus\set\emptyset\) and \(\altgbl\in\indifset[\event]\) such that
\begin{equation*}
\inf\group[\bigg]{\gbl-\sum_{k=1}^n\lambda_k\indof{B_k}\group{\gbl_k-\prevof{\gbl_k\vert B_k}+\epsilon_k}\Big\vert\event}>0,
\end{equation*}
and therefore also that
\begin{multline*}
\inf\group[\bigg]{\indof{\event}\gbl-\smashoperator{\sum_{\substack{k=1\\B_k\cap\event\neq\emptyset}}^n}\lambda_k\indof{B_k\cap\event}\group{\gbl_k-\prevof{\gbl_k\vert B_k\cap\event}+\epsilon_k}\Big\vert\event}\\
=\inf\group[\bigg]{\indof{\event}\gbl-\sum_{k=1}^n\lambda_k\indof{B_k\cap\event}\group{\gbl_k-\prevof{\gbl_k\vert B_k}+\epsilon_k}\Big\vert\event}>0,
\end{multline*}
where, for the first equality, we used an argument similar to the one in the proof of \cref{prop::conditional:expansion:consistency} to derive from the \(\bgM\)-consistency of~\(\M_{\prevof{\smallbolleke\given\smallbolleke}}\union\M_{\event}\) that \(\prevof{\gbl_k\vert B_k}=\prevof{\gbl_k\vert B_k\cap\event}\) whenever \(B_k\cap\event\neq\emptyset\).
This allows us to conclude that \(\indof{\event}\gbl\in\posiof{\mathscr{L}_{\prevof{\smallbolleke\given\smallbolleke}}}+\gambles^{\set\event}_{\stronggt0}\), and therefore that \(\gbl=\indof{\event}\gbl+\group{\gbl-\indof{\event}\gbl}\in\posiof{\mathscr{L}_{\prev_{\event}\group{\smallbolleke\condon\smallbolleke}}}+\gambles^{\set\event}_{\stronggt0}+\indifset[\event]=\posiof{\mathscr{L}_{\prev_{\event}\group{\smallbolleke\condon\smallbolleke}}}+\gambles^{\set\event}_{\stronggt0}\), where the last equality again follows from \cref{lem:conditional:desirable:background}.
\end{proof}

\begin{proof}[Proof of \cref{prop::postulates:satisfied:previsional}]
We proved in \cref{prop::AD:model:revision:axioms:obeyed} that \labelcref{axiom:revision:1,axiom:revision:2,axiom:revision:3,axiom:revision:4,axiom:revision:5,axiom:revision:7} are satisfied for general AD-models, so they're definitely satisfied for the full previsional AD-models.
So it only remains to consider \labelcref{axiom:revision:4,axiom:revision:8}.

\labelcref{axiom:revision:4} follows at once from \cref{prop::previsional:revision,prop::conditional:expansion}.

For the proof of \labelcref{axiom:revision:8}, we content ourselves with the following brief summary of the proof strategy.
We've already argued in \cref{prop::previsional:revision,prop::conditional:expansion} that expanding and revision using an event~\(\event[1]\) simply results in a new `full previsional' AD-model \(\M_{\prev_{\event[1]}\group{\smallbolleke\condon\smallbolleke}}\) that is, essentially, the old model restricted to the new possibility space~\(\event[1]\) rather than~\(\states\).
Therefore, when we subsequently deal with a new event~\(\event[2]\), the same reasoning as in all the propositions and lemmas applies, but now simply with respect to the possibility space~\(\event[1]\) rather than~\(\states\).
So, to prove \labelcref{axiom:revision:8}, we simply repeat the proofs for \labelcref{axiom:revision:4}, where we replace \(\states\) with \(\event[1]\), and \(\event\) with \(\event[1]\cap\event[2]\); other than that, all steps in the proofs will remain the same, {\itshape mutatis mutandis}.
\end{proof}

\end{document}